\documentclass[final, 12pt]{alt2023} 


\title[Dealing with Unknown Variances in Best-Arm Identification]{Dealing with Unknown Variances in Best-Arm Identification}

\usepackage{times}


 \altauthor{
 	\Name{Marc Jourdan} \Email{marc.jourdan@inria.fr}\\
  \Name{R\'emy Degenne} \Email{remy.degenne@inria.fr}\\
  \Name{Emilie Kaufmann} \Email{emilie.kaufmann@univ-lille.fr}\\
  \addr{Univ. Lille, CNRS, Inria, Centrale Lille, UMR 9198-CRIStAL, F-59000 Lille, France}
}



\usepackage{stmaryrd}
\usepackage{enumitem}
\usepackage{mathabx}
\usepackage{booktabs}       

\renewcommand{\ln}{\log}

\newcommand{\taud}{\tau_{\delta}}
\newcommand{\simplex}{\triangle_{K}}
\newcommand{\cA}{\mathcal{A}}

\newcommand{\cM}{\mathcal{M}}
\newcommand{\cN}{\mathcal{N}}
\newcommand{\cF}{\mathcal{F}}

\newcommand{\cO}{\mathcal{O}}
\newcommand{\cS}{\mathcal{S}}
\newcommand{\cT}{\mathcal{T}}
\newcommand{\cH}{\mathcal{H}}

\newcommand{\Real}{\mathbb{R}}
\newcommand{\Natural}{\mathbb{N}}

\newcommand{\1}{\mathbf{1}}
\newcommand{\probability}{\mathbb{P}}
\newcommand{\bP}{\mathbb{P}}
\DeclareMathOperator*{\expectedvalue}{\mathbb{E}}

\DeclareMathOperator*{\argmin}{arg\,min}
\DeclareMathOperator*{\argmax}{arg\,max}

\newcommand{\eqdef}{\buildrel \text{def}\over =}

\DeclareMathOperator{\KL}{KL}
\DeclareMathOperator{\GLR}{GLR}

\renewcommand{\epsilon}{\varepsilon}
\renewcommand{\ln}{\log}

\PassOptionsToPackage{vlined}{algorithm2e}
\RestyleAlgo{ruled}

\begin{document}

\maketitle

\begin{abstract}%
The problem of identifying the best arm among a collection of items having Gaussian rewards distribution is well understood when the variances are known.
Despite its practical relevance for many applications, few works studied it for unknown variances.
In this paper we introduce and analyze two approaches to deal with unknown variances, either by plugging in the empirical variance or by adapting the transportation costs.
In order to calibrate our two stopping rules, we derive new time-uniform concentration inequalities, which are of independent interest.
Then, we illustrate the theoretical and empirical performances of our two sampling rule wrappers on Track-and-Stop and on a Top Two algorithm.
Moreover, by quantifying the impact on the sample complexity of not knowing the variances, we reveal that it is rather small.
\end{abstract}

\begin{keywords}%
	Gaussian Bandits, Unknown Variances, Best-arm Identification.
\end{keywords}


\section{Introduction}

In a bandit model, an agent sequentially collects samples from unknown probability distributions, called arms.
These samples may be viewed as rewards that the agent seeks to maximize, or equivalently minimize its regret \citep{Bubeck12survey}.
In this paper our focus is instead on a Best Arm Identification (BAI) problem in which the agent should identify the arm that has the largest expected reward using as few samples as possible, without incentive on maximizing rewards.

We are interested in a Gaussian bandit model in which the variances of the arms are \emph{unknown}.
Quite surprisingly, and despite its practical relevance, this problem has received little attention in the bandit literature.
Gaussian distributions could indeed be used to model the revenue generated by different versions of a website in the context of A/B testing, or some biological indicator of the efficiency of a treatment in the context of an adaptive clinical trial comparing several treatments.
In both case, assuming known variances is a limitation.
Formally, we consider a bandit model with $K$ arms $\nu_{1},\dots,\nu_{K}$ in which $\nu_a$ is a Gaussian distribution with mean $\mu_a$ and variance $\sigma_a$.
The best arm (assumed unique) is defined as the arm with largest mean  $a^\star(\mu) \eqdef \argmax_{a \in [K]} \mu_a$.
We consider the fixed confidence setting, in which the parameter $\delta \in (0, 1)$ is an upper bound on the probability that the algorithm makes an error.

A fixed confidence BAI algorithm is made of a {sampling rule} and a {stopping and recommendation rule}. In each time $t \in \mathbb{N}$, an arm $a_t \in [K]$ is chosen by the \emph{sampling rule}, then an observation $X_{t,a_t} \sim \nu_{a_t}$ is received.
The choice of $a_t$ may depend on a random variable $U_{t-1}$, independent of everything else, which models internal randomization.
The $\sigma$-algebra generated by $(U_0, X_{1,a_1}, \ldots, U_{t-1}, X_{t,a_t}, U_t)$ is denoted by $\mathcal F_t$. $a_t$ is then an $\mathcal F_{t-1}$-measurable random variable, and $X_t$ is independent of $\mathcal F_{t-1}$ conditionally on $a_t$.
The \emph{stopping rule} is a stopping time with respect to the filtration $(\mathcal F_t)_{t \in \mathbb{N}}$, denoted by $\taud$.
When the algorithm stops, it recommends an arm $\hat{a}_{\taud} \in [K]$, which is measurable with respect to $\taud$.
$\taud$ is called the sample complexity of the algorithm.

The goal of a fixed confidence best arm identification algorithm is to return the best arm with high probability while having low sample complexity.
The main requirement we impose on a fixed confidence identification method is $\delta$-\emph{correctness}.

\begin{definition}[$\delta$-correct]
Let $\mathcal D$ be a set of distributions on $\mathbb{R}$.
Given $\delta \in (0, 1)$, we say that an identification strategy is $\delta$-correct on the problem class $\mathcal D^K$ if for all $\nu = (\nu_a)_{a \in [K]} \in \mathcal D^K$,
$\probability_{\nu} \left( \taud < +\infty , \: \hat{a}_{\taud} \ne a^\star(\mu) \right) \le \delta$~.
\end{definition}

We follow the approach pioneered by \citet{garivier_2016_OptimalBestArm} and initially introduced for one-dimensional parametric models (e.g. Gaussian with known variance).
They derived lower bounds on the expected sample complexity of $\delta$-correct algorithms and introduced algorithms inspired by the maximization of those lower bounds.
Extending their lower bound to our two-parameters setting allows us to quantify the impact on the expected sample complexity of not knowing the variances, and reveals that this impact is rather small.
To leverage the stopping and sampling rules of existing algorithms, we propose two approaches to deal with unknown variances: plugging in the empirical variance or considering the transportation costs for unknown variance.

As it is common in previous work for the stopping rule, we will compare a Generalized Likelihood Ratio (GLR) to a well chosen threshold \citep{kaufmann_2018_MixtureMartingalesRevisited}.
Our two approaches yield the Empirical Variance GLR (EV-GLR) stopping rule, which plugs in the empirical variance in a GLR assuming known variance, and the GLR stopping rule, which corresponds to a GLR assuming unknown variance.
Our main technical contribution lies in the derivation of (near) optimal stopping thresholds which ensure the $\delta$-correctness of both the GLR and the EV-GLR stopping rules, regardless of the sampling rule.
These thresholds are based on new time-uniform concentration inequalities for Gaussian with unknown variances, which are of independent interest (Corollary~\ref{cor:uniform_time_upper_lower_tail_concentration_variance} and Theorem~\ref{thm:uniform_upper_tail_concentration_kl_exp_fam_gaussian}).

When considering the sampling rule, each approach yields a wrapper which is a simple procedure that can be applied to any BAI algorithm for known variances.
We illustrate each wrapper with Track-and-Stop \citep{garivier_2016_OptimalBestArm} and the Top Two algorithm $\beta$-EB-TCI \citep{jourdan_2022_TopTwoAlgorithms}.
By deriving upper bound on the expected sample complexity, we show that algorithms obtained by adapting the transportation costs enjoy stronger
theoretical guarantees than the ones plugging in the empirical variance.
In particular, we propose the first asymptotically optimal algorithms for Gaussian bandits with unknown variances.
Our experiments reveal that both wrappers have comparable performance when applied to several BAI algorithms including the ones above, DKM \citep{degenne_2019_NonAsymptoticPureExploration} and FWS \citep{wang_2021_FastPureExploration}.
This reinforces our finding that not knowing the variances has a small impact on the sample complexity.

\paragraph{Related work}
Algorithms based on GLR stopping rules and aimed at matching a sample complexity lower bound were either studied for one-parameter exponential families \citep{degenne_2019_NonAsymptoticPureExploration} or under generic heavy tails assumption \citep{Agrawal20GeneBAI}.
Other algorithms are either based on eliminations or on confidence intervals and have been mostly analyzed for sub-Gaussian distributions with a known variance proxy\footnote{A random variable $X$ with mean $\mu$ is $\sigma^2$ sub-Gaussian if $\mathbb{E}[\exp(\lambda(X-\mu))] \leq \frac{\lambda^2\sigma^2}{2}$ for all $\lambda\in \mathbb{R}$.} \citep{even_dar_2006_ActionEliminationStopping,kalyanakrishnan_2012_PACSubsetSelection,Jamiesonal14LILUCB}.
For the special case of bounded distributions, confidence intervals based on the empirical variance have been used \citep{gabillon_2012_BestArmIdentification,Lu21MEVariance} but the resulting algorithms cannot be applied to unbounded distributions as they rely on the empirical Bernstein inequality \citep{MaurerPontil09}.
In the fixed budget setting, in which the size of the exploration phase is fixed in advance, it is possible to upper bound the error probability of the Successive Reject algorithm of \citet{audibert_2010_BestArmIdentification} when the variances are unknown, as we only need to upper bound the probability that one empirical mean is smaller than another, see also \citet{Faella20FBVariance}.
However, in the fixed-confidence setting elimination thresholds, confidence intervals or GLR tests need to be calibrated in a data-dependent way, which calls for the development of new time-uniform concentration inequalities, that we provide in this work.

In the related literature on ranking and selection \citep{hong_2021_ReviewRankingSelection}, the problem of finding the Gaussian distribution with largest mean has been studied for unknown variances.
This literature mostly seek to design algorithm that are $\delta$-correct whenever the gap between the best and second best arm is larger than some specified indifference zone \citep{KimNelson01}.
However the work of \citet{Fan16RSUV} does not consider an indifference zone and their algorithm is therefore comparable to ours.
They propose an elimination strategy which features the empirical variances and whose calibration is done based on simulation arguments (resorting to continuous-time approximations) and justified in an asymptotic regime only (when $\delta$ goes to zero).
Our algorithms have better empirical performance and stronger theoretical guarantees.


\section{Lower Bounds and GLR-based Stopping Rules}
\label{sec:lower_bounds_and_glrs}

First, we introduce the lower bounds characterizing the complexity of the setting in Section~\ref{ssec:lower_bounds}.
Then, we present the generalized log-likelihood ratios (GLR) stopping rules in Section~\ref{ssec:GLR_stopping_rule}.

\subsection{Lower Bounds}
\label{ssec:lower_bounds}

In the following, all the distributions are Gaussian denoted by $\nu_{x, \sigma^2} = \cN (x, \sigma^2)$.
The class of Gaussian distributions with known variance $\sigma^2$ is denoted by $\mathcal D_{\sigma^2}
= \{ \nu_{x, \sigma^2} \mid \exists x \in \mathbb{R} \}$, and the class of Gaussian distributions with unknown variance by $\mathcal D
= \bigcup_{\sigma^2 > 0} \mathcal D_{\sigma^2}$.
We denote the Kullback-Leibler (KL) divergence between $\nu_{x_1, \sigma_1^2}$ and $\nu_{x_2, \sigma_2^2}$ by $\KL((x_1, \sigma_1^2), (x_2, \sigma_2^2))$.

Let $(\mu,\sigma^2)  \in \cM = \mathbb{R}^K \times (\mathbb{R}^{\star}_+)^K$ such that $|a^\star(\mu)|=1$.
The alternative sets $\Lambda(\mu, \sigma^2) = \{(\lambda, \kappa^2) \in \cM \mid a^\star(\mu) \notin \argmax_{a } \lambda_a \}$ and $\Lambda_{\sigma^2}(\mu) = \{\lambda \mid (\lambda , \sigma^2) \in \cM, \: a^\star(\mu) \notin \argmax_{a} \lambda_a\}$ are the sets of parameter for which $a^\star(\mu)$ is not the best arm.
The $(K-1)$-dimensional probability simplex is denoted by $\triangle_K = \{w \in \Real^{K}_{+} \mid \sum_{a \in [K]} w_a = 1 \}$.

For Gaussian with unknown (resp. known) variances, Lemma~\ref{lem:lower_bound_sample_complexity} shows that $T^\star(\mu, \sigma^2)$ (resp. $T^\star_{\sigma^2}(\mu)$) is the asymptotic complexity of the BAI problem on the instance $\nu \eqdef (\nu_{\mu_a,\sigma_{a}^2})_{a }$, where
\begin{align*}
	T^\star(\mu, \sigma^2)^{-1}
	&= \sup_{w \in \triangle_K} \inf_{(\lambda, \kappa^2) \in \Lambda(\mu, \sigma^2)} \sum_{a \in [K]} w_a \KL((\mu_a, \sigma_{a}^2), (\lambda_a, \kappa_{a}^2))
	\: , \\
T^\star_{\sigma^2}(\mu)^{-1}
&= \sup_{w \in \triangle_K} \inf_{\lambda \in \Lambda_{\sigma^2}(\mu)} \sum_{a \in [K]} w_a \KL((\mu_a, \sigma_{a}^2), (\lambda_a, \sigma_a^2))
\: .
\end{align*}
The maximizer over the simplex $\triangle_K$ in these complexities is denoted by $w^\star(\mu, \sigma^2)$ and $w^\star_{\sigma^2}(\mu)$.
The rationale for the difference between the $T^\star(\mu, \sigma^2)$ and $T^\star_{\sigma^2}(\mu)$ is that when the variances are unknown, there exist instances of the form $(\lambda, \kappa^2)$ for $\kappa \neq \sigma$ that are harder to differentiate from $(\mu, \sigma^2)$ than instances of the form  $(\lambda, \sigma^2)$ with respect to an information criterion.

\begin{lemma}[\citet{garivier_2016_OptimalBestArm}] \label{lem:lower_bound_sample_complexity}
An algorithm which is $\delta$-correct on all problems in $\mathcal D_{\sigma^2}^K$ satisfies that for all $\mu \in \mathbb{R}^K$,
$
\mathbb{E}_{\nu}[\taud] \ge T^\star_{\sigma^2}(\mu) \ln(1/(2.4\delta)) \: .
$

An algorithm which is $\delta$-correct on all problems in $\mathcal D^K$ satisfies that for all $(\mu,\sigma^2)  \in \cM $,
$
\mathbb{E}_{\nu}[\taud] \ge T^\star(\mu, \sigma^2) \ln(1/(2.4\delta)) \: .
$
\end{lemma}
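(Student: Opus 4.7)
The plan is to instantiate the standard change-of-measure argument of Kaufmann, Capp\'e and Garivier (as used in \citet{garivier_2016_OptimalBestArm}), adapted to the two-parameter Gaussian family $\mathcal D$. The two statements are structurally identical: the first corresponds to restricting alternatives to share the (known) variance vector, the second allows the variance to be perturbed as well. I will sketch the unknown-variance case; the known-variance case follows by the same argument with $\Lambda(\mu,\sigma^2)$ replaced by $\Lambda_{\sigma^2}(\mu)$.

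First, I would invoke the transportation lemma: for any $\delta$-correct algorithm on $\mathcal D^K$, any instance $\nu=(\nu_{\mu_a,\sigma_a^2})_a$ with $|a^\star(\mu)|=1$, and any alternative instance $\nu'=(\nu_{\lambda_a,\kappa_a^2})_a$ with $(\lambda,\kappa^2)\in\Lambda(\mu,\sigma^2)$, the data-processing inequality applied to the event $\mathcal E=\{\hat a_{\taud}=a^\star(\mu)\}$ yields
\[
\sum_{a\in[K]} \expectedvalue_\nu[N_a(\taud)]\,\KL((\mu_a,\sigma_a^2),(\lambda_a,\kappa_a^2)) \;\ge\; \dkl(\bP_\nu(\mathcal E),\bP_{\nu'}(\mathcal E)),
\]
where $N_a(t)$ counts pulls of arm $a$ up to time $t$ and $\dkl$ is the binary KL. Since $a^\star(\mu)\notin\argmax_a\lambda_a$, $\delta$-correctness forces $\bP_\nu(\mathcal E)\ge 1-\delta$ and $\bP_{\nu'}(\mathcal E)\le\delta$, and the monotonicity of $\dkl$ then gives $\dkl(\bP_\nu(\mathcal E),\bP_{\nu'}(\mathcal E))\ge \dkl(1-\delta,\delta)\ge \ln(1/(2.4\delta))$, which is the standard inequality used to introduce the $2.4$ constant.

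Next, writing $w_a=\expectedvalue_\nu[N_a(\taud)]/\expectedvalue_\nu[\taud]$, so that $w\in\triangle_K$ (using $\sum_a N_a(\taud)=\taud$ and Wald-style manipulations to justify finiteness on the event $\taud<\infty$, which we can assume w.l.o.g.\ since otherwise the bound is trivial), the inequality becomes
\[
\expectedvalue_\nu[\taud]\sum_{a\in[K]} w_a\,\KL((\mu_a,\sigma_a^2),(\lambda_a,\kappa_a^2)) \;\ge\; \ln(1/(2.4\delta)).
\]
Taking the infimum over $(\lambda,\kappa^2)\in\Lambda(\mu,\sigma^2)$ on the left-hand side and then bounding the resulting quantity by the supremum over $w\in\triangle_K$ yields $\expectedvalue_\nu[\taud]\cdot T^\star(\mu,\sigma^2)^{-1}\ge\ln(1/(2.4\delta))$, which is the claim.

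The main technical obstacle, compared with the one-parameter case in \citet{garivier_2016_OptimalBestArm}, is purely notational: the alternative set now lives in the two-parameter space $\cM$, so one must verify that the change-of-measure is well defined (i.e.\ $\nu'$ is a valid Gaussian product, which is immediate from $\kappa^2\in(\Real_+^\star)^K$) and that $\KL((\mu_a,\sigma_a^2),(\lambda_a,\kappa_a^2))$ is finite for every admissible $(\lambda,\kappa^2)$, which holds because all Gaussians in $\mathcal D$ are mutually absolutely continuous. Everything else, including the Wald-type identity $\expectedvalue_\nu[\sum_a N_a(\taud)]=\expectedvalue_\nu[\taud]$ and the $\ln(1/(2.4\delta))$ lower bound on $\dkl(1-\delta,\delta)$, carries over verbatim from the one-parameter proof.
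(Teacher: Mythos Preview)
Your proposal is correct and follows exactly the approach the paper intends: the paper omits the proof, stating that the known-variance case is Theorem~1 of \citet{garivier_2016_OptimalBestArm} and that the unknown-variance case ``is a direct consequence of the arguments'' therein, which is precisely the change-of-measure / transportation-lemma argument you have written out. Your observation that the only additional checks are the well-definedness of the two-parameter alternative and the finiteness of the Gaussian KL is also accurate.
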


We say that an algorithm is asymptotically optimal on $\mathcal D^K$ if it is $\delta$-correct and its sample complexity matches that lower bound, i.e. $\liminf_{\delta \rightarrow 0} \expectedvalue_{\mu}[\taud] / \ln(1/\delta) \leq T^\star(\mu,\sigma^2)$.
A weaker notion of optimality is $\beta$-optimality \citep{Qin2017TTEI,Shang20TTTS}.
An algorithm is called asymptotically $\beta$-optimal on $\mathcal D^K$ if it satisfies $\liminf_{\delta \rightarrow 0} \expectedvalue_{\mu}[\taud] / \ln(1/\delta) \leq T_{\beta}^\star(\mu,\sigma^2)$ and is $\delta$-correct, for $T_{\beta}^\star(\mu,\sigma^2)$ defined as follows.
For $\beta \in (0,1)$, the definition of $T^\star_{\beta}(\mu,\sigma^2)$ is the same as $T^\star(\mu,\sigma^2)$ with the additional constraint on the outer maximization that $w_{a^\star} = \beta$, hence $T^\star(\mu,\sigma^2) = \min_{\beta \in (0,1)} T^\star_{\beta}(\mu,\sigma^2)$.

An asymptotically $\beta$-optimal algorithm is asymptotically minimizing the sample complexity among algorithms which allocate a $\beta$ fraction of samples to the best arm.
\citet{Russo2016TTTS} shows that an asymptotically $\beta$-optimal algorithm with $\beta = 1/2$ also has an expected sample complexity which is asymptotically optimal, up to a multiplicative factor $2$, i.e. $T^\star_{1/2}(\mu,\sigma^2) \le 2 T^\star(\mu,\sigma^2)$.
The $\beta$-optimality on $\mathcal D_{\sigma^2}^K$ involves $T^\star_{\sigma^2, \beta}(\mu)$, which is similarly related to $T^\star_{\sigma^2}(\mu)$.
While there is a rich literature on asymptotically ($\beta$-)optimal algorithms for Gaussian with known variance, we are the first to derive algorithms with those guarantees when the variances are unknown.

\subsection{Comparing the Complexities}

To compare $T^\star_{\sigma^2}(\mu)$ and $T^\star(\mu,\sigma^2)$, we first propose a more explicit expression of the infimum over the  alternative set featured in their expression, in terms of appropriate \emph{transportation costs}.

\begin{lemma}\label{lem:calcul}
	For every $\mu$ such that $a^\star(\mu) = \{a^\star\}$ and $w \in \mathbb{R}_+^{K}$,
	\[
	\inf_{(\lambda,\kappa^2) \in \Lambda(\mu,\sigma^2)} \sum_{a \in [K]} w_a \KL\left((\mu_a,\sigma_a^2), (\lambda_a, \kappa_a^2)\right)  = \min_{a \neq a^\star} C(a^\star, a ; w) \: ,
	\]
where the transportation cost from $a$ to $b$ given an allocation $w$ is defined by
\begin{align}
	C(a, b ; w)  & =   \1\{\mu_a > \mu_b\} \inf_{\substack{ \lambda_b \geq \lambda_{a} \\ \kappa_a \geq 0, \kappa_{b} \geq 0}} \sum_{c \in \{a, b\}} w_c \KL\left((\mu_c,\sigma_c^2) , (\lambda_c,\kappa_c^2)\right) \label{eq:KL_without_inf} \\
 &= \1\{\mu_a > \mu_b\} \inf_{\lambda \in (\mu_b,\mu_{a})} \sum_{c \in \{a, b\}} \frac{w_c}{2} \log\left(1 + \frac{(\mu_c - \lambda)^2}{\sigma_c^2}\right) \: . \nonumber
\end{align}
\end{lemma}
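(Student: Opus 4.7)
The plan is to successively simplify the infimum over $\Lambda(\mu,\sigma^2)$ by (i) decomposing the alternative into a union of half-spaces indexed by the contender $a\neq a^\star$, (ii) exploiting separability to reduce to a two-arm problem, (iii) minimising out the alternative variances $\kappa_c^2$ in closed form, and (iv) showing that the remaining constraint is active so that only a one-dimensional optimisation in $\lambda$ remains.

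First, I would write $\Lambda(\mu,\sigma^2) = \bigcup_{a\neq a^\star}\{(\lambda,\kappa^2)\in\cM : \lambda_a\geq \lambda_{a^\star}\}$. The infimum of the nonnegative objective over $\Lambda(\mu,\sigma^2)$ then becomes a minimum over $a\neq a^\star$ of the infimum over the half-space $\{\lambda_a\geq \lambda_{a^\star}\}$. For any arm $c\notin\{a^\star,a\}$ the feasible choice $(\lambda_c,\kappa_c^2)=(\mu_c,\sigma_c^2)$ contributes $0$ to the sum, so the problem reduces to the two arms $a^\star$ and $a$. Since $a^\star(\mu)=\{a^\star\}$ gives $\mu_{a^\star}>\mu_a$, the indicator $\1\{\mu_{a^\star}>\mu_a\}$ is $1$, and we recognise $C(a^\star,a;w)$ in the form \eqref{eq:KL_without_inf}. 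Conversely, when the indicator in $C(a,b;w)$ vanishes (i.e.\ $\mu_a\leq\mu_b$), the feasible choice $(\lambda_c,\kappa_c^2)=(\mu_c,\sigma_c^2)$ for $c\in\{a,b\}$ already achieves value $0$, which justifies the factor $\1\{\mu_a>\mu_b\}$ in the definition.

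Next, I would eliminate the $\kappa_c^2$'s in closed form. The two summands decouple in $(\kappa_{a^\star}^2,\kappa_a^2)$, and for fixed $\lambda_c$ the Gaussian KL formula gives $\KL((\mu_c,\sigma_c^2),(\lambda_c,\kappa_c^2)) = \tfrac{1}{2}\bigl[\log(\kappa_c^2/\sigma_c^2)+(\sigma_c^2+(\mu_c-\lambda_c)^2)/\kappa_c^2-1\bigr]$, which is minimised at $\kappa_c^2=\sigma_c^2+(\mu_c-\lambda_c)^2$ with optimal value $\tfrac{1}{2}\log(1+(\mu_c-\lambda_c)^2/\sigma_c^2)$. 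This one-variable optimisation is the only place where unknown variances genuinely enter the calculation, and it is the source of the logarithmic transportation cost appearing in the second displayed expression of the lemma.

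Finally, I would minimise the resulting sum over $\lambda_a\geq\lambda_{a^\star}$. Each summand is strictly convex in $\lambda_c$ and minimised at $\lambda_c=\mu_c$, while $\mu_{a^\star}>\mu_a$ makes the unconstrained minimiser infeasible, so the constraint must bind at the optimum, i.e.\ $\lambda_{a^\star}=\lambda_a=:\lambda$. A simple monotonicity argument then confines the optimal $\lambda$ to $[\mu_a,\mu_{a^\star}]$: outside this interval, moving $\lambda$ toward the interval strictly decreases both terms. Replacing the closed interval by the open interval $(\mu_b,\mu_a)$ does not change the infimum, giving the stated one-dimensional formula (with the indices renamed to match the definition of $C(a,b;w)$). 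The only mildly delicate point I anticipate is verifying cleanly that the constraint is active and that the common value $\lambda$ lies in the claimed interval; both follow from convexity of the reduced summands, or equivalently from the KKT conditions.
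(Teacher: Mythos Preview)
Your proposal is correct and follows essentially the same route as the paper: decompose $\Lambda(\mu,\sigma^2)$ as a union of half-spaces, drop the irrelevant coordinates, minimise out the variances $\kappa_c^2$ in closed form to obtain the $\tfrac12\log(1+(\mu_c-\lambda_c)^2/\sigma_c^2)$ terms, then argue that the constraint $\lambda_a\geq\lambda_{a^\star}$ is active and the common value lies in $[\mu_a,\mu_{a^\star}]$. The paper's write-up (in the proof of the closely related Lemma on explicit formulas in Appendix~\ref{app:ss_explicit_formulas_inequalities}) is in fact terser than yours on the last step.

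One small caveat: the reduced summands $\lambda\mapsto\tfrac12\log(1+(\mu_c-\lambda)^2/\sigma_c^2)$ are \emph{not} strictly convex (the second derivative of $\log(1+x^2)$ changes sign at $|x|=1$), so your ``strictly convex hence constraint binds'' step is imprecise as stated. What you actually need, and what you already invoke in the next sentence, is that each summand is unimodal with unique minimiser $\mu_c$ and monotone on either side; this is enough to force $\lambda_{a^\star}=\lambda_a$ at the infimum and to confine the common value to $[\mu_a,\mu_{a^\star}]$. With that wording fix, your argument is complete and matches the paper.
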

From the proof (Appendix~\ref{app:ss_explicit_formulas_inequalities}) we note that the minimizer in $\kappa$ is $\kappa_a = \sigma_a^2 + (\mu_a - \lambda)^2$, thus even if we want to identify the arm with largest mean, the closest alternatives have an increased variance.
When the variances are known, computing the infimum over the alternative $\lambda\in \Lambda_{\sigma^2}(\mu)$ yields the same expression but with a different transportation cost, which has a convenient closed form:
\[
 C_{\sigma^2}(a, b ; w) = \1\{\mu_a > \mu_b\} \inf_{\lambda \in (\mu_b,\mu_{a})} \sum_{c \in \{a, b\}} {w_c}\frac{(\mu_c - \lambda)^2}{2\sigma_c^2} = \1\{\mu_a > \mu_b\} \frac{1}{2} \frac{(\mu_{a} - \mu_b)^2}{\sigma_{a}^2/w_{a} + \sigma_{b}^2/w_{b}} \: .
\]
On the contrary, the infimum in the mean parameter $\lambda$ in the transportation cost for unknown variance has no simple analytic form (see Appendix~\ref{app:ss_optimal_allocation_oracles} for details on its computation).
Still, comparing the two types of transportation costs (and using properties of the mapping $x \mapsto \log(1+x)/x$) permits to establish a link between $T^\star_{\sigma^2}(\mu)$ and $T^\star(\mu,\sigma^2)$ (resp. $T^\star_{\sigma^2, \beta}(\mu)$ and $T^\star_{\beta}(\mu,\sigma^2)$), hence to quantify the impact of not knowing the variances.

\begin{lemma} \label{lem:complexity_inequalities}
Let $d(\mu, \sigma^2) = \underset{a \neq a^\star(\mu)}{\max}\frac{(\mu_{a^\star(\mu)} - \mu_{a})^2}{\min\{\sigma_{a}^2,\sigma_{a^\star(\mu)}^2\}}$.
Then,
\begin{equation} \label{eq:characteristic_times_inequalities}
	1 < \frac{T^\star(\mu, \sigma^2)}{T^\star_{\sigma^2}(\mu)} \leq \frac{d(\mu, \sigma^2)}{\ln\left(1+d(\mu, \sigma^2)\right)}   \quad \text{and} \quad 	1 < \frac{T^\star_{\beta}(\mu, \sigma^2)}{T^\star_{\sigma^2, \beta}(\mu)} \leq \frac{d(\mu, \sigma^2)}{\ln\left(1+d(\mu, \sigma^2)\right)}  \: .
\end{equation}
\end{lemma}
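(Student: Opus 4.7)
The plan is to express both characteristic times via the transportation-cost formulation of Lemma~\ref{lem:calcul}, derive pointwise inequalities between the unknown-variance cost $C(a^\star, a; w)$ and its known-variance analogue $C_{\sigma^2}(a^\star, a; w)$, and then transfer these to the $\sup$-$\min$ characteristic times. Throughout, let $a^\star = a^\star(\mu)$ and $d = d(\mu, \sigma^2)$.

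\textbf{Setting up.} By Lemma~\ref{lem:calcul} and the analogous identity for known variances (stated just after it),
\begin{align*}
T^\star(\mu,\sigma^2)^{-1} &= \sup_{w \in \triangle_K} \min_{a \neq a^\star} C(a^\star, a; w), \\
T^\star_{\sigma^2}(\mu)^{-1} &= \sup_{w \in \triangle_K} \min_{a \neq a^\star} C_{\sigma^2}(a^\star, a; w),
\end{align*}
and the same identities hold under the additional constraint $w_{a^\star} = \beta$ for the $\beta$-optimal counterparts. So it suffices to compare the two transportation costs pointwise in $w$.

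\textbf{Lower bound (strict).} Using $\log(1+x) < x$ for every $x>0$, applied termwise inside the infimum over $\lambda \in (\mu_a, \mu_{a^\star})$ in the expression of $C(a^\star, a; w)$ from Lemma~\ref{lem:calcul}, one gets $C(a^\star, a; w) \le C_{\sigma^2}(a^\star, a; w)$ for every $w$, with strict inequality whenever $w_{a^\star}, w_a > 0$. Taking $\min_a$ and then $\sup_w$ yields $T^\star(\mu,\sigma^2) \ge T^\star_{\sigma^2}(\mu)$. For strictness, one evaluates at the maximizer $w^\star(\mu,\sigma^2)$, which has all coordinates strictly positive (otherwise one of the binding alternatives would be undetectable, contradicting finiteness of $T^\star$); the pointwise strict inequality at this $w^\star$ then propagates to the $\sup$-$\min$ values. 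The same argument works under the constraint $w_{a^\star} = \beta$.

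\textbf{Upper bound.} The function $x \mapsto \log(1+x)/x$ is decreasing on $(0, \infty)$ (with value $1$ at $0$), so for every $x \in [0,d]$,
\[
\log(1+x) \ \ge\ \frac{\log(1+d)}{d}\, x.
\]
For $\lambda$ in the valid range $(\mu_a, \mu_{a^\star})$ and $c \in \{a, a^\star\}$, one has $(\mu_c - \lambda)^2/\sigma_c^2 \le (\mu_{a^\star}-\mu_a)^2/\min(\sigma_a^2, \sigma_{a^\star}^2) \le d$, so the above pointwise inequality can be applied termwise, giving
\[
\sum_{c \in \{a, a^\star\}} \frac{w_c}{2}\log\!\left(1 + \frac{(\mu_c - \lambda)^2}{\sigma_c^2}\right) \ \ge\ \frac{\log(1+d)}{d} \sum_{c \in \{a, a^\star\}} \frac{w_c}{2}\frac{(\mu_c - \lambda)^2}{\sigma_c^2}.
\]
Taking the infimum over $\lambda$ (a positive constant factors through the infimum) gives $C(a^\star, a; w) \ge \tfrac{\log(1+d)}{d}\, C_{\sigma^2}(a^\star, a; w)$ for every $w$. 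Taking $\min_a$ then $\sup_w$ (respectively, the constrained $\sup_w$ with $w_{a^\star} = \beta$) yields $T^\star(\mu,\sigma^2)^{-1} \ge \tfrac{\log(1+d)}{d}\, T^\star_{\sigma^2}(\mu)^{-1}$, which rearranges to the claimed upper bound, and identically for the $\beta$-versions.

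\textbf{Main difficulty.} The only non-mechanical point is ensuring strictness of the left inequality, since a pointwise strict inequality does not in general survive a $\sup$; this is handled by evaluating at the optimizer $w^\star$ and using that it lies in the interior of the simplex. Everything else reduces to two elementary inequalities for $x \mapsto \log(1+x)$ and monotonicity of $\sup$-$\min$ operations.
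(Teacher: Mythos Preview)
Your argument is correct and the upper bound is handled exactly as in the paper: both of you bound $(\mu_c-\lambda)^2/\sigma_c^2$ by $d(\mu,\sigma^2)$ on the relevant range and use that $x\mapsto \ln(1+x)/x$ is decreasing. For the strict lower bound, however, the paper takes a slightly different (and somewhat cleaner) route. Instead of arguing pointwise and then invoking interiority of the optimizer $w^\star(\mu,\sigma^2)$, the paper applies Jensen's inequality to the concave map $x\mapsto\ln(1+x)$ in the full $K$-arm formulation
\[
2T^\star(\mu,\sigma^2)^{-1}=\max_{w}\inf_{\lambda\in\Lambda_{\sigma^2}(\mu)}\sum_{b} w_b\ln\!\Bigl(1+\tfrac{(\mu_b-\lambda_b)^2}{\sigma_b^2}\Bigr)
\le \ln\!\bigl(1+2T^\star_{\sigma^2}(\mu)^{-1}\bigr)<2T^\star_{\sigma^2}(\mu)^{-1},
\]
the last step using only $T^\star_{\sigma^2}(\mu)^{-1}>0$. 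This avoids any discussion of where the maximizer lives. Your route is also valid---your justification that $w^\star$ has all positive coordinates is essentially ``if some $w^\star_a=0$ then $\min_a C(a^\star,a;w^\star)=0$, contradicting $T^\star(\mu,\sigma^2)^{-1}>0$'', which is correct---but the Jensen argument sidesteps this entirely and gives the slightly stronger intermediate bound $2T^\star(\mu,\sigma^2)^{-1}\le \ln(1+2T^\star_{\sigma^2}(\mu)^{-1})$.
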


When $d(\mu, \sigma^2)$ is small, say $d(\mu, \sigma^2) \leq 1$, the two complexities are close since we then have $T^\star_{\sigma^2}(\mu)/T^\star(\mu, \sigma^2) \in [\log 2,1)$.
Observe that a small $d(\mu, \sigma^2)$ also implies that the BAI problem is hard: if $d(\mu, \sigma^2) \le c \in \mathbb{R}_+$ then for all $a \in [K]$, $\frac{\min\{\sigma_{a}^2,\sigma_{a^\star(\mu)}^2\}}{(\mu_{a^\star(\mu)} - \mu_{a})^2} \ge c^{-1}$.
Since that ratio is roughly the number of samples needed to distinguish the two arms, the problem is hard when it is large.
Still, there exist instances with an arbitrarily large complexity ratio ${T^\star(\mu, \sigma^2)}/{T^\star_{\sigma^2}(\mu)}$ (Lemma~\ref{lem:ratio_characteristic_time_large}).
We conjecture that they always correspond to easy problems, for which both $T^\star_{\sigma^2}(\mu)$ and $T^\star(\mu, \sigma^2)$ are small.
Lemma~\ref{lem:complexity_inequalities} is not sufficient to prove this conjecture as there exists hard instances with a large value of $d(\mu, \sigma^2)$ and instances for which the upper bound in~\eqref{eq:characteristic_times_inequalities} is not tight (Appendix~\ref{app:sss_exp_characteristic_times}).



\subsection{GLR Stopping Rules}
\label{ssec:GLR_stopping_rule}

Given any sampling rule, constructing a stopping and recommendation rule for the BAI problem may be viewed as a sequential testing problem with multiple hypotheses $ \left\{ \mu_{a} = \max_{b \in [K]} \mu_{b}\right\}$.
In one of the first papers on active hypothesis testing (in which the data collection process is further optimized), \citet{chernoff_1959_SequentialDesignExperiments} proposed to rely on Generalized Likelihood Ratio Tests (GLRT) for stopping.
This idea was later popularized by \citet{garivier_2016_OptimalBestArm} for the BAI problem.

For all $a \in [K]$, let $N_{t,a} = \sum_{s \in [t]} \1\{a_s = a\}$, $\mu_{t,a}$ and $\sigma_{t,a}^{2}$ be the empirical count, mean and variance of arm $a$ after time $t$, where
\begin{equation*} 
	\mu_{t,a} \eqdef \frac{1}{N_{t,a}} \sum_{s \in [t]} \1\{a_s = a\} X_{s,a}
	\quad \quad \text{and} \quad \quad
	\sigma_{t,a}^{2} \eqdef \frac{1}{N_{t,a}} \sum_{s \in [t]} \1\{a_s = a\}  \left( X_{s,a} - \mu_{t,a}\right)^2 \: .
\end{equation*}

For Gaussian with unknown variances, the GLR to reject $(\mu, \sigma^2) \in \Lambda$, with $\Lambda \subseteq \mathcal{D}$, is written as 
\begin{align} \label{eq:GLR_expression}
 \text{GLR}^{\mathcal{D}}_{t}(\Lambda)
=  \inf_{(\lambda, \kappa^2) \in \Lambda}  \sum_{a \in [K]} N_{t,a} \KL((\mu_{t,a}, \sigma_{t,a}^2), (\lambda_a, \kappa_{a}^2)) \: ,
\end{align}
which is reminiscent to the expression in the lower bound. We let $\hat a_t \eqdef a^\star(\mu_t)$ denote the empirical best arm (EB). Similar calculations as in the proof of Lemma~\ref{lem:calcul} yield that $\GLR^{\mathcal{D}}_{t}(\Lambda(\mu_t, \sigma_t^2)) = \min_{a \neq \hat{a}_t} Z_a(t)$ where the GLR statistic of arm $a \neq \hat a_t$ is defined as
\begin{equation*} 
	Z_a(t) = \GLR^{\mathcal{D}}_{t}(\{(\lambda, \kappa^2) \mid \lambda_a \ge \lambda_{\hat a_t}\}) = \inf_{ \lambda \in [\mu_{t,a},\mu_{t,\hat a_t}]}  \sum_{b \in \{a, \hat a_t\}} \frac{N_{t,b}}{2} \ln \left( 1 + \frac{(\mu_{t,b} - \lambda)^2}{\sigma_{t,b}^2}\right) \: ,
\end{equation*}
which we refer to as the empirical transportation cost between arm $\hat a_t$ and arm $a$.

\paragraph{GLR Stopping Rule}
In its general form, the GLR stopping rule triggers when $\GLR^{\mathcal{D}}_{t}(\Lambda(\mu_t, \sigma_t^2)) $ exceeds a threshold $c(t, \delta)$. Here we propose to further exploit the structure of the problem and use a family of thresholds $c_{a,b} : \mathbb{N}^K \times (0,1] \rightarrow \mathbb{R}_+$ for all $(a,b) \in [K]^2$, leading to the stopping rule
\begin{equation} \label{eq:def_stopping_rule_glrt}
	\taud
	\eqdef \inf \left\{t \in \mathbb{N} \mid \forall a \ne \hat a_t, \: Z_a(t) > c_{\hat a_t,a}(N_t,\delta) \right\}
	\: .
\end{equation}

\paragraph{EV-GLR Stopping Rule}
For known variances, we should consider $\text{GLR}^{\mathcal{D}_{\sigma^2}}(\{\lambda : \lambda_a \geq \lambda_{\hat a_t}\})$, which can be computed in closed-form and depends on the variance $\sigma^2$.
Replacing the variance vector $\sigma^2$ by its empirical estimate $\sigma^2_t$ yields the Empirical Variance GLR (EV-GLR) statistic
\begin{equation*} 
	Z^{\text{EV}}_a(t) =
	\inf_{ u \in [\mu_{t,a},\mu_{t,\hat a_t}]}   \sum_{b \in \{a, \hat{a}_t\}} N_{t,b} \frac{(\mu_{t,b} - u)^2}{2\sigma_{t,b}^2} = \frac{1}{2} \frac{(\mu_{t,a} - \mu_{t,\hat{a}_t})^2}{\sigma_{t,a}^2/N_{t,a} + \sigma_{t,\hat{a}_t}^2/N_{t,\hat{a}_t}} \: .
\end{equation*}
The EV-GLR stopping rule given a family of thresholds $(c_{a,b})_{(a,b) \in [K]^2}$ is defined as
\begin{equation} \label{eq:def_stopping_rule_evglrt}
	\taud^{\text{EV}} \eqdef \inf \left\{t \in \mathbb{N} \mid \forall a \ne \hat{a}_{t}, \: Z^{\text{EV}}_a(t) > c_{\hat{a}_{t},a}(N_t,\delta) \right\} \: .
\end{equation}

Given their proximity with the lower bound --see \eqref{eq:GLR_expression}--, GLR stopping rules are good candidates to match $T^\star$. Indeed, it is easy to prove that sampling arms from $w^\star$ and using the threshold $c_{a,b}(N,\delta) = \log(1/\delta)$, the lower bound would be matched. However, such a threshold is too good to be $\delta$-correct (Section~\ref{sec:calibration_stopping_rules}).
Moreover, $w^\star$ needs to be estimated since it is unknown (Section~\ref{sec:sampling_rules}).


\section{Calibration of the Stopping Thresholds}
\label{sec:calibration_stopping_rules}

We present ways of calibrating the thresholds used by the GLR stopping rule, by leveraging concentration arguments.
Under any sampling rule, to obtain a $\delta$-correct GLR stopping rule it suffices to show that the family of thresholds is such that the following time-uniform concentration inequality holds for all $\nu \in \mathcal D^{K}$: with probability $1-\delta$, for all $t \in \Natural$ and for all $a \neq a^\star(\mu)$,
\begin{equation} \label{eq:log_based_concentration}
	 \sum_{b \in \{a, a^\star(\mu)\}}\frac{N_{t,b}}{2} \ln \left( 1 + \frac{(\mu_{t,b} - \mu_{b})^2}{\sigma_{t,b}^2}\right)
\le c_{a, a^\star(\mu)}(N_t,\delta)  \: .
\end{equation}

Aiming at matching the lower bound, we want to derive a family of thresholds satisfying $c_{a,b}(N, \delta) \sim_{\delta \to 0} \ln\left(1/\delta\right) $. As regards the time dependency, generalizations of the law of the iterated logarithm suggest we could achieve $\cO(\ln \ln t)$. Both dependencies are achieved for known variances \citep{kaufmann_2018_MixtureMartingalesRevisited}, and we are the first to show it for unknown variances (Theorem~\ref{thm:delta_correct_complex_threshold_glrt}).
While simple ideas yield $\delta$-correct thresholds (Section~\ref{ssec:simple_ideas}), obtaining the ideal dependency in $\delta$ requires sophisticated concentration arguments (Section~\ref{ssec:beyond_box}).

Similar arguments can be used to calibrate the thresholds used by the EV-GLR stopping rule (Appendix~\ref{app:thresholds}).
Moreover, $\delta$-correct thresholds for the EV-GLR stopping rule can be obtained by using the ones calibrated for GLR stopping rule, and vice-versa (Lemma~\ref{lem:glrt_evglrt_stopping_threshold_relationships}).

\subsection{Simple Ideas}
\label{ssec:simple_ideas}

As per-arm concentration results are easier to obtain, we first control each term of the sum in (\ref{eq:log_based_concentration}).

\paragraph{Student thresholds}
Since $(\mu_{t,a} - \mu_{a})/\sigma_{t,a} $ is an observation of the Student distribution $\cT_{N_{t,a}-1}$,
a first simple approach involves the quantiles of Student distributions with $n$ degrees of freedom.
A direct union bound over time and arms yield a $\delta$-correct family of thresholds (Lemma~\ref{lem:delta_correct_student_thresholds}).

\paragraph{Box thresholds}
As illustrated in Figure~\ref{fig:stopping_thresholds_evolutions_glrt}, the Student threshold suffers from a probably sub-optimal dependence in both $\log(1/\delta)$ and $t$.
This is why we propose an alternative method where the union bound is replaced by time-uniform concentration (which has proved useful to improve both dependencies in different contexts) and the Student concentration by concentration on the mean and the variance separately.
The resulting time-uniform upper and lower tail concentration inequalities for the empirical variance (Corollary~\ref{cor:uniform_time_upper_lower_tail_concentration_variance}) are of independent interest.
Thanks to these ``box'' confidence regions on $(\mu_t, \sigma_t^2)$, Lemma~\ref{lem:delta_correct_box_thresholds} yields a $\delta$-correct family of thresholds.

\begin{lemma} \label{lem:delta_correct_box_thresholds}
	Let $\eta_{0}>0$, $s> 1$, $\zeta$ be the Riemann $\zeta$ function and, for $i \in \{0,-1\}$, $\overline{W}_{i}(x) = -W_{i}(-e^{-x})$ for $x\geq1$ where $(W_{i})_{i \in \{0,-1\}}$ are the branches of the Lambert $W$ function. Define
	\begin{align*}
			&\varepsilon_\mu(t, \delta)  = \frac{1}{t}\overline{W}_{-1} \left( 1 + 2\ln \left( \frac{4(K-1)\zeta(s)}{\delta}\right) + 2s +  2s \ln\left(1 + \frac{\ln t}{2s}\right) \right) \: , \\
			&1- \varepsilon_{-,\sigma} (t, \delta)  = \overline{W}_{0} \left(1 +  \frac{2(1+\eta_{0})}{t}\left(\ln\left( \frac{4(K-1)\zeta(s)}{\delta} \right) + s\ln \left( 1+ \ln_{1+\eta_0}(t)\right) \right) \right) -\frac{1}{t} \:  .
	\end{align*}
	The family of thresholds $c_{a,b}^{\text{Box}}(N_t, \delta)$ with value $+ \infty$ if $t < \max_{c \in \{a, b \}} t_c^{\text{Box}}(\delta)$ and otherwise
	\begin{equation}  \label{eq:def_log_box_threshold_glrt}
		c_{a,b}^{\text{Box}}(N_t, \delta) = \sum_{c \in \{a, b \}} \frac{N_{t,c}}{2} \ln \left( 1 + \frac{\varepsilon_\mu(N_{t,c}, \delta)}{1 - \varepsilon_{-,\sigma} (N_{t,c}-1, \delta)} \right)
	\end{equation}
	yields a $\delta$-correct family of thresholds for the GLR stopping rule. The stochastic initial times are
\begin{equation} \label{eq:def_initial_time_box_thresholds}
	t_{a}^{\text{Box}}(\delta) = \inf \left\{ t \mid  N_{t,a} > 1+ e^{1+ W_{0} \left( \frac{2(1+\eta_{0})}{e}\left(\ln\left( \frac{4(K-1)\zeta(s)}{\delta} \right) + s\ln \left( 1+ \frac{\ln (N_{t,a}-1)}{\ln(1+\eta_{0})}\right) \right) -e^{-1}\right)}  \right\} \: .
\end{equation}
\end{lemma}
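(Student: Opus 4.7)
The proof hinges on establishing~(\ref{eq:log_based_concentration}) with probability at least $1-\delta$, which (as argued at the start of Section~\ref{sec:calibration_stopping_rules}) is a sufficient condition for the $\delta$-correctness of the GLR stopping rule. The plan is to decouple the control of the empirical mean and of the empirical variance for each arm separately via time-uniform concentration, to apply a union bound over arms, and then to recombine the two bounds inside the empirical transportation cost by monotonicity of $x\mapsto\log(1+x)$; the $\sigma_a^2$ factors will cancel, which is precisely why $c_{a,b}^{\text{Box}}$ does not depend on the unknown variances.

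For every $a\in[K]$ I would introduce the two per-arm events
\begin{align*}
	\mathcal E_a^\mu &= \left\{\forall t,\ N_{t,a}\ge 1 \Rightarrow (\mu_{t,a}-\mu_a)^2\le \sigma_a^2\,\varepsilon_\mu(N_{t,a},\delta)\right\}, \\
	\mathcal E_a^\sigma &= \left\{\forall t,\ N_{t,a}\ge 2 \Rightarrow \sigma_{t,a}^2\ge \sigma_a^2\,(1-\varepsilon_{-,\sigma}(N_{t,a}-1,\delta))\right\}.
\end{align*}
The event $\mathcal E_a^\sigma$ is exactly the lower-tail statement of Corollary~\ref{cor:uniform_time_upper_lower_tail_concentration_variance} at confidence $\delta/(4(K-1))$, while $\mathcal E_a^\mu$ follows from a time-uniform sub-Gaussian bound on $\sqrt{N_{t,a}}(\mu_{t,a}-\mu_a)/\sigma_a$ obtained by the method of mixtures: the Riemann-$\zeta$-weighted stratification indexed by $s$ produces the $\log(4(K-1)\zeta(s)/\delta)+s+s\log(1+\log t/(2s))$ term, and inverting the $y-\log y$ exponent typical of such mixture bounds produces the factor $\overline{W}_{-1}$. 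A union bound over the $K$ arms and the two events then gives $\probability\bigl(\bigcap_{a\in[K]}\mathcal E_a^\mu\cap\mathcal E_a^\sigma\bigr)\ge 1-\delta$.

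On this intersection, for every $b$ and every $t\ge t_b^{\text{Box}}(\delta)$ the denominator $1-\varepsilon_{-,\sigma}(N_{t,b}-1,\delta)$ is strictly positive, where~(\ref{eq:def_initial_time_box_thresholds}) is obtained precisely by inverting this positivity condition in closed form using the $\overline{W}_0$ branch. Under that condition,
\[
	\frac{(\mu_{t,b}-\mu_b)^2}{\sigma_{t,b}^2}\le \frac{\sigma_b^2\,\varepsilon_\mu(N_{t,b},\delta)}{\sigma_b^2\,(1-\varepsilon_{-,\sigma}(N_{t,b}-1,\delta))} = \frac{\varepsilon_\mu(N_{t,b},\delta)}{1-\varepsilon_{-,\sigma}(N_{t,b}-1,\delta)}.
\]
Applying this with $b\in\{a,a^\star(\mu)\}$ and summing via monotonicity of $\log(1+\cdot)$ yields~(\ref{eq:log_based_concentration}) with right-hand side exactly $c_{a,a^\star(\mu)}^{\text{Box}}(N_t,\delta)$ whenever $t\ge\max_{c\in\{a,a^\star(\mu)\}}t_c^{\text{Box}}(\delta)$. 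For smaller $t$ the convention $c_{a,a^\star(\mu)}^{\text{Box}}(N_t,\delta)=+\infty$ makes~(\ref{eq:log_based_concentration}) hold trivially.

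The main technical obstacle is the mean concentration bound producing exactly the $\overline{W}_{-1}$-shaped $\varepsilon_\mu$ with an iterated-logarithm rate, uniform in time and in the arm, and with union-bound constants sharp enough to deliver the $4(K-1)\zeta(s)/\delta$ budget per event. While the mixture-method template is classical, tuning it to share the same peeling architecture as the variance bound of Corollary~\ref{cor:uniform_time_upper_lower_tail_concentration_variance} (same stratifier $s$, same $\zeta(s)$ weights, same $\log(1+\log t/(2s))$ correction) is what makes the two $\varepsilon$ functions combine cleanly inside the transportation cost and the $\sigma^2$ factors cancel. Once these two time-uniform inequalities are in place, the rest of the argument is the monotonicity-plus-union-bound manipulation above.
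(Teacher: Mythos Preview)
Your proposal is correct and follows essentially the same approach as the paper: reduce to~(\ref{eq:log_based_concentration}) via Lemma~\ref{lem:GLR_delta_correct}, control $(\mu_{t,b}-\mu_b)^2/\sigma_b^2$ and $\sigma_{t,b}^2/\sigma_b^2$ separately by the time-uniform mean bound (Lemma~\ref{lem:uniform_upper_lower_tails_concentration_mean}, mixture method) and the variance lower tail (Corollary~\ref{cor:uniform_time_upper_lower_tail_concentration_variance}), let the $\sigma_b^2$ cancel, and union-bound. The only cosmetic difference is that the paper phrases the recombination step as maximizing $\sum_b \tfrac{N_b}{2}\ln(1+y_b)$ under the separable box constraints $x_b y_b\le C_b$, $x_b\ge D_b$ (so the optimum is $y_b=C_b/D_b$), a framing chosen because it generalizes directly to the BoB threshold of Corollary~\ref{cor:delta_correct_Kinf_thresholds}; your monotonicity argument is the same computation, and the paper's union bound is organized over the $K-1$ pairs $\{a,a^\star\}$ rather than over the $K$ arms, which yields exactly the $4(K-1)$ budget in the constants.
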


To derive the Box threshold, we leverage a lower bound on the empirical variance which is ensured to be strictly positive (hence informative) thanks to the initial time condition~\eqref{eq:def_initial_time_box_thresholds}.
As $W_{0}(x) \in [-1,+\infty)$, it also yields that $N_{t,a} > 2$.
Using that $W_0(x) \approx \ln(x)- \ln\ln(x)$ (Appendix~\ref{app:lambert_W_functions}), it is asymptotically equivalent to $\frac{2(1+\eta_{0})\ln (1/\delta)}{\ln \ln (1/\delta)}$.
Since the lower bound in Lemma~\ref{lem:lower_bound_sample_complexity} suggests that the stopping time is asymptotically equivalent to $T^\star(\mu, \sigma^2) \ln \left(1/\delta\right)$, the condition~\eqref{eq:def_initial_time_box_thresholds} has a vanishing influence compared to the stopping time.
For the parameters used in our simulations (see Section~\ref{ssec:simulations_stopping_thresholds}),~\eqref{eq:def_initial_time_box_thresholds} is empirically satisfied after sampling each arm $16$ times for $\delta = 0.1$ and $20$ times for $\delta = 0.001$.
Recall that $\overline{W}_{-1}(x) \approx x + \log x$ and $\overline{W}_{0}(x) \approx e^{-x + e^{-x}}$ (see Appendix~\ref{app:lambert_W_functions}).

\subsection{Beyond Box}
\label{ssec:beyond_box}

While being simpler to derive by controlling each arm independently, the above thresholds have a worse $\delta$ dependency than more sophisticated approach controlling directly the joint term~\eqref{eq:log_based_concentration}.
Since it is challenging to deal with~\eqref{eq:log_based_concentration}, we consider as a proxy the KL divergences for which is is easier to construct martingales, which can improve on the $\delta$ dependency.
To do so, we consider the formulation~\eqref{eq:KL_without_inf}, which removes the minimization step over variances, and apply the arguments used to obtain~\eqref{eq:log_based_concentration}.
Under any sampling rule, to obtain a $\delta$-correct GLR stopping rule it suffices to show that the family of thresholds is such that the following time-uniform concentration inequality holds for all $\nu \in \mathcal D^{K}$: with probability $1-\delta$, for all $t \in \Natural$ and for all $a \neq a^\star(\mu)$,
\begin{equation} \label{eq:kl_based_concentration}
	 \sum_{b \in \{a, a^\star(\mu)\}} N_{t,b}\KL((\mu_{t,b}, \sigma_{t,b}^2), (\mu_b, \sigma^2_b))
\le c_{a, a^\star(\mu)}(N_t,\delta)  \: .
\end{equation}

\paragraph{KL thresholds}
First, we derive time-uniform concentration results on the summation of KL divergences (Theorems~\ref{thm:uniform_upper_tail_concentration_kl_exp_fam} and~\ref{thm:uniform_upper_tail_concentration_kl_exp_fam_gaussian}), which are of independent interest.
Then, applied to our setting, it yields a $\delta$-correct family of thresholds (Theorem~\ref{thm:delta_correct_complex_threshold_glrt}).

\begin{theorem} \label{thm:delta_correct_complex_threshold_glrt}
	Let $\eta_{1}>0$, $\gamma,s >1$. Let $\varepsilon_\mu$, $\varepsilon_{-,\sigma}$ as in Lemma~\ref{lem:delta_correct_box_thresholds} with $\tilde \delta = \frac{\delta}{3}$ and $(t^{\text{Box}}_a)_a$ as in (\ref{eq:def_initial_time_box_thresholds}),
	\begin{align*}
			&1 + \varepsilon_{+,\sigma} (t, \delta)  = \overline{W}_{-1} \left(1 +  \frac{2(1+\eta_{1})}{t}\left(\ln\left( \frac{12(K-1)\zeta(s)}{\delta} \right) + s\ln \left( 1+ \ln_{1+\eta_{1}}(t)\right) \right) \right) -\frac{1}{t} \: .
	\end{align*}
For all $t$, define $i_{t,a} = \lfloor \log_\gamma N_{t,a} \rfloor$, $n_{t,a} = \gamma^{i_{t,a}}$, $ \bar{t}_a = \inf \left\{ t \mid N_{t,a} = n_{t,a} \right\}$,
	\begin{align*}
	& \mu_{++,t,a}^2 = \max_{\pm} \left(\mu_{ \bar{t}_a ,a} \pm 2\sigma_{ \bar{t}_a,a}\sqrt{ \frac{\varepsilon_\mu(n_{t,a}, \delta)}{1-\varepsilon_{-,\sigma} (n_{t,a}-1, \delta)} }  \right)^2 \: , \\
	&\sigma^2_{\pm,t,a} = \sigma_{ \bar{t}_a ,a}^2 \frac{1 \pm \varepsilon_{\pm,\sigma} (n_{t,a}-1, \delta)}{1 \mp \varepsilon_{\mp,\sigma} (n_{t,a}-1, \delta)} \quad \text{and} \quad	R_{ t,a}(\delta) = \frac{\sigma^3_{+,t,a} f_{+}\left( g(\sigma^2_{+,t,a}, \mu_{++,t,a}^2)\right)}{\sigma^3_{-,t,a} f_{-}\left( g(\sigma^2_{-,t,a},\mu_{++,t,a}^2)\right)} \: ,
	\end{align*}
 where $f_{\pm}(x) = \frac{1 \pm \sqrt{1 - x}}{\sqrt{x}}$ and $g(x,y) = \frac{2x}{(x+2y+\frac{1}{2})^2}$.
 The family of thresholds $c_{a,b}^{\KL}(N_t, \delta)$ with value $+ \infty$ if $t < \max_{c \in \{a, b \}} \max \{t^{\text{Box}}_c(\delta/3), t_{c}^{\text{m}}(\delta)\}$ and otherwise
	\begin{equation} \label{eq:def_complex_threshold_glrt}
		c_{a,b}^{\KL}(N_t, \delta) = 4\overline{W}_{-1}\left( 1 + \frac{\log\frac{2\zeta(s)^{2}}{\delta}}{4}
		+ \frac{s}{4} \sum_{c \in \{a,b\}} \log(1 + \log_\gamma N_{t,c})
		+ \frac{1}{2}\sum_{c \in \{a,b\}} \log\left(\gamma R_{t, c}(\delta)\right) \right)
\end{equation}
	yields a $\delta$-correct family of thresholds for the GLR stopping rule. The stochastic initial times are
	\begin{equation} \label{eq:def_initial_time_monotonicity}
	t_{a}^{\text{m}}(\delta) = \inf \left\{ t \mid  N_{t,a} > 1 + \max \left\{ \frac{e^{s / \ln\left( \frac{12(K-1)\zeta(s)}{\delta} \right)}  }{1+ \eta_0},  \frac{e^{s /\left( \ln\left( \frac{12(K-1)\zeta(s)}{\delta} \right) - \frac{1}{2(1+\eta_1)} \right)}  }{1+\eta_1}\right\}  \right\} \: .
	\end{equation}
\end{theorem}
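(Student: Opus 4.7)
The plan is to prove the sufficient condition~\eqref{eq:kl_based_concentration} by intersecting two concentration events whose failure probabilities sum to at most $\delta$. On the intersection, a direct rearrangement of the resulting inequality via the Lambert branch $\overline W_{-1}$ produces the threshold in~\eqref{eq:def_complex_threshold_glrt}.

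First, I would invoke Lemma~\ref{lem:delta_correct_box_thresholds} with confidence level $\tilde\delta = \delta/3$, which, on an event $\mathcal E_{\text{Box}}$ of probability at least $1 - \delta/3$, places $(\mu_c, \sigma_c^2)$ in an explicit box around $(\mu_{t,c}, \sigma_{t,c}^2)$ for all arms $c$ and all $t \ge t_c^{\text{Box}}(\delta/3)$. Specialising these boxes to the geometric checkpoint $\bar t_c$ (where $N_{\bar t_c, c} = n_{t,c} = \gamma^{i_{t,c}}$) yields the deterministic enclosures $\mu_c^2 \le \mu_{++,t,c}^2$ and $\sigma_c^2 \in [\sigma_{-,t,c}^2, \sigma_{+,t,c}^2]$ that appear in the statement. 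The initial-time bound~\eqref{eq:def_initial_time_monotonicity} enforces that these box sizes are monotonic in $N_{t,c}$, so that the enclosure extracted from the current slice is compatible with every previously visited slice.

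Second, I would apply Theorem~\ref{thm:uniform_upper_tail_concentration_kl_exp_fam_gaussian} with remaining confidence $2\delta/3$. The underlying mixture-martingale method produces, for any proper prior $\pi$ on $(\mu, \sigma^2)$ and any target parameter $\theta_b = (\mu_b, \sigma_b^2)$, a time-uniform bound of the form
\[ \sum_{b} N_{t,b}\,\KL\bigl((\mu_{t,b}, \sigma_{t,b}^2), \theta_b\bigr) \le \log \frac{1}{\delta'} + \sum_b \log \frac{1}{\pi_b(B_{t,b}(\theta_b))} \]
for any measurable neighbourhood $B_{t,b}$ of $\theta_b$. Since $\theta_b$ is unknown, I would peel over $N_{t,b}$ along the geometric grid $n_{t,b} = \gamma^{i_{t,b}}$, paying a Riemann-$\zeta(s)$ factor together with the $\log(1 + \log_\gamma N_{t,c})$ contributions visible in~\eqref{eq:def_complex_threshold_glrt}. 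On the current slice, the log-mass $\log(1/\pi_b(B_{t,b}))$ is bounded using the checkpoint moments $(\mu_{\bar t_b, b}, \sigma_{\bar t_b, b}^2)$: combined with the box enclosures from the first step, a closed-form evaluation of the prior integral against the extreme parameter values in the box produces exactly the $R_{t,c}(\delta)$ factor, with $f_\pm$ and $g$ arising from the specific shape of the prior density underlying Theorem~\ref{thm:uniform_upper_tail_concentration_kl_exp_fam_gaussian}.

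On $\mathcal E_{\text{Box}} \cap \mathcal E_{\KL}$, which has probability at least $1 - \delta$, the inequality takes the form $u \le A + \tfrac{1}{2} \log u$, where $u$ is proportional to the left-hand side of~\eqref{eq:kl_based_concentration} after the peeling shifts are absorbed; inverting via $\overline W_{-1}$ is routine and produces the $4\overline W_{-1}(\cdots)$ factor. The main obstacle is the peeling-and-stitching step: one must choose a prior whose mass on a data-dependent neighbourhood of the true parameters admits an explicit, monotone-in-$N_{t,c}$ upper bound $R_{t,c}(\delta)$ computable from the checkpoint moments, and verify that this bound remains valid as $N_{t,c}$ ranges over an entire geometric slice. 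This replacement of the unknown $(\mu_b, \sigma_b^2)$ by the empirical checkpoint moments is precisely what lets the threshold attain the near-optimal $\log(1/\delta) + O(\log\log t)$ scaling that the Box threshold misses.
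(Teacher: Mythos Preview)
Your high-level architecture matches the paper's: reduce to~\eqref{eq:kl_based_concentration} via Lemma~\ref{lem:GLR_delta_correct}, then control the summed KL by a mixture-martingale bound, geometrically peeled in $N_{t,c}$, on a box-shaped preliminary confidence region. In the paper this is done in one stroke by applying Theorem~\ref{thm:uniform_upper_tail_concentration_kl_exp_fam_gaussian} with $\cS = \{a, a^\star\}$ and a union bound over the $K-1$ challengers; your separate invocation of Lemma~\ref{lem:delta_correct_box_thresholds} at level $\delta/3$ is redundant, since the box step is already the ``preliminary concentration'' built into that theorem.

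Two mechanistic claims in your sketch are wrong and would prevent you from actually filling in the argument. First, $R_{t,c}(\delta)$ is \emph{not} the prior mass of a neighbourhood, and $f_\pm, g$ do \emph{not} come from the shape of a prior density. They are the explicit eigenvalue formulas for the Hessian $\nabla^2\phi$ of the Gaussian log-partition function (Lemma~\ref{lem:eigenvalues_based_on_box_concentration}): $R_{t,c}(\delta)$ is an upper bound on the ratio $\lambda_{+,c}/\lambda_{-,c}$ of extreme eigenvalues of $\nabla^2\phi$ over the box region, and this ratio quantifies the error in sandwiching the Bregman divergence $d_\phi$ by quadratics (Lemmas~\ref{lem:concentration_with_gaussian_trunc_prior} and~\ref{lem:concentration_slice_i_with_eta_multi_arm}). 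The box preliminary is needed precisely to make this eigenvalue ratio finite and computable from data, not to locate a prior-mass integral.

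Second, the factor $4\overline W_{-1}(\cdot)$ does \emph{not} arise from inverting an inequality of the form $u \le A + \tfrac12 \log u$. The pre-optimisation bound (Lemma~\ref{lem:concentration_slice_i_with_eta_multi_arm}) has the shape $(1+\eta)\bigl(d|\cS|\log(1+\tfrac1\eta) + \text{const}\bigr)$ with a free prior-variance scale $\eta$; the $\overline W_{-1}$ appears when one minimises this over $\eta$ via Lemma~\ref{lem:lemma_A_3_of_Remy}. There is no self-referential $\log u$ term to invert.
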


As $\overline{W}_{-1}(x) \approx x + \ln(x)$, Theorem~\ref{thm:delta_correct_complex_threshold_glrt} proves that we can obtain $\delta$-correct threshold with the dependencies $c(t,\delta) \sim_{\delta \to  0} \ln\left(1/\delta\right) $ and $c(t,\delta) \sim_{t \to +\infty}  C \ln\ln(t)$, which are widely used in practice for BAI problems.
While this dependency was already motivated when the variances are known \citep{kaufmann_2018_MixtureMartingalesRevisited}, Theorem~\ref{thm:delta_correct_complex_threshold_glrt} legitimates its use for unknown variances.

To control the KL divergence between the true parameter and the MLE for Gaussian with unknown variances, our threshold combines two concentration results and is obtained by covering $\Natural$ with slices of times with geometrically increasing size to cover (referred to as the ``peeling'' method).
First, we use a crude per-arm concentration step to restrict the estimated parameters to a region around the true mean and variance.
Then, a second result uses the knowledge of the restriction to get a finer concentration on the weighted sum of KL.
It is proved for generic exponential families by approximating the KL divergence by a quadratic function on this crude confidence region.
In \eqref{eq:def_complex_threshold_glrt}, $R_{t, a}(\delta)$ represents the cost of this approximation, while $\log(1 + \log_\gamma N_{t,c})$ is the cost of time-uniform.
The initial time condition~\eqref{eq:def_initial_time_monotonicity} ensures the monotonicity of the preliminary concentration, and it is of the form $N_{t,a} > 1 + c_{0}(\delta)$ where $c_{0}(\delta) > 0$.
In our simulations (see Section~\ref{ssec:simulations_stopping_thresholds}),~\eqref{eq:def_initial_time_monotonicity} is empirically satisfied after sampling each arm twice for all considered $\delta$.

\citet{degenne_2019_ImpactStructureDesign} derives concentration on the KL divergence of sub-Gaussian $d$-dimensional exponential families defined on the natural parameter space $\Theta_{D} = \Real^d$. This doesn't include Gaussian with unknown variance, but our proof builds on his method.
The main challenge was to tackle $\Theta_{D} \neq \Real^d$, and we solved it by truncation on the sequence of crude confidence regions.
In generalized linear bandits, truncated Gaussians were also used to derive tail-inequalities for martingales ``re-normalized'' by their quadratic variation \citep{faury_2021_VarianceSensitiveConfidence}.
For general $d$-dimensional exponential families, \citet{Chowdhury_2022_BregmanDeviationsExpFam} derives concentrations on the KL divergence between the true parameter and a linear combination of the MLE and the true parameter.
As we are interested in the KL divergence between the true parameter and the MLE, we cannot leverage their result.

\paragraph{BoB thresholds}
While the KL thresholds reach the desired dependency in $(t, \delta)$, using \eqref{eq:kl_based_concentration} instead of \eqref{eq:log_based_concentration} yields larger thresholds due to additive constants.
To overcome this hurdle, we maximize \eqref{eq:log_based_concentration} under the per-arm box constraints (Lemma~\ref{lem:delta_correct_box_thresholds}) and the pairwise non-linear constraint (Theorem~\ref{thm:delta_correct_complex_threshold_glrt}).
The resulting family of thresholds is denoted by BoB (Best of Both) thresholds.
While the BoB thresholds have no closed-form solution, they can be approximated with non-linear solvers, e.g. Ipopt \citep{Wachter_2006_IpOpt}.

\begin{corollary} \label{cor:delta_correct_Kinf_thresholds}
	Let $f(x,y) = (1+y)x-1-\ln(x)$ for all $(x,y) \in (\Real_{+}^{\star})^2$.
	Let $(t^{\text{Box}}_a)_a$ and $(t^{\text{m}}_a)_a$ as in (\ref{eq:def_initial_time_box_thresholds},\ref{eq:def_initial_time_monotonicity}).
	Let $\epsilon_{\mu},\epsilon_{-,\sigma}$ as in Lemma~\ref{lem:delta_correct_box_thresholds} and $(c_{b,a}^{\KL})_{b,a \in [K]}$ as in (\ref{eq:def_complex_threshold_glrt}). The family of thresholds $c_{a,b}^{\text{BoB}}(N_t, \delta)$ with value $+ \infty$ if $t < \max_{c \in \{a, b \}} \max \{t^{\text{Box}}_c(\delta/6), t_{c}^{\text{m}}(\delta/2)\}$ and otherwise solution of the optimization problem
 \begin{align*}
 \text{maximize } &  \frac{1}{2} \sum_{c \in \{a, b\}} N_{t,c} \ln \left( 1 + y_c\right)
 \\
 \text{such that }\: & \forall c \in \{a, b\}, \quad y_c \geq 0, \: x_c y_c \leq \epsilon_{\mu}(N_{t,c}, \delta/2), \: x_c \geq 1 - \epsilon_{-,\sigma}(N_{t,c}-1, \delta/2) \: ,\\
 &\text{and} \quad  \frac{1}{2} \sum_{c \in \{a, b\}} N_{t,c} f \left(x_c, y_c\right) \leq c_{b,a}^{\KL}(N_t,\delta/2) \: ,
 \end{align*}
 yields a $\delta$-correct family of thresholds for the GLR stopping rule.
\end{corollary}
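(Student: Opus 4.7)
The strategy is to verify that the BoB threshold satisfies the sufficient condition \eqref{eq:log_based_concentration} with probability at least $1-\delta$, by combining via a union bound the two good events already exploited by Lemma~\ref{lem:delta_correct_box_thresholds} and Theorem~\ref{thm:delta_correct_complex_threshold_glrt}, each invoked at level $\delta/2$, and then showing that on their intersection the true parameter provides a feasible point of the optimization problem.

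First I would set up the two events. Let $\mathcal{E}_{\text{Box}}$ be the event, of probability at least $1-\delta/2$ by Lemma~\ref{lem:delta_correct_box_thresholds} at level $\delta/2$, on which
\[
\frac{(\mu_{t,c}-\mu_c)^{2}}{\sigma_c^{2}} \le \varepsilon_{\mu}(N_{t,c},\delta/2) \quad \text{and} \quad \frac{\sigma_{t,c}^{2}}{\sigma_c^{2}} \ge 1-\varepsilon_{-,\sigma}(N_{t,c}-1,\delta/2)
\]
hold jointly for every $t$ and $c$ past the initial time $t_c^{\text{Box}}(\delta/2)$. Let $\mathcal{E}_{\KL}$ be the event, of probability at least $1-\delta/2$ by Theorem~\ref{thm:delta_correct_complex_threshold_glrt} at level $\delta/2$, on which
\[
\sum_{b \in \{a, a^{\star}(\mu)\}} N_{t,b}\,\KL\bigl((\mu_{t,b},\sigma_{t,b}^{2}),(\mu_b,\sigma_b^{2})\bigr) \le c_{a,a^{\star}(\mu)}^{\KL}(N_t,\delta/2)
\]
holds for every $t$ and every $a \ne a^{\star}(\mu)$. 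Because $c^{\KL}(\cdot,\delta/2)$ internally invokes the Box control at level $(\delta/2)/3=\delta/6$, the stochastic initial times $t_c^{\text{Box}}(\delta/6)$ and $t_c^{\text{m}}(\delta/2)$ quoted in the statement cover both events simultaneously: since $\delta/6 \le \delta/2$ we have $t_c^{\text{Box}}(\delta/6) \ge t_c^{\text{Box}}(\delta/2)$, so $\mathcal{E}_{\text{Box}}$ is also valid past the stated initial time. A union bound yields $\bP(\mathcal{E}_{\text{Box}} \cap \mathcal{E}_{\KL}) \ge 1-\delta$.

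Next I would verify that the true parameter is feasible. On $\mathcal{E}_{\text{Box}} \cap \mathcal{E}_{\KL}$, fix $t$ past the initial time and $a \ne a^\star(\mu)$, and set $x_c^{\star} = \sigma_{t,c}^{2}/\sigma_c^{2}$ and $y_c^{\star} = (\mu_{t,c}-\mu_c)^{2}/\sigma_{t,c}^{2}$ for $c \in \{a,a^\star(\mu)\}$. The event $\mathcal{E}_{\text{Box}}$ yields directly $x_c^{\star} \ge 1-\varepsilon_{-,\sigma}(N_{t,c}-1,\delta/2)$ and $x_c^{\star} y_c^{\star} = (\mu_{t,c}-\mu_c)^{2}/\sigma_c^{2} \le \varepsilon_{\mu}(N_{t,c},\delta/2)$. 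A direct computation of the Gaussian KL gives the identity
\[
2\,\KL\bigl((\mu_{t,c},\sigma_{t,c}^{2}),(\mu_c,\sigma_c^{2})\bigr) = x_c^{\star}(1+y_c^{\star})-1-\ln x_c^{\star} = f(x_c^{\star},y_c^{\star}),
\]
so $\mathcal{E}_{\KL}$ rewrites as $\frac{1}{2}\sum_c N_{t,c}\,f(x_c^{\star},y_c^{\star}) \le c_{a,a^\star(\mu)}^{\KL}(N_t,\delta/2)$. Hence $(x_c^\star, y_c^\star)_{c}$ is feasible for the optimization problem defining $c_{a,a^\star(\mu)}^{\text{BoB}}(N_t,\delta)$, and its objective value $\frac{1}{2}\sum_c N_{t,c}\ln(1+y_c^{\star})$ is precisely the left-hand side of \eqref{eq:log_based_concentration}. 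Upper-bounding this objective by the optimum $c_{a,a^\star(\mu)}^{\text{BoB}}(N_t,\delta)$ therefore gives \eqref{eq:log_based_concentration} with threshold $c^{\text{BoB}}$, which, by the discussion opening Section~\ref{sec:calibration_stopping_rules}, is the sufficient condition for $\delta$-correctness of the GLR stopping rule \eqref{eq:def_stopping_rule_glrt}.

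The main obstacle I anticipate is the bookkeeping of $\delta$-scalings and initial times, since $c^{\KL}$ already consumes a $\delta/6$-share of the Box concentration internally, so one must check that a single combined initial time renders both events usable simultaneously without any double-counting; this is precisely why the statement combines $t_c^{\text{Box}}(\delta/6)$ with $t_c^{\text{m}}(\delta/2)$ rather than, say, $t_c^{\text{Box}}(\delta/2)$. Once the identity between the Gaussian KL and $f$ is in place, no new concentration is needed beyond what Lemma~\ref{lem:delta_correct_box_thresholds} and Theorem~\ref{thm:delta_correct_complex_threshold_glrt} already deliver, and the argument reduces to a single union bound plus a mechanical translation of constraints.
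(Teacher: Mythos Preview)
Your proposal is correct and follows essentially the same approach as the paper: a union bound over the Box concentration (Lemma~\ref{lem:delta_correct_box_thresholds}) and the KL concentration (Theorem~\ref{thm:delta_correct_complex_threshold_glrt}) each at level $\delta/2$, together with the identity $2\KL((\mu_{t,c},\sigma_{t,c}^{2}),(\mu_c,\sigma_c^{2})) = f(x_c,y_c)$, so that the true $(x_c^\star,y_c^\star)$ is feasible and the objective there equals the left-hand side of \eqref{eq:log_based_concentration}. Your write-up is in fact more explicit than the paper's, which simply states that the corollary is ``a direct consequence of the above manipulations and a union bound over concentration result''; your handling of the initial-time bookkeeping via $t_c^{\text{Box}}(\delta/6) \ge t_c^{\text{Box}}(\delta/2)$ is also correct.
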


Since \eqref{eq:log_based_concentration} is smaller than \eqref{eq:kl_based_concentration}, the KL constraint is an upper bound on the BoB threshold.
Compared to the box threshold, the maximization underlying the BoB threshold has an additional constraint.
Therefore, we have $c_{a,b}^{\text{BoB}}(N_t, \delta) \le \min\{c_{b,a}^{\text{Box}}(N_t,\delta/2), c_{b,a}^{\KL}(N_t,\delta/2)\}$.
In particular, the BoB threshold combines the best of both thresholds in terms of $(t,\delta)$ dependencies.

\subsection{Simulations}
\label{ssec:simulations_stopping_thresholds}

We perform numerical simulations to compare the family of thresholds introduced above for the GLR stopping rule (see Appendix~
\ref{app:sss_exp_thresholds} for the EV-GLR stopping rule).
Taking $K=2$, we consider the instance $\mu = (0,-0.2)$ and $\sigma^2 = (1,0.5)$.
Since we are not interested in observing the influence of the sampling rule, the stream of data is uniform between both arms.
For the thresholds, we set the parameters to $s=2$, $\gamma=1.2$ and $\eta_0=\eta_1 = \ln \left( 1/\delta\right)^{-1}$.

\begin{figure}[ht]
	\centering
	\includegraphics[width=0.48\linewidth]{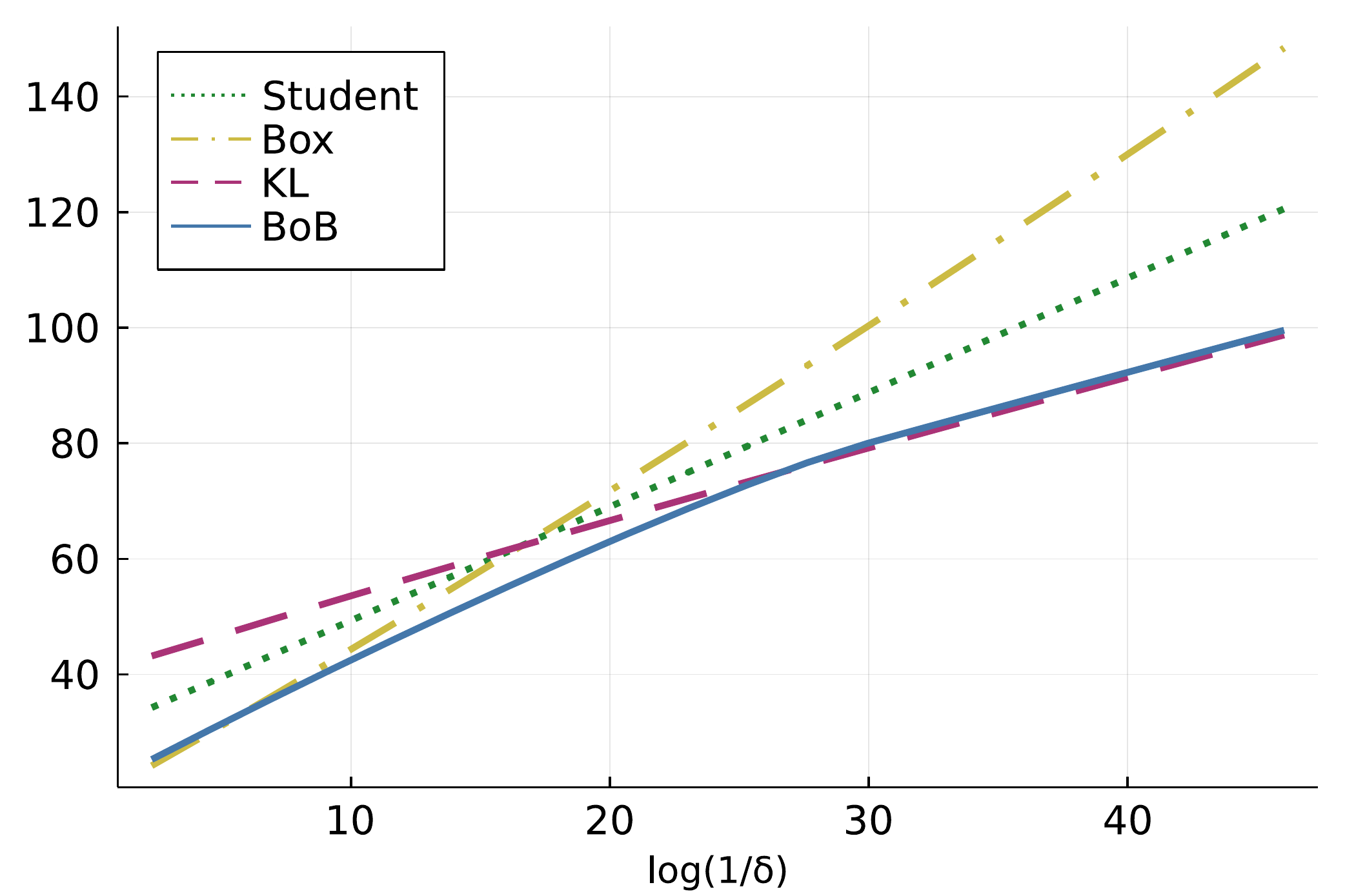}
	\includegraphics[width=0.48\linewidth]{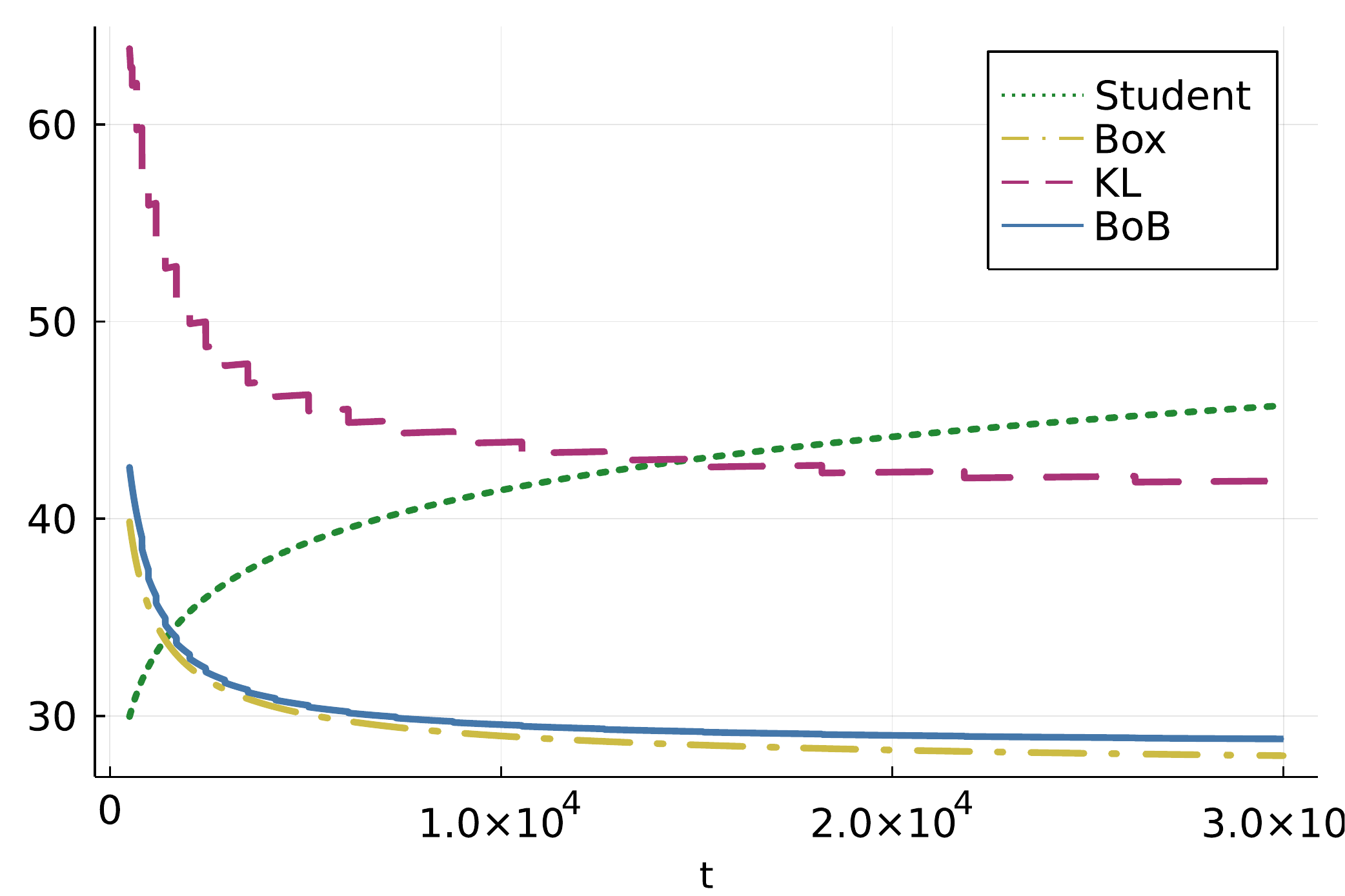}
	\caption{Thresholds for (\ref{eq:def_stopping_rule_glrt}) as a function of (a) $\ln\left(1/\delta\right)$ for $t = 5000$ and (b) $t$ for $\delta = 0.01$.}
	\label{fig:stopping_thresholds_evolutions_glrt}
\end{figure}

Figure~\ref{fig:stopping_thresholds_evolutions_glrt} plots the dependency of the thresholds in $\ln\left(1/\delta\right)$ and $t$.
In Figure~\ref{fig:stopping_thresholds_evolutions_glrt}(a), we are only interested by the slopes, and smaller slopes are equivalent to better dependency in $\ln\left(1/\delta\right)$.
As expected, Student thresholds have poor performance for both variables.
While box thresholds improve in $t$, they suffer from a worse dependency in $\ln \left(1/\delta\right)$.
KL thresholds circumvent this issue with the best dependency in $\ln \left(1/\delta\right)$ so far.
However, they incur a large constant cost making it worse than the box threshold in practice.
As hoped, BoB thresholds combine the good performance in $t$ of the box threshold and the asymptotic dependency in $\ln \left(1/\delta\right)$ of the KL threshold.

The improved theoretical dependency of the BoB threshold comes at the price of a higher computational cost: on average $400$, $600$ and $800$ times larger than the ones for the KL threshold, the Box threshold and the Student threshold respectively.
When the computational cost is a major concern, the Box threshold should be used since it has low computational cost and good empirical performance.
Alternatively, we could use the BoB threshold and evaluate the stopping rule only on a predefined geometric grid of times.
This ``lazy'' stopping rule is still $\delta$-correct.


\section{Sampling Rule Wrappers}
\label{sec:sampling_rules}

After calibrating the stopping threshold to ensure $\delta$-correctness, we need to design a sampling rule which requires few samples before stopping.
Given any BAI algorithm for Gaussian with known variances, we propose two wrappers that can adapt the algorithm to tackle unknown variances: plugging in the empirical variance or adapting the transportation cost.

When the variances are unknown, a natural idea is to plug in the empirical variances instead of using the true variances which are now unknown.
We can apply this wrapper to any BAI algorithm.

Section~\ref{sec:lower_bounds_and_glrs} discusses the differences and links between the transportation costs for known and unknown variances.
Leveraging this interplay, we can adapt a BAI algorithm to use the transportation costs for unknown variances instead of the ones for known variances.
We can apply this wrapper to any BAI algorithm relying on transportation costs.

We illustrate how to instantiate each wrapper (Section~\ref{ssec:instantiating_wrappers}), derive guarantees on their asymptotic expected sample complexity (Section~\ref{ssec:upper_bound_sample_complexity}), and assess their empirical performance (Section~\ref{ssec:experiments}).

\subsection{Instantiating the Wrappers}
\label{ssec:instantiating_wrappers}

As initialization, we start by pulling each arm $n_0 \ge 2$ times, and let $t_0 = n_0 K$.

\paragraph{Track-and-Stop}
The Track-and-Stop algorithm \citep{garivier_2016_OptimalBestArm} computes at each time $t > t_0$ the optimal allocation for the considered transportation costs, i.e. $w_t = w^\star_{\sigma^2}(\mu_t)$ for Gaussian with known variances.
Given the vector $w_t$ in the simplex, it uses a so-called tracking procedure to obtain an arm $a_{t+1}$ to sample.
We describe and use the one called C-tracking by \citet{garivier_2016_OptimalBestArm}.
On top of this tracking a forced exploration is used to enforce convergence towards the optimal allocation for the true unknown parameters.
Let $\epsilon \in (0,1/K]$ and $\triangle_{K}^{\epsilon} = \{w \in [\epsilon,1]^{K} \mid \sum_{a \in [K]}w_{a}=1\}$.
Defining $w_{t}^{\epsilon}$ the $L^{\infty}$ projection of $w_{t}$ on $\triangle_{K}^{\epsilon}$, C-Tracking pulls $a_{t+1} \in \argmax_{a\in [K]} \sum_{s = t_{0}}^{t} w^{\epsilon}_{s,a} - N_{t,a}$.

Plugging in the empirical variance yields the EV-TaS (Empirical Variance Track-and-Stop) algorithm which computes $w_t = w^\star_{\sigma_t^2}(\mu_t)$.
Adapting the transportation cost yields the TaS algorithm which uses $w_t = w^\star(\mu_t, \sigma_t^2)$.
Computing $w^\star_{\sigma_t^2}(\mu_t)$ and $w^\star(\mu_t, \sigma_t^2)$ can be done by solving an equivalent optimization problem with one bounded variable (Theorem~\ref{thm:equivalent_optimization_problem} in Appendix~\ref{app:ss_optimal_allocation_oracles}), which can itself be numerically approximated with binary search.

\paragraph{Top Two algorithm}
At each time $t > t_0$, the Top Two algorithm $\beta$-EB-TCI \citep{jourdan_2022_TopTwoAlgorithms} pulls the EB leader $B_{t+1}^{\text{EB}} = \hat a_t$ with probability $\beta$.
If $B_{t+1}^{\text{EB}}$ is not sampled, then it pulls the TCI challenger $A_{t+1}^{\text{TCI}} \in \argmin_{a \neq B_{t+1}^{\text{EB}}} C_{t}(B_{t+1}^{\text{EB}}, a) + \log N_{t,a}$ for the considered transportation costs $C_{t}(a,b)$,
i.e. $C_{t}(a,b) = \1\{\mu_{t,a} > \mu_{t,b}\} \frac{1}{2} \frac{(\mu_{t,a} - \mu_{t,b})^2}{\sigma_{a}^2/N_{t,a} + \sigma_{b}^2/N_{t,b}}$ for Gaussian with known variance.

Plugging in the empirical variance yields the $\beta$-EB-EVTCI algorithm which uses
\begin{equation*} 
	C^{\text{EV}}_{t}(a, b) \eqdef \1 \{\mu_{t,a} > \mu_{t,b} \} \frac{1}{2} \frac{(\mu_{t,a} - \mu_{t,b})^2}{\sigma_{t,a}^2/N_{t,a} + \sigma_{t,b}^2/N_{t,b}}   \: .
\end{equation*}
Adapting the transportation cost yields the $\beta$-EB-TCI algorithm which computes
\begin{equation*} 
	C_{t}(a, b) \eqdef
	\1 \{\mu_{t,a} > \mu_{t,b} \}  \inf_{ \lambda  \in \Real} \sum_{c \in \{a, b\}} \frac{N_{t,c}}{2} \ln \left( 1 + \frac{(\mu_{t,c} - \lambda)^2}{\sigma_{t,c}^2}\right)   \: .
\end{equation*}
Since $C_{t}(\hat a_t, a) = Z_{a}(t)$, we can re-use computations of the GLR stopping rule.

\subsection{Sample Complexity Upper Bound}
\label{ssec:upper_bound_sample_complexity}

Definition~\ref{def:asymptotically_tight_threshold} introduces the notion of asymptotically tight family threshold \citep{jourdan_2022_TopTwoAlgorithms}, which corresponds informally to $c_{a,b}(N,\delta) \sim_{\delta \to 0} \log(1/\delta)$.
As hinted in Figure~\ref{fig:stopping_thresholds_evolutions_glrt}(a), the KL and the BoB thresholds are asymptotically tight (Appendix~\ref{app:ss_asymptotically_tight_thresholds}), but not the Student and Box thresholds.

\begin{definition} \label{def:asymptotically_tight_threshold}
	A family of thresholds $(c_{a,b})_{(a,b) \in [K]^2}$ is said to be asymptotically tight if there exists $\alpha \in [0,1)$, $\delta_0 \in (0,1]$, functions $f,\bar{T} : (0,1] \to \mathbb{R}_+$ and $C$ independent of $\delta$ satisfying: (1) for all $(a,b) \in [K]^2$, $\delta \in  (0,\delta_0]$ and $N \in \Natural^{K}$ such that $\| N \|_1 \ge \bar{T}(\delta)$, then $c_{a,b}(N, \delta) \le f(\delta) + C \| N \|_1^\alpha$, (2) $\limsup_{\delta \to 0} f(\delta)/\log(1/\delta) \le 1$ and $\limsup_{\delta \to 0} \bar{T}(\delta)/\log(1/\delta) = 0$.
\end{definition}

When combined with the GLR stopping rule using the KL or the BoB thresholds, Theorem~\ref{thm:upper_bound_sample_complexity_algorithm} shows that TaS (resp. $\beta$-EB-TCI) is a $\delta$-correct and asymptotically (resp. $\beta$-)optimal algorithm.

\begin{theorem} \label{thm:upper_bound_sample_complexity_algorithm}
	Using the GLR stopping rule with an asymptotically tight family of thresholds, TaS (resp. $\beta$-EB-TCI with $n_0 \ge 4$) satisfies that, for all $\nu$ with $|a^\star(\mu)|=1$ (resp. $\min_{a \neq b}|\mu_a - \mu_b| > 0 $),
	\[
	 \limsup_{\delta \rightarrow 0} \frac{\mathbb{E}_{\nu}\left[\taud\right]}{\log (1 / \delta)} \le T^\star(\mu, \sigma^2)	\qquad \text{(resp. }T^\star_{\beta}(\mu, \sigma^2) {)} \: .
	\]
\end{theorem}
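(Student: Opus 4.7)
Both algorithms use the GLR stopping rule, whose $\delta$-correctness under any asymptotically tight threshold follows from Section~\ref{sec:calibration_stopping_rules}, so the task reduces to upper bounding $\mathbb{E}_\nu[\taud]/\log(1/\delta)$. The strategy, standard since \citet{garivier_2016_OptimalBestArm}, is: (i) show that a high-probability ``good event'' makes the empirical parameters $(\mu_t,\sigma_t^2)$ close to $(\mu,\sigma^2)$ for all large $t$; (ii) deduce convergence of $N_t/t$ to the appropriate optimal allocation $w^\star(\mu,\sigma^2)$ (resp. $w^\star_\beta(\mu,\sigma^2)$); (iii) use continuity to conclude that $\GLR^{\mathcal{D}}_t(\Lambda(\mu_t,\sigma_t^2))$ grows like $t/T^\star(\mu,\sigma^2)$; (iv) combine with asymptotic tightness to get a deterministic bound of the form $\taud \le T^\star(\mu,\sigma^2)\log(1/\delta)(1+o_\delta(1))$ on the good event, and finally upgrade to expectation via a standard $\mathbb{E}[\taud] \le T_0(\delta) + \sum_{t\ge T_0(\delta)} \mathbb{P}(\taud>t)$ decomposition together with uniform integrability.

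\textbf{TaS case.} C-Tracking with the $\varepsilon$-projection guarantees the deterministic lower bound $N_{t,a} \ge \sqrt{t+K^2}-2K$ for all $a$, a direct adaptation of Lemma~7 of \citet{garivier_2016_OptimalBestArm}. Via Gaussian tail bounds on both the empirical mean and variance, this yields a high-probability event $\mathcal E_T = \{\forall t\ge T, \|\mu_t - \mu\|_\infty + \|\sigma_t^2 - \sigma^2\|_\infty \le \epsilon_T\}$ with $\epsilon_T \to 0$ and $\mathbb{P}(\mathcal E_T^c)$ summable in $T$. Continuity of $(\mu,\sigma^2) \mapsto w^\star(\mu,\sigma^2)$ at the instance — which follows from Berge's maximum theorem applied to the concave-convex saddle-point formulation (uniqueness of $w^\star$ when $|a^\star(\mu)|=1$, established e.g. via Theorem~\ref{thm:equivalent_optimization_problem} in the appendix) — then gives $w_t \to w^\star$, and the tracking lemma yields $\|N_t/t - w^\star(\mu,\sigma^2)\|_\infty \to 0$. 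By joint continuity of the objective defining $T^\star$, $\GLR_t/t \to 1/T^\star(\mu,\sigma^2)$. Asymptotic tightness gives $c_{a,b}(N_t,\delta) \le \log(1/\delta) + C(t_0 K + t)^\alpha/1$ for $t \ge \bar T(\delta)$, so the stopping condition is met once $t/T^\star - o(t) \ge \log(1/\delta) + o(t)$, i.e. for $t$ of order $T^\star\log(1/\delta)$; standard algebra (cf. Lemma~F.4 of \citet{jourdan_2022_TopTwoAlgorithms}) converts this into the required $\limsup$.

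\textbf{$\beta$-EB-TCI case.} I plug into the general Top Two framework of \citet{jourdan_2022_TopTwoAlgorithms}. Their analysis decomposes into (a) sufficient exploration: every arm is pulled $\omega(\log\log t)$ times eventually, which here follows because the EB leader is sampled with probability $\beta>0$ and the TCI challenger includes the additive $\log N_{t,a}$ penalty that steers towards under-sampled arms exactly as in the known-variance case; (b) convergence of $N_{t,a^\star(\mu)}/t \to \beta$; (c) convergence of the non-best arm proportions to the $\beta$-constrained optimal allocation, obtained by arguing that on $\mathcal E_T$ the challenger rule enforces equality of the transportation costs $C_t(\hat a_t, a)$ across suboptimal $a$ in the limit, which is precisely the optimality condition for $w^\star_\beta(\mu,\sigma^2)$. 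The identity $C_t(\hat a_t,a)=Z_a(t)$ noted in Section~\ref{ssec:instantiating_wrappers} means the same computation drives the stopping rule, so the last step of the TaS argument carries over verbatim.

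\textbf{Main obstacle.} The delicate step, relative to the known-variance literature, is the continuity/regularity of the oracle allocation $w^\star(\mu,\sigma^2)$ (respectively the $\beta$-variant) and of the transportation cost in the joint parameter $(\mu,\sigma^2)$: unlike $C_{\sigma^2}$, the cost $C(a,b;w)$ has no closed form and both the infimum in $\lambda$ (an interior point of $(\mu_b,\mu_a)$ with an increased-variance minimizer $\kappa$) and the dependence on variances through the $\log(1+\cdot)$ term need to be handled carefully. I will address this by invoking the equivalent one-dimensional reformulation of Theorem~\ref{thm:equivalent_optimization_problem} from the appendix to get joint continuity and uniqueness of $w^\star$, then transfer every ingredient of the known-variance TaS/Top Two proofs essentially unchanged. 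The extra variance-concentration terms in $\mathcal E_T$ are exactly the ones controlled by Corollary~\ref{cor:uniform_time_upper_lower_tail_concentration_variance}, so no new concentration work is needed beyond Section~\ref{sec:calibration_stopping_rules}.
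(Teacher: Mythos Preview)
Your TaS argument is essentially the paper's: forced exploration $\Rightarrow$ concentration of $(\mu_t,\sigma_t^2)$ $\Rightarrow$ continuity of $w^\star(\mu,\sigma^2)$ $\Rightarrow$ tracking $\Rightarrow$ linear growth of the GLR $\Rightarrow$ inversion against an asymptotically tight threshold. The paper uses a finite-window event $\mathcal E_T=\bigcap_{t=T^{1/4}}^T\{\cdot\}$ with $\mathbb P(\mathcal E_T^c)\le BT e^{-CT^{1/8}}$ rather than your $\{\forall t\ge T\}$ event, but both routes close the same way.

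For $\beta$-EB-TCI there is a genuine gap. You write that sufficient exploration ``follows \dots\ exactly as in the known-variance case'', but it does not, and this is precisely where the hypothesis $n_0\ge 4$ enters. In the known-variance case the transportation cost toward an under-sampled arm $b$ is bounded by $N_{t,b}(\mu_{t,a}-\mu_{t,b})^2/(2\sigma_b^2)$, which is automatically small. In the unknown-variance case the relevant upper bound is
\[
C_t(a,b)\le \tfrac{N_{t,b}}{2}\log\!\Big(1+\frac{(\mu_{t,b}-\mu_{t,a})^2}{\sigma_{t,b}^2}\Big),
\]
and $\sigma_{t,b}^2$ can be arbitrarily small when $N_{t,b}$ is small, so the $\log N_{t,a}$ penalty alone does not force the challenger toward $b$. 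The paper closes this via an almost-sure lower bound $N_{t,a}\sigma_{t,a}^2\ge W_0^{-1}$ with $W_0=\max_a W_a^{-1}$, $W_a/\sigma_a^2\sim\chi^2(n_0-1)$ (Lemma~\ref{lem:W_concentration_gaussian}); $W_0$ has finite mean exactly when $n_0\ge 4$, which is what makes the comparison in Lemma~\ref{lem:TCI_ensures_suff_explo} between the over-sampled and under-sampled transportation costs go through with a threshold $L$ of finite expectation. This ingredient is not covered by the time-uniform concentration of Section~\ref{sec:calibration_stopping_rules} (those are high-probability, not almost-sure with integrable envelope), so your claim that ``no new concentration work is needed'' is incorrect. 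Once you add this variance lower bound, the rest of your plan---plugging into the Top Two framework of \citet{jourdan_2022_TopTwoAlgorithms} via Properties~2--3 (sufficient exploration) and Properties~5--6 (convergence to $w^\star_\beta$)---is exactly what the paper does; note that the exploration rate obtained is polynomial, $N_{t,a}\ge (t/K)^{1/6}$, not the $\omega(\log\log t)$ you mention.
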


In Appendix~\ref{app:expected_sample_complexity}, we derive a similar result involving $T^\star_{\sigma^2}(\mu)$ (resp. $T^\star_{\sigma^2, \beta}(\mu)$) for EV-TaS (resp. $\beta$-EB-EVTCI with $n_0 \ge 6$) combined with the EV-GLR stopping rule using an asymptotically threshold.
However, since $T^\star_{\sigma^2}(\mu) < T^\star(\mu, \sigma^2)$ and $T^\star_{\sigma^2, \beta}(\mu) < T^\star_{\beta}(\mu, \sigma^2)$, neither of these algorithms can be $\delta$-correct.
Otherwise it would yield a contradiction with the lower bound in Lemma~\ref{lem:lower_bound_sample_complexity}.
Moreover, as there exist instances for which the ratios $T^\star(\mu, \sigma^2) / T^\star_{\sigma^2}(\mu)$ and $T^\star_{\beta}(\mu, \sigma^2) / T^\star_{\sigma^2, \beta}(\mu)$ are arbitrarily large (Lemma~\ref{lem:ratio_characteristic_time_large}), multiplying the thresholds by a problem independent constant is not sufficient either to obtain $\delta$-correctness, as expressed in Theorem~\ref{thm:impossibility_result}.

\begin{theorem} \label{thm:impossibility_result}
	There exists a sampling rule such that: for all asymptotically tight family of thresholds $(c_{a,b})_{(a,b) \in [K]^2}$ and problem independent constant $\alpha_0 > 0$, combining this sampling rule with the EV-GLR stopping rule using $(\alpha_0 c_{a,b})_{(a,b) \in [K]^2}$ yields an algorithm which is not $\delta$-correct.
\end{theorem}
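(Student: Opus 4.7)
The plan is to take the sampling rule to be that of EV-TaS---Track-and-Stop with empirical variance plug-in, computing $w_t = w^\star_{\sigma_t^2}(\mu_t)$---and then derive a contradiction by pitting the asymptotic upper bound for EV-TaS combined with EV-GLR (which is governed by the known-variance complexity $T^\star_{\sigma^2}(\mu)$) against the unknown-variance lower bound from Lemma~\ref{lem:lower_bound_sample_complexity} (which is governed by $T^\star(\mu,\sigma^2)$), exploiting the fact established in Lemma~\ref{lem:ratio_characteristic_time_large} that the ratio $T^\star(\mu,\sigma^2)/T^\star_{\sigma^2}(\mu)$ can be made arbitrarily large.

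Fix any asymptotically tight family $(c_{a,b})_{(a,b) \in [K]^2}$ and any $\alpha_0 > 0$. First, I would observe that the scaled family $(\alpha_0 c_{a,b})_{(a,b) \in [K]^2}$ satisfies a mild relaxation of Definition~\ref{def:asymptotically_tight_threshold}: it is bounded above by $\alpha_0 f(\delta) + \alpha_0 C \|N\|_1^\alpha$, with $\limsup_{\delta \to 0} \alpha_0 f(\delta)/\log(1/\delta) \le \alpha_0$. The asymptotic sample-complexity analysis of EV-TaS combined with EV-GLR, established in Appendix~\ref{app:expected_sample_complexity} for exactly asymptotically tight thresholds, carries through with this relaxation and picks up the multiplicative factor $\alpha_0$ in the leading term: for any instance $\nu = (\nu_{\mu_a,\sigma_a^2})_a$ with $|a^\star(\mu)|=1$,
\[
\limsup_{\delta \to 0} \frac{\mathbb{E}_\nu\left[\taud^{\text{EV}}\right]}{\log(1/\delta)} \le \alpha_0 T^\star_{\sigma^2}(\mu) \: .
\]

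Second, suppose toward contradiction that for all sufficiently small $\delta$ the resulting algorithm is $\delta$-correct on $\mathcal{D}^K$. Then the unknown-variance lower bound in Lemma~\ref{lem:lower_bound_sample_complexity} yields $\mathbb{E}_\nu[\taud^{\text{EV}}] \ge T^\star(\mu,\sigma^2) \log(1/(2.4\delta))$ for every $\nu$, and therefore $\liminf_{\delta \to 0} \mathbb{E}_\nu[\taud^{\text{EV}}]/\log(1/\delta) \ge T^\star(\mu,\sigma^2)$. Combining the two displays forces $T^\star(\mu,\sigma^2) \le \alpha_0 T^\star_{\sigma^2}(\mu)$ on every instance with a unique best arm, directly contradicting Lemma~\ref{lem:ratio_characteristic_time_large}, which exhibits an instance where this ratio strictly exceeds $\alpha_0$.

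The main obstacle is the first step: verifying that the asymptotic upper bound for EV-TaS + EV-GLR inherits exactly the factor $\alpha_0$ in front of $T^\star_{\sigma^2}(\mu)$ when the stopping threshold is scaled by $\alpha_0$. The argument of Appendix~\ref{app:expected_sample_complexity} proceeds by C-tracking (which drives $N_{t,a}/t$ almost surely to $w^\star_{\sigma^2}(\mu)_a$) combined with an inversion step that equates the stopping time to the threshold divided by the empirical information rate; multiplying the threshold by $\alpha_0$ scales the resulting time bound by the same factor, which is precisely the relaxation of asymptotic tightness carried by $(\alpha_0 c_{a,b})$. Once this quantitative refinement is made explicit, the contradiction with Lemma~\ref{lem:ratio_characteristic_time_large} is immediate, and EV-TaS supplies the sampling rule whose existence the theorem asserts.
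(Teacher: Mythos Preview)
Your proposal is correct and follows essentially the same approach as the paper: take EV-TaS as the sampling rule, use the asymptotic upper bound $\limsup_{\delta\to 0}\mathbb{E}_\nu[\taud^{\text{EV}}]/\log(1/\delta)\le \alpha_0 T^\star_{\sigma^2}(\mu)$ (the paper states that extending Lemma~\ref{lem:upper_bound_ev_tas} to $\alpha_0\neq 1$ ``is direct''), contrast with the lower bound $T^\star(\mu,\sigma^2)$ from Lemma~\ref{lem:lower_bound_sample_complexity}, and reach a contradiction via Lemma~\ref{lem:ratio_characteristic_time_large}. Your identification of the only nontrivial step---that scaling the threshold by $\alpha_0$ scales the asymptotic bound by exactly $\alpha_0$---matches what the paper glosses over.
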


Inspired by Section~\ref{sec:calibration_stopping_rules}, we propose families of thresholds (EV-Student, EV-Box and EV-BoB) which are $\delta$-correct for the EV-GLR stopping rule (see Appendix~\ref{app:thresholds}) but are not asymptotically tight (Theorem~\ref{thm:impossibility_result}).
Still, in our experiments the empirical proportion of error is lower than $\delta$ even when using a heuristic, asymptotically tight threshold.

Based on Theorems~\ref{thm:upper_bound_sample_complexity_algorithm} and~\ref{thm:impossibility_result}, algorithms obtained by adapting the transportation costs enjoy stronger theoretical guarantees than the ones plugging in the empirical variance.

\subsection{Experiments}
\label{ssec:experiments}

We compare the empirical performance of the two wrappers for different BAI algorithms in the moderate regime ($\delta=0.01$).
As benchmarks, we consider FHN$2$ (procedure $2$ in \citet{Fan16RSUV}, see Algorithm~\ref{algo:fhn2} in Appendix~\ref{app:additional_experiments}), uniform sampling and ``fixed'' sampling which is an oracle playing with proportions $w^\star(\mu, \sigma^2)$.
FHN$2$ is an elimination strategy which repeatedly samples all arms until only one arm is left.
Its elimination mechanism is calibrated by resorting to continuous-time approximations.
Therefore, FHN2 is only asymptotically $\delta$-correct and has no guaranties on the sample complexity.
Based on \citet{degenne_2019_NonAsymptoticPureExploration} and \citet{wang_2021_FastPureExploration}, plugging in the empirical variance yields EV-DKM and EV-FWS, while DKM and FWS refers to the algorithms using the transportation costs for unknown variances.
Even though those instances are not analyzed, we believe that similar guarantees on the sample complexity can be shown.

Algorithms obtained by plugging in the empirical variance uses the EV-GLR stopping rule, while the GLR stopping rule is used by the ones with adapted transportation cost and the uniform sampling.
We consider the stylized stopping threshold $c(t,\delta) = \log\left( (1 + \log t)/\delta\right)$, which was proposed in \cite{garivier_2016_OptimalBestArm}.
While it doesn't ensure $\delta$-correctness of the stopping threshold, it is asymptotically tight and yields an empirical error which is several order of magnitude lower than $\delta$.
Top Two algorithms use $\beta = 0.5$.

We assess the performance on $1000$ random instances with $K=10$ such that $(\mu_{1}, \sigma_{1}^2)=(0,1)$. For $a \neq 1$, we set $(\mu_{a}, \sigma_{a}^2) = (- \Delta_{a}, r_{ a})$ where $\Delta_{a} \sim \mathcal U ([0.2, 1.0])$ and $r_{ a} \sim \mathcal U ([0.1, 10])$.
To illustrate the two regimes for $T^\star(\mu, \sigma^2)/T^\star_{\sigma^2}(\mu)$, we consider a \textit{standard} instance ($T^\star(\mu, \sigma^2)/T^\star_{\sigma^2}(\mu) \approx 1.015$) and an \textit{easy} instance ($T^\star(\mu, \sigma^2)/T^\star_{\sigma^2}(\mu) \approx 1.384$).
We average over $5000$ runs.

In Figure~\ref{fig:random_instances_heuristic_threshold}, we observe that algorithms obtained by plugging in the empirical variance yield similar result as the ones using the adapted transportation cost, and slightly better performance on the easy instance.
Moreover, those wrapped BAI algorithms outperforms uniform sampling and are on par with ``fixed'' sampling.
On random instances FHN2 has similar performance to the wrapped BAI algorithms, but it wastes precious samples on easy instances.

\begin{figure}[ht]
	\centering
	\includegraphics[width=0.44\linewidth]{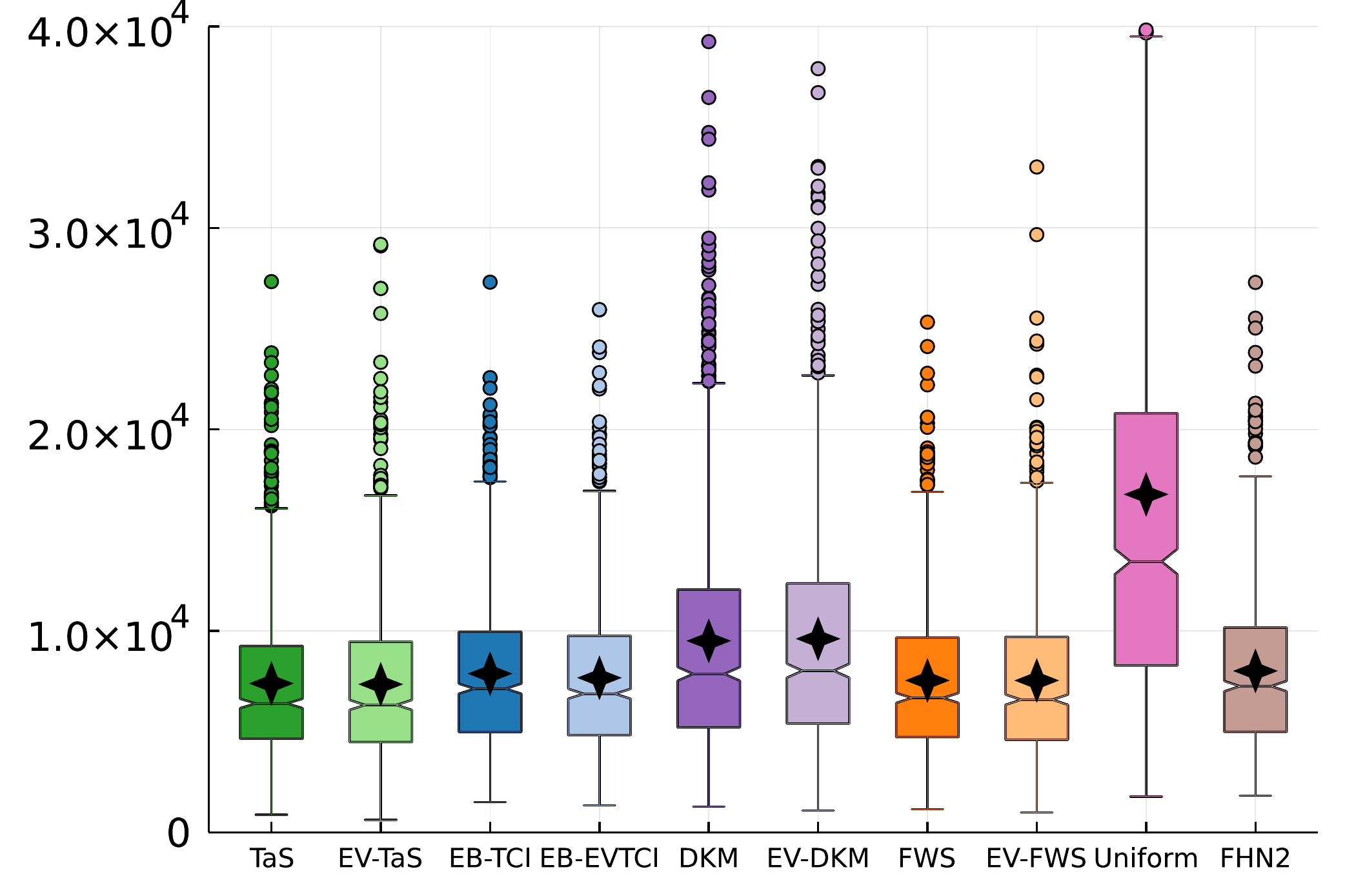} \\
	\includegraphics[width=0.44\linewidth]{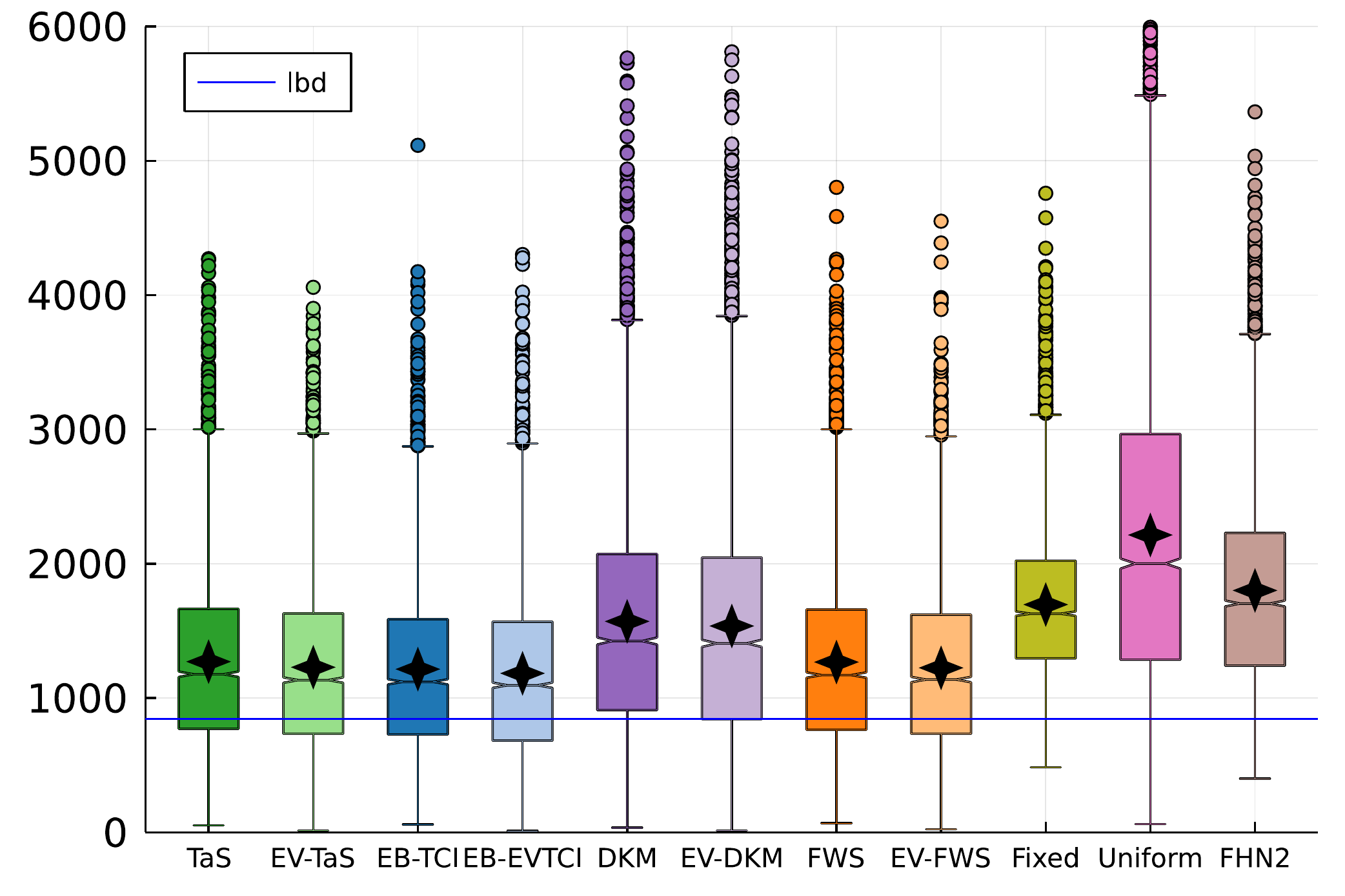}
	\includegraphics[width=0.44\linewidth]{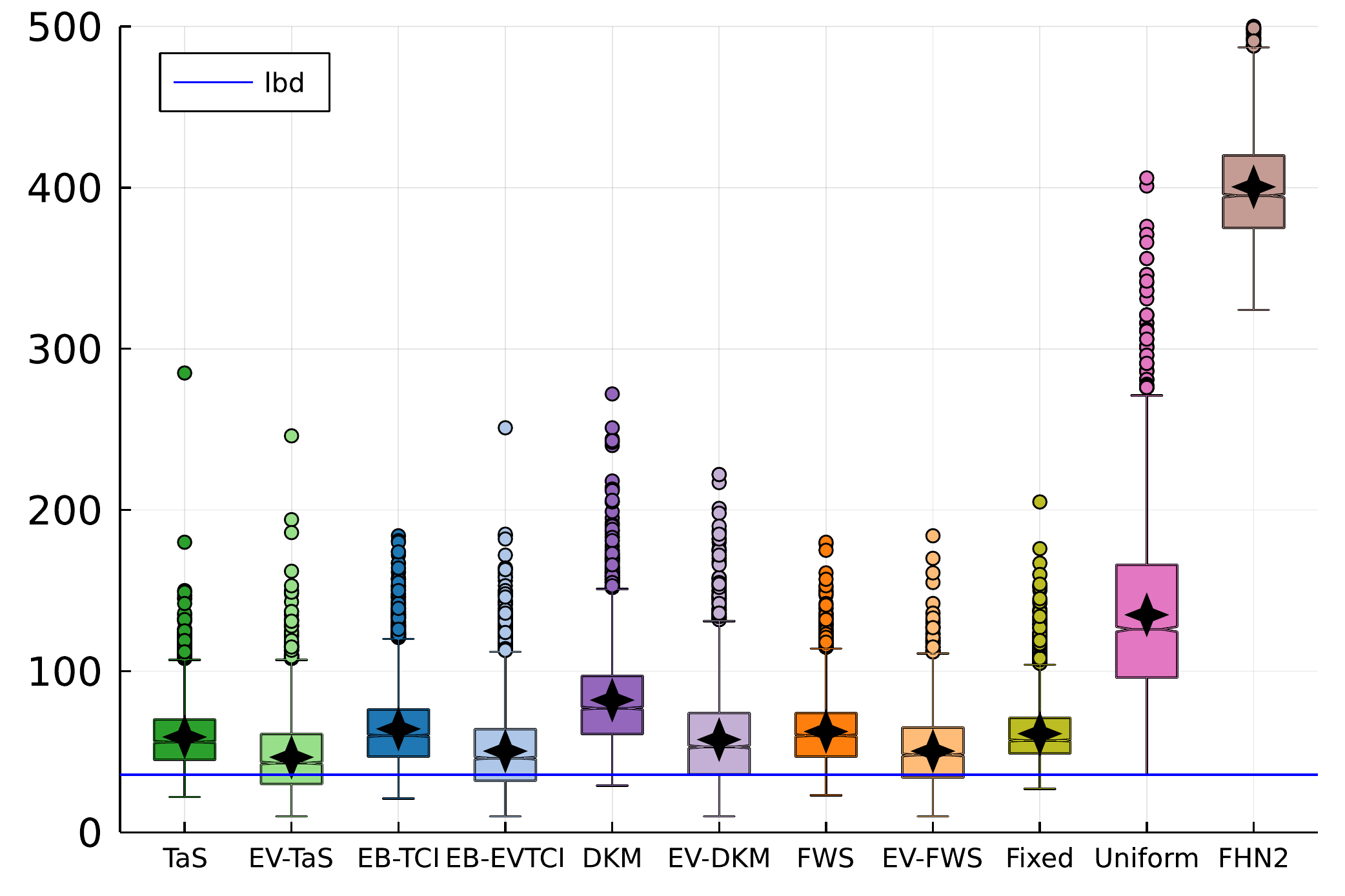}
	\caption{Empirical stopping time on Gaussian (top) random instances with $K=10$, (left) standard instance $(\mu, \sigma^2) = ((1.0, 0.85, 0.8, 0.7, 0.65)$, $(1.0, 0.6, 0.5, 0.4, 0.35))$ and (right) easy instance $(\mu,\sigma^2) = ((1.0, 0.2, 0.15, 0.1, 0.05)$, $(1.0, 0.05, 0.05, 0.05, 0.05))$. Lower bound is $T^\star(\mu) \log(1/\delta)$.}
	\label{fig:random_instances_heuristic_threshold}
\end{figure}

\section{Conclusion}

In this paper we provided two approaches to deal with unknown variances, either by plugging in the empirical variance or by adapting the transportation costs.
New time-uniform concentration results were derived to calibrate our two stopping rules.
Then, we showed theoretical guarantees and competitive empirical performance of our two sampling rule wrappers on two existing algorithms.

While the literature abounds with designs of sampling rule, the optimal calibration of stopping rules is a most pressing issue as it leads to lower empirical stopping time.
While calibrated thresholds have been derived with (near) optimal dependency in $\delta$, those thresholds are known to be too conservative in the moderate confidence regime where their empirical error rate is orders of magnitude lower than $\delta$.

Finally, in the fixed-budget setting, characterizing the impact of not knowing the variances on the probability of misidentifying the best-arm is still an open problem.
While similar approaches might be used to deal with the unknown variances, the resulting algorithms might not enjoy similar theoretical guarantees and empirical performance.


\acks{Experiments presented in this paper were carried out using the Grid'5000 testbed, supported by a scientific interest group hosted by Inria and including CNRS, RENATER and several Universities as well as other organizations (see https://www.grid5000.fr).  This work has been partially supported by the THIA ANR program ``AI\_PhD@Lille''. The authors acknowledge the funding of the French National Research Agency under the projects BOLD (ANR-19-CE23-0026-04) and FATE (ANR-22-CE23-0016-01).}

\bibliographystyle{abbrvnat}
\bibliography{BAIUV}

\appendix

\section{Outline} \label{app:outline}

\begin{itemize}
	\item Notation are summarized in Appendix~\ref{app:notations}.
	\item In Appendix~\ref{app:characteristic_times}, we study the characteristic times $T^\star_{\sigma^2}(\mu)$, $T^\star(\mu, \sigma^2)$ and how to compute the optimal allocation oracles $w^\star_{\sigma^2}(\mu)$, $w^\star(\mu, \sigma^2)$.
	\item Properties on the GLR and EV-GLR statistics are detailed in Appendix~\ref{app:glr}.
	\item In Appendix~\ref{app:box_concentration}, we prove time-uniform and fixed time upper and lower tail concentrations for sub-exponential process and the empirical mean and variance of Gaussian observations.
	\item For $d$-dimensional exponential families, a time-uniform concentration inequality on the sum of KL is shown in Appendix~\ref{app:kl_concentration}.
	\item In Appendix~\ref{app:thresholds}, we show how to calibrate the GLR and EV-GLR stopping rules and derive several family of thresholds.
	\item Asymptotic guarantees on the expected sample complexity (Theorem~\ref{thm:upper_bound_sample_complexity_algorithm}) and the impossibility results (Theorem~\ref{thm:impossibility_result}) are proved in	Appendix~\ref{app:expected_sample_complexity}.
	\item The functions $(\overline{W}_{i})_{i \in \{-1,0\}}$ based on Lambert's branches and their properties are presented in Appendix~\ref{app:lambert_W_functions}.
	\item Implementation details and additional experiments are available in Appendix~\ref{app:additional_experiments}.
\end{itemize}

\begin{table}[h]
\caption{Notation for the setting.}
\label{tab:notation_table_setting}
\begin{center}
\begin{tabular}{c c l}
\toprule
Notation & Type & Description \\
\midrule
$K$ & $\mathbb{N}$ & Number of arms \\
$\mathcal D$ & & Set of Gaussian distributions \\
$\mathcal D_{v}$ & & Set of Gaussian distributions with variance $v$ \\
$\nu$ & $ \mathcal D^K$ & Vector of distributions, $\nu = (\nu_a)_{a \in [K]}$ \\
$\mu$ & $\mathbb{R}^K$ & Vector of means, $\mu = (\mu_a)_{a \in [K]}$ \\
$\sigma^2$ & $(\mathbb{R}^\star_+)^K$ & Vector of variances, $\sigma^2 = (\sigma^2_a)_{a \in [K]}$ \\
$\nu_{\mu, \sigma^2}$ & $\mathcal D^K$ & Problem in which each arm has distribution $\mathcal N(\mu_a, \sigma^2_a)$ \\
$\theta_a$ & $\mathbb{R} \times \mathbb{R}^\star_-$ & Natural parameter of the distribution of arm $a \in [K]$ \\
$\theta$ & $(\mathbb{R} \times \mathbb{R}^\star_-)^K$ & Vector of natural parameters, $\theta = (\theta_a)_{a \in [K]}$ \\
$F$ & $\mathbb{R}\to \mathbb{R} \times \mathbb{R}_+$ & Sufficient statistic \\
$\phi$ & $(\mathbb{R} \times \mathbb{R}^\star_-) \to \mathbb{R}$ & Log-partition function \\
$T^\star_{\sigma^2}(\mu), w^\star_{\sigma^2}(\mu)$ & & Characteristic time and optimal allocation, known $\sigma^2$ \\
$T^\star_{\sigma^2, \beta}(\mu), w^\star_{\sigma^2, \beta}(\mu)$ & & Characteristic time and $\beta$-optimal allocation, known $\sigma^2$ \\
$T^\star(\mu, \sigma^2), w^\star(\mu, \sigma^2)$ & & Characteristic time and optimal allocation\\
$T_{\beta}^\star(\mu, \sigma^2), w_{\beta}^\star(\mu, \sigma^2)$ & & Characteristic time and $\beta$-optimal allocation\\
	\bottomrule
\end{tabular}
\end{center}
\end{table}

\section{Notation} \label{app:notations}

\begin{table}
\caption{Notation for the algorithms.}
\label{tab:notation_table_algorithms}
\begin{center}
\begin{tabular}{c c l}
\toprule
Notation & Type & Description \\
\midrule
$\delta$ &  $(0,1)$ & Confidence parameter \\
$a_t$ & $[K]$ & Arm sampled at time $t$ \\
$X_{t, a_t}$ & $\mathbb{R}$ & Observation at time $t$, $X_{t, a_t} \sim \nu_{a_t}$ \\
$\cF_t$ &   & History up to time $t$, $\sigma(a_1, X_{1,a_1}, \cdots, a_{t}, X_{t,a_t})$ \\
$N_{t,a}$ & $\mathbb{N}$ & Empirical count, $N_{t,a} = \sum_{s=1}^t \1\{a_s = a\}$ \\
$\mu_{t,a}$ & $\mathbb{R}$ & Empirical mean, $\mu_{t,a} = \frac{1}{N_{t,a}}\sum_{s=1}^t X_{s,a_s} \1\{a_s = a\}$ \\
$\sigma_{t,a}^2$ & $\mathbb{R}^\star_+$ & Empirical variance, $\sigma_{t,a}^2 = \frac{1}{N_{t,a}}\sum_{s=1}^t (X_{s,a_s} - \hat{\mu}_{t,a})^2 \1\{a_s = a\}$ \\
$\taud$, $\taud^{\text{EV}}$ & $\mathbb{N}$ & Stopping times (sample complexity)  \\
$\hat{a}_{t}$ & $[K]$ & Candidate arm at time $t$, $\hat{a}_{t} \in \argmax_{a \in [K]} \mu_{t,a}$ \\
$Z_{a}(t)$, $Z^{\text{EV}}_{a}(t)$ & $\mathbb{R}^\star_+$ & GLR and EV-GLR statistic of arm $a$ at time $t$ \\
$c_{a,b}(N_t,\delta)$ & $\mathbb{R}$ & Stopping threshold at time $t$ for the arm pair $(a,b)$ \\
$F_{t,a}$ & $\mathbb{R} \times \mathbb{R}_+$ & Averaged statistic, $\frac{1}{N_{t,a}} \sum_{s=1}^t F(X_{s,a_s}) \1\{a_s = a\} $ \\
$\theta_{t,a}$ & $\mathbb{R} \times \mathbb{R}^\star_-$ & Maximum likelihood estimator of $\theta_a$, $\nabla\phi^{-1}(F_{t,a})$ for $N_{t,a} \ge 2$ \\
\bottomrule
\end{tabular}
\end{center}
\end{table}

We recall some commonly used notation:
the set of integers $[K] \eqdef \{1, \cdots, K\}$,
the complement $X^{\complement}$ of a set $X$,
the $(K-1)$-dimensional probability simplex $\simplex \eqdef \left\{w \in \Real_{+}^{K} \mid  \sum_{a \in [K]} w_a = 1 \right\}$,
the Gaussian distribution $\mathcal N(x, v)$ with mean $x$ and variance $v$,
the Kullback-Leibler (KL) divergence $\KL((x_1, \sigma_1^2),(x_2, \sigma_2^2))$ between two distributions $\cN(x_1, \sigma_1^2)$ and $\cN(x_2, \sigma_2^2)$,
Landau's notation $o$ and $\cO$ and
the two main branches $W_{-1}$ (negative) and $W_0$ (positive) of the Lambert $W$ function which is implicitly defined by the equation $W(x)e^{W(x)} = x$.
Problem-specific notation are grouped in Table~\ref{tab:notation_table_setting}.
Table~\ref{tab:notation_table_algorithms} gathers notation for the algorithms.


\section{Characteristic Times} \label{app:characteristic_times}

In Appendix~\ref{app:ss_explicit_formulas_inequalities}, we show properties on the characteristic times $T^\star_{\sigma^2}(\mu)$ and $T^\star(\mu,\sigma^2)$, i.e. explicit formulas and relationships. In Appendix~\ref{app:ss_optimal_allocation_oracles}, we show those characteristic times can be obtained as solution of a simpler optimization problem, allowing to compute the associated optimal allocation.

Theorem 1 in \citet{garivier_2016_OptimalBestArm} yields the first part of Lemma~\ref{lem:lower_bound_sample_complexity} (Gaussian with known variances). Its second part (Gaussian with unknown variances) is a direct consequence of the arguments in \citet{garivier_2016_OptimalBestArm}, hence we omit the proof.

\subsection{Explicit Formulas and Inequalities} \label{app:ss_explicit_formulas_inequalities}

We can obtain slightly more explicit formulas for the characteristic times $T^\star_{\sigma^2}(\mu)$ and $T^\star(\mu,\sigma^2)$ (Lemma~\ref{lem:complexity_expressions}).
It is direct to see that similar explicit formulas can be shown for $T^\star_{\sigma^2, \beta}(\mu)$ and $T^\star_{\beta}(\mu, \sigma^2)$.

\begin{lemma} \label{lem:complexity_expressions}
Let $\cM = \mathbb{R}^K \times (\mathbb{R}^\star_+)^K$ and $a^\star(\mu) = a^\star$. Then,
\begin{align*}
&T^\star_{\sigma^2}(\mu)^{-1}
= \max_{w \in \triangle_K} \min_{a \ne a^\star} \inf_{y \in [\mu_a, \mu_{a^\star}]} \sum_{b \in \{a, a^\star\}} w_b \frac{(y - \mu_b)^2}{2\sigma^2_b} = \max_{w \in \triangle_K} \min_{a \ne a^\star} \frac{(\mu_a - \mu_{a^\star})^2}{2\left(\frac{\sigma^2_a}{w_a} + \frac{\sigma^2_{a^\star}}{w_{a^\star}} \right)}
\: , \\
&T^\star(\mu, \sigma^2)^{-1}
= \max_{w \in \triangle_K} \min_{a \ne a^\star} \inf_{y \in [\mu_a, \mu_{a^\star}]} \sum_{b \in \{a, a^\star\}} \frac{w_b}{2} \log \left( 1 + \frac{(y - \mu_b)^2}{\sigma^2_b}\right)
\: .
\end{align*}
\end{lemma}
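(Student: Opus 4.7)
The plan is to reduce both complexities to pairwise problems by decomposing the alternative sets, then solve the pairwise problem explicitly in the known-variance case and invoke Lemma~\ref{lem:calcul} in the unknown-variance case.

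First I would write $\Lambda_{\sigma^2}(\mu) = \bigcup_{a \ne a^\star} \{\lambda : \lambda_a \ge \lambda_{a^\star}\}$ and the analogous decomposition of $\Lambda(\mu,\sigma^2)$. Swapping $\inf$ with the union turns each inner infimum into $\min_{a \ne a^\star}$ of a pairwise infimum. For each such $a$, the summands in $\sum_c w_c \KL(\cdots)$ indexed by $c \notin \{a, a^\star\}$ are nonnegative and depend only on their own coordinate, while the constraint $\lambda_a \ge \lambda_{a^\star}$ involves only the pair $(a, a^\star)$. Hence those other coordinates optimize at $(\mu_c, \sigma_c^2)$, giving value $0$, and only the $(a, a^\star)$-pair survives in the sum.

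For the unknown-variance formula this is precisely the content of Lemma~\ref{lem:calcul}: the uniqueness of $a^\star(\mu)$ forces $\mu_{a^\star} > \mu_a$ for every $a \ne a^\star$, so the indicator in $C(a^\star, a; w)$ equals $1$, and the open interval $(\mu_a, \mu_{a^\star})$ in that lemma can be replaced by $[\mu_a, \mu_{a^\star}]$ without changing the infimum, by continuity of the integrand. For the known-variance formula, the pairwise problem reads
\[
\inf_{\lambda_a \ge \lambda_{a^\star}} \frac{w_a(\lambda_a - \mu_a)^2}{2\sigma_a^2} + \frac{w_{a^\star}(\lambda_{a^\star} - \mu_{a^\star})^2}{2\sigma_{a^\star}^2}.
\]
This objective is strictly convex (when $w_a, w_{a^\star} > 0$), and its unconstrained minimum $(\mu_a, \mu_{a^\star})$ violates the constraint since $\mu_a < \mu_{a^\star}$; hence the constrained minimizer lies on the boundary $\lambda_a = \lambda_{a^\star} =: y$, yielding the first equality. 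Setting the derivative in $y$ to zero gives $y^\star = (w_a \mu_a/\sigma_a^2 + w_{a^\star}\mu_{a^\star}/\sigma_{a^\star}^2)/(w_a/\sigma_a^2 + w_{a^\star}/\sigma_{a^\star}^2)$, which is a convex combination of $\mu_a$ and $\mu_{a^\star}$ and hence lies in $[\mu_a, \mu_{a^\star}]$; plugging $y^\star$ back into the quadratic and simplifying yields the announced closed form $(\mu_a - \mu_{a^\star})^2/(2(\sigma_a^2/w_a + \sigma_{a^\star}^2/w_{a^\star}))$.

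The only (mild) obstacle is the treatment of boundary weights $w_a = 0$ or $w_{a^\star} = 0$: the corresponding term and half of the KL coupling degenerate, the infimum is $0$, and this is consistent with the closed form under the convention $\sigma^2/0 = +\infty$. This can be absorbed either by restricting the outer supremum to the relative interior of $\triangle_K$ and passing to a continuity limit, or by inspecting the two degenerate cases directly.
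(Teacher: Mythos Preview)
Your proposal is correct and follows essentially the same route as the paper: decompose the alternative set as a union over $a\neq a^\star$, drop the coordinates outside $\{a,a^\star\}$ by setting them to the truth, and reduce to a one-dimensional infimum on the diagonal $\lambda_a=\lambda_{a^\star}$. The only cosmetic differences are that the paper defers the known-variance closed form to \cite{garivier_2016_OptimalBestArm} whereas you carry out the quadratic minimization explicitly, and conversely the paper redoes the unknown-variance computation from scratch (optimizing $\kappa^2$ first to get the $\log(1+\cdot)$ form, then invoking Lemma~\ref{lem:characteristic_time_properties_to_show_equivalent} to localize the minimizer in $[\mu_a,\mu_{a^\star}]$) rather than citing Lemma~\ref{lem:calcul} as you do.
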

\begin{proof}
While an explicit formula for $T^\star_{\sigma^2}(\mu)$ was already proven in \citet{garivier_2016_OptimalBestArm}, we derive the equivalent result for $T^\star(\mu,\sigma^2)$. For Gaussian with different variances, the KL has the following expression
	\begin{align*}
		\KL((\mu, \sigma^2), (\lambda, \kappa^2)) = \frac{1}{2} \left( \frac{(\mu - \lambda)^2}{ \kappa^2} +  \frac{\sigma^2}{\kappa^2} - 1 - \ln \left( \frac{\sigma^2}{\kappa^2} \right) \right) \: .
	\end{align*}

	Direct computations yield that
	\begin{align*}
		\inf_{\kappa^2>0} \KL((\mu, \sigma^2), (\lambda, \kappa^2)) &= \KL((\mu, \sigma^2), (\lambda, \sigma^2 + (\mu-\lambda)^2)) = \frac{1}{2} \ln\left( 1+ \frac{(\mu - \lambda)^2}{\sigma^2}\right) \:
	\end{align*}

The set $\Lambda(\mu, \sigma^2) = \{(\lambda, \kappa^2) \in \mathbb{R}^K \times (\mathbb{R}^{\star}_+)^K \mid a^\star \notin \argmax_{a\in [K]} \lambda^a \}$ can be rewritten as $\bigcup_{a\neq a^\star} \left\{ \lambda \in \Real^{K} \mid  \lambda^a > \lambda^{a^\star}\right\} \times (\Real_{+}^{\star})^{K}$.
Therefore, taking $\lambda_{b}=\mu_b$ and $\kappa_{b}^2=\sigma_{b}^2$ for $b\notin \{a,a^\star\}$, we obtain
	\begin{align*}
		T^\star(\mu,\sigma^2)^{-1}&= \max_{w\in \simplex} \min_{a\neq a^\star} \inf_{(\lambda,\kappa^2)\in \Real^2\times (\Real^{\star}_{+})^2: \lambda^a > \lambda^{a^\star}} \sum_{b\in \{a,a^\star\}} w_{b} \KL((\mu_b, \sigma_{b}^2), (\lambda_{b}, \kappa_{b}^2)) \\
		&= \max_{w\in \simplex} \min_{a\neq a^\star} \inf_{\lambda \in \Real^2: \lambda^a > \lambda^{a^\star}}   \sum_{b\in \{a,a^\star\}}  \frac{w_{b}}{2} \ln\left( 1+ \frac{(\mu_b - \lambda_{b})^2}{\sigma_{b}^2}\right) \\
		&= \max_{w\in \simplex} \min_{a\neq a^\star} \inf_{\lambda \in [\mu_a,\mu_{a^\star}]}   \sum_{b\in \{a,a^\star\}}  \frac{w_{b}}{2} \ln\left( 1+ \frac{(\mu_b - \lambda)^2}{\sigma_{b}^2}\right)
	\end{align*}
	where the last equality is obtained since at the infimum there is equality, i.e. $\lambda^a=\lambda^{a^\star}$, and Lemma~\ref{lem:characteristic_time_properties_to_show_equivalent}.
\end{proof}

Using Lemma~\ref{lem:complexity_expressions} and algebraic manipulation, we derive inequalities between $T^\star_{\sigma^2}(\mu)$ and $T^\star(\mu,\sigma^2)$ (Lemma~\ref{lem:complexity_inequalities}).
It is direct to see that the same proof would hold for $T^\star_{\sigma^2, \beta}(\mu)$ and $T^\star_{\beta}(\mu, \sigma^2)$.
\begin{proof}
For all $a \neq a^\star$, $\lambda \in [\mu_a,\mu_{a^\star}]$ and $\{a,a^\star\}$, we have
	\begin{align*}
		\frac{(\mu_b - \lambda)^2}{\sigma_{b}^2} \leq d(\mu, \sigma^2) \eqdef \max_{a \neq a^\star} \frac{(\mu_{a^\star} - \mu_{a})^2}{\min\{\sigma_{a^\star}^2, \sigma_{a}^2\}}
	\end{align*}
	Using that $x \mapsto \frac{\ln(1+x)}{x}$ is decreasing for $x\geq 0$, we obtain, for all $\lambda \in [\mu_a,\mu_{a^\star}]$ and $b \in \{a,a^\star\}$,
	\begin{align*}
	 \ln\left( 1+ \frac{(\mu_b - \lambda)^2}{\sigma_{b}^2}\right) \geq \frac{(\mu_b - \lambda)^2}{\sigma_{b}^2} \frac{\ln\left( 1+ d(\mu, \sigma^2)\right)}{d(\mu, \sigma^2)} \: .
	\end{align*}
	Since $T^\star_{\sigma^2}(\mu)^{-1}= \max_{w\in \simplex} \min_{a\neq a^\star} \inf_{\lambda \in [\mu_a,\mu_{a^\star}]}   \sum_{b\in \{a,a^\star\}}  w_{b}\frac{(\mu_b - \lambda)^2}{2\sigma_{b}^2}$, this yields
	\begin{align*}
		T^\star(\mu,\sigma^2)^{-1} \geq \frac{\ln\left( 1+ d(\mu, \sigma^2)\right)}{d(\mu, \sigma^2)} T^\star_{\sigma^2}(\mu)^{-1}
	\end{align*}

	The same arguments as above also yield the formulation
	\begin{align*}
		2T^\star(\mu,\sigma^2)^{-1} = \max_{w\in \simplex} \inf_{\lambda \in \Lambda_{\sigma^2}(\mu)}   \sum_{b \in [K]}  w_{b} \ln\left( 1+ \frac{(\mu_b - \lambda_{b})^2}{\sigma_{b}^2}\right) \: ,
	\end{align*}
	and a similar one for $T^\star_{\sigma^2}(\mu)$. Since $x \mapsto \ln(1+x)$ is concave, we obtain that
	\begin{align*}
		2T^\star(\mu,\sigma^2)^{-1} \leq \ln \left( 1+ 2 T^\star_{\sigma^2}(\mu)^{-1} \right) < 2T^\star_{\sigma^2}(\mu)^{-1}
	\end{align*}
	where the strict inequality uses that $T^\star_{\sigma^2}(\mu)^{-1} >0$.
\end{proof}

Using Lemma~\ref{lem:complexity_expressions}, Lemma~\ref{lem:ratio_characteristic_time_large} shows that $T^\star(\mu, \sigma^2) / T^\star_{\sigma^2}(\mu)$ can become arbitrarily large by taking instances with large gaps.
\begin{lemma} \label{lem:ratio_characteristic_time_large}
	For all $K \ge 2$, there exists a sequence of instances $(\nu_n)_{n \in \Natural}$ with $|a^\star(\nu_n)| = 1$ such that $\lim_{n \to + \infty} T^\star(\mu_n, \sigma_n^2) / T^\star_{\sigma_n^2}(\mu_n) = + \infty$.
\end{lemma}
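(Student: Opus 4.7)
The plan is to construct explicit instances with diverging gaps and to bound the two characteristic times in opposite directions: $T^\star_{\sigma^2}(\mu)^{-1}$ scales quadratically in the gap while $T^\star(\mu,\sigma^2)^{-1}$ only scales logarithmically. Note that the bound $T^\star(\mu,\sigma^2)/T^\star_{\sigma^2}(\mu) \leq d(\mu,\sigma^2)/\ln(1+d(\mu,\sigma^2))$ from Lemma~\ref{lem:complexity_inequalities} is itself unbounded as $d(\mu,\sigma^2) \to \infty$, so the result is consistent with, but not a direct consequence of, that lemma; one actually has to exhibit a sequence realizing such divergence.

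Concretely, for any $K \geq 2$, consider the sequence $\nu_n = (\nu_{\mu_{n,a}, \sigma^2_{n,a}})_{a \in [K]}$ with $\mu_{n,1} = n$, $\mu_{n,a} = 0$ for $a \neq 1$, and $\sigma^2_{n,a} = 1$ for all $a$. Then $|a^\star(\mu_n)| = 1$ for all $n \geq 1$. For the known-variance complexity, the closed form in Lemma~\ref{lem:complexity_expressions} gives
\[
T^\star_{\sigma_n^2}(\mu_n)^{-1} = \max_{w \in \triangle_K} \min_{a \neq 1} \frac{n^2}{2(1/w_1 + 1/w_a)} \geq \min_{a \neq 1} \frac{n^2}{2(2 + 2(K-1))} = \frac{n^2}{4K},
\]
where the inequality comes from the specific choice $w_1 = 1/2$ and $w_a = 1/(2(K-1))$ for $a \neq 1$. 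Hence $T^\star_{\sigma_n^2}(\mu_n) \leq 4K/n^2$.

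For the unknown-variance complexity, I use Lemma~\ref{lem:complexity_expressions} to upper bound $T^\star(\mu_n,\sigma_n^2)^{-1}$ uniformly in $w$. For any $w \in \triangle_K$, pick $a_0 \in \arg\min_{a \neq 1} w_a$, so that $w_{a_0} \leq (1-w_1)/(K-1)$, and evaluate the inner infimum at $\lambda = n/2$:
\[
\min_{a \neq 1} \inf_{\lambda \in [0,n]} \sum_{b \in \{a,1\}} \frac{w_b}{2}\log\!\left(1 + (\mu_{n,b}-\lambda)^2\right)
\leq \frac{w_1 + w_{a_0}}{2}\log\!\left(1 + n^2/4\right) \leq \frac{1}{2}\log\!\left(1 + n^2/4\right),
\]
using $w_1 + (1-w_1)/(K-1) \leq 1$. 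Taking the supremum over $w$ yields $T^\star(\mu_n,\sigma_n^2)^{-1} \leq \tfrac{1}{2}\log(1+n^2/4)$, and thus $T^\star(\mu_n,\sigma_n^2) \geq 2/\log(1+n^2/4)$.

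Combining the two bounds gives
\[
\frac{T^\star(\mu_n,\sigma_n^2)}{T^\star_{\sigma_n^2}(\mu_n)} \geq \frac{n^2}{2K\log(1+n^2/4)} \;\xrightarrow[n\to\infty]{}\; +\infty,
\]
which proves the claim. The only delicate point in the argument is keeping the upper bound on $T^\star(\mu_n,\sigma_n^2)^{-1}$ uniform in $w$: this is handled by letting the challenger arm $a_0$ depend on $w$ and using that at least one non-best arm must carry mass no larger than $1/(K-1)$. Everything else reduces to plugging concrete choices into the closed forms of Lemma~\ref{lem:complexity_expressions}.
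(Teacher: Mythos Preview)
Your proof is correct and follows essentially the same approach as the paper: both construct instances with equal variances and a diverging gap, bound $T^\star_{\sigma^2}(\mu_n)^{-1}$ from below (the paper computes it exactly as $\Delta_n^2/(2(1+\sqrt{K-1})^2)$, you plug in a feasible allocation to get $n^2/(4K)$), and bound $T^\star(\mu_n,\sigma_n^2)^{-1}$ from above by evaluating the inner infimum at the midpoint $\lambda=n/2$. The only cosmetic difference is that you handle the outer supremum via the inequality $w_1+w_{a_0}\le 1$ while the paper splits it as $\beta+(1-\beta)\min_a w_a$; both yield the same $\tfrac12\log(1+n^2/4)$ bound.
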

\begin{proof}
	Let $K \ge 2$.
	We construct explicitly the sequence instances.
	Let $(\Delta_{n})_{n \in \Natural}$ such that $\Delta_{n} > 0$ for all $n \in \Natural$ and $\lim_{n \to + \infty} \Delta_{n} = + \infty$.
	For all $n \in \Natural$, we define $\nu_n = \nu_{\mu_n, \sigma^2_{n}}$ where $(\mu_{n,1}, \sigma_{n,1}^2) = (0,1)$ and $(\mu_{n,a}, \sigma_{n,a}^2) = (-\Delta_{n},1)$ for all $a \neq 1$.

	Using Lemma~\ref{lem:complexity_expressions}, we obtain
	\begin{align*}
		\frac{T^\star_{\sigma_n^2}(\mu_n) \Delta_{n}^2}{2}  &=  \min_{\beta \in (0,1)} \min_{w \in \triangle_{K-1}} \max_{a \ne 1} \frac{1}{(1-\beta)w_a} + \frac{1}{\beta} = \min_{\beta \in (0,1)} \frac{K-1}{1-\beta} + \frac{1}{\beta} = (1+ \sqrt{K-1})^2 \: ,
	\end{align*}
	and, taking $y = -\Delta_{n}/2$, we obtain
	\begin{align*}
		2T^\star(\mu_n, \sigma_n^2)^{-1}
		&= \max_{\beta \in (0,1)} \max_{w \in \triangle_{K-1}} \min_{a \ne 1} \inf_{y \in [-\Delta_{n}, 0]}  \left\{ \beta \log( 1 + y^2) + (1-\beta) w_a \log ( 1 +(\Delta_{n} + y)^2) \right\} \\
		&\le \log ( 1 +\Delta_{n}^2 / 4 ) \max_{\beta \in (0,1)}  \left\{ \beta  + (1-\beta) \max_{w \in \triangle_{K-1}} \min_{a \ne 1} w_a \right\} \\
		&= \log ( 1 +\Delta_{n}^2 / 4 )   \max_{\beta \in (0,1)}  \left\{ \beta  + \frac{1-\beta}{K-1} \right\} \le \log ( 1 +\Delta_{n}^2 / 4 ) \: .
	\end{align*}
	Therefore, we have
	\[
	\frac{T^\star(\mu_n, \sigma_n^2)}{T^\star_{\sigma_n^2}(\mu_n)} \ge \frac{\Delta_{n}^2}{(1+ \sqrt{K-1})^2\log ( 1 +\Delta_{n}^2 / 4 ) } \to_{n \to + \infty} + \infty \: ,
	\]
	since $\lim_{n \to + \infty} \Delta_{n} = + \infty$ and $\log(1+x) =_{x \to + \infty} o(x)$.
\end{proof}

It is slightly more technical to construct a sequence of instances $(\nu_n)_{n \in \Natural}$ with $\min_{a \neq b} |\mu_{a,n} - \mu_{b,n}| > 0$ such that $\lim_{n \to + \infty} T^\star_{\beta}(\mu_n, \sigma_n^2) / T^\star_{\sigma_n^2, \beta}(\mu_n) = + \infty$.
Since the means have to be distinct, it is not possible to use directly closed-form formulas.
However, the same type of construction with increasing gaps will allow to conclude the proof.
The main argument behind both proofs is that $\log(1+x) =_{x \to + \infty} o(x)$.

\subsection{Optimal Allocation Oracles} \label{app:ss_optimal_allocation_oracles}

In \citet{garivier_2016_OptimalBestArm}, they show that $w^\star_{\sigma^2}(\mu)$ can be computed as the solution of an optimization problem with one bounded variable.
By using similar arguments, Theorem~\ref{thm:equivalent_optimization_problem} gives a similar equivalent optimization problem for $w^\star(\mu, \sigma^2)$.
Without loss of generality we consider $a^\star(\mu)=1$ to ease the notations in the following arguments.

\begin{theorem} \label{thm:equivalent_optimization_problem}
For all $a \in [K]$ and $\lambda \in \Real$, let $d_{a}(\lambda) = \log \left( 1 + \frac{(\lambda - \mu_a)^2}{\sigma^2_a}\right)$. For all $a \neq 1$ and $x \in [0,+\infty)$, let $g_{a}(x) =  \min_{\lambda \in \Real} \left( d_{1}(\lambda) + x d_{a}(\lambda) \right)$, with $\lambda_{a}(x)$ being the minimizer realizing $g_{a}(x)$.
The functions $g_{a}$ are strictly increasing function with values on $[0,d_{1}(\mu_a))$ with inverse function $x_{a}(y) = g^{-1}(y)$.
Then, we have $w(\mu, \sigma^2)_{a} = x_{a}(y^\star)w(\mu, \sigma^2)_{1}$ for all $a\neq 1$,
\begin{align*}
	&2T^\star(\mu, \sigma^2)^{-1} = \frac{y^\star}{1+\sum_{a\neq 1}x_{a}(y^\star)} \quad \text{and} \quad w(\mu, \sigma^2)_{1} = \frac{1}{1+\sum_{a\neq 1}x_{a}(y^\star)}
\end{align*}
Moreover, $y^\star \in [0,\min_{a\neq 1}d_{1}(\mu_a))$ is a solution of the equation $F(y)=1$ where
\begin{align*}
	F(y) = \sum_{a = 2}^{K} \frac{d_{1}(\lambda_{a}(x_{a}(y)))}{d_{a}(\lambda_{a}(x_{a}(y)))}
\end{align*}
is an increasing function such that $F(0)=0$ and $\lim_{+ \infty }F(y) = + \infty$.
\end{theorem}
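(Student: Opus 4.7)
The proof follows the strategy of \citet{garivier_2016_OptimalBestArm}, adapted to the logarithmic transportation costs $d_a$. Taking $a^\star(\mu) = 1$ without loss of generality, pulling $w_1$ out of the inner infimum in Lemma~\ref{lem:complexity_expressions} yields
\[
2T^\star(\mu,\sigma^2)^{-1} \;=\; \max_{w \in \triangle_K} \min_{a \neq 1} w_1\, g_a(w_a/w_1),
\]
so I would start by establishing the stated properties of $g_a$. Each $d_c$ is non-negative, smooth, strictly decreasing on $(-\infty,\mu_c]$ and strictly increasing on $[\mu_c,+\infty)$; since $\mu_a < \mu_1$, the map $\lambda \mapsto d_1(\lambda) + x\, d_a(\lambda)$ is minimized on $[\mu_a, \mu_1]$ for every $x \geq 0$, and because stationarity cannot be met at $\lambda = \mu_a$ (where $d_a' = 0$ but $d_1' \neq 0$) nor at $\lambda = \mu_1$ for $x > 0$, the minimizer $\lambda_a(x)$ lies in $(\mu_a, \mu_1]$ (with $\lambda_a(0) = \mu_1$). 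The envelope identity $g_a'(x) = d_a(\lambda_a(x))$ is then strictly positive for $x>0$, giving strict monotonicity of $g_a$; combined with $g_a(0) = d_1(\mu_1) = 0$ and $g_a(x) \uparrow d_1(\mu_a)$ as $x \to +\infty$ (the minimizer being driven to $\mu_a$), this yields the range $[0, d_1(\mu_a))$ and the existence of the inverse $x_a$.

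Next, at any maximizer $w$ of the outer problem the values $w_1 g_a(w_a/w_1)$ must all coincide with a common value $2T^\star(\mu,\sigma^2)^{-1}$: if some $a_0$ were strictly above the current minimizer $a_{\min}$, a small mass transfer from $w_{a_0}$ to $w_{a_{\min}}$ would strictly increase the minimum by continuity and strict monotonicity of $g_a$. Setting $y^\star = 2T^\star(\mu,\sigma^2)^{-1}/w_1$, the equalities $g_a(w_a/w_1) = y^\star$ invert to $w_a = x_a(y^\star)\, w_1$ for all $a \neq 1$; the simplex constraint $w_1 + \sum_{a \neq 1} w_a = 1$ then gives $w_1 = 1/(1 + \sum_{a \neq 1} x_a(y^\star))$ and the announced expression $2T^\star(\mu,\sigma^2)^{-1} = y^\star/(1 + \sum_{a \neq 1} x_a(y^\star))$.

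The main remaining step, and the one requiring the most care, is identifying the scalar equation that pins down $y^\star$. I would write the epigraphic reformulation $\max t$ subject to $t \leq (w_1/2)\, g_a(w_a/w_1)$ for each $a \neq 1$ and $\sum_b w_b = 1$, and use the envelope identities $g_a'(x) = d_a(\lambda_a(x))$ and $g_a(x) - x\, g_a'(x) = d_1(\lambda_a(x))$ to compute, for $h_a(w) = (w_1/2)\, g_a(w_a/w_1)$, the partial derivatives $\partial_{w_a} h_a = (1/2)\, d_a(\lambda_a)$ and $\partial_{w_1} h_a = (1/2)\, d_1(\lambda_a)$. Writing the KKT stationarity conditions in $t$, $w_a$, and $w_1$ and eliminating the dual multipliers of the epigraph constraints yields $\sum_{a \neq 1} d_1(\lambda_a)/d_a(\lambda_a) = 1$, which after substituting $w_a/w_1 = x_a(y^\star)$ is exactly $F(y^\star) = 1$. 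Finally, since $x_a$ is strictly increasing, $\lambda_a(x_a(y))$ moves monotonically from $\mu_1$ towards $\mu_a$ as $y$ grows, so in each summand $d_1(\lambda_a) \uparrow d_1(\mu_a)$ while $d_a(\lambda_a) \downarrow 0$; hence $F$ is continuous and strictly increasing on $[0, \min_{a \neq 1} d_1(\mu_a))$ with $F(0) = 0$ and $F(y) \to +\infty$ as $y \uparrow \min_{a \neq 1} d_1(\mu_a)$, giving existence and uniqueness of $y^\star$.
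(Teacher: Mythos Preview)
Your proposal is correct and follows the same overall architecture as the paper (itself adapting \citet{garivier_2016_OptimalBestArm}): factor out $w_1$, establish the range and invertibility of $g_a$ via the envelope identity $g_a'(x)=d_a(\lambda_a(x))$, use a mass-transfer argument to force all $g_a(w_a/w_1)$ equal at the optimum, and then reduce to a scalar equation in $y^\star$. The one genuine methodological difference is how you pin down $F(y^\star)=1$: the paper first reduces to the one-dimensional problem $G(y)=y/(1+\sum_{a\neq 1} x_a(y))$ and shows by direct differentiation, using $x_a'(y)=1/d_a(\lambda_a(x_a(y)))$, that $G'(y)=0 \iff F(y)=1$; you instead write KKT conditions for the epigraphic problem and eliminate the multipliers. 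Both routes are valid and yield the same equation; the paper's is slightly more elementary, yours is more systematic.

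Two small points to tighten. First, you implicitly use $w_1>0$ at the optimum when forming $w_a/w_1$; the paper disposes of this in one line (if $w_1=0$ the objective vanishes). Second, your monotonicity argument for $F$ relies on $\lambda_a(x_a(y))$ moving monotonically from $\mu_1$ toward $\mu_a$, i.e.\ on $\lambda_a$ being decreasing in $x$, which you assert but do not prove. The paper obtains this from the concavity of $g_a$: since $g_a''(x)=\lambda_a'(x)\,d_a'(\lambda_a(x))\le 0$ and $d_a'(\lambda_a(x))>0$ on $(\mu_a,\mu_1)$, one gets $\lambda_a'(x)\le 0$. With that step made explicit your termwise argument for $F$ increasing goes through (and is in fact a bit more direct than the paper's detour through the auxiliary function $H(y)=\sum_a 1/d_a(\lambda_a(x_a(y)))$ and the identity $F'(y)=y\,H'(y)$).
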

\begin{proof}
	When $w_1 = 0$, we have $\min_{a \neq 1} \left(w_1 d_{1}(\lambda) + w_a d_{a}(\lambda) \right) = 0$, hence this is not the maximum of a positive quantity, i.e. for all $w \in w^\star(\mu, \sigma^2)$, $w_1 > 0$.
	Therefore, by dividing by $w_1$ and using that $g_{a}(x) = \inf_{\lambda \in \Real} \left( d_{1}(\lambda) + x d_{a}(\lambda) \right)$, we obtain directly
	\begin{align*}
		2T(\mu, \sigma^2)^{-1} = \max_{w \in \simplex} \min_{a \neq 1} \inf_{\lambda \in \Real} \left(w_1 d_{1}(\lambda) + w_a d_{a}(\lambda) \right) = \max_{w \in \simplex} w_1 \min_{a \neq 1} g_{a}\left( \frac{w_a}{w_1}\right) \: .
	\end{align*}

	Let $w^\star \in w^\star(\mu, \sigma^2)$.
	Then, using the above result, we obtain
	\begin{align*}
		w^\star \in \argmax_{w \in \simplex} w_1 \min_{a \neq 1} g_{a}\left( \frac{w_a}{w_1}\right)
	\end{align*}
	Introducing $x_{a}^{\star}=\frac{w^\star_a}{w_{1}^{\star}}$ for all $a \neq 1$, using that $\sum_{a\in [K]}w_a^\star = 1$, one has
	\begin{align*}
	w_{1}^{\star}=\frac{1}{1+\sum_{a=2}^{K} x_{a}^{\star}} \quad \text { and, for } a \geq 2, w^\star_a=\frac{x_{a}^{\star}}{1+\sum_{a=2}^{K} x_{a}^{\star}} \: .
	\end{align*}
	Moreover, $\{x_{a}^{\star}\}_{a=2}^{K} \in \mathbb{R}^{K-1}$ belongs to
	\begin{equation} \label{eq:reformulation_optimization_with_xs}
		\argmax_{\{x_{a}\}_{a=2}^{K} \in \mathbb{R}^{K-1}} \frac{\min _{a \neq 1} g_{a}\left(x_{a}\right)}{1+\sum_{a=2}^{K} x_a}
	\end{equation}

Let $\cO =\left\{ b \in [K]\setminus \{1\} \mid g_{b}\left(x_{b}^{\star}\right)=\min _{a \neq 1} g_{a}\left(x_{a}^{\star}\right)\right\}$ and $\cA= [K]\setminus (\{1\}\cup \cO )$.
	Let's show that all the $g_{a}\left(x_{a}^{\star}\right)$ have to be equal.
	 Assume that $\cA \neq \emptyset$. For all $a \in \cA$ and $b \in \cO$, one has $g_{a}\left(x_{a}^{\star}\right)>g_{b}\left(x_{b}^{\star}\right)$.
		Using the continuity of the $g_a$ functions and the fact that they are strictly increasing (Lemma~\ref{lem:characteristic_time_properties_to_show_equivalent}), there exists $\epsilon>0$ such that
	\begin{align*}
	\forall a \in \cA, b \in \cO, \quad g_{a}\left(x_{a}^{\star}-\epsilon /|\cA|\right)>g_{b}\left(x_{b}^{\star}+\epsilon /|\cO|\right)>g_{b}\left(x_{b}^{\star}\right) \: .
	\end{align*}
	We introduce $\bar{x}_{a}=x_{a}^{\star}-\epsilon /|\cA|$ for all $a \in \cA$ and $\bar{x}_{b}=x_{b}^{\star}+\epsilon /|\cO|$ for all $b \in \cO$, hence $\sum_{a=2}^{K} \bar{x}_{a} = \sum_{a=2}^{K}x_{a}^{\star}$. There exists $b \in \cO$ such that $\min _{a \neq 1} g_{a}\left(\bar{x}_{a}\right) = g_{b}\left(x_{b}^{\star}+\epsilon /|\cO|\right)$, hence
	\begin{align*}
	\frac{\min _{a \neq 1} g_{a}\left(\bar{x}_{a}\right)}{1+\bar{x}_{2}+\ldots \bar{x}_{K}}=\frac{g_{b}\left(x_{b}^{\star}+\epsilon /|\cO|\right)}{1+x_{2}^{\star}+\cdots+x_{K}^{\star}}>\frac{g_{b}\left(x_{b}^{\star}\right)}{1+x_{2}^{\star}+\cdots+x_{K}^{\star}}=\frac{\min _{a \neq 1} g_{a}\left(x_{a}^{\star}\right)}{1+x_{2}^{\star}+\cdots+x_{K}^{\star}} \:,
	\end{align*}
	This is a contradiction with the fact that $x^{\star}$ belongs to (\ref{eq:reformulation_optimization_with_xs}).
	Hence $\cA = \emptyset$ and there exists $y^\star \in [0, \min_{a\neq 1}d_{1}(\mu_a))$ such that, for all $a \in [K]\setminus \{1\}$,
	\begin{align*}
		g_{a}(x_{a}^{\star}) = y^\star \iff x_{a}^{\star} = x_{a}(y^\star)
	\end{align*}
	with the function $x_{a}$ introduced in Lemma~\ref{lem:characteristic_time_properties_to_show_equivalent}. From (\ref{eq:reformulation_optimization_with_xs}), $y^\star$ belongs to
	\begin{align*}
		\argmax_{y \in [0, \min_{a\neq 1}d_{1}(\mu_a))} \frac{y}{1+\sum_{a=2}^{K} x_a(y)}
	\end{align*}
	Using Lemma~\ref{lem:characteristic_time_properties_to_show_equivalent}, we have that $y^\star$ is a solution of $F(y)=1$ with
	\begin{align*}
		F(y) = \sum_{a = 2}^{K} \frac{d_{1}(\lambda_{a}(x_{a}(y)))}{d_{a}(\lambda_{a}(x_{a}(y)))} \: .
	\end{align*}
\end{proof}

A key theoretical and computational difference between the oracle for $w^\star_{\sigma^2}(\mu)$ and the one for $w^\star(\mu,\sigma^2)$ is that $\lambda_{a}(x)$ is defined implicitly as one of the real solution of a third order polynomial equation (Lemma~\ref{lem:characteristic_time_properties_to_show_equivalent}).
Fortunately, there exists closed form solutions for the roots of a third order polynomial equation, namely Cardano's formula.
Therefore, we only need to compute three roots, among which at least one is real (two might be complex), and find the real one minimizing our original functions.

Lemma~\ref{lem:characteristic_time_properties_to_show_equivalent} gathers technical results used to prove Theorem~\ref{thm:equivalent_optimization_problem}.

\begin{lemma} \label{lem:characteristic_time_properties_to_show_equivalent}
	Let $d_a$, $g_a$ and $\lambda_a$ as in Theorem~\ref{thm:equivalent_optimization_problem}.

	\begin{enumerate}
		\item For all $x \in (0,+\infty)$, $\lambda_{a}(x) \in (\mu_{a},\mu_{1})$, $\lambda_{a}(0) = \mu_{1}$ and $\lim_{x \rightarrow + \infty}\lambda_{a}(x) = \mu_{a}$.
		Moreover,	the functions $\lambda_{a}(x)$ are among the (at least one) real solutions of $P_{a}(\lambda, x) = 0$ where
		\begin{align*}
			P_{a}(\lambda, x) &= \lambda^3 - \alpha_{a,2}(x) \lambda^2 + \alpha_{a,1}(x) \lambda - \alpha_{a,0}(x) \: ,\\
			\alpha_{a,2}(x) &= \mu_{a} + \mu_{1} +\frac{\mu_{a} + x \mu_{1}}{1+x} \: , \\
			\alpha_{a,1}(x) &= \frac{\sigma_{a}^2+x\sigma_{1}^2}{1+x} + \mu_{1} \mu_{a} + (\mu_{1}+\mu_{a})\frac{\mu_{a}+x\mu_{1}}{1+x} \: , \\
			\alpha_{a,0}(x) &=   \frac{\mu_{1} (\mu_{a}^2+ \sigma_{a}^2) + \mu_{a} (\mu_{1}^2+ \sigma_{1}^2) x}{1+x} \:.
		\end{align*}
		\item The function $g_{a}$ is a concave and strictly increasing one-to-one mapping from $[0,+ \infty)$ to $[0, d_1(\mu_a))$, such that $g_{a}'(x) = d_{a}(\lambda_{a}(x))$ and $g_{a}''(x) = \lambda_{a}'(x) d_{a}'(\lambda_{a}(x))$. In particular, $x \mapsto \lambda_{a}(x)$ is decreasing.
		Moreover, the function $x_{a}$ is strictly increasing and $x_{a}'(y) = \frac{1}{d_{a}(\lambda_{a}(x(y)))}$.
		\item Defining $G(y) = \frac{y}{1+ \sum_{a=2}^{K}x_{a}(y)}$ and $F(y) = \sum_{a = 2}^{K} \frac{d_{1}(\lambda_{a}(x_{a}(y)))}{d_{a}(\lambda_{a}(x_{a}(y)))}$ for $y \in [0,\min_{a\neq 1}d_{1}(\mu_{a}))$. Then $G'(y) = 0$ if and only if $F(y)=1$. The function $F$ is increasing such that $F(0) = 0$ and $\lim_{y \mapsto \min_{a\neq 1} d_{1}(\mu_{a})} F(y) = +\infty$.
	\end{enumerate}
\end{lemma}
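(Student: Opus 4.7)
The plan is to handle the three parts in sequence, with each one building on what comes before.

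For Part 1, I would derive the polynomial from the first-order condition. Using $d_b'(\lambda) = 2(\lambda-\mu_b)/(\sigma_b^2+(\lambda-\mu_b)^2)$ for $b \in \{1,a\}$ and clearing the common denominator $[\sigma_1^2+(\lambda-\mu_1)^2][\sigma_a^2+(\lambda-\mu_a)^2]$ in the equation $d_1'(\lambda)+x d_a'(\lambda)=0$, a direct expansion yields $(1+x)P_a(\lambda,x)=0$ with the coefficients $\alpha_{a,j}$ announced (straightforward but mechanical). The inclusion $\lambda_a(x)\in(\mu_a,\mu_1)$ follows from a sign analysis: for $\lambda\le\mu_a$ both $d_1'(\lambda)<0$ (strictly, since $\mu_a<\mu_1$) and $d_a'(\lambda)\le 0$, so $d_1'+xd_a'<0$; for $\lambda\ge\mu_1$ both terms are $\ge 0$ and strictly positive when $x>0$. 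Hence any minimizer lies strictly between. The boundary $\lambda_a(0)=\mu_1$ is immediate, and $\lim_{x\to\infty}\lambda_a(x)=\mu_a$ follows from the a priori bound $g_a(x)\le d_1(\mu_a)$ (obtained by plugging $\lambda=\mu_a$), which forces $xd_a(\lambda_a(x))\le d_1(\mu_a)$, hence $d_a(\lambda_a(x))\to 0$.

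For Part 2, the key observation is that $g_a$ is the infimum of a family of linear functions of $x$, hence concave. The envelope argument (equivalently, the chain rule, where the term in $\lambda_a'(x)$ cancels thanks to the FOC) gives $g_a'(x)=d_a(\lambda_a(x))>0$, so $g_a$ is strictly increasing. Differentiating once more yields $g_a''(x)=d_a'(\lambda_a(x))\lambda_a'(x)$; combining concavity ($g_a''\le 0$) with $d_a'(\lambda_a(x))>0$ (since $\lambda_a>\mu_a$) forces $\lambda_a'\le 0$. The range $[0,d_1(\mu_a))$ follows from $g_a(0)=0$, the upper bound $g_a\le d_1(\mu_a)$, and the matching lower bound $\liminf g_a\ge d_1(\mu_a)$: outside any neighborhood of $\mu_a$ the function $d_a$ is bounded below by a positive constant, so for large $x$ the infimum is forced into that neighborhood, where $d_1$ is close to $d_1(\mu_a)$. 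The inverse $x_a$ then satisfies $x_a'(y)=1/d_a(\lambda_a(x_a(y)))$ by differentiating $g_a(x_a(y))=y$.

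For Part 3, I would compute
\[
G'(y)=\frac{1+\sum_{a=2}^K x_a(y)-y\sum_{a=2}^K x_a'(y)}{\bigl(1+\sum_{a=2}^K x_a(y)\bigr)^2},
\]
and exploit the identity $y=d_1(\lambda_a(x_a(y)))+x_a(y)d_a(\lambda_a(x_a(y)))$ coming from $g_a(x_a(y))=y$. Combined with $x_a'(y)=1/d_a(\lambda_a(x_a(y)))$, one gets the clean rewriting
\[
yx_a'(y)-x_a(y)=\frac{d_1(\lambda_a(x_a(y)))}{d_a(\lambda_a(x_a(y)))},
\]
so $G'(y)=0\iff \sum_a(yx_a'(y)-x_a(y))=1\iff F(y)=1$. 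For the monotonicity of $F$, I introduce $r(\lambda)=d_1(\lambda)/d_a(\lambda)$ on $(\mu_a,\mu_1)$ and observe $r'(\lambda)<0$ there since $d_1'<0$, $d_a'>0$ and $d_1,d_a>0$. Composing the decreasing map $u\mapsto\lambda_a(u)$ with the increasing map $y\mapsto x_a(y)$ and then with $r$ yields an increasing function of $y$, and $F$ is a sum of such functions. The boundary values are read off: $F(0)=0$ since $\lambda_a(x_a(0))=\mu_1$ makes every $d_1$-numerator vanish, and $F(y)\to+\infty$ as $y\to\min_{a\ne 1}d_1(\mu_a)$ because for the minimizing index $b$, $x_b(y)\to+\infty$, so $\lambda_b(x_b(y))\to\mu_b$ and the denominator $d_b(\lambda_b(x_b(y)))\to 0$ while the numerator stays positive.

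The main obstacle is the algebraic manipulation in Part 3: getting the identity $yx_a'(y)-x_a(y)=d_1/d_a$ is where the envelope formula for $g_a'$, the inverse-function formula for $x_a'$, and the defining identity $g_a(x_a(y))=y$ must combine just right. A secondary subtlety is ensuring \emph{strict} monotonicity of $\lambda_a(x)$ (and hence of $F$), since the concavity argument only gives $\lambda_a'\le 0$; here one can rule out constancy on any interval by noting that a constant $\lambda_a$ on an interval of $x$-values would contradict the FOC unless $d_a'(\lambda_a)=0$, i.e. $\lambda_a=\mu_a$, which is excluded.
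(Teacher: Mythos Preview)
Your proposal is correct and follows the same overall architecture as the paper: sign analysis for the location of $\lambda_a(x)$, the envelope/chain-rule argument for $g_a'(x)=d_a(\lambda_a(x))$, concavity from an infimum of affine functions to get $\lambda_a'\le 0$, inverse-function differentiation for $x_a'$, and the key identity $y=d_1(\lambda_a(x_a(y)))+x_a(y)\,d_a(\lambda_a(x_a(y)))$ to rewrite $G'(y)=0$ as $F(y)=1$.

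There are two local differences worth flagging. First, to obtain $\lim_{x\to\infty} g_a(x)=d_1(\mu_a)$, the paper performs a polynomial asymptotic (change of variable $y=\lambda-\mu_a$ in $P_a(\lambda,x)=0$) showing $\lambda_a(x)-\mu_a=\mathcal O(1/x)$ and hence $x\,d_a(\lambda_a(x))\to 0$; your squeeze argument ($g_a\le d_1(\mu_a)$ from plugging $\lambda=\mu_a$, and $\liminf g_a\ge d_1(\mu_a)$ by forcing the minimizer into any neighborhood of $\mu_a$) is shorter and avoids that computation entirely. Second, for the monotonicity of $F$, the paper derives the identity $F'(y)=yH'(y)$ with $H(y)=\sum_{a\ge 2}1/d_a(\lambda_a(x_a(y)))$ and then argues $H$ is increasing by composition; you instead show directly that each summand $r(\lambda_a(x_a(y)))$ is increasing via $r(\lambda)=d_1(\lambda)/d_a(\lambda)$ being strictly decreasing on $(\mu_a,\mu_1)$. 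Both routes are valid; yours is slightly more elementary, while the paper's identity $F'=yH'$ is a pleasant byproduct.
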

\begin{proof}
	For all $a \in [K]$ and all $\lambda \in \Real$, let $d_{a}^{\text{KV}}(\lambda) = \frac{(\lambda - \mu_a)^2}{\sigma^2_a}$. Then, we have
	\[
	d_{a}'(\lambda) = \frac{(d^{\text{KV}}_{a})'(\lambda)}{1+d^{\text{KV}}_{a}(\lambda)} = \frac{2(\lambda - \mu_{a})}{\sigma_{a}^2 + (\lambda - \mu_{a})^2} \: .
\]

	\textbf{(1)}
	Let $h_{a}(x, \lambda) = d_{1}(\lambda) + x d_{a}(\lambda) $.
	Since $\frac{\partial h_{a}}{\partial \lambda}(x, \lambda) = d_{1}'(\lambda) + x d_{a}'(\lambda)$, we have $\frac{\partial h_{a}}{\partial \lambda}(x, \lambda) > 0$ for $\lambda > \mu_{1}$ and $\frac{\partial h_{a}}{\partial \lambda}(x, \lambda) < 0$ for $\lambda < \mu_{a}$. Therefore, the solution of $\frac{\partial h_{a}}{\partial \lambda}(x, \lambda) = 0$ is in $[\mu_{a},\mu_{1}]$ if it exists. As $h_{a}(x, \lambda)$ is continuous and bounded on $[\mu_{a},\mu_{1}]$, $\lambda_{a}(x)$ exists.

	The functions $d_{a}$ have a unique minimizer $\mu_{a}$ such that $d_{a}(\mu_{a}) = 0$.
	Using that $h_{a}(0, \lambda) = d_{1}(\lambda)$, we obtain $\lambda_{a}(0) = \mu_{1}$. Introducing $w = w_1 x$ such that $w_{1}+w \leq 1$, we have $\lambda_{a}(x)  = \tilde{\lambda}_{a}(w) =  \argmin_{\lambda \in [\mu_{a},\mu_{1}]} w_1 d_{1}(\lambda) + w d_{a}(\lambda)$.
	The same argument as above shows that $\tilde{\lambda}_{a}(1) = \mu_{a}$, as $w=1$ implies $w_{0}$, we have $\lim_{x \rightarrow + \infty}\lambda_{a}(x) = \tilde{\lambda}_{a}(1) = \mu_{a}$.

	Since $d_{a}'$ is continuous, we have $d_{a}'(\mu_a) = 0$. For all $x>0$ and $a \in [K]\setminus \{1\}$, we have $\frac{\partial h_{a}}{\partial \lambda}(x, \mu_{a}) = d_{1}'(\mu_{a}) + x d_{a}'(\mu_{a}) =  d_{1}'(\mu_{a}) < 0$ and $\frac{\partial h_{a}}{\partial \lambda}(x, \mu_{1}) = d_{1}'(\mu_{1}) + x d_{a}'(\mu_{1}) = x d_{a}'(\mu_{1}) > 0$. Therefore, we have $\lambda_{a}(x) \in (\mu_{a},\mu_{1})$ for all $x \in (0,+\infty)$.

	Writing the condition that $\lambda_{a}(x)$ is a minimum, rewrites as $d_{1}'(\lambda_{a}(x)) + x d_{a}'(\lambda_{a}(x)) = 0$ and $d_{1}''(\lambda_{a}(x)) + x d_{a}''(\lambda_{a}(x)) > 0$. Direct computations yield
		\begin{align*}
			&d_{1}'(\lambda_{a}(x)) + x d_{a}'(\lambda_{a}(x)) = 0  \\
			\iff & (\lambda - \mu_{1}) (\lambda - \mu_{a})\left( \lambda - \frac{\mu_a + x\mu_1}{1+x} \right)  + \frac{\sigma_{a}^2+x\sigma_{1}^2}{1+x} \left( \lambda - \lambda^{\text{KV}}_{a}(x) \right) = 0 \iff  P_{a}(\lambda, x) = 0
		\end{align*}
		where $P_{a}(\lambda, x)$ is defined in the statement of Lemma~\ref{lem:characteristic_time_properties_to_show_equivalent}.
		When $x \rightarrow + \infty$, we already know that $\lambda_{a}(x) = \mu_{a} + o(1)$.
		To prove $\lim_{x \rightarrow +\infty} x d_{a}(\lambda_{a}(x)) = 0$, we need a finer dependency in $x$.
		Writing the change of variable $y = \lambda - \mu_a$, we obtain $P_{a}(\lambda, x)= 0$ if and only if $Q_{a}(y,x) = 0$ where
		\begin{align*}
			Q_{a}(y,x) = y (y + \mu_{a} - \mu_{1}) \left( y - \frac{x(\mu_{1}-\mu_a)}{1+x} \right)  + \frac{\sigma_{a}^2+x\sigma_{1}^2}{1+x} \left(y - \frac{\sigma_{a}^2(\mu_{1} - \mu_{a})}{\sigma_{a}^2+x \sigma_{1}^2} \right)
		\end{align*}
		When $x \rightarrow + \infty$, we have $y(x) = o(1)$, $\frac{x(\mu_{1}-\mu_a)}{1+x} = (\mu_{1} -\mu_{a})(1+\cO(\frac{1}{x}))$, $\frac{\sigma_{a}^2(\mu_{1} - \mu_{a})}{\sigma_{a}^2+x \sigma_{1}^2} = \cO(\frac{1}{x})$ and $\frac{\sigma_{a}^2+x\sigma_{1}^2}{1+x} = \sigma_{1}^2 + \cO(\frac{1}{x})$, we obtain the following
		\begin{align*}
			Q_{a}(y(x),x) = 0 &\implies  y(x) \left( \sigma_{1}^2  + (\mu_{1} - \mu_{a})^2 + o(1) \right) +  \cO(\frac{1}{x}) = 0
		\end{align*}
		This yields that $y(x) = \cO(\frac{1}{x})$. Therefore, $d^{\text{KV}}_{a}(\lambda_{a}(x)) = \frac{y(x)^2}{\sigma_{a}^2} = \cO(\frac{1}{x^2})$ and
	\begin{align*}
		xd_{a}(\lambda_{a}(x)) = x\ln(1+d^{\text{KV}}_{a}(\lambda_{a}(x))) \sim_{\infty} x d^{\text{KV}}_{a}(\lambda_{a}(x)) \sim_{\infty} x  \frac{y(x)^2}{\sigma_{a}^2} = 0
	\end{align*}
	Therefore, we have shown $\lim_{x \rightarrow +\infty} x d_{a}(\lambda_{a}(x)) = 0$.

	\noindent\textbf{(2)}
	Since $g_{a}(x) =  \min_{\lambda \in [\mu_{a},\mu_{1}]}  d_{1}(\lambda) + x d_{a}(\lambda)$, $g_a$ is a concave function as infimum of an infinite number of linear functions. Since $g_{a}(x)  = h_{a}(x, \lambda_{a}(x))$, $\frac{\partial h_{a}}{\partial \lambda}(x, \lambda_{a}(x)) = 0$ (optimality condition) and $\frac{\partial h_{a}}{\partial x}(x, \lambda) = d_{a}(\lambda)$, we have
	\begin{align*}
		g_{a}'(x)  &= \frac{\partial h_{a}}{\partial x}(x, \lambda_{a}(x)) +  \lambda_{a}'(x) \frac{\partial h_{a}}{\partial \lambda}(x, \lambda_{a}(x)) = \frac{\partial h_{a}}{\partial x}(x, \lambda_{a}(x)) = d_{a}(\lambda_{a}(x)) > 0
	\end{align*}
	where the last inequality is strict since $\lambda_{a}(x) \in (\mu_{a},\mu_{1}]$ for $x \in [0,+\infty)$.
	Therefore, $g_{a}$ is a strictly increasing one-to-one mapping from $[0,+ \infty)$ to $[0, \lim_{+\infty} g_{a}(x))$.
	Since $\lim_{x \rightarrow + \infty}\lambda_{a}(x) =\mu_{a}$ and $\lim_{x \rightarrow +\infty} x d_{a}(\lambda_{a}(x)) = 0$, we obtain $\lim_{x \rightarrow +\infty} g_{a}(x) = d_{1}(\mu_{a})$.

	Directly computing the derivative of $d_{a}(\lambda_{a}(x))$, we obtain $g_{a}''(x) = \lambda_{a}'(x) d_{a}'(\lambda_{a}(x))$.
	Using that $\lambda_{a}(x) > \mu_{a}$ for all $x \in [0,+\infty)$, we have $d_{a}'(\lambda_{a}(x)) > 0 $.
	Since $g_a$ is concave, we know that $g_{a}''(x) \geq 0$, hence $\lambda_{a}'(x)\leq 0$, i.e. $x \mapsto \lambda_{a}(x)$ is decreasing.

 	Using the above result, $x_{a}(y)$ is well defined.
	 Using the derivative of the inverse function and $g_{a}'(x) = d_{a}(\lambda_{a}(x))$, we obtain $x_{a}'(y) = \frac{1}{g_{a}'(x_{a}(y))} = \frac{1}{d_{a}(\lambda_{a}(x_{a}(y)))}$. Since $d_{a}(\lambda_{a}(x)) > 0 $ for all $x \in [0,+\infty)$, we have $x_{a}'(y) > 0$ for all $y \in [0, d_{1}(\mu_{a}))$.

	\noindent\textbf{(3)} Using that $x_{a}'(y) = \frac{1}{d_{a}(\lambda_{a}(x(y)))}$ and $d_{1}(\lambda_{a}(x_{a}(y))) + x_{a}(y) d_{a}(\lambda_{a}(x_{a}(y))) = y $. Direct computations yield that $G'(y) = 0$ if and only if
	\begin{align*}
		&\frac{1}{1+ \sum_{a=2}^{K}x_{a}(y)} = \frac{y}{(1+ \sum_{a=2}^{K}x_{a}(y))^2} \sum_{a=2}^{K}x_{a}'(y) \\
		& \iff \sum_{a=2}^{K} \frac{y}{d_{a}(\lambda_{a}(x(y)))} = 1+ \sum_{a=2}^{K}x_{a}(y)  \iff \sum_{a=2}^{K} \frac{d_{1}(\lambda_{a}(x(y)))}{d_{a}(\lambda_{a}(x(y)))} = 1  \iff F(y) = 1
	\end{align*}
		Using that $\lambda_{a}(0) = \mu_1$ and $d_{1}(\mu_{1}) = 0$, we obtain $F(0) = 0$.
		Using that $\lim_{x \rightarrow + \infty} \lambda(x) = \mu_{a}$, $d_{1}(\mu_{a}) > 0$, $d_{a}(\mu_{a}) = 0$ and $\lim_{y \rightarrow d_1(\mu_a)}x_{a}(y) = + \infty$, we obtain that $\lim_{y \mapsto \min_{a\neq 1} d_1(\mu_a)} F(y) = +\infty$.	Let $H(y) = \sum_{a = 2}^{K} \frac{1}{d_{a}(\lambda_{a}(x_{a}(y)))}$ for $y \in [0,\min_{a\neq 1}d_{1}(\mu_{a}))$.
		Since $d_{1}'(\lambda_{a}(x)) + x d_{a}'(\lambda_{a}(x)) = 0$, $d_{1}(\lambda_{a}(x_{a}(y))) + x_{a}(y) d_{a}(\lambda_{a}(x_{a}(y))) = y $, direct computations yield
		\begin{align*}
			H'(y) & = -  \sum_{a = 2}^{K} x_{a}'(y) \lambda_{a}'(x_{a}(y))   \frac{d_{a}'(\lambda_{a}(x_{a}(y)))}{d_{a}(\lambda_{a}(x_{a}(y)))^2} \\
			F'(y) &= \sum_{a = 2}^{K} x_{a}'(y) \lambda_{a}'(x_{a}(y)) \frac{d_{1}'(\lambda_{a}(x_{a}(y))) d_{a}(\lambda_{a}(x_{a}(y))) - d_{1}(\lambda_{a}(x_{a}(y))) d_{a}'(\lambda_{a}(x_{a}(y)))}{d_{a}(\lambda_{a}(x_{a}(y)))^2}\\
			&= -\sum_{a = 2}^{K} x_{a}'(y) \lambda_{a}'(x_{a}(y)) d_{a}'(\lambda_{a}(x_{a}(y))) \frac{x_{a}(y) d_{a}(\lambda_{a}(x_{a}(y))) + d_{1}(\lambda_{a}(x_{a}(y)))}{d_{a}(\lambda_{a}(x_{a}(y)))^2} \\
			&= -y  \sum_{a = 2}^{K} x_{a}'(y) \lambda_{a}'(x_{a}(y))  \frac{d_{a}'(\lambda_{a}(x_{a}(y)))}{d_{a}(\lambda_{a}(x_{a}(y)))^2} = y H'(y)
		\end{align*}

		Note that for all $a \in [K]\setminus \{1\}$, $y \mapsto \frac{1}{d_{a}(\lambda_{a}(x_{a}(y)))}$ is an increasing function of $y$ since $d_{a}(\lambda)$ is strictly increasing on $(\mu_{a}, \mu_{1}]$, $\lambda_{a}(x)$ is decreasing on $[0,+\infty)$ and $x_{a}(y)$ is strictly increasing on $y \in [0,\min_{a\neq 1}\lim_{+\infty} g_{a}(x))$. As a summation of increasing functions, $H$ is an increasing function, i.e. $H'(y) \geq 0$. Since $F'(y) = y H'(y) \geq 0$, this yield that $F$ is an increasing function.
\end{proof}


\section{Generalized Log-Likelihood Ratios} \label{app:glr}

The GLR to reject the hypothesis $\cH_0 =\{(\mu,\sigma^{2}) \in \Lambda\}$ for a subset $\Lambda \subseteq \cM$ of the whole parameter space $\cM$ is given by
\begin{equation*}
\text{GLR}^{\cM}_{t}(\Lambda)
\eqdef \ln \frac{\sup_{(\mu, \sigma^2) \in \cM} d \nu_{\mu, \sigma^2}(X_{1,a_1}, \cdots, X_{t, a_t})}{\sup_{(\lambda, \kappa^2) \in \Lambda} d \nu_{\lambda, \kappa^2}(X_{1,a_1}, \cdots, X_{t,a_t})}
\end{equation*}
where $d\nu_{\mu, \sigma^2}(X_{1,a_1}, \cdots, X_{t,a_t})$ denotes the likelihood of the observations $X_{1,a_1}, \cdots, X_{t,a_t}$ for a Gaussian bandit with parameters $(\mu, \sigma^2)$. The empirical mean and variance are defined as
\begin{equation*}
	\mu_{t,a} = \frac{1}{N_{t,a}} \sum_{s \in [t]} \1\{a_s = a\} X_{s,a}
	\quad \quad \text{and} \quad \quad
	\sigma_{t,a}^{2} = \frac{1}{N_{t,a}} \sum_{s \in [t]} \1\{a_s = a\}  \left( X_{s,a} - \mu_{t,a}\right)^2 \: .
\end{equation*}

Using the formula of the KL for Gaussian bandits with different variances, the GLR rewrites as
\begin{align*}
	\text{GLR}^{\cM}_{t}(\Lambda) &= \inf_{(\lambda, \kappa^2) \in \Lambda}  \sum_{a \in [K]} \frac{N_{t,a}}{2}\left(  \ln\left(\frac{\kappa_{a}^2}{\sigma_{t,a}^2}\right)  + \frac{\sigma_{t,a}^{2} + (\mu_{t,a} - \lambda_a)^2}{\kappa_{a}^2}\right) \\
	&= \inf_{(\lambda, \kappa^2) \in \Lambda}  \sum_{a \in [K]} N_{t,a} \KL((\mu_{t,a}, \sigma_{t,a}^2), (\lambda_a, \kappa_{a}^2)) \: .
\end{align*}

When $\Lambda = \cM = \mathbb{R}^K \times (\mathbb{R}^\star_+)^K$, the minimizers of the log-likelihood are $(\mu_t, \sigma_t^2)$, i.e. the empirical estimators $(\mu_t, \sigma_t^2)$ are exactly the MLE.

\paragraph{Explicit formulas} The GLR has a more explicit formula when considering the alternative to the subset of interest $\Lambda(\mu_t, \sigma_t^2)$ (Lemma~\ref{lem:glrt_formula_gaussian}).

\begin{lemma} \label{lem:glrt_formula_gaussian}
Let $\cM = \mathbb{R}^K \times (\mathbb{R}^\star_+)^K$. Then, $\GLR^{\mathcal{D}}_{t}(\Lambda(\mu_t, \sigma_t^2)) = \min_{a \neq \hat{a}_t} Z_a(t)$ where
\begin{equation*}
	Z_a(t) = \GLR^{\mathcal{D}}_{t}(\{(\lambda, \kappa^2) \mid \lambda_a \ge \lambda_{\hat a_t}\}) = \inf_{ \lambda : \lambda_{a} \geq \lambda_{\hat a_t}}  \sum_{b \in \{a, \hat a_t\}} \frac{N_{t,b}}{2} \ln \left( 1 + \frac{(\mu_{t,b} - \lambda_{b})^2}{\sigma_{t,b}^2}\right) \: .
\end{equation*}
\end{lemma}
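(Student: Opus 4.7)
The plan is to reduce the minimization over the composite alternative $\Lambda(\mu_t,\sigma_t^2)$ to a finite minimum of per-arm transportation costs, and then carry out the inner minimization over the variance parameters in closed form.

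First, I would rewrite the alternative set as a finite union. Since the objective $(\lambda,\kappa^2) \mapsto \sum_a N_{t,a}\KL((\mu_{t,a},\sigma_{t,a}^2),(\lambda_a,\kappa_a^2))$ is continuous in $(\lambda,\kappa^2)$, taking the closure does not change the infimum, and
\[
\overline{\Lambda(\mu_t,\sigma_t^2)} \;=\; \bigcup_{a \neq \hat a_t}\bigl\{(\lambda,\kappa^2)\in\cM \mid \lambda_a \geq \lambda_{\hat a_t}\bigr\}.
\]
Exchanging the infimum with this finite union gives $\GLR^{\cD}_t(\Lambda(\mu_t,\sigma_t^2)) = \min_{a\neq \hat a_t} Z_a(t)$, where by definition
\[
Z_a(t) \;=\; \inf_{\substack{(\lambda,\kappa^2) \in \cM \\ \lambda_a\geq \lambda_{\hat a_t}}} \sum_{b \in [K]} N_{t,b}\KL\bigl((\mu_{t,b},\sigma_{t,b}^2),(\lambda_b,\kappa_b^2)\bigr).
\]

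Next, I would note that for every $b \notin \{a,\hat a_t\}$, the choice $\lambda_b=\mu_{t,b}$ and $\kappa_b^2=\sigma_{t,b}^2$ is admissible (it is unconstrained) and drives the corresponding summand to zero. Since KL is nonnegative, this is the minimizing choice, and the sum collapses to the two indices $\{a,\hat a_t\}$.

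The remaining step is to eliminate $\kappa_a^2$ and $\kappa_{\hat a_t}^2$. For any fixed $\lambda_b$, the Gaussian KL admits the decomposition
\[
\KL\bigl((\mu_{t,b},\sigma_{t,b}^2),(\lambda_b,\kappa_b^2)\bigr)
= \tfrac{1}{2}\Bigl(\tfrac{\sigma_{t,b}^2 + (\mu_{t,b}-\lambda_b)^2}{\kappa_b^2} - 1 - \ln\tfrac{\sigma_{t,b}^2}{\kappa_b^2}\Bigr),
\]
and a one-variable calculus exercise (differentiating in $\kappa_b^2$) gives the unique minimizer $\kappa_b^2 = \sigma_{t,b}^2 + (\mu_{t,b}-\lambda_b)^2$, at which the term equals $\tfrac{1}{2}\ln(1 + (\mu_{t,b}-\lambda_b)^2/\sigma_{t,b}^2)$. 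Substituting this for both $b=a$ and $b=\hat a_t$ yields
\[
Z_a(t) = \inf_{\lambda_a \geq \lambda_{\hat a_t}} \sum_{b \in \{a,\hat a_t\}} \tfrac{N_{t,b}}{2}\ln\Bigl(1 + \tfrac{(\mu_{t,b}-\lambda_b)^2}{\sigma_{t,b}^2}\Bigr),
\]
which is the claimed expression. This is essentially the same manipulation used in the proof of Lemma~\ref{lem:calcul}, the only difference being that empirical quantities replace the true parameters.

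I do not anticipate a substantial obstacle: both the decomposition of $\Lambda(\mu_t,\sigma_t^2)$ and the pointwise minimization in $\kappa_b^2$ are elementary. The only mild care needed is to justify the closure step (to deal with the strict inequality implicit in $\hat a_t \notin \argmax_a \lambda_a$), which follows from the continuity of the objective since the infimum over an open set and over its closure coincide.
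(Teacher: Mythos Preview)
Your proof is correct and follows essentially the same route as the paper's: decompose $\Lambda(\mu_t,\sigma_t^2)$ as a union of half-spaces indexed by $a\neq\hat a_t$, set the unconstrained coordinates to the empirical values, and then minimize out each $\kappa_b^2$ to obtain the $\tfrac{1}{2}\ln(1+\cdot)$ form. Your explicit closure/continuity remark to pass from the strict to the non-strict inequality is a welcome bit of extra care that the paper leaves implicit.
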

\begin{proof}
Since $a^\star(\mu_t) = \hat a_t$ and $\Lambda(\mu_t, \sigma_t^2) = \bigcup_{a \ne \hat a_t} \{(\lambda, \kappa^2) \mid \lambda_a \ge \lambda_{\hat a_t}\}$, we obtain
\begin{align*}
	&\text{GLR}^{\cM}_{t}(\Lambda(\mu_t, \sigma_t^2)) = \min_{a \neq \hat a_t}  Z_{a}(t) \quad \text{with} \quad Z_{a}(t) = \GLR^{\mathcal{D}}_{t}(\{(\lambda, \kappa^2) \mid \lambda_a \ge \lambda_{\hat a_t}\}) \: .
\end{align*}
For $b \notin \{a , \hat a_t\}$, we can take $(\lambda_b, \kappa_{b}^2) = (\mu_{t,b}, \sigma_{t,b}^2)$, hence we obtain
\begin{align*}
	Z_{a}(t) &= \inf_{ (\lambda,\kappa^2) : \lambda_{a} \geq \lambda_{\hat a_t}}  \sum_{b \in \{a, \hat a_t\}} N_{t,b} \KL \left( (\mu_{t,b}, \sigma_{t,b}^2), (\lambda_{b}, \kappa_{b}^2)\right) \\
	&=\inf_{ \lambda : \lambda_{a} \geq \lambda_{\hat a_t}}  \sum_{b \in \{a, \hat a_t\}} \frac{N_{t,b}}{2} \ln \left( 1 + \frac{(\mu_{t,b} - \lambda_{b})^2}{\sigma_{t,b}^2}\right)
\end{align*}
The second equality uses that the objectives and the constraints are separate and that
\[
\argmin_{\kappa_{b}^2}\KL \left( (\mu_{t,b}, \sigma_{t,b}^2), (\lambda_{b}, \kappa_{b}^2)\right) = (\mu_{t,b} - \lambda_b)^2 + \sigma_{t,b}^2 \: ,
\]
since $f_{a}(x) = \frac{a}{x} + \ln(x)$ has $f_{a}'(x) = \frac{1}{x} - \frac{a}{x^2}$ and $\argmin_{x > 0} f_{a}(x) = a$, for $a = (\mu_{t,b} - \lambda_b)^2 + \sigma_{t,b}^2$.
\end{proof}

\paragraph{Inequalities}
Lemma~\ref{lem:glrt_evglrt_inequalities} gives inequalities between $Z_a(t)$ and $Z^{\text{EV}}_a(t)$.

\begin{lemma} \label{lem:glrt_evglrt_inequalities}
For $a\neq \hat{a}_t$, let $C_{a}(\mu_{t}, \sigma^2_{t}) = \frac{(\mu_{t, \hat a_t} - \mu_{t, a})^2}{\min\{\sigma_{t, \hat a_t}^2,\sigma_{t,a}^2\}}$. The statistics $Z^{\text{EV}}_a(t) $ and $Z_a(t)$ satisfy
\begin{equation} \label{eq:glrt_evglrt_inequalities}
	Z^{\text{EV}}_a(t) \geq Z_a(t) \geq \frac{\ln \left( 1+ C_a(\mu_{t}, \sigma^2_{t})\right)}{C_a(\mu_{t}, \sigma^2_{t})}  Z^{\text{EV}}_a(t) \: .
\end{equation}
\end{lemma}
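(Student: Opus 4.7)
The plan is to prove both inequalities by pointwise comparison of the two integrands appearing inside the infima in $Z_a(t)$ and $Z^{\text{EV}}_a(t)$, exploiting the fact that both are taken over the same interval $[\mu_{t,a}, \mu_{t,\hat a_t}]$, and then passing to the infimum. The key analytic tool is the elementary behavior of the function $\varphi(x) = \ln(1+x)/x$ on $[0,\infty)$, which is positive, decreasing, bounded above by $1$, and satisfies $\varphi(0^+)=1$. Equivalently, for every $x \ge 0$ one has $\ln(1+x) \le x$, and for every $x \in [0, C]$ with $C>0$ one has $\ln(1+x) \ge \frac{\ln(1+C)}{C}\, x$.

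First I would handle the upper bound $Z^{\text{EV}}_a(t) \ge Z_a(t)$. For any $\lambda$ in the interval and any $b \in \{a,\hat a_t\}$, apply $\ln(1+x) \le x$ with $x = (\mu_{t,b}-\lambda)^2/\sigma_{t,b}^2 \ge 0$. Summing the inequality with the nonnegative weights $N_{t,b}/2$ shows that the integrand of $Z_a(t)$ is at most the integrand of (the first expression for) $Z^{\text{EV}}_a(t)$ pointwise in $\lambda$. Taking the infimum over $\lambda \in [\mu_{t,a},\mu_{t,\hat a_t}]$ on both sides preserves the inequality, yielding $Z_a(t) \le Z^{\text{EV}}_a(t)$.

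Next, for the lower bound $Z_a(t) \ge \frac{\ln(1+C_a(\mu_t,\sigma_t^2))}{C_a(\mu_t,\sigma_t^2)} Z^{\text{EV}}_a(t)$, the crucial observation is that the infimum in $Z_a(t)$ is restricted to $\lambda \in [\mu_{t,a},\mu_{t,\hat a_t}]$. For any such $\lambda$ and any $b \in \{a,\hat a_t\}$, we have $|\mu_{t,b} - \lambda| \le |\mu_{t,\hat a_t} - \mu_{t,a}|$, and $\sigma_{t,b}^2 \ge \min\{\sigma_{t,a}^2,\sigma_{t,\hat a_t}^2\}$; hence
\[
\frac{(\mu_{t,b}-\lambda)^2}{\sigma_{t,b}^2} \;\le\; \frac{(\mu_{t,\hat a_t}-\mu_{t,a})^2}{\min\{\sigma_{t,a}^2,\sigma_{t,\hat a_t}^2\}} \;=\; C_a(\mu_t,\sigma_t^2).
\]
Applying the lower bound $\ln(1+x) \ge \frac{\ln(1+C_a(\mu_t,\sigma_t^2))}{C_a(\mu_t,\sigma_t^2)}\, x$ with $x = (\mu_{t,b}-\lambda)^2/\sigma_{t,b}^2 \in [0, C_a(\mu_t,\sigma_t^2)]$ and summing over $b$ with weights $N_{t,b}/2$ yields a pointwise-in-$\lambda$ inequality between the two integrands. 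Passing to the infimum over $\lambda$ on both sides (which is legitimate since the multiplicative constant $\ln(1+C_a)/C_a$ is a nonnegative number that factors out of the infimum) produces the desired inequality.

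I do not anticipate any real obstacle here; the edge cases are essentially trivial. If $\mu_{t,a} = \mu_{t,\hat a_t}$, both $Z_a(t)$ and $Z^{\text{EV}}_a(t)$ vanish (take $\lambda = \mu_{t,a}$) and the inequalities hold trivially; otherwise $C_a(\mu_t,\sigma_t^2) > 0$ and $\ln(1+C_a)/C_a \in (0,1)$, making the chain of inequalities nontrivial but immediate. The only step requiring any care is the verification that $\sigma_{t,b}^2 \ge \min\{\sigma_{t,a}^2,\sigma_{t,\hat a_t}^2\}$ for $b \in \{a,\hat a_t\}$, which is tautological.
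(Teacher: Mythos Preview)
Your proposal is correct and follows essentially the same approach as the paper: the paper omits the proof, noting only that it uses the same manipulations as in the proof of Lemma~\ref{lem:complexity_inequalities}, namely bounding $\frac{(\mu_{t,b}-\lambda)^2}{\sigma_{t,b}^2}$ by $C_a(\mu_t,\sigma_t^2)$ on the interval and exploiting that $x\mapsto \ln(1+x)/x$ is decreasing (together with $\ln(1+x)\le x$). Your write-up is in fact more explicit and careful than what the paper sketches.
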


The proof is omitted since it was obtained with similar manipulations on $Z_a(t)$ and $Z^{\text{EV}}_a(t)$ as done on $T^\star(\mu, \sigma^2)$ and $T^\star_{\sigma^2}(\mu)$ in Appendix~\ref{app:ss_explicit_formulas_inequalities}.


\section{Box Concentration} \label{app:box_concentration}

In Appendix~\ref{app:ss_sub_exponential_concentration}, we derive time-uniform upper and lower tail concentrations for sub-exponential processes, as well as their fixed-time counterpart. In Appendix~\ref{app:ss_gaussian_concentration}, for Gaussian observations, we use those results to obtain similar concentrations for the empirical variance and derive time-uniform (and fixed-time) upper and lower tail concentrations for the empirical mean.

\subsection{Sub-Exponential Processes}
\label{app:ss_sub_exponential_concentration}

We prove time-uniform and fixed-time concentration results for $1$-sub-$\psi_{E,c}$ process with variance process $V_t = ct$ (Appendix~\ref{app:sss_sub_exponential_upper_tail_concentration}) and $1$-sub-$\psi_{E,-c}$ process with variance process $V_t = ct$ (Appendix~\ref{app:sss_sub_exponential_lower_tail_concentration}). The concept of sub-$\psi$ process (Definition~\ref{def:sub_phi_process}) was introduced in \citet{howard_2020_TimeuniformChernoffBounds}. This concept is particularly useful to derive time-uniform concentration results.

\begin{definition} \label{def:sub_phi_process}
	Let $(S_t)_{t \in \cT \cup \{0\}}$ and $(V_t)_{t \in \cT \cup \{0\}}$ be two real-valued processes adapted to an underlying filtration $(\cF_{t})_{ t \in \cT \cup \{0\}}$ with $S_0 = 0$ and $V_{0} = 0$ a.s. and $V_t \geq 0$ a.s. for all $t \in \cT$. For a function $\psi: [0, \lambda_{\max}) \mapsto \Real$ and a scalar $l_{0} \in [1, + \infty)$, we say that $(S_{t})$ is $l_0$-sub-$\psi$ with variance process $(V_t)$ if, for each $\lambda \in [0, \lambda_{\max})$, there exists a supermartingale $(L_{t}(\lambda))_{t \in \cT \cup \{0\}}$ with respect to $(\cF_{t})$ such that $L_{0}(\lambda)\leq l_{0}$ a.s. and
	\begin{align*}
		\exp \left\{ \lambda S_t - \psi(\lambda) V_{t}\right\} \leq L_{t}(\lambda)\quad \quad \text{a.s. for all }t\in \cT \: .
	\end{align*}
\end{definition}

\begin{lemma}[Ville's inequality] \label{lem:ville_inequality}
	Let $\probability_{0}[\cdot] = \probability_{0}[\cdot \mid \cF_{0}]$. Let $\cT \subseteq \Natural^{\star}$, such that $|\cT| = \infty$. If $(L_{t})_{t \in \cT \cup \{0\}}$ is a non-negative supermartingale with respect to the filtration $(\cF_{t})_{ t \in \cT \cup \{0\}}$, then
	\begin{align*}
	\forall a > 0, \quad \probability_{0}\left(\exists t \in \cT: L_t \geq a \right) \leq L_{0} / a \: .
	\end{align*}
\end{lemma}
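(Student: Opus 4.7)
The plan is to run the standard optional-stopping argument for the nonnegative supermartingale $(L_t)$ at the first hitting time of level $a$, truncated by a bounded time, and then let the truncation go to infinity.

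First, enumerate $\cT = \{t_1 < t_2 < \cdots\}$, which is possible since $\cT \subseteq \Natural^\star$ with $|\cT|=\infty$. Define the hitting time
\[
\tau \eqdef \inf\{t \in \cT : L_t \geq a\},
\]
with the convention $\inf \emptyset = +\infty$. Then $\tau$ is an $(\cF_t)$-stopping time, since for every $k$, $\{\tau \leq t_k\} = \bigcup_{j \leq k}\{L_{t_j} \geq a\} \in \cF_{t_k}$. For each $n$, the truncated time $\tau \wedge t_n$ is bounded by $t_n$, so the discrete-time optional stopping theorem applied to the nonnegative supermartingale $(L_t)$ (which remains a supermartingale when further conditioning on $\cF_0$ by the tower property) yields
\[
\expectedvalue_0[L_{\tau \wedge t_n}] \leq \expectedvalue_0[L_0] = L_0 \qquad \text{a.s.},
\]
where I use that $L_0$ is $\cF_0$-measurable.

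Next, on the event $\{\tau \leq t_n\}$ we have $L_{\tau \wedge t_n} = L_\tau \geq a$ by definition of $\tau$, while $L_{\tau \wedge t_n} \geq 0$ always. Combining these gives the pointwise bound $L_{\tau \wedge t_n} \geq a\,\1\{\tau \leq t_n\}$. Taking conditional expectation under $\probability_0$ and using the supermartingale inequality above,
\[
a\,\probability_0(\tau \leq t_n) \leq \expectedvalue_0[L_{\tau \wedge t_n}] \leq L_0 \qquad \text{a.s.}
\]
Since the events $\{\tau \leq t_n\}$ increase in $n$ to $\{\tau < +\infty\} = \{\exists t \in \cT : L_t \geq a\}$, monotone continuity of $\probability_0$ yields
\[
a\,\probability_0(\exists t \in \cT : L_t \geq a) \leq L_0,
\]
which is the claim.

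There is no serious obstacle here, as Ville's inequality is classical; the only minor points to watch are (i) arguing that $\tau$ is a stopping time despite $\cT$ being a general infinite countable subset of $\Natural^\star$, which is handled by enumeration, and (ii) ensuring that optional stopping applies at the bounded time $\tau \wedge t_n$ for a possibly unbounded nonnegative supermartingale, which is the standard discrete-time statement since the stopping time is bounded.
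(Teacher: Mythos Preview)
Your proof is correct and is the standard optional-stopping argument for Ville's inequality. The paper does not actually give its own proof of this lemma: it is stated as a classical result and then used immediately to derive time-uniform concentration for sub-$\psi$ processes. So there is nothing to compare against; your write-up simply fills in the omitted classical proof.
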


Since we aim at deriving one-sided bounds on scalar martingales, we have $l_0 = 1$. Using Ville's inequality (Lemma~\ref{lem:ville_inequality}) on a sub-$\psi$ process yields time-uniform concentration results. Let $(S_{t})$ be a $1$-sub-$\psi$ with variance process $(V_t)$, then for all $\lambda \in [0,\lambda_{\max})$, with probability greater than $1-\delta$,
\begin{equation*}
		\forall t \in \cT, \quad \lambda S_t - \psi(\lambda) V_{t} < \log\left( 1/\delta\right)\: .
\end{equation*}

Let $\lambda \in [0,\lambda_{\max})$. Direct manipulations show the above result,
\begin{align*}
		\probability \left(\exists t\in \cT: \: \lambda S_t - \psi(\lambda) V_{t} \geq \log\left( 1/\delta\right) \right) &\leq \probability \left( \exists t\in \cT: \:  L_{t}(\lambda) \geq 1/\delta \right) \leq \delta \: .
\end{align*}

In the following, we are interested by $1$-sub-$\psi_{E,c}$ processes for $c \in \Real$, where $\psi_{E,c}$ is defined as
\begin{equation}\label{eq:sub_exp_process}
	\forall \lambda \in \left[ 0, 1/(c \lor 0)\right), \quad \psi_{E,c}(\lambda) = \frac{-\log(1-c\lambda) - c\lambda}{c^2} \: .
\end{equation}

The derived upper and lower tails concentrations involve the positive ($i=0$) and negative ($i=-1$) Lambert's branches $W_{i}$ solutions of $W(x) e^{W(x)} = x$. We refer the reader to Appendix~\ref{app:lambert_W_functions} for mode details and corresponding technical results.

\subsubsection{Upper Tail Concentration}
\label{app:sss_sub_exponential_upper_tail_concentration}

We derive time-uniform and fixed-time upper tail concentration for $1$-sub-$\psi_{E,c}$ process with variance process $V_t = ct$.
While the time-uniform result require using the peeling method, the proof of the fixed-time concentration is simpler.
To use the peeling method, we need to control the deviation of the process on slices of time (Lemma~\ref{lem:fixed_subExp_martingale_on_a_slice_upper_tail}).

\begin{lemma} \label{lem:fixed_subExp_martingale_on_a_slice_upper_tail}
	Let $c > 0$ and $S_{t}$ a $1$-sub-$\psi_{E,c}$ process with variance process $V_t = ct$. Let $N > 0$. For all $x > 1$, there exists $\lambda = \lambda(x)$ such that for all $t \ge N$,
	\begin{align*}
		\left\{ S_t + t \geq t x \right\}
		\subseteq \left\{ \lambda S_{t} - ct \psi_{E,c}(\lambda) \geq \frac{N}{c} \left( h\left( x \right)-1\right)\right\}
	\end{align*}
	where $\lambda(x) = \argmax_{\lambda \in [0,1/c)} \left(x\lambda + \frac{\ln (1 - c \lambda)}{c} \right)$ and $h(x) = x-\ln(x)$ for $x>1$.
\end{lemma}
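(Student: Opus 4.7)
The plan is to first use the definition of the event to lower bound $S_t$, then optimize the quantity $\lambda S_t - ct\psi_{E,c}(\lambda)$ in $\lambda$, and finally exploit positivity of the optimized value together with $t \ge N$.

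Concretely, on the event $\{S_t + t \ge tx\}$ we have $S_t \ge t(x-1)$, so for any $\lambda \in [0, 1/c)$,
\[
\lambda S_t - ct\,\psi_{E,c}(\lambda) \;\ge\; t\bigl[\lambda(x-1) - c\,\psi_{E,c}(\lambda)\bigr].
\]
Plugging in the definition $c\,\psi_{E,c}(\lambda) = \bigl(-\log(1-c\lambda) - c\lambda\bigr)/c$ from~\eqref{eq:sub_exp_process} and simplifying, the bracket collapses to
\[
\lambda(x-1) - c\,\psi_{E,c}(\lambda) \;=\; \lambda x + \frac{\log(1-c\lambda)}{c},
\]
which is precisely the objective defining $\lambda(x)$ in the statement.

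Next I would optimize this concave function of $\lambda$ over $[0, 1/c)$. Differentiating gives $x - 1/(1-c\lambda)$, so the maximizer is $\lambda(x) = (x-1)/(cx) \in (0,1/c)$, and the maximum value equals
\[
\lambda(x)\, x + \frac{\log(1-c\lambda(x))}{c} \;=\; \frac{x-1}{c} - \frac{\log x}{c} \;=\; \frac{h(x) - 1}{c},
\]
using $h(x) = x - \log x$. Thus, on $\{S_t + t \ge tx\}$,
\[
\lambda(x) S_t - ct\,\psi_{E,c}(\lambda(x)) \;\ge\; \frac{t\,(h(x)-1)}{c}.
\]

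Finally, since $h$ attains its minimum at $x=1$ with $h(1)=1$, we have $h(x) - 1 > 0$ for $x > 1$, so the right-hand side is nonnegative and monotone increasing in $t$. Hence for any $t \ge N$,
\[
\frac{t\,(h(x)-1)}{c} \;\ge\; \frac{N\,(h(x)-1)}{c},
\]
which yields the claimed inclusion. There is no real obstacle here; the only point requiring a remark is the sign check $h(x) > 1$ for $x > 1$, which is what allows us to replace $t$ by the smaller quantity $N$ without reversing the inequality.
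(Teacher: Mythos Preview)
Your proof is correct and follows essentially the same approach as the paper: both rewrite the event as a lower bound on $\lambda S_t - ct\,\psi_{E,c}(\lambda)$, optimize in $\lambda$ to obtain $\tfrac{t}{c}(h(x)-1)$, and then use $h(x)>1$ for $x>1$ together with $t\ge N$ to replace $t$ by $N$. The only cosmetic difference is that the paper phrases the optimization via the convex conjugate $\psi_U^*(x)$ of $\psi_U(\lambda)=\lambda+c\,\psi_{E,c}(\lambda)$, while you compute it directly.
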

\begin{proof}
	Defining $\psi_{U}(\lambda) = \lambda + c\psi_{E,c}(\lambda)  = - \frac{\ln (1 - c \lambda)}{c}$ and $\lambda(x) = \argmax_{\lambda \in [0,1/c)} x\lambda - \psi_{U}(\lambda)$, we have $x\lambda(x) - \psi_{U}(\lambda(x)) = \psi^*_{U}\left(x\right)$ where $\psi^*_{U}$ is the convex conjugate of $\psi_{U}$. Note that $\psi^*_{U}\left(x\right) \geq 0$ (see below), hence $t \psi^*_{U}\left(x\right) \geq N \psi^*_{U}\left(x\right)$ for $t \geq N$. Direct computations yield
	\begin{align*}
		S_t + t \geq t x
		&\iff \lambda  S_t - ct \psi_{E,c}(\lambda) \geq t x\lambda - t(\lambda + c\psi_{E,c}(\lambda)) \\
		& \implies \lambda  S_t - ct \psi_{E,c}(\lambda) \geq t\left(x\lambda - \psi_{U}(\lambda) \right)
		= t \psi^*_{U}\left(x\right) \\
		& \implies  \lambda  S_t - ct \psi_{E,c}(\lambda) \geq N_i \psi^*_{U}\left(x\right)
		= \frac{N}{c} \left( h\left( x \right)-1\right)
	\end{align*}

	Note that for $f(\lambda) = \lambda x + \frac{\ln(1-c\lambda)}{c}$, we have $f'(\lambda) = x -\frac{1}{1-c\lambda} = 0 \iff \lambda = \frac{1}{c}\left( 1- \frac{1}{x} \right)$ and $\frac{1}{c}\left( 1- \frac{1}{x} \right) \in [0,\frac{1}{c}) \iff x > 1$. Since $f''(\lambda) = - \frac{c}{(1-c\lambda)^2} \leq 0$, the function is concave hence this is a maximum. This yields that for all $x > 1$, $\psi^*_{U}(x) = f(\frac{1}{c}\left( 1- \frac{1}{x} \right)) = \frac{1}{c}\left( x-1-\ln(x)\right) = \frac{1}{c}(h(x)-1) \geq 0$ where $h(x) = x - \ln(x)$.
\end{proof}

Let $\eta>0$. Applying Lemma~\ref{lem:fixed_subExp_martingale_on_a_slice_upper_tail} on slices of time with geometric growth rate $(N_{i})_{i \in \Natural^{\star}}$ with $N_i = (1+\eta)^{i-1}$, we obtain Lemma~\ref{lem:uniform_time_subExp_upper_tail_concentration}.

\begin{lemma} \label{lem:uniform_time_subExp_upper_tail_concentration}
	Let $\overline{W}_{-1}(x) = -W_{-1}(-e^{-x})$ for $x\geq1$, $\delta \in (0,1)$, $\eta > 0$, $s > 1$, $c > 0$, and $\zeta$ be the Riemann $\zeta$ function. Let $S_{t}$ a $1$-sub-$\psi_{E,c}$ process with variance process $V_t = ct$. Then, with probability greater than $1 - \delta$, for all $t \in \Natural^{\star}$,
	\begin{align*}
		S_{t} + t  \leq t\overline{W}_{-1} \left(1 +  \frac{c(1 + \eta)}{t}\left(\ln\left( \frac{\zeta(s)}{\delta} \right) + s\ln \left( 1+ \frac{\ln(t)}{\ln(1+\eta)}\right) \right)\right) \: .
	\end{align*}
\end{lemma}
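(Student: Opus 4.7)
The plan is to run the peeling (stitching) method on geometric slices of time, invert a slicewise Chernoff bound via the Lambert $W_{-1}$ branch, and conclude by a weighted union bound using the Riemann $\zeta$ function.

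First I would partition $\mathbb{N}^\star$ into slices $\mathcal{T}_i = [N_i, N_{i+1}) \cap \mathbb{N}^\star$ with $N_i = (1+\eta)^{i-1}$ for $i \geq 1$, and assign to slice $i$ the level $\delta_i = \delta/(\zeta(s) i^s)$, so that $\sum_{i\geq 1} \delta_i = \delta$. On slice $i$ I want a deterministic upper bound $x_i$ on $(S_t + t)/t$ that holds with probability at least $1 - \delta_i$. To produce $x_i$, invert the elementary identity $h(x) - 1 = y$ with $h(x) = x - \ln x$ for $x > 1$: writing $-x e^{-x} = -e^{-(1+y)}$ and taking the $W_{-1}$ branch gives $x = -W_{-1}(-e^{-(1+y)}) = \overline{W}_{-1}(1+y)$. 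Accordingly set
\[
x_i \;=\; \overline{W}_{-1}\!\left(1 + \tfrac{c}{N_i}\ln(1/\delta_i)\right),
\]
which is strictly greater than $1$, so Lemma~\ref{lem:fixed_subExp_martingale_on_a_slice_upper_tail} applies with $N = N_i$ and produces some $\lambda_i = \lambda(x_i) \in [0, 1/c)$.

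Next I apply Ville's inequality (Lemma~\ref{lem:ville_inequality}) to the non-negative supermartingale $L_t(\lambda_i) = \exp\{\lambda_i S_t - c t\, \psi_{E,c}(\lambda_i)\}$ from the sub-$\psi_{E,c}$ assumption. The slicewise inclusion in Lemma~\ref{lem:fixed_subExp_martingale_on_a_slice_upper_tail} yields
\[
\mathbb{P}\!\left(\exists t \in \mathcal{T}_i : S_t + t \geq t x_i\right) \;\leq\; \mathbb{P}\!\left(\exists t \in \mathcal{T}_i : L_t(\lambda_i) \geq e^{\frac{N_i}{c}(h(x_i)-1)}\right) \;\leq\; e^{-\frac{N_i}{c}(h(x_i)-1)} \;=\; \delta_i,
\]
by the very definition of $x_i$. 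A union bound over $i \geq 1$ therefore gives, with probability at least $1 - \delta$, that for every $i \geq 1$ and every $t \in \mathcal{T}_i$, $S_t + t \leq t x_i$.

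Finally, convert the slicewise bound into the $t$-uniform form announced. For $t \in \mathcal{T}_i$ one has $(1+\eta)^{i-1} \leq t < (1+\eta)^i$, so $1/N_i \leq (1+\eta)/t$ and $i \leq 1 + \ln(t)/\ln(1+\eta)$; hence
\[
\tfrac{c}{N_i}\ln(1/\delta_i) \;=\; \tfrac{c}{N_i}\bigl(\ln(\zeta(s)/\delta) + s\ln i\bigr) \;\leq\; \tfrac{c(1+\eta)}{t}\!\left(\ln\tfrac{\zeta(s)}{\delta} + s\ln\!\left(1 + \tfrac{\ln t}{\ln(1+\eta)}\right)\right).
\]
Since $\overline{W}_{-1}$ is non-decreasing on $[1,\infty)$ (Appendix~\ref{app:lambert_W_functions}), composing gives the claimed uniform upper bound on $S_t + t$.

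The only step that requires care is the inversion: one must check that the argument of $\overline{W}_{-1}$ is $\geq 1$ (automatic since $\frac{c}{N_i}\ln(1/\delta_i) \geq 0$) and that $x_i > 1$ so Lemma~\ref{lem:fixed_subExp_martingale_on_a_slice_upper_tail} is applicable; both follow directly from $\overline{W}_{-1}(1) = 1$ and the monotonicity. The rest is a routine combination of Ville's inequality and the $\zeta$-weighted union bound that is standard in the peeling literature.
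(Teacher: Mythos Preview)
Your proof is correct and follows essentially the same approach as the paper: geometric peeling with $N_i=(1+\eta)^{i-1}$, slicewise thresholds $x_i=\overline{W}_{-1}(1+\tfrac{c}{N_i}\ln(i^s\zeta(s)/\delta))$ via Lemma~\ref{lem:fixed_subExp_martingale_on_a_slice_upper_tail} and Ville's inequality, a $\zeta$-weighted union bound, and monotonicity of $\overline{W}_{-1}$ to pass to the $t$-uniform form. The only cosmetic slip is identifying $L_t(\lambda_i)$ with the exponential itself rather than a supermartingale dominating it (Definition~\ref{def:sub_phi_process}), but the inclusion goes the right way so the bound is unaffected.
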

\begin{proof}
	Let $g(t,\delta)$ such that $g(t,\delta) \geq x_{i}(\delta)$ for $t \in [N_i, N_{i+1})$ and $x_{i}(\delta) > 1$. Using Lemma~\ref{lem:fixed_subExp_martingale_on_a_slice_upper_tail} with $x_i(\delta) > 1$ and $g(t,\delta) \geq x_{i}(\delta)$ on $[N_i, N_{i+1})$, we obtain
	\begin{align*}
		\probability \left( \exists t \in \Natural^\star: S_t + t \geq t g(t,\delta) \right)
		&\leq \sum_{i \in \Natural^{\star}}  \probability \left( \exists t \in [N_i, N_{i+1}): S_t + t \geq t x_{i}(\delta) \right)
		\\
		&\leq  \sum_{i \in \Natural^{\star}}  \probability \left( \exists t \in [N_i, N_{i+1}):  \lambda S_{t} - ct \psi_{E,c}(\lambda) \geq \frac{N_i}{c} \left( h\left( x_{i}(\delta)\right)-1\right) \right)
		\\
		&\leq \sum_{i \in \Natural^{\star}} e^{-\frac{N_i}{c} \left( h\left(x_{i}(\delta)\right)-1\right)} \: ,
	\end{align*}
	where the last inequality uses that $S_{t}$ a $1$-sub-$\psi_{E,c}$ process with variance process $V_t = ct$. Taking
	\[
	g(t,\delta) = \overline{W}_{-1} \left(1 + \frac{c(1 + \eta)}{t}\left(\ln\left( \frac{\zeta(s)}{\delta} \right) + s\ln \left( 1+ \frac{\ln(t)}{\ln(1+\eta)}\right) \right)\right)
	\]
	and $x_{i}(\delta) = \overline{W}_{-1} \left(1 + \frac{c}{N_{i}}\ln\left( \frac{i^s\zeta(s)}{\delta} \right) \right)$ satisfies the required properties. First, we have $x_{i}(\delta) > 1$ (Lemma~\ref{lem:lambert_branches_properties}). Second, since $\overline{W}_{-1}$ is increasing on $(1,+\infty)$ (Lemma~\ref{lem:lambert_branches_properties}), $t \in [N_i, N_{i+1})$ and $i = 1 + \frac{\ln(N_i)}{\ln(1+\eta)}$, we obtain
	\begin{align*}
		g(t,\delta) &\geq \overline{W}_{-1} \left(1 +  \frac{c\ln\left( \frac{\zeta(s)}{\delta} \right) + cs\ln \left( 1+ \frac{\ln(t)}{\ln(1+\eta)}\right)}{N_{i}}\right) \geq \overline{W}_{-1} \left(1 +  \frac{c}{N_{i}}\ln\left( \frac{i^s\zeta(s)}{\delta} \right)\right)
	\end{align*}

	Using Lemma~\ref{lem:lambert_branches_properties} for each $i \in \Natural^{\star}$ yields
	\begin{align*}
		\probability \left( \exists t \in \Natural^{\star}: S_t + t \geq t g(t,\delta) \right) \leq \sum_{i \in \Natural^{\star}} e^{-\frac{N_i}{c} \left( h\left(x_{i}(\delta)\right)-1\right)} \leq \frac{\delta}{\zeta(s)} \sum_{i \in \Natural^{\star}} \frac{1}{i^s} =\delta
	\end{align*}
\end{proof}

\paragraph{Fixed-time concentration} When the time is fixed and not random, there is no need to consider slices of time and we can directly control the deviation of the process (Lemma~\ref{lem:fixed_time_subExp_uppertail_concentration}).

\begin{lemma} \label{lem:fixed_time_subExp_uppertail_concentration}
	Let $h(x) = x-\ln(x)$ for $x>1$. Let $c > 0$ and $S_{t}$ a $1$-sub-$\psi_{E,c}$ process with variance process $V_t = ct$. Then,
	\begin{align*}
		&\forall t \in \Natural^{\star}, \: \forall x>1, \quad  \bP \left( S_t + t \geq t x \right) \leq \exp \left( -\frac{t}{c} \left( h\left( x \right)-1\right)\right) \:.
	\end{align*}
\end{lemma}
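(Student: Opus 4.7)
The statement is the fixed-time analogue of Lemma~\ref{lem:uniform_time_subExp_upper_tail_concentration}, and since the time horizon is deterministic, no peeling is required: a single application of Lemma~\ref{lem:fixed_subExp_martingale_on_a_slice_upper_tail} combined with Markov's inequality on the canonical supermartingale suffices. The main (and essentially only) point is to identify the correct $\lambda$ so that the exponent matches the claimed rate function $\frac{t}{c}(h(x)-1)$.

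Concretely, my plan is the following. First, fix $t \in \mathbb{N}^\star$ and $x > 1$, and apply Lemma~\ref{lem:fixed_subExp_martingale_on_a_slice_upper_tail} with the choice $N = t$ (which trivially satisfies $t \geq N$). This yields the event inclusion
\[
\{S_t + t \geq tx\} \subseteq \bigl\{\lambda(x) S_t - ct\,\psi_{E,c}(\lambda(x)) \geq \tfrac{t}{c}\bigl(h(x)-1\bigr)\bigr\},
\]
with $\lambda(x) = \frac{1}{c}(1 - 1/x) \in [0, 1/c)$. Second, invoke the sub-$\psi_{E,c}$ property at $\lambda = \lambda(x)$: Definition~\ref{def:sub_phi_process} provides a non-negative supermartingale $(L_t(\lambda))_{t \geq 0}$ with $L_0(\lambda) \leq 1$ and $\exp\{\lambda S_t - \psi_{E,c}(\lambda)\,V_t\} \leq L_t(\lambda)$ a.s., where $V_t = ct$. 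Third, apply Markov's inequality to this non-negative random variable, together with $\mathbb{E}[L_t(\lambda)] \leq L_0(\lambda) \leq 1$:
\[
\mathbb{P}\bigl(\lambda S_t - ct\,\psi_{E,c}(\lambda) \geq u\bigr) \leq \mathbb{E}[L_t(\lambda)]\,e^{-u} \leq e^{-u},
\]
and then set $u = \frac{t}{c}(h(x)-1)$, which is non-negative because $h(x) \geq 1$ for all $x > 0$ (with equality at $x=1$).

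The main (and modest) obstacle will simply be to verify the algebraic identity underlying the choice of $\lambda(x)$, namely that $\psi^*_U(x) = \frac{1}{c}(h(x) - 1)$ for the Legendre transform $\psi^*_U$ of $\psi_U(\lambda) = \lambda + c\,\psi_{E,c}(\lambda) = -\log(1-c\lambda)/c$; but this is already carried out in the proof of Lemma~\ref{lem:fixed_subExp_martingale_on_a_slice_upper_tail}, so nothing remains to be recomputed. Chaining the three steps above gives
\[
\mathbb{P}(S_t + t \geq tx) \leq \exp\!\left(-\tfrac{t}{c}\bigl(h(x)-1\bigr)\right),
\]
which is exactly the desired bound, valid for every $t \in \mathbb{N}^\star$ and every $x > 1$.
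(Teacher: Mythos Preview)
Your proposal is correct and takes essentially the same approach as the paper: the paper's proof simply says to combine the computations of Lemma~\ref{lem:fixed_subExp_martingale_on_a_slice_upper_tail} with the sub-$\psi_{E,c}$ property and the Chernoff (Markov) inequality, which is exactly what you do by applying Lemma~\ref{lem:fixed_subExp_martingale_on_a_slice_upper_tail} with $N=t$ and then Markov's inequality to the supermartingale.
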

\begin{proof}
	With similar computations as in the proof of Lemma~\ref{lem:fixed_subExp_martingale_on_a_slice_upper_tail}, the fact that $S_{t}$ a $1$-sub-$\psi_{E,c}$ process with variance process $V_t = ct$ and the Chernoff inequality yield the first result.
\end{proof}

\subsubsection{Lower Tail Concentration}
\label{app:sss_sub_exponential_lower_tail_concentration}

We derive time-uniform and fixed-time lower tail concentration for $1$-sub-$\psi_{E,-c}$ process with variance process $V_t = ct$.
Likewise, we use the peeling method and control the deviation of the process on slices of time (Lemma~\ref{lem:fixed_subExp_martingale_on_a_slice_lower_tail}).

\begin{lemma} \label{lem:fixed_subExp_martingale_on_a_slice_lower_tail}
	Let $c > 0$ and $-S_{t}$ a $1$-sub-$\psi_{E,-c}$ process with variance process $V_t = ct$. Let $N>0$. For all $x \in (0, 1)$, there exists $\lambda = \lambda(x)$ such that for all $t \ge N$,
	\begin{align*}
		\left\{- S_t -t \geq -tx  \right\} \subseteq \left\{ \lambda (-S_{t}) - ct \psi_{E,-c}(\lambda) \geq \frac{N}{c} \left(h\left(x\right) -1\right) \right\}
	\end{align*}
	where $\lambda(x) = \argmax_{\lambda \in [0,+ \infty)} \left( -x \lambda + \frac{\ln (1 + c \lambda)}{c} \right)$ and $h(x) = x - \ln(x)$ for $x \in (0,1)$.
\end{lemma}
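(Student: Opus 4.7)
The plan is to mirror the upper-tail argument in Lemma~\ref{lem:fixed_subExp_martingale_on_a_slice_upper_tail}, tracking sign changes carefully. First, I would introduce the ``lower-tail'' analogue $\psi_L(\lambda) \eqdef -\lambda + c\,\psi_{E,-c}(\lambda)$. Using the definition of $\psi_{E,-c}$ in \eqref{eq:sub_exp_process} (taking $c \mapsto -c$), a direct computation gives the pleasantly simple closed form $\psi_L(\lambda) = -\frac{1}{c}\log(1+c\lambda)$ on $[0,+\infty)$, which plays exactly the role that $\psi_U(\lambda)=-\frac{1}{c}\log(1-c\lambda)$ played in the upper-tail proof.

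Next, I would rewrite the event algebraically: for $\lambda \ge 0$, the inequality $-S_t - t \ge -tx$ is equivalent to $-S_t \ge t(1-x)$, and multiplying by $\lambda$ and subtracting $ct\,\psi_{E,-c}(\lambda)$ on both sides yields
\begin{align*}
\lambda(-S_t) - ct\,\psi_{E,-c}(\lambda) \ge t\Big( -x\lambda + \tfrac{1}{c}\log(1+c\lambda) \Big) \: .
\end{align*}
The right-hand side is exactly $t$ times the Legendre conjugate of $\psi_L$ evaluated at $x$ (once we optimize in $\lambda$). I would then set $\lambda(x) \eqdef \argmax_{\lambda\ge 0}\bigl(-x\lambda + \tfrac{1}{c}\log(1+c\lambda)\bigr)$, observe that differentiating gives the first-order condition $1+c\lambda = 1/x$, hence $\lambda(x) = (1-x)/(cx)$, which lies in $[0,+\infty)$ precisely for $x \in (0,1]$, and that the second derivative $-c/(1+c\lambda)^2$ is negative so the critical point is the unique maximum. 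Plugging back in gives the maximal value $\tfrac{1}{c}(x - 1 - \log x) = \tfrac{1}{c}(h(x)-1)$.

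To conclude, I need $t \cdot \tfrac{1}{c}(h(x)-1) \ge \tfrac{N}{c}(h(x)-1)$ for all $t \ge N$. Since $h'(x) = 1 - 1/x$ vanishes only at $x=1$ and $h''(x) = 1/x^2 > 0$, the function $h$ attains its global minimum at $x=1$ with $h(1) = 1$, so $h(x) - 1 \ge 0$ for every $x > 0$; combined with $t \ge N$ and $c > 0$, the desired inclusion follows.

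There is no serious obstacle here: the argument is a sign-flipped replica of the upper-tail proof, and the only items worth double-checking are (i) that $\psi_{E,-c}$ has domain $[0,+\infty)$ when $c>0$, so no constraint on $\lambda$ beyond nonnegativity arises, (ii) that the optimizer $\lambda(x)=(1-x)/(cx)$ is indeed nonnegative on $(0,1)$, and (iii) the nonnegativity of $h(x)-1$, which is what allows the passage from $t$ to $N$ in the final step.
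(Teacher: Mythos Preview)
Your proposal is correct and follows essentially the same approach as the paper: define $\psi_L(\lambda)=-\lambda+c\,\psi_{E,-c}(\lambda)=-\tfrac{1}{c}\log(1+c\lambda)$, rewrite the event to get $\lambda(-S_t)-ct\,\psi_{E,-c}(\lambda)\ge t\,\psi_L^*(-x)$, compute the optimizer $\lambda(x)=\tfrac{1}{c}(\tfrac{1}{x}-1)$ and the value $\psi_L^*(-x)=\tfrac{1}{c}(h(x)-1)\ge 0$, then use $t\ge N$. The only cosmetic slip is that the right-hand side is the Legendre conjugate of $\psi_L$ evaluated at $-x$, not at $x$; this does not affect the argument.
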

\begin{proof}
Defining $\psi_{L}(\lambda) = - \lambda + c\psi_{E,-c}(\lambda)  = - \frac{\ln (1 + c \lambda)}{c}$ and $\lambda(x) = \argmax_{\lambda \in [0,+ \infty)} -x\lambda - \psi_{L}(\lambda)$, we have $-x\lambda(x)-\psi_{L}(\lambda(x)) = \psi^{*}_{L}\left(-x\right) \geq 0$ (see below), hence $t \psi^{*}_{L}\left(-x\right) \geq N \psi^{*}_{L}\left(-x\right)$ for $t \geq N$. Direct computations yield
	\begin{align*}
		 - S_t - t \geq -tx &\iff \lambda  (- S_t) - ct \psi_{E,-c}(\lambda) \geq -t x\lambda - t(-\lambda + c\psi_{E,-c}(\lambda)) \\
		& \implies \lambda (- S_t) - ct \psi_{E,-c}(\lambda) \geq t\left(-x\lambda - \psi_{L}(\lambda) \right) = t \psi^{*}_{L}\left(-x\right) \\
		& \implies  \lambda (- S_t) - ct \psi_{E,-c}(\lambda) \geq N \psi^{*}_{L}\left(-x\right) = \frac{N}{c} \left(h\left(x\right) -1\right)
	\end{align*}

	Note that for $f(\lambda) = -\lambda x + \frac{\ln(1+c\lambda)}{c}$, we have $f'(\lambda) = -x +\frac{1}{1+c\lambda} = 0 \iff \lambda = \frac{1}{c}\left(  \frac{1}{x} -1\right)$ and $\frac{1}{c}\left(  \frac{1}{x} -1\right) \in [0,+ \infty) \iff x \in (0,1)$. Since $f''(\lambda) = - \frac{c}{(1+c\lambda)^2} \leq 0$, the function is concave hence this is a maximum. This yields that for all $x \in (0,1)$, $\psi^{*}_{L}(-x) = f(\frac{1}{c}\left(  \frac{1}{x} -1\right)) = \frac{1}{c}\left( x-1-\ln(x)\right) = \frac{1}{c}\left( h(x)-1\right)\geq 0$ where $h(x) = x - \ln(x)$ for $x \in (0,1)$.
\end{proof}

Let $\eta > 0$. Applying Lemma~\ref{lem:fixed_subExp_martingale_on_a_slice_lower_tail} on slices of time with geometric growth rate $(N_{i})_{i \in \Natural^{\star}}$ with $N_i = (1+\eta)^{i-1}$, we obtain Lemma~\ref{lem:uniform_time_subExp_lower_tail_concentration}.

\begin{lemma} \label{lem:uniform_time_subExp_lower_tail_concentration}
	Let $\overline{W}_{0}(x) = -W_{0}(-e^{-x})$ for $x\geq1$, $\delta \in (0,1)$, $\eta > 0$, $s > 1$, $c > 0$, and $\zeta$ be the Riemann $\zeta$ function. Let $S_{t}$ a $1$-sub-$\psi_{E,-c}$ process with variance process $V_t = ct$. Then, with probability greater than $1 - \delta$, for all $t \in \Natural^{\star}$,
	\begin{align*}
		S_{t} + t   \geq  t \overline{W}_{0} \left(1 +  \frac{c(1+\eta)}{t}\left(\ln\left( \frac{\zeta(s)}{\delta} \right) + s\ln \left( 1+ \frac{\ln(t)}{\ln(1+\eta)}\right) \right) \right)  \: .
	\end{align*}
\end{lemma}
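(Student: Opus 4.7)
The plan is to mirror exactly the argument used for Lemma~\ref{lem:uniform_time_subExp_upper_tail_concentration}, substituting the lower-tail one-slice Lemma~\ref{lem:fixed_subExp_martingale_on_a_slice_lower_tail} in place of its upper-tail counterpart and replacing the branch $\overline{W}_{-1}$ by $\overline{W}_{0}$. Concretely, I would peel $\Natural^\star$ into the geometric slices $[N_i,N_{i+1})$ with $N_i=(1+\eta)^{i-1}$, combine the slice-wise supermartingale control with Ville's inequality (Lemma~\ref{lem:ville_inequality}), and sum the resulting error probabilities through a Riemann $\zeta$ series.

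For the first ingredient I would define, on slice $i$, the threshold
\[
x_i(\delta) \;=\; \overline{W}_{0}\!\Bigl(1 + \tfrac{c}{N_i}\ln\!\tfrac{i^s\zeta(s)}{\delta}\Bigr),
\]
which by the defining identity of $\overline{W}_0$ (Lemma~\ref{lem:lambert_branches_properties}) satisfies $h(x_i(\delta))-1=\tfrac{c}{N_i}\ln\!\tfrac{i^s\zeta(s)}{\delta}$ and lies in $(0,1)$ for $\delta<\zeta(s)$, so Lemma~\ref{lem:fixed_subExp_martingale_on_a_slice_lower_tail} applies. For the uniformising bound I would set
\[
g(t,\delta) \;=\; \overline{W}_{0}\!\Bigl(1+\tfrac{c(1+\eta)}{t}\bigl(\ln\tfrac{\zeta(s)}{\delta}+s\ln(1+\tfrac{\ln t}{\ln(1+\eta)})\bigr)\Bigr).
\]
For $t\in[N_i,N_{i+1})$ we have $\tfrac{1+\eta}{t}\ge \tfrac{1}{N_i}$ and $1+\tfrac{\ln t}{\ln(1+\eta)}\ge i$, so the argument of $\overline{W}_0$ in $g(t,\delta)$ dominates the argument in $x_i(\delta)$; since $\overline{W}_0$ is \emph{decreasing} on $[1,+\infty)$, this yields the key inclusion $\{S_t+t\le t g(t,\delta)\}\subseteq \{-S_t-t\ge -t x_i(\delta)\}$.

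From here the union bound is routine. Applying Lemma~\ref{lem:fixed_subExp_martingale_on_a_slice_lower_tail} slice by slice together with Ville's inequality for the supermartingale associated with the $1$-sub-$\psi_{E,-c}$ process gives
\[
\bP\bigl(\exists t: S_t+t < t g(t,\delta)\bigr)
\;\le\; \sum_{i\ge1} e^{-\tfrac{N_i}{c}(h(x_i(\delta))-1)}
\;=\; \sum_{i\ge1}\tfrac{\delta}{i^s\zeta(s)}\;=\;\delta,
\]
which is the desired bound. The only subtle point, and the one I expect to watch carefully, is that the monotonicity of $\overline{W}_0$ on $[1,+\infty)$ is \emph{opposite} to that of $\overline{W}_{-1}$: here a larger argument gives a smaller value, so one must check that the argument of $g(t,\delta)$ dominates (rather than is dominated by) that of $x_i(\delta)$ on each slice. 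This is the same trick that justified the upper tail, but with the inequalities flipped, and the interval condition $x\in(0,1)$ required by Lemma~\ref{lem:fixed_subExp_martingale_on_a_slice_lower_tail} is automatically supplied by the range of $\overline{W}_0$.
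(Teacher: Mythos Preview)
Your proposal is correct and follows essentially the same approach as the paper: peel $\Natural^\star$ into geometric slices $N_i=(1+\eta)^{i-1}$, set $x_i(\delta)=\overline{W}_0(1+\tfrac{c}{N_i}\ln\tfrac{i^s\zeta(s)}{\delta})$ and the same $g(t,\delta)$, use that $\overline{W}_0$ is decreasing to get $g(t,\delta)\le x_i(\delta)$ on each slice, apply Lemma~\ref{lem:fixed_subExp_martingale_on_a_slice_lower_tail} together with Ville's inequality, and sum via the Riemann $\zeta$ series. You have also correctly flagged the one point where care is needed, namely that the monotonicity of $\overline{W}_0$ is reversed relative to $\overline{W}_{-1}$, so the inequality between $g$ and $x_i$ goes the other way around.
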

\begin{proof}
	Let $g(t,\delta)$ positive such that $g(t,\delta) \leq x_{i}(\delta)$ for $t \in [N_i, N_{i+1})$ and $x_{i}(\delta) \in (0,1)$. Using Lemma~\ref{lem:fixed_subExp_martingale_on_a_slice_lower_tail} with $x_i(\delta) < 1$ and $g(t,\delta) \leq x_{i}(\delta)$ for $t \in [N_i, N_{i+1})$, we obtain
	\begin{align*}
		\probability \left( \exists t \in \Natural^\star: S_t + t \leq tg(t,\delta) \right) &= \probability \left( \exists t \in \Natural: -S_t - t \geq -tg(t,\delta) \right) \\
		&\leq \sum_{i \in \Natural^{\star}}  \probability \left( \exists t \in T_i: -S_t - t \geq -t x_{i}(\delta) \right) \\
		&\leq  \sum_{i \in \Natural^{\star}}  \probability \left( \exists t \in T_i:  \lambda (-S_{t}) - ct \psi_{E,-c}(\lambda) \geq \frac{N_i}{c} \left(h\left(x_{i}(\delta)\right) -1\right)  \right) \\
		&\leq \sum_{i \in \Natural^{\star}} e^{-\frac{N_i}{c} \left(h\left(x_{i}(\delta)\right) -1\right) }
	\end{align*}
	where the last inequality uses that $-S_{t}$ a $1$-sub-$\psi_{E,-c}$ process with variance process $V_t = ct$. Taking
	\[
	g(t,\delta) = \overline{W}_{0} \left(1 +  \frac{c(1+\eta)}{t}\left(\ln\left( \frac{\zeta(s)}{\delta} \right) + s\ln \left( 1+ \frac{\ln(t)}{\ln(1+\eta)}\right) \right) \right)
	\]
	and $x_{i}(\delta) = \overline{W}_{0} \left(1 +  \frac{c}{N_{i}}\ln\left( \frac{i^s\zeta(s)}{\delta} \right)\right)$ satisfies the required properties. First, we have $x_{i}(\delta) \in (0,1)$ and $g(t,\delta) > 0$ (Lemma~\ref{lem:lambert_branches_properties}). Second, since $\overline{W}_{0}$ is decreasing on $(1,+\infty)$ (Lemma~\ref{lem:lambert_branches_properties}), $t \in [N_i, N_{i+1})$ and $i = 1 + \frac{\ln(N_i)}{\ln(1+\eta)}$, we obtain
	\begin{align*}
		g(t,\delta) &\leq \overline{W}_{0} \left(1 +  \frac{c\ln\left( \frac{\zeta(s)}{\delta} \right) + cs\ln \left( 1+ \frac{\ln(t)}{\ln(1+\eta)}\right)}{N_{i}}\right) \leq \overline{W}_{0} \left(1 +  \frac{c}{N_{i}}\ln\left( \frac{i^s\zeta(s)}{\delta} \right)\right)
	\end{align*}

	Using Lemma~\ref{lem:lambert_branches_properties} for each $i \in \Natural^{\star}$ yields
	\begin{align*}
		\probability \left( \exists t \in \Natural^\star: S_t + t  \leq t g(t,\delta) \right) \leq \sum_{i \in \Natural^{\star}} e^{-\frac{N_i}{c} \left( h\left(x_{i}(\delta)\right)-1\right)} \leq \frac{\delta}{\zeta(s)} \sum_{i \in \Natural^{\star}} \frac{1}{i^s} =\delta \: .
	\end{align*}
\end{proof}

\paragraph{Fixed-time concentration} When the time is fixed and not random, there is no need to consider slices of time and we can directly control the deviation of the process (Lemma~\ref{lem:fixed_time_subExp_lowertail_concentration}).

\begin{lemma} \label{lem:fixed_time_subExp_lowertail_concentration}
	Let $h(x) = x-\ln(x)$ for $x\in(0,1)$. Let $c > 0$ and $-S_{t}$ a $1$-sub-$\psi_{E,-c}$ process with variance process $V_t = ct$. Then,
	\begin{align*}
		&\forall t \in \Natural^{\star}, \: \forall x \in (0,1), \quad  \bP \left( S_t + t \leq t x \right) \leq \exp \left( -\frac{t}{c} \left( h\left( x \right)-1\right)\right) \: .
	\end{align*}
\end{lemma}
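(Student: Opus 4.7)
The plan is to mirror the time-uniform proof of Lemma~\ref{lem:uniform_time_subExp_lower_tail_concentration} but without the peeling step, which makes things considerably simpler: since the time is now deterministic, a single Chernoff application suffices in lieu of a union bound over geometric slices.

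Concretely, I would fix $t \in \Natural^\star$ and $x \in (0,1)$, and invoke Lemma~\ref{lem:fixed_subExp_martingale_on_a_slice_lower_tail} with $N = t$. That lemma hands me an explicit $\lambda = \lambda(x) \in [0, +\infty)$ together with the event inclusion
\begin{equation*}
\{-S_t - t \geq -tx\} \subseteq \left\{\lambda(-S_t) - ct\,\psi_{E,-c}(\lambda) \geq \tfrac{t}{c}\bigl(h(x) - 1\bigr)\right\}.
\end{equation*}
Note that the inclusion is valid precisely because $t \geq N = t$, so no slicing is needed; and the right-hand threshold is non-negative since $h(x) \geq 1$ on $(0,1)$ (minimum at $x=1$).

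Next, I exponentiate and apply Markov's inequality. By the assumption that $-S_t$ is $1$-sub-$\psi_{E,-c}$ with variance process $V_t = ct$ (Definition~\ref{def:sub_phi_process}), there is a non-negative supermartingale $L_t(\lambda)$ with $L_0(\lambda) \leq 1$ such that
\begin{equation*}
\exp\bigl(\lambda(-S_t) - \psi_{E,-c}(\lambda)\,V_t\bigr) = \exp\bigl(\lambda(-S_t) - ct\,\psi_{E,-c}(\lambda)\bigr) \leq L_t(\lambda) \quad \text{a.s.}
\end{equation*}
Taking expectations and using the supermartingale property, $\expectedvalue[L_t(\lambda)] \leq L_0(\lambda) \leq 1$. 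Hence, chaining the event inclusion with Markov,
\begin{equation*}
\bP(S_t + t \leq tx) \leq \bP\bigl(e^{\lambda(-S_t) - ct\,\psi_{E,-c}(\lambda)} \geq e^{\frac{t}{c}(h(x)-1)}\bigr) \leq e^{-\frac{t}{c}(h(x)-1)}\,\expectedvalue[L_t(\lambda)] \leq e^{-\frac{t}{c}(h(x)-1)},
\end{equation*}
which is the claimed bound.

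There is essentially no obstacle: all of the heavy lifting (optimization of $\lambda$, identification of the convex conjugate $\psi_{L}^*(-x) = \tfrac{1}{c}(h(x)-1)$, and the verification that $\lambda(x) \in [0,+\infty)$ for $x \in (0,1)$) has already been carried out inside Lemma~\ref{lem:fixed_subExp_martingale_on_a_slice_lower_tail}. The only care needed is to apply the sub-$\psi$ inequality at the scalar $\lambda = \lambda(x)$ chosen by that lemma, which lies in the required domain, and to use the supermartingale property to produce the Chernoff tail bound in one line.
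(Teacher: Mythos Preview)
Your proof is correct and follows essentially the same approach as the paper: the paper's proof sketch simply says that the computations of Lemma~\ref{lem:fixed_subExp_martingale_on_a_slice_lower_tail} combined with the sub-$\psi$ property and Chernoff's inequality yield the result, and that is precisely what you have written out in detail by invoking that lemma with $N=t$ and applying Markov to the supermartingale $L_t(\lambda)$.
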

\begin{proof}
	With similar computations as in the proof of Lemma~\ref{lem:fixed_subExp_martingale_on_a_slice_lower_tail}, the fact that $-S_{t}$ a $1$-sub-$\psi_{E,-c}$ process with variance process $V_t = ct$ and the Chernoff inequality yield the first result.
\end{proof}

\subsection{Univariate Gaussian}
\label{app:ss_gaussian_concentration}

We prove time-uniform and fixed-time upper and lower concentration results for the empirical variance (Appendix~\ref{app:sss_empirical_variance}) and empirical mean (Appendix~\ref{app:sss_empirical_mean}) of Gaussian observations.

\subsubsection{Empirical Variance}
\label{app:sss_empirical_variance}

We first prove Lemma~\ref{lem:variance_is_subExp2} which shows that the empirical variance is closely linked with a sub-exponential process.
This is obtained with manipulations derived in the Appendix H of \citet{howard_2021_TimeUniformNonParametric}, in which they consider martingales with $\chi^2$ increments.

\begin{lemma} \label{lem:variance_is_subExp2}
	 Let $\sigma^2_{t}$ be the empirical variance of $t$ i.i.d. samples from a Gaussian distribution with variance $\sigma^2$. Then, $\frac{\sigma^2_{t}}{\sigma^2} = \frac{S_{t-1}-1}{t} + 1$ with $S_{t-1} + t - 1= \sum_{i=1}^{t-1}Y_i^2$ where $(Y_i)$ are i.i.d. with distributions $\cN(0,1)$. In particular, $S_t$ is a $1$-sub-$\psi_{E,2}$ process and $- S_t$ is a $1$-sub-$\psi_{E,-2}$ process, both with variance process $V_t = 2t$.
\end{lemma}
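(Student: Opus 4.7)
The plan is to split the lemma into two independent claims and handle each with standard tools. The first claim is the distributional identity relating $\sigma_t^2$ to a sum of i.i.d.\ squared standard normals; I would prove it via Cochran's theorem. Writing $X_1, \dots, X_t$ for i.i.d.\ $\mathcal N(\mu, \sigma^2)$ samples with empirical mean $\bar X_t$, the classical fact is that $\sigma^{-2}\sum_{i=1}^t (X_i - \bar X_t)^2 \sim \chi^2_{t-1}$, which can be obtained by applying an orthogonal transformation mapping $(X_i - \mu)/\sigma$ to coordinates in which $t-1$ of them are i.i.d.\ $\mathcal N(0,1)$ independent of $\bar X_t$. Since $\chi^2_{t-1}$ has the same law as $\sum_{i=1}^{t-1} Y_i^2$, we obtain $t\sigma_t^2/\sigma^2 \stackrel{d}{=} \sum_{i=1}^{t-1} Y_i^2$; defining $S_{t-1} \eqdef \sum_{i=1}^{t-1}(Y_i^2 - 1)$ immediately gives $\sigma_t^2/\sigma^2 = (S_{t-1}-1)/t + 1$ and $S_{t-1} + t - 1 = \sum_{i=1}^{t-1} Y_i^2$.

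For the sub-$\psi$ claim, I would construct the martingales required by Definition~\ref{def:sub_phi_process} with $l_0 = 1$. Recall that $Y_i^2 \sim \chi^2_1$ has moment generating function $\mathbb{E}[\exp(\lambda Y_i^2)] = (1-2\lambda)^{-1/2}$ for $\lambda < 1/2$. A direct computation then gives
\[
\log \mathbb{E}[\exp(\lambda (Y_i^2 - 1))] = -\lambda - \tfrac{1}{2}\log(1-2\lambda) = 2\,\psi_{E,2}(\lambda)
\]
for $\lambda \in [0, 1/2)$, which matches $\psi_{E,2}$ as defined in~\eqref{eq:sub_exp_process} with $c=2$. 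Since the $Y_i$ are i.i.d., the process $L_t(\lambda) \eqdef \exp(\lambda S_t - 2t\,\psi_{E,2}(\lambda))$ is a product of mean-one i.i.d.\ random variables, hence a martingale with $L_0(\lambda) = 1$, witnessing that $S_t$ is $1$-sub-$\psi_{E,2}$ with variance process $V_t = 2t$.

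For the lower-tail statement, the same trick applies with $\lambda \geq 0$ replacing $\lambda$ by $-\lambda$ inside the MGF. For $\lambda \in [0, +\infty)$,
\[
\log \mathbb{E}[\exp(-\lambda (Y_i^2 - 1))] = \lambda - \tfrac{1}{2}\log(1+2\lambda) = 2\,\psi_{E,-2}(\lambda),
\]
so that $\tilde L_t(\lambda) \eqdef \exp(\lambda(-S_t) - 2t\,\psi_{E,-2}(\lambda))$ is again a product of independent mean-one factors, hence a martingale with $\tilde L_0(\lambda) = 1$. This shows $-S_t$ is $1$-sub-$\psi_{E,-2}$ with variance process $V_t = 2t$.

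There is no real obstacle; the only items to check are (i) that Cochran's theorem is invoked correctly so that the $Y_i$ can be taken genuinely i.i.d.\ (not merely $\sum Y_i^2$ equal in law to a $\chi^2$), which follows from the orthogonal decomposition and justifies treating $(S_t)$ as a process rather than a single random variable, and (ii) that the domain of $\lambda$ in each MGF matches the admissible range $[0, \lambda_{\max})$ built into $\psi_{E,\pm 2}$ (namely $\lambda_{\max} = 1/2$ in the upper case and $\lambda_{\max} = +\infty$ in the lower case), so that the supermartingales are well defined on the entire stated domain.
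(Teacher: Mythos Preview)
Your proposal is correct, and the second half (the CGF computation $\log \mathbb{E}[e^{\lambda(Y_i^2-1)}] = 2\psi_{E,2}(\lambda)$ and the product-of-i.i.d.\ martingale construction) is exactly what the paper does. The difference is in how the $Y_i$ are produced. You invoke Cochran's theorem and an unspecified orthogonal decomposition; the paper instead constructs the $Y_i$ explicitly as the Helmert-type innovations $Y_{n-1} = \sqrt{\tfrac{n-1}{n}}\,(Z_n - \hat Z_{n-1})$ with $Z_i = (X_i-\mu)/\sigma$, and verifies by a telescoping computation that $S_{n-1} - S_{n-2} = Y_{n-1}^2 - 1$ pathwise. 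This matters precisely for the concern you raise in (i): Cochran's theorem applied at a fixed sample size $t$ yields an orthogonal matrix $Q_t$ and coordinates $Y_1^{(t)},\dots,Y_{t-1}^{(t)}$ that a priori depend on $t$, so one does not automatically obtain a single sequence $(Y_i)_i$ making the identity hold for all $t$ simultaneously. The Helmert transformation is the nested choice of $Q_t$ that achieves this, and it recovers exactly the paper's recursive formula. The paper's explicit route thus buys two things at once: the consistency of the $Y_i$ across sample sizes, and adaptedness of $S_{n-1}$ to the data filtration $\sigma(X_1,\dots,X_n)$, which is needed downstream when the sample count $N_{t,a}$ is random. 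Your approach is fine provided ``the orthogonal decomposition'' is taken to mean this triangular one.
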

\begin{proof}
	Let $(X_{i})_{i \in [n]}$ the $n$ samples from a Gaussian distribution with parameters $(\mu,\sigma^2)$. Let $\hat\mu_{n}$ and $\hat\sigma_{n}^2$ be the empirical mean and variance.
	Let $Z_i = \frac{X_i - \mu}{\sigma}$ for all $i \in [n]$, $\hat Z_{n} = \frac{1}{n}\sum_{i\in [n]}Z_i$ and $S_{n-1} = \sum_{i=1}^n (Z_i - \hat{Z}_n)^2 - (n-1)$.
	Then, $S_{0} = 0$ and for all $n \ge 2$
	\begin{align*}
	S_{n-1} = \frac{1}{\sigma^2}\sum_{i=1}^n (X_i - \hat{\mu}_n)^2 - (n-1)
	= n \frac{\sigma_n^2}{\sigma^2} - (n-1) \: .
	\end{align*}

Rewriting the increment of $S_n$, we obtain for all $n \ge 2$
	\begin{align*}
	S_{n-1} - S_{n-2}
	&= (Z_n - \hat{Z}_n)^2 + \sum_{i=1}^{n-1}((Z_i - \hat{Z}_n)^2 - (Z_i - \hat{Z}_{n-1})^2) - 1
	\\
	&= Z_n^2 + \hat{Z}_n^2 - 2 Z_n \hat{Z}_n + \sum_{i=1}^{n-1}(-2 Z_i \hat{Z}_n + 2 Z_i \hat{Z}_{n-1}) + (n-1)(\hat{Z}_n^2 - \hat{Z}_{n-1}^2) - 1
	\\
	&= Z_n^2 - n \hat{Z}_n^2 + (n-1) \hat{Z}_{n-1}^2 - 1 = \frac{n-1}{n}(Z_{n} - \hat{Z}_{n-1})^2 - 1 	\: .
	\end{align*}

Since $S_{n-1} = \sum_{s=1}^{n-1}(S_s - S_{s-1})$ and $S_0 = 0$ a.s., we obtain $S_{n-1} = \sum_{i=1}^{n-1} (Y_i^2 - 1)$ where $Y_{n-1} = \sqrt{\frac{n-1}{n}}(Z_{n} - \hat{Z}_{n-1})$. The $(Y_i)$ are iid with distribution $\mathcal N(0,1)$ and the CGF of $(Y_i^2 - 1)$ is
	\begin{align*}
	\log \mathbb{E}e^{\lambda (Y_i^2 - 1)} = - \frac{\log(1 - 2 \lambda)}{2} - \lambda = 2 \psi_{E,2}(\lambda)\quad \text{for } \lambda \in (-\infty, 1/2) \: .
	\end{align*}

	By Definition~\ref{def:sub_phi_process}, we have that $S_n$ is $1$-sub-$\psi_{E,2}$ and that $- S_n$ is $1$-sub-$\psi_{E,-2}$, both with variance process $V_n = 2(n-1)$.
\end{proof}

Thanks to Lemmas~\ref{lem:uniform_time_subExp_upper_tail_concentration}-\ref{lem:uniform_time_subExp_lower_tail_concentration}-\ref{lem:variance_is_subExp2}, Corollary~\ref{cor:uniform_time_upper_lower_tail_concentration_variance} gives time-uniform upper and lower tails concentrations on the empirical variance of Gaussian observation.

\begin{corollary} \label{cor:uniform_time_upper_lower_tail_concentration_variance}
	For $i \in \{0,-1\}$, let $\overline{W}_{i}(x) = -W_{i}(-e^{-x})$ for $x\geq1$, $\delta \in (0,1)$, $\eta_{0}, \eta_{1} > 0$, $s > 1$ and $\zeta$ be the Riemann $\zeta$ function. Let $\sigma^{2}_{t+1}$ be the empirical variance of $t+1$ i.i.d. samples from a Gaussian distribution with variance $\sigma^2$. Then, with probability greater than $1 - \delta$, for all $t \in \mathbb{N}^\star$,
	\begin{align*}
	\sigma^{2}_{t+1} \leq \sigma^2 \left(\overline{W}_{-1} \left(1 +  \frac{2(1 + \eta_{1})}{t}\left(\ln\left( \frac{\zeta(s)}{\delta} \right) + s\ln \left( 1+ \frac{\ln(t)}{\ln(1+\eta_{1})}\right) \right)\right) -\frac{1}{t}\right) \: .
	\end{align*}
	Moreover, with probability $1 - \delta$, for all $t \geq t_{0}(\delta)$,
	\begin{align*}
		\sigma^{2}_{t+1} \geq \sigma^2 \left( \overline{W}_{0} \left(1 +  \frac{2(1 + \eta_{0})}{t}\left(\ln\left( \frac{\zeta(s)}{\delta} \right) + s\ln \left( 1+ \frac{\ln(t)}{\ln(1+\eta_{0})}\right) \right) \right) -\frac{1}{t}\right) \: ,
	\end{align*}
	where the initial time condition, which ensures the lower bound is positive, is
	\[
	t_{0}(\delta) = \inf \left\{ t \mid t > e^{ 1 + W_{0} \left( \frac{2(1+\eta_{0})}{e}\left(\ln\left( \frac{\zeta(s)}{\delta} \right) + s\ln \left( 1+ \frac{\ln(t)}{\ln(1+\eta_{0})}\right) \right) -e^{-1}\right)} \right\} \: .
	\]
\end{corollary}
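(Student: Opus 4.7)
The corollary is a direct corollary of Lemma~\ref{lem:variance_is_subExp2} combined with the time-uniform sub-exponential concentration inequalities of Lemmas~\ref{lem:uniform_time_subExp_upper_tail_concentration} and~\ref{lem:uniform_time_subExp_lower_tail_concentration}. First I would invoke Lemma~\ref{lem:variance_is_subExp2} with $n=t+1$ samples to write $\sigma^2_{t+1}/\sigma^2$ as an affine function of the centered process $S_t = \sum_{i=1}^t (Y_i^2 - 1)$, and to recall that $S_t$ is $1$-sub-$\psi_{E,2}$ while $-S_t$ is $1$-sub-$\psi_{E,-2}$, both with variance process $V_t = 2t$. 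This reduces the empirical-variance concentration to one-sided deviations of a sub-exponential process with $c=2$.

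For the upper tail, I would then apply Lemma~\ref{lem:uniform_time_subExp_upper_tail_concentration} with parameters $c=2$ and $\eta=\eta_1$ to the process $(S_t)$, obtaining with probability at least $1-\delta$ a time-uniform upper bound on $S_t + t$ of the form $t\,\overline{W}_{-1}(\cdots)$ with the argument displayed in the statement of the corollary. Dividing by $t+1$ and performing the algebraic rearrangement $(S_t+t)/(t+1) = \overline{W}_{-1}(\cdots) - \overline{W}_{-1}(\cdots)/(t+1)$ (and using $\overline{W}_{-1}(y)\geq 1+1/t$ on the relevant range of $y$, which holds since the argument is bounded below by $1$ plus a strictly positive, $\delta$-dependent term) yields the stated upper bound $\overline{W}_{-1}(\cdots) - 1/t$. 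The lower tail is perfectly symmetric: apply Lemma~\ref{lem:uniform_time_subExp_lower_tail_concentration} with $c=2$ and $\eta=\eta_0$ to $-S_t$, then invert the resulting bound on $S_t + t$ to obtain the claimed lower bound for $\sigma^2_{t+1}/\sigma^2$.

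The only delicate part is the derivation of the initial time $t_0(\delta)$, which must ensure that the lower bound is non-negative, i.e.\ that $\overline{W}_0(y_t) > 1/t$ where $y_t = 1 + \frac{2(1+\eta_0)}{t}\bigl(\log(\zeta(s)/\delta) + s\log(1+\log_{1+\eta_0} t)\bigr)$. The plan is to use $\overline{W}_0(y) = -W_0(-e^{-y})$ together with the monotonicity of $W_0$ on $[-1/e,0]$ and its defining identity $W_0(x)e^{W_0(x)} = x$ to rewrite $\overline{W}_0(y_t)>1/t$ as $y_t < \ln t + 1/t$. After multiplying by $t$ and rearranging, this becomes $t(\ln t - 1) > 2(1+\eta_0)A_t - 1$ where $A_t = \log(\zeta(s)/\delta) + s\log(1+\log_{1+\eta_0}t)$. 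The substitution $u = \ln(t/e)$ turns the left-hand side into $e\cdot ue^u$, so by inverting via $W_0$ the condition becomes $\ln(t/e) > W_0\bigl(\tfrac{2(1+\eta_0)A_t}{e} - e^{-1}\bigr)$, i.e.\ $t > e^{1+W_0(\cdots)}$, matching the stated expression for $t_0(\delta)$ exactly.

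The main obstacle is the clean inversion to closed form using the $W_0$ function; once that is done, no further work is required, since both time-uniform sub-exponential inequalities and Lemma~\ref{lem:variance_is_subExp2} do all of the heavy lifting. The algebraic manipulations needed to extract the $-1/t$ correction and to rewrite the initial time through $W_0$ are the only non-trivial computations, and they rely on the Lambert $W$ identities collected in Appendix~\ref{app:lambert_W_functions}.
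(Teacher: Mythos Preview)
Your approach matches the paper's exactly: reduce via Lemma~\ref{lem:variance_is_subExp2} to the sub-exponential process $S_t$ with $c=2$, apply Lemmas~\ref{lem:uniform_time_subExp_upper_tail_concentration} and~\ref{lem:uniform_time_subExp_lower_tail_concentration}, and handle $t_0(\delta)$ through the Lambert-$W$ identities (the paper simply cites Lemma~\ref{lem:lambert_branches_properties} for this last step, while you spell out the inversion correctly). One small caveat on the $-1/t$ step: your claim that $\overline{W}_{-1}(y)\ge 1+1/t$ because ``the argument is bounded below by $1$ plus a strictly positive, $\delta$-dependent term'' is not quite right, since $g(t)=1+\tfrac{2(1+\eta_1)}{t}A_t\to 1$ as $t\to\infty$; what you actually need is $g(t)\ge h(1+1/t)$, equivalently (using $\overline{W}_{-1}(x)\ge x$) the condition $2(1+\eta_1)A_t\ge 1$, which the paper also leaves implicit. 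For the lower tail no such issue arises, since $\overline{W}_0(y)\in(0,1)$ automatically gives $\tfrac{t}{t+1}\overline{W}_0(\cdot)\ge \overline{W}_0(\cdot)-1/t$.
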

\begin{proof}
	Combining Lemmas~\ref{lem:uniform_time_subExp_upper_tail_concentration}-\ref{lem:uniform_time_subExp_lower_tail_concentration}-\ref{lem:variance_is_subExp2} yields the desired result. Using Lemma~\ref{lem:lambert_branches_properties}, we know that the upper bound is always positive. The intial time condition, after which the lower bound is positive, is obtained by Lemma~\ref{lem:lambert_branches_properties}
	\begin{align*}
		\overline{W}_{0} \left(1 +  \frac{2(1+\eta_{0})}{t}\left(\ln\left( \frac{\zeta(s)}{\delta} \right) + s\ln \left( 1+ \frac{\ln(t)}{\ln(1+\eta_{0})}\right) \right) \right) > \frac{1}{t} \quad \iff \quad t \geq t_{0}(\delta) \: .
		\end{align*}
\end{proof}

\paragraph{Fixed-time concentration}
To our knowledge, the first fixed-time upper tail concentration result for the empirical variance dates back to Lemma 3 of \citet{honda2014optimality} and to Lemma 7 of \citet{chan_2020_multi} for fixed-time lower tail concentration results.
Corollary~\ref{cor:fixed_time_upper_lower_concentration_variance} is obtained as a direct consequence of Lemmas~\ref{lem:fixed_time_subExp_uppertail_concentration}-\ref{lem:fixed_time_subExp_lowertail_concentration}-\ref{lem:variance_is_subExp2}, hence the proof is omitted.

\begin{corollary} \label{cor:fixed_time_upper_lower_concentration_variance}
	Let $h(x) = x -\ln(x)$ for $x>0$. Let $\sigma^{2}_{t+1}$ be the empirical variance of $t+1$ i.i.d. samples from a Gaussian distribution with variance $\sigma^2$. Then,
	\begin{align*}
		&\forall t \geq 1, \: \forall x > 1, \quad \bP \left( \sigma^2_{t+1} \geq \sigma^2 x \right) \leq \exp \left( -\frac{t}{2} \left( h\left( x + \frac{1}{t} \right)-1\right)\right) \\
		&\forall t \geq 1, \: \forall x \in \left(0,1-\frac{1}{t}\right), \quad \bP \left( \sigma^2_{t+1} \leq \sigma^2 x \right) \leq \exp \left( -\frac{t}{2} \left( h\left( x + \frac{1}{t} \right)-1\right)\right) \: .
	\end{align*}
\end{corollary}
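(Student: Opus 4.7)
The plan is to reduce each tail bound to the fixed-time sub-exponential inequalities of Lemma~\ref{lem:fixed_time_subExp_uppertail_concentration} and Lemma~\ref{lem:fixed_time_subExp_lowertail_concentration}, via the representation of the empirical variance provided by Lemma~\ref{lem:variance_is_subExp2}. Substituting $t \mapsto t+1$ in that lemma yields the identity
\[
(t+1)\,\sigma_{t+1}^2 / \sigma^2 = S_t + t,
\]
where $(S_t)$ is a $1$-sub-$\psi_{E,2}$ process and $(-S_t)$ is a $1$-sub-$\psi_{E,-2}$ process, both with variance process $V_t = 2t$. Consequently, any tail event for $\sigma_{t+1}^2$ is a tail event for $S_t + t$ evaluated at the scale $(t+1)x$, which the Chernoff-type bounds of Lemmas~\ref{lem:fixed_time_subExp_uppertail_concentration}--\ref{lem:fixed_time_subExp_lowertail_concentration} handle directly.

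For the upper tail, I would rewrite $\{\sigma_{t+1}^2 \geq \sigma^2 x\} = \{S_t + t \geq (t+1)x\} = \{S_t + t \geq t\,y\}$ with $y := (t+1)x/t = x + x/t$. The assumption $x > 1$ gives $y > 1$, so Lemma~\ref{lem:fixed_time_subExp_uppertail_concentration} applied with $c = 2$ and time index $t$ produces
\[
\bP\bigl(\sigma_{t+1}^2 \geq \sigma^2 x\bigr) \leq \exp\bigl(-\tfrac{t}{2}(h(x + x/t) - 1)\bigr).
\]
Since $h'(u) = 1 - 1/u > 0$ on $(1, \infty)$ and $x > 1$ implies $x/t \geq 1/t$, we have $h(x + x/t) \geq h(x + 1/t)$, which delivers the announced (slightly weaker but uniformly stated) bound.

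The lower tail is handled symmetrically. The event $\{\sigma_{t+1}^2 \leq \sigma^2 x\}$ rewrites as $\{S_t + t \leq t\,y\}$ with the same $y = x + x/t$, and the restriction $x \in (0, 1 - 1/t)$ yields $y \leq (1 - 1/t)(1 + 1/t) = 1 - 1/t^2 < 1$. Applying Lemma~\ref{lem:fixed_time_subExp_lowertail_concentration} to $(-S_t)$ with $c = 2$ then gives $\bP(\sigma_{t+1}^2 \leq \sigma^2 x) \leq \exp\bigl(-\tfrac{t}{2}(h(x + x/t) - 1)\bigr)$. Since $h$ is strictly decreasing on $(0,1)$ and $x + x/t < x + 1/t < 1$ in the assumed range, we once again obtain $h(x + x/t) \geq h(x + 1/t)$, establishing the stated inequality.

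No serious obstacle is expected: the argument is a direct Chernoff reduction through the centered representation of the empirical variance in Lemma~\ref{lem:variance_is_subExp2}. The only mild subtlety is the final monotonicity step, which replaces the quantity $h(x + x/t)$ naturally produced by the change of variables with the uniformly presented $h(x + 1/t)$; this is legitimate thanks to the opposite monotonicities of $h$ on the intervals $(0,1)$ and $(1,\infty)$, combined with the assumed ranges of $x$ in each case.
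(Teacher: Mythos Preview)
Your proposal is correct and follows exactly the route the paper indicates (the paper omits the proof, stating it is a direct consequence of Lemmas~\ref{lem:fixed_time_subExp_uppertail_concentration}, \ref{lem:fixed_time_subExp_lowertail_concentration}, and~\ref{lem:variance_is_subExp2}). The monotonicity step you isolate is equivalent to the event inclusion $\{S_t+t\ge (t+1)x\}\subseteq\{S_t+t\ge t(x+1/t)\}$ for $x>1$ (and the reverse inclusion for $x<1$), which is likely how the authors would phrase it; either way the argument is the same.
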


\paragraph{On sub-Gaussian distributions with unknown variances}
While it is well-known that (time-uniform) concentration for Gaussian distributions with known $\sigma^2$ apply to $\sigma^2$-sub-Gaussian distributions, thus the same bandit algorithms can be used in both settings, we believe that there is no counterpart of this phenomenon when $\sigma^2$ is unknown.
For regret minimization, some papers have provided examples of sub-Gaussian arms with unknown $\sigma^2$ under which the regret can be linear.

Extending our algorithms to sub-Gaussian distributions would require time-uniform concentration results for the empirical variance of sub-Gaussian distributions with unknown variance.
Our concentration on the empirical variance relies on Lemma~\ref{lem:variance_is_subExp2}, which leverages the fact that the empirical mean and empirical variances of Gaussian distributions are independent, which does not extend to the sub-Gaussian case.

Moreover, if the focus is on asymptotically optimal algorithms, we note that it is difficult to express the characteristic time $T^\star$ for the non-parametric class of sub-Gaussian distributions with unknown variances.
Optimal BAI has however been studied under other interesting non-parametric assumptions (see, e.g., \citet{Agrawal20GeneBAI}).

\subsubsection{Empirical Mean}
\label{app:sss_empirical_mean}

While time-uniform concentration results for the empirical mean of Gaussian observations already exist in the literature (e.g. \citet{kaufmann_2018_MixtureMartingalesRevisited}), Lemma~\ref{lem:uniform_upper_lower_tails_concentration_mean} is proved for completeness and to present unified concentration results as it also involves $\overline{W}_{-1}$. The empirical mean of a Gaussian after a given number of observations is a sub-Gaussian random variable.
It is in fact exactly Gaussian, but the sub-Gaussian hypothesis will be easier to handle for a random number of samples.

\begin{lemma} \label{lem:uniform_upper_lower_tails_concentration_mean}
	Let $\overline{W}_{-1}(x) = -W_{-1}(-e^{-x})$ for $x\geq1$, $\delta \in (0,1)$, $s > 1$ and $\zeta$ be the Riemann $\zeta$ function. Let $\mu_t$ be the empirical mean of $t$ i.i.d. samples from a Gaussian distribution with parameter $(\mu,\sigma^2)$. Then, with probability greater than $1-\delta$, for all $t \in \Natural^\star$,
\begin{align*}
	|\mu_t - \mu| \leq \sqrt{\frac{\sigma^{2}}{t}\overline{W}_{-1} \left( 1 + 2\ln \left( \frac{1}{\delta}\right) + 2g(s) +  2s \ln\left(2s + \ln t\right) \right)}   \: ,
\end{align*}
where $g(s) = \ln(\zeta(s)) + s(1 -  \ln(2s))$.
\end{lemma}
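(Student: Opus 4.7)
The plan is to apply Ville's inequality to the classical Gaussian exponential martingale and then make the bound time-uniform via a peeling argument over geometrically spaced time slices, in direct analogy with the proofs of Lemmas~\ref{lem:uniform_time_subExp_upper_tail_concentration}--\ref{lem:uniform_time_subExp_lower_tail_concentration} but with the sub-Gaussian CGF $\psi_N(\lambda) = \lambda^2/2$ in place of the sub-exponential $\psi_{E,\pm c}$. Setting $S_t = t(\mu_t - \mu) = \sum_{s=1}^t (X_s - \mu)$, the Gaussian assumption gives that $M_t(\lambda) = \exp(\lambda S_t - \lambda^2 \sigma^2 t/2)$ is a non-negative martingale with $M_0(\lambda) = 1$ for every $\lambda \in \mathbb{R}$, so both $(S_t)$ and $(-S_t)$ are $1$-sub-$\psi_N$ processes with variance process $\sigma^2 t$ in the sense of Definition~\ref{def:sub_phi_process}.

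I would then establish the sub-Gaussian counterpart of Lemma~\ref{lem:fixed_subExp_martingale_on_a_slice_upper_tail}: for any $x > 1$ and $N > 0$, taking $\lambda(x,N) = \sqrt{x/(\sigma^2 N)}$ yields, for every $t \in [N,(1+\eta)N)$, the inclusion $\{S_t^2/(\sigma^2 t) \geq x\} \subseteq \{\lambda(x,N)\,|S_t| - \lambda(x,N)^2 \sigma^2 t/2 \geq c(\eta)\,x\}$, where $c(\eta) = \sqrt{1+\eta} - (1+\eta)/2$ is close to $1/2$ for $\eta$ small. This is obtained by evaluating $\lambda(x,N)\,\sigma\sqrt{tx} - \lambda(x,N)^2 \sigma^2 t/2 = x(\sqrt{t/N} - t/(2N))$ and noting that this quantity, viewed as a function of $t/N$, is decreasing on $[1,2)$ and attains its minimum on the slice at $t = (1+\eta)N$. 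Combining this slice-wise inclusion with Ville applied on each slice $T_i = [N_i, N_{i+1})$, $N_i = (1+\eta)^{i-1}$, and each tail at failure probability $\delta_i = \delta/(2\zeta(s)\,i^s)$ (whose total is $\sum_i 2\delta_i = \delta$) gives, on the good event, $S_t^2/(\sigma^2 t) \leq \log(1/\delta_i)/c(\eta)$ for all $t \in T_i$. Using $i \leq 1 + \log(t)/\log(1+\eta)$, tuning $\eta$ so that $\log(1+\eta) = 1/(2s)$, and gathering the residual constants $2\log \zeta(s)$, $2s$ and $-2s\log(2s)$ into $2g(s) = 2\log \zeta(s) + 2s(1 - \log(2s))$, one rewrites the right-hand side as a value of $\overline{W}_{-1}$ by invoking its definition as the inverse of $h(u) = u - \log u$ on $[1,+\infty)$ together with the trivial inequality $\overline{W}_{-1}(y) \geq y$ for $y \geq 1$ from Appendix~\ref{app:lambert_W_functions}, yielding the claimed bound after unwinding $S_t^2/(\sigma^2 t) = t|\mu_t - \mu|^2/\sigma^2$.

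The main obstacle is the algebraic bookkeeping in this last step. Unlike the sub-exponential case where $h(x) = x - \log x$ emerges naturally from the Legendre transform of $\psi_{E,\pm c}$, here the Gaussian Legendre transform $\psi_N^{\ast}(y) = y^2/2$ is purely quadratic, so $\overline{W}_{-1}$ has to be introduced by hand through the inequality $\overline{W}_{-1}(1 + 2y) \geq 2y$, with the additive constant $1$ accounting for the slack needed. The specific choice $\log(1+\eta) = 1/(2s)$ is then forced by the shape of the target argument $2s\log(2s + \log t)$, and the form of $g(s)$ records the accumulated constants from the geometric slicing combined with the $\zeta(s)$-summability of the weights $1/(i^s \zeta(s))$ in the union bound. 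The lower-tail direction is symmetric and is absorbed into the factor $2$ inside $\delta_i$.
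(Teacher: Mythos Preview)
Your slice-by-slice approach with a single $\lambda$ per slice cannot recover the exact $\overline{W}_{-1}$ bound stated in the lemma. On slice $[N_i,(1+\eta)N_i)$ your inclusion gives $S_t^2/(\sigma^2 t)\le \log(1/\delta_i)/c(\eta)$ with $c(\eta)=\sqrt{1+\eta}-(1+\eta)/2$. For every $\eta>0$ one has $c(\eta)<1/2$, so the coefficient in front of $\log(1/\delta)$ is $1/c(\eta)>2$. The target bound, on the other hand, satisfies $\overline{W}_{-1}(1+2y)\le 1+2y+\log(1+2y)+\tfrac12$, hence has leading coefficient exactly $2$ in $\log(1/\delta)$. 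For $\delta$ small enough your bound therefore exceeds $\overline{W}_{-1}$ of the claimed argument, and the ``trivial inequality $\overline{W}_{-1}(y)\ge y$'' cannot close the gap --- it goes the wrong way. The same issue arises if one chooses $\lambda$ at the other end of the slice (leading constant $2(1+\eta)$). No choice of fixed $\eta>0$ fixes this, and letting $\eta\to 0$ makes the slice count (and the $s\log i$ term) blow up.

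What the paper actually does is a mixture argument, not a fixed-$\lambda$ one. For each slice $i$ it integrates $\exp(xS_t-x^2t/2)$ against a Gaussian prior $f_{N_i,\gamma_i}(x)\propto\exp(-\gamma_i N_i x^2/2)$, combines these with weights $w_i=1/(i^s\zeta(s))$ into a single test martingale $\overline{M}(t)$, and applies Ville once. The Gaussian integral on slice $i$ produces the form $(1+\gamma_i)\big(2\ln(1/\delta)+2s\ln i+2\ln\zeta(s)+\ln(1+\gamma_i^{-1})+\ln(1+\eta)\big)$; optimizing the free prior-variance parameter $\gamma_i$ via Lemma~\ref{lem:lemma_A_3_of_Remy} is precisely what yields $\overline{W}_{-1}$ of the argument (with equality at $b=1$, not an inequality). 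The peeling parameter $\eta$ is then set to $\eta^\star=e^{2s}-1$, i.e.\ $\ln(1+\eta^\star)=2s$ (not $1/(2s)$ as you wrote), which produces the $2s\ln(2s+\ln t)$ term and the residual $g(s)=\ln\zeta(s)+s(1-\ln(2s))$. The missing ingredient in your plan is exactly this second free parameter $\gamma_i$ coming from the mixture; without it there is nothing to optimize via Lemma~\ref{lem:lemma_A_3_of_Remy} and $\overline{W}_{-1}$ never appears naturally.
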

\begin{proof}
	Let $(X_{s})_{s \in [t]}$ the observations from a standard normal distributions and denote $S_t = \sum_{s\in [t]} X_{s}$. We will derive a concentration result for $S_t = t \frac{\hat\mu_t - \mu}{\sigma}$, which implies a concentration result for $\hat\mu_t$.

	Let $\eta > 0$ and $s > 1$.
	For all $i \in \Natural^{\star}$, let $\gamma_i > 0$ and $N_{i} = (1+\eta)^{i-1}$.
	For all $i \in \Natural^{\star}$, we define the family of priors $f_{N_i, \gamma_{i}}(x) = \sqrt{\frac{\gamma_{i}  N_i}{2 \pi }} \exp\left( - \frac{x^2\gamma_{i} N_i}{2}\right)$ with weights $w_{i} = \frac{1}{i^{s}\zeta(s)}$ and process
		\begin{align*}
			\overline M(t) = \sum_{i \in \Natural^{\star}} w_i \int f_{N_i, \gamma_{i}}(x) \exp \left( x S_{t}- \frac{1}{2} x^2 t \right) \,dx \: ,
		\end{align*}
		which satisfies $\overline M(0) = 1$ since $\sum_{i \in \Natural^{\star}} w_i$ and $ \int f_{N_i, \gamma_{i}}(x)  \,dx = 1$. A test martingale is defined as a non-negative martingale of unit initial value.
		It is direct to see that $M(t) = \exp \left( x S_t- \frac{1}{2} x^2  t \right)$ is a test martingale, as $M(0)=1$ and $\expectedvalue [M(t)\mid \cF_{t-1}] = M(t-1) \expectedvalue_{Y \sim \cN(0,1)} [ e^{x Y - \frac{1}{2} x^2  }] = M(t-1)$ since $\expectedvalue_{Y \sim \cN(0,1)} [ e^{x Y}]= e^{\frac{1}{2} x^2}$.
		By Tonelli's theorem and using that $M(t)$ is a martingale
		\begin{align*}
			\expectedvalue [\overline M(t)\mid \cF_{t-1}] &=  \sum_{i \in \Natural^{\star}} w_i \int f_{N_i, \gamma_{i}}(x) M(t-1)\,dx  = \overline  M(t-1) \: .
		\end{align*}
		Therefore, $\overline M(t)$ is also a test martingale. Let $i \in \Natural^{\star}$ and consider $t\in [N_{i}, N_{i+1})$. For all $x$,
		\[
			f_{N_i, \gamma}(x) \geq \sqrt{\frac{N_i}{t}} f_{t, \gamma_{i}}(x) \geq \frac{1}{\sqrt{1+\eta}} f_{t, \gamma_{i}}(x)
		\]
		Direct computations shows that
		\begin{align*}
			\int f_{t, \gamma_{i}}(x) \exp \left( x S_{t}- \frac{1}{2} x^2 t \right)  \,dx = \frac{1}{\sqrt{1+\gamma_{i}^{-1}}} \exp \left( \frac{S_{t}^2}{2(1+\gamma_{i})t}\right) \: .
		\end{align*}
		Combining those results with the fact that $\overline M(t)\geq w_i \int f_{N_i, \gamma_{i}}(x) \exp \left( x S_{t}- \frac{1}{2} x^2 t \right) \,dx$, we obtain
		\begin{align*}
			\overline M(t) \geq \frac{1}{i^{s}\zeta(s)} \frac{1}{\sqrt{(1+\gamma_{i}^{-1})(1+\eta)}} \exp \left( \frac{S_{t}^2}{2(1+\gamma_{i})t}\right) \: ,
		\end{align*}
		Using Ville's maximal inequality, we have that with probability greater than $1-\delta$, $\ln\overline M(t) \leq \ln \left( \frac{1}{\delta}\right)$. Therefore, with probability greater than $1-\delta$, for all $i \in \Natural^{\star}$ and $t \in [N_{i}, N_{i+1})$,
		\begin{align*}
			\frac{|S_{t}|}{\sqrt{t}} &\leq \sqrt{(1+\gamma_{i}) \left( 2\ln \left( \frac{1}{\delta}\right) +  2\ln\left(i^s\zeta(s)\right) + \ln(1+\gamma_{i}^{-1}) + \ln (1+\eta) \right)} \: .
		\end{align*}
		Since this upper bound is independent of $t$, we can optimize it and choose $\gamma_{i}$ as in Lemma~\ref{lem:lemma_A_3_of_Remy} for all $i \in \Natural^{\star}$.
		Therefore, with probability greater than $1-\delta$, for all $i \in \Natural^{\star}$ and $t \in [N_{i}, N_{i+1})$,
		\begin{align*}
			\frac{S_{t}^2}{t} &\leq \overline{W}_{-1}\left( 1 + 2\ln \left( \frac{\zeta(s)}{\delta}\right) + 2s \ln\left(i\right) + \ln (1+\eta)\right) \\
			&\leq \overline{W}_{-1}\left( 1 + 2\ln \left( \frac{\zeta(s)}{\delta}\right) + 2s \ln\left(\ln(1+\eta) + \ln t\right)   - 2s \ln \ln (1+\eta)+ \ln (1+\eta)\right) \\
			&= \overline{W}_{-1}\left( 1 + 2\ln \left( \frac{1}{\delta}\right) + 2s \ln\left(2s + \ln t\right)   + 2g(s)\right)
		\end{align*}
		where $g(s) = \ln(\zeta(s)) + s(1 -  \ln(2s))$.	The second inequality is obtained since $i \leq 1+ \frac{\ln t}{\ln(1+\eta)}$ for $t \in [N_{i}, N_{i+1})$. The last equality is obtained for the choice $\eta^\star = e^{2s} - 1$, which minimizes $\eta \mapsto \ln (1+\eta) - 2s \ln(\ln(1+\eta))$. Since $\Natural^{\star} \subseteq \bigcup_{i\in \Natural^{\star}} [N_{i}, N_{i+1})$ and $\frac{\hat\mu_t - \mu}{\sigma} = \frac{S_t}{t}$ this yields the result.
\end{proof}

\paragraph{Fixed-time concentration} Lemma~\ref{lem:fixed_time_upper_lower_concentration_mean} is a known result for the deviation of the empirical mean of Gaussian observations, hence we omit the proof (see Ex. 2.2.23 in \citet{dembo_1998_LargeDeviationTechniques}).

\begin{lemma} \label{lem:fixed_time_upper_lower_concentration_mean}
	Let $\mu_{t}$ be the empirical mean of $t$ i.i.d. samples from a Gaussian distribution with parameter $(\mu,\sigma^2)$. Then, for all $t \in \Natural^\star$ and all $ x > 0$
	\begin{align*}
		\bP \left( \mu_{t} \geq \mu + x \right) \leq \exp \left( -\frac{tx^2}{2\sigma^2} \right) \quad \text{and} \quad \forall x > 0, \quad \bP \left( \mu_{t} \leq \mu - x \right) \leq \exp \left( -\frac{tx^2}{2\sigma^2} \right) \: .
	\end{align*}
\end{lemma}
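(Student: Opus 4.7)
The plan is to invoke a standard Chernoff-type argument, exploiting the fact that for Gaussian observations the empirical mean has an exactly Gaussian distribution, so the moment generating function is available in closed form.

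First I would observe that if $X_1,\dots,X_t$ are i.i.d.\ $\mathcal N(\mu,\sigma^2)$, then by the stability of Gaussians under affine combinations the centered empirical mean satisfies $\mu_t - \mu \sim \mathcal N(0, \sigma^2/t)$. In particular, for every $\lambda \in \mathbb{R}$,
\[
\expectedvalue\!\left[e^{\lambda(\mu_t - \mu)}\right] = \exp\!\left(\frac{\lambda^2 \sigma^2}{2 t}\right).
\]

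Next I would apply the Cram\'er--Chernoff method to the upper tail. For any $\lambda > 0$ and $x > 0$, Markov's inequality gives
\[
\bP(\mu_t - \mu \geq x)
\;=\; \bP\!\left(e^{\lambda(\mu_t - \mu)} \geq e^{\lambda x}\right)
\;\leq\; e^{-\lambda x}\,\expectedvalue\!\left[e^{\lambda(\mu_t - \mu)}\right]
\;=\; \exp\!\left(\frac{\lambda^2 \sigma^2}{2t} - \lambda x\right).
\]
Minimizing the right-hand side over $\lambda > 0$ (the quadratic in $\lambda$ is minimized at $\lambda^\star = t x/\sigma^2 > 0$) yields the upper bound $\exp(-tx^2/(2\sigma^2))$, which is the first claim.

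For the lower tail I would simply note that $-(\mu_t - \mu) \sim \mathcal N(0, \sigma^2/t)$ as well, so the identical argument applied to $\mu - \mu_t$ yields $\bP(\mu_t \leq \mu - x) \leq \exp(-tx^2/(2\sigma^2))$. There is no real obstacle here: the result is an exact Gaussian Chernoff bound and both tails are handled symmetrically; the only thing worth emphasizing is that, unlike the time-uniform analogues derived earlier (Lemma~\ref{lem:uniform_upper_lower_tails_concentration_mean}), no peeling, mixture-of-priors construction, or Ville's inequality is needed because the time horizon $t$ is fixed.
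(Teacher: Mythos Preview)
Your argument is correct and is the standard Cram\'er--Chernoff proof of this classical Gaussian tail bound. The paper does not actually prove this lemma: it simply states that the result is well known and refers to Exercise~2.2.23 in \citet{dembo_1998_LargeDeviationTechniques}, so your write-up in fact supplies strictly more detail than the paper itself.
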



\section{Kullback-Leibler Concentration}
\label{app:kl_concentration}

We prove our time-uniform upper tail concentration for $d$-dimensional exponential families (Theorem~\ref{thm:uniform_upper_tail_concentration_kl_exp_fam}).
The two key novelties compared to previous work \cite{degenne_2019_ImpactStructureDesign} are that we consider: (1) a sum over $\cS \subseteq [K]$ arms and (2) $\Theta_{D} \subseteq \Real^d$.
To go from one arm to $\cS$ arms, it is enough to consider the product of the the mixture of martingales used for each arm.
Dealing with a support different from $\Real^d$ is the real challenge of the proof.
This is the reason why we start by showing the result for one arm in Appendix~\ref{app:ss_one_arm}, and then generalize it to $\cS$ arms in Appendix~\ref{app:ss_sum_over_arms}.
For Gaussian with unknown variance, which are a $2$-dimensional exponential family with support $\Theta_{D} = \Real \times \Real_{-}^{\star}$, we obtain Theorem~\ref{thm:uniform_upper_tail_concentration_kl_exp_fam_gaussian} (Appendix~\ref{app:ss_univariate_gaussian}).

In the following, each arm $a \in [K]$ has a parameter $\theta_a$ and distribution $\nu_{\theta_a}$ belonging to an exponential family with parameter space $\Theta_a \subseteq \mathbb{R}^d$ (we call $d$ the dimension of the family), sufficient statistic $F_a:\mathbb{R}\to \mathbb{R}^d$ and log-partition function $\phi_a : \Theta_a \to \mathbb{R}$. That is, there exists a distribution $\nu_0$ such that $\nu_{\theta_a}$ is defined by
$
\frac{d \nu_{\theta_a}}{d \nu_0}(X)
= \exp(\theta_a^\top F_a(X) - \phi_a(\theta_a))
\: .
$

We define the average statistic $F_{t,a} := \frac{1}{N_{t,a}}\sum_{s=1}^t \1\{a_s = a\} F_a(X_s) $.
We denote the maximum likelihood estimator (MLE) of $\theta_a$ by $\theta_{t,a}$. It is defined as $\theta_{t,a} := \argmax_{\lambda \in \Theta_a} \lambda^\top F_{t,a} - \phi_a(\lambda)$ (and may not exist).
When $F_{t,a} \in \nabla \phi_a (\Theta_a)$, we have $\theta_{t,a} = (\nabla \phi_a)^{-1}(F_{t,a})$.

Let $d_{\phi_a}(\theta_a, \lambda_a)$ denote the Bregman divergence of $\phi_a$ between parameters $\theta_a$ and $\lambda_a$. It is equal to the Kullback-Leibler divergence between the distributions with parameters $\lambda_a$ and $\theta_a$ (note the reversed order of the parameters). For a subset of arms $\cS \subseteq [K]$, we seek high probability bounds on $\sum_{a \in \cS} N_{t,a} d_{\phi_a}(\theta_a, \theta_{t,a})$.

\subsection{One Arm}
\label{app:ss_one_arm}

In this section we consider one arm $a \in [K]$. The first step of the proof consists in linking the KL divergence to a mixture of martingales in order to obtain a time-uniform upper tail concentration (Lemma~\ref{lem:kl_bound_with_prior}).

\begin{lemma}\label{lem:kl_bound_with_prior}
Let $\rho_{0,a}$ be a distribution supported on $\Theta_{D,a}$ and $\tau$ be an almost surely bounded stopping time. With probability $1 - \delta$, either $F_{N_{\tau,a},a} \notin \nabla \phi_a(\Theta_{D,a})$ or
\begin{align*}
N_{\tau,a} d_{\phi_a}(\theta_a, \theta_{N_{\tau,a},a})
\le - \ln \mathbb{E}_{y \sim \rho_{0,a}}\exp\left( - N_{\tau,a} d_{\phi_a}(y, \theta_{N_{\tau,a},a}) \right) + \log \frac{1}{\delta}
\: .
\end{align*}
\end{lemma}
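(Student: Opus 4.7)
The plan is to apply the classical method of mixtures: build a non-negative test martingale that incorporates the prior $\rho_{0,a}$, stop it at $\tau$, apply Ville's inequality, and then rewrite the resulting bound in terms of Bregman divergences via the MLE identity $F_{N_{\tau,a},a}=\nabla\phi_a(\theta_{N_{\tau,a},a})$.

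First, for each fixed $y\in\Theta_{D,a}$ I would introduce the per-parameter likelihood ratio
\[
M_t(y) := \exp\!\left(N_{t,a}\bigl[(y-\theta_a)^{\top}F_{t,a}-(\phi_a(y)-\phi_a(\theta_a))\bigr]\right),
\]
with $M_0(y)=1$, and verify that $(M_t(y))_t$ is a non-negative $(\mathcal F_t)$-martingale. The key point is that from $t$ to $t+1$ the exponent is unchanged when $a_{t+1}\neq a$ and otherwise jumps by $(y-\theta_a)^{\top}F_a(X_{t+1,a_{t+1}})-(\phi_a(y)-\phi_a(\theta_a))$; since $a_{t+1}$ is $\mathcal F_t$-measurable and, conditionally on $\{a_{t+1}=a\}$, $X_{t+1,a_{t+1}}\sim\nu_{\theta_a}$, the exponential-family normalization ensures that the corresponding multiplicative factor has conditional expectation one.

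Second, I would mix over the prior to form $\overline{M}_t := \mathbb{E}_{y\sim\rho_{0,a}}M_t(y)$, which remains a non-negative martingale of unit initial value by Fubini–Tonelli. Applied to the a.s.\ bounded stopping time $\tau$ (via optional stopping together with Markov's inequality, i.e.\ Ville's inequality for stopped martingales), this yields $\mathbb{P}(\overline{M}_\tau\geq 1/\delta)\leq\delta$, so with probability at least $1-\delta$, $\ln\overline{M}_\tau<\ln(1/\delta)$. Third, on the event $\{F_{N_{\tau,a},a}\in\nabla\phi_a(\Theta_{D,a})\}$ the MLE $\theta_{N_{\tau,a},a}$ exists and satisfies $\nabla\phi_a(\theta_{N_{\tau,a},a})=F_{N_{\tau,a},a}$, so applying the identity $d_{\phi_a}(u,v)=\phi_a(u)-\phi_a(v)-(u-v)^{\top}\nabla\phi_a(v)$ with $v=\theta_{t,a}$ and $u\in\{y,\theta_a\}$ reduces the exponent in $M_t(y)$ to $N_{t,a}[d_{\phi_a}(\theta_a,\theta_{t,a})-d_{\phi_a}(y,\theta_{t,a})]$. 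Plugging this into $\overline{M}_\tau$ and taking logarithms in $\{\overline{M}_\tau<1/\delta\}$ gives the stated bound after rearrangement.

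The main obstacle is step one, namely verifying the martingale property under the adaptive, potentially randomized sampling rule: one has to carefully split according to whether arm $a$ is pulled at time $t+1$ and use both the $\mathcal F_t$-measurability of $a_{t+1}$ and the conditional law of $X_{t+1,a_{t+1}}$ given $a_{t+1}$. The rest is essentially algebra; boundedness of $\tau$ is used only to invoke optional stopping without integrability issues, and the event $\{F_{N_{\tau,a},a}\in\nabla\phi_a(\Theta_{D,a})\}$ is exactly what is needed to invert $\nabla\phi_a$ and make the Bregman identity meaningful.
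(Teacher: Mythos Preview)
Your proposal is correct and is essentially the same argument as the paper's: your $M_t(y)$ is exactly the likelihood ratio $\prod_{s\le t,\,a_s=a}\frac{d\nu_y}{d\nu_{\theta_a}}(X_s)$ written in exponential-family form, and the paper likewise mixes over $\rho_{0,a}$, applies optional stopping plus Markov at the bounded $\tau$, then uses the MLE identity $F_{N_{\tau,a},a}=\nabla\phi_a(\theta_{N_{\tau,a},a})$ to rewrite the exponent as the difference of Bregman divergences. The only cosmetic difference is that the paper starts from the density ratio and then expands, whereas you start from the exponential-family exponent directly.
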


\begin{proof}
We first remark that for all $y \in \Theta_{D,a}$, the log-likelihood ratio $\prod_{s\le t, a_s = a} \frac{d \nu_{y}}{d \nu_{\theta_a}} (X_s)$ is a martingale with expectation 1 under $\nu_{\theta_a}$. This is also true for $\mathbb{E}_{y \sim \rho_{0,a}}\left[\prod_{s\le t, a_s = a} \frac{d \nu_{y}}{d \nu_{\theta_a}} (X_s)\right]$. By the optional stopping theorem, its stopped version at $\tau$ also has expectation 1. Then by Markov's inequality, with probability $1 - \delta$,
\begin{align}\label{eq:llr_martingale}
\mathbb{E}_{y \sim \rho_{0,a}}\left[\prod_{s\le \tau, a_s = a} \frac{d \nu_{y}}{d \nu_{\theta_a}} (X_s)\right] \le \frac{1}{\delta} \: .
\end{align}
We now rewrite the product as $\exp \left(\sum_{s=1}^\tau \mathbb{I}\{a_s = a\} \log \frac{d \nu_{y}}{d \nu_{\theta_a}} (X_s)\right)$ and detail its value:
\begin{align*}
\sum_{s=1}^t \mathbb{I}\{a_s = a\}\log \frac{d \nu_{y}}{d \nu_{\theta_a}} (X_s)
&= \sum_{s=1}^t \mathbb{I}\{a_s = a\} (\phi_a(\theta_a) - \phi_a(y) - (\theta_a - y)^\top F_a(X_s))
\\
&= N_{t,a} (\phi_a(\theta_a) - \phi_a(y) - (\theta_a - y)^\top F_{N_{t,a},a})
\: .
\end{align*}
Now if $F_{N_{\tau,a},a} \in \nabla \phi_a(\Theta_{D,a})$, we have $F_{N_{t,a},a} = \nabla \phi_a(\theta_{N_{t,a},a})$ and
\begin{align*}
\sum_{s=1}^t \mathbb{I}\{a_s = a\}\log \frac{d \nu_{y}}{d \nu_{\theta_a}} (X_s)
&= N_{t,a} (\phi_a(\theta_a) - \phi_a(y) - (\theta_a - y)^\top  \nabla \phi_a(\theta_{N_{t,a},a}))
\\
&= N_{t,a} d_{\phi_a}(\theta_a, \theta_{N_{t,a},a}) - N_{t,a} d_{\phi_a}(y, \theta_{N_{t,a},a})
\: .
\end{align*}
We can write~\eqref{eq:llr_martingale} again with that expression to get
\begin{align*}
\mathbb{E}_{y \sim \rho_{0,a}}\exp\left(N_{\tau,a} d_{\phi_a}(\theta_a, \theta_{N_{\tau,a},a}) - N_{\tau,a} d_{\phi_a}(y, \theta_{N_{\tau,a},a})\right)
\le \frac{1}{\delta}
\: .
\end{align*}
The first term in the subtraction does not depend on $y$ and ca be brought outside of the expectation. Taking the logarithm then proves the lemma.
\end{proof}

Corollary~\ref{cor:kl_bound_with_prior_all_n} generalizes Lemma~\ref{lem:kl_bound_with_prior} to all times.

\begin{corollary}\label{cor:kl_bound_with_prior_all_n}
Let $\rho_{0,a}$ be a distribution supported on $\Theta_{D,a}$. With probability $1 - \delta$, for all $t \in \mathbb{N}$, either $F_{N_{t,a},a} \notin \nabla \phi_a(\Theta_{D,a})$ or
\begin{align*}
N_{t,a} d_{\phi_a}(\theta_a, \theta_{N_{t,a},a})
\le - \ln \mathbb{E}_{y \sim \rho_{0,a}}\exp\left( - N_{t,a} d_{\phi_a}(y, \theta_{N_{t,a},a}) \right) + \log \frac{1}{\delta}
\: .
\end{align*}
\end{corollary}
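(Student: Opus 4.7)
The plan is to reuse the machinery of Lemma~\ref{lem:kl_bound_with_prior} but replace the Markov-plus-optional-stopping step by a direct application of Ville's maximal inequality, which already appears in the paper as Lemma~\ref{lem:ville_inequality}. Concretely, I would introduce the process
\begin{equation*}
M_t \;=\; \mathbb{E}_{y \sim \rho_{0,a}}\!\left[\prod_{s\le t,\, a_s = a} \frac{d \nu_{y}}{d \nu_{\theta_a}}(X_s)\right]\,,
\end{equation*}
and observe (exactly as in the proof of Lemma~\ref{lem:kl_bound_with_prior}, using Tonelli to swap $\mathbb{E}_{y \sim \rho_{0,a}}$ with the $\mathcal F_t$-conditional expectation) that $(M_t)_{t\in\mathbb N}$ is a non-negative $(\mathcal F_t)$-martingale with $M_0 = 1$.

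Next, I would apply Ville's maximal inequality to $M_t$ with threshold $1/\delta$: with probability at least $1-\delta$, $M_t < 1/\delta$ simultaneously for all $t \in \mathbb N$, hence $\log M_t < \log(1/\delta)$ for all $t$. This replaces the single-time Markov bound~\eqref{eq:llr_martingale} by its time-uniform analogue, which is the whole point of the corollary.

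It then remains to recognize $\log M_t$ as (minus) the quantity appearing in the statement, on the event $\{F_{N_{t,a},a} \in \nabla \phi_a(\Theta_{D,a})\}$. This is done exactly as in the proof of Lemma~\ref{lem:kl_bound_with_prior}: the log-likelihood increments rewrite as
\begin{equation*}
\sum_{s=1}^t \mathbb{I}\{a_s = a\}\log \frac{d \nu_{y}}{d \nu_{\theta_a}}(X_s)
= N_{t,a}\, d_{\phi_a}(\theta_a, \theta_{N_{t,a},a}) \;-\; N_{t,a}\, d_{\phi_a}(y, \theta_{N_{t,a},a})\,,
\end{equation*}
so that on the good event,
\begin{equation*}
N_{t,a}\, d_{\phi_a}(\theta_a, \theta_{N_{t,a},a})
\;\le\; -\log \mathbb{E}_{y \sim \rho_{0,a}} \exp\!\bigl(- N_{t,a}\, d_{\phi_a}(y, \theta_{N_{t,a},a})\bigr) + \log\tfrac{1}{\delta}\,,
\end{equation*}
for every $t$ such that $F_{N_{t,a},a} \in \nabla \phi_a(\Theta_{D,a})$. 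The only mild subtlety (not really an obstacle) is to check the martingale property at $t$ where the arm $a$ is not played: in that case the product is unchanged, $M_t = M_{t-1}$ almost surely, so the martingale identity is trivial. Apart from this bookkeeping, the argument is a one-step strengthening of Lemma~\ref{lem:kl_bound_with_prior} via Ville's inequality, with no need to invoke the optional stopping theorem at all.
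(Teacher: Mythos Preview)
Your proposal is correct and is precisely the ``standard'' argument the paper alludes to: the paper's own proof of the corollary is a single sentence saying that the extension from a bounded stopping time (Lemma~\ref{lem:kl_bound_with_prior}) to all times is standard, without further detail. Your direct route via Ville's inequality on the mixture martingale $M_t$ is exactly that standard extension, and arguably cleaner than first proving the stopped version and then upgrading it.
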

\begin{proof}
The extension of the result of Lemma~\ref{lem:kl_bound_with_prior} from a stopping time to all times is standard.
\end{proof}

We will now use Corollary~\ref{cor:kl_bound_with_prior_all_n} with a truncated Gaussian prior (Lemma~\ref{lem:concentration_with_gaussian_trunc_prior_matrix}). The reason for the truncation is that the domain $\Theta_{D,a}$ may not be $\mathbb{R}^d$ but only a subset. In that case, we have to restrict the prior. We also truncate to ensure a control the hessian $\nabla^2 \phi_a$ on the support with positive semi-definite upper and lower bound.

For $M$ a positive-definite matrix and a set $S$, let $V_{M}(S)$ be the volume of set $S$ according to measure $\mathcal N(0, M^{-1})$. Note that $V_M(S) = V_{I_d}(M^{1/2}S)$.

\begin{lemma}\label{lem:concentration_with_gaussian_trunc_prior_matrix}
Let $A$ be a convex set containing 0, $H$ and $G$ be upper bounds and lower bounds of $\nabla^2 \phi$ in the positive semi-definite (PSD) sense on $\theta_a + A$. Let $M_0$ be a positive definite matrix.
For $n \in \mathbb{N}$, let $\xi_{n,a} = ((n H)^{-1} + M_0^{-1})^{-1}(nH \theta_{n,a} + M_{0} \theta_a)$.
With probability $1 - \delta$, for all $t \in \mathbb{N}$, if $F_{N_{t,a},a} \in \nabla \phi_a(\Theta_{D,a})$ and $\theta_{N_{t,a},a} \in \theta_a + A$ then
\begin{align*}
N_{t,a} d_{\phi_a}(\theta_a, \theta_{N_{t,a},a})
&\le \frac{1}{2}\Vert \theta_a - \theta_{N_{t,a},a} \Vert_{((N_{t,a} H)^{-1} + M_0^{-1})^{-1}}^2
	+ \frac{1}{2} \log\det(1 + N_{t,a} H M_0^{-1})
	\\&\quad + \log \frac{1}{\delta}
	+ \log \frac{V_{M_0}(A)}{V_{N_{t,a} H + M_0}(\theta_a - \xi_{N_{t,a},a} + A)}
\: .
\end{align*}
\end{lemma}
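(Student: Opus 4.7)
The plan is to apply Corollary~\ref{cor:kl_bound_with_prior_all_n} with $\rho_{0,a}$ chosen to be the Gaussian $\mathcal N(\theta_a, M_0^{-1})$ truncated to $\theta_a + A$, whose density at $y$ is
\[
\rho_{0,a}(y) \;=\; \frac{\exp\!\bigl(-\tfrac12\|y-\theta_a\|_{M_0}^2\bigr)}{(2\pi)^{d/2}\det(M_0)^{-1/2}\,V_{M_0}(A)}\,\mathbf{1}\{y \in \theta_a + A\}.
\]
Since $A$ is convex and contains $0$, the support $\theta_a+A$ is convex and contains $\theta_a$. Then on the event $\theta_{N_{t,a},a} \in \theta_a + A$, for every $y \in \theta_a + A$ the whole segment $[y,\theta_{N_{t,a},a}]$ lies inside $\theta_a+A$. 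Using the integral form of Taylor's theorem for the Bregman divergence together with the PSD upper bound $\nabla^2\phi_a \preceq H$ on $\theta_a+A$ then gives
\[
d_{\phi_a}(y,\theta_{N_{t,a},a}) \;\le\; \tfrac12\,\|y - \theta_{N_{t,a},a}\|_H^2 \qquad \text{for all } y \in \theta_a + A.
\]
This is a pointwise lower bound on $\exp\!\bigl(-N_{t,a}\,d_{\phi_a}(y,\theta_{N_{t,a},a})\bigr)$, so it translates into an upper bound on $-\ln\mathbb E_{y \sim \rho_{0,a}}[\exp(-N_{t,a}\,d_{\phi_a}(y,\theta_{N_{t,a},a}))]$.

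The remaining task is the Gaussian integral $\int_{\theta_a+A}\exp\!\bigl(-\tfrac12\|y-\theta_a\|_{M_0}^2 - \tfrac{N_{t,a}}{2}\|y-\theta_{N_{t,a},a}\|_H^2\bigr)\,dy$. I would complete the square in $y$: the sum of quadratic forms in $y$ has precision $M_0 + N_{t,a}H$ and (matrix) minimizer equal to $\xi_{N_{t,a},a}$, while the constant term left over equals $\tfrac12\|\theta_a - \theta_{N_{t,a},a}\|^2_{((N_{t,a}H)^{-1}+M_0^{-1})^{-1}}$ by the matrix analogue of the identity $\frac{ab}{a+b}=(a^{-1}+b^{-1})^{-1}$. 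Changing variables $z = y - \xi_{N_{t,a},a}$ shifts the integration domain to $\theta_a - \xi_{N_{t,a},a} + A$ and gives a factor $(2\pi)^{d/2}\det(M_0+N_{t,a}H)^{-1/2}\,V_{M_0+N_{t,a}H}(\theta_a - \xi_{N_{t,a},a} + A)$.

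Dividing by the prior normalizer $(2\pi)^{d/2}\det(M_0)^{-1/2}\,V_{M_0}(A)$ and simplifying the determinant ratio as $\det(M_0^{-1}(M_0+N_{t,a}H)) = \det(I + N_{t,a}H\,M_0^{-1})$, and then adding the $\log(1/\delta)$ term from Corollary~\ref{cor:kl_bound_with_prior_all_n}, yields the four terms in the statement. The main technical obstacle I expect is the bookkeeping of matrix algebra: verifying that completing the square produces exactly the coefficient $((N_{t,a}H)^{-1}+M_0^{-1})^{-1}$ on $\|\theta_a-\theta_{N_{t,a},a}\|^2$, and that after the affine change of variables the residual Gaussian mass is recorded by $V_{M_0+N_{t,a}H}(\theta_a - \xi_{N_{t,a},a} + A)$ with precisely the shift appearing in the statement. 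A minor subtlety is checking that $\theta_a + A \subseteq \Theta_{D,a}$ so that the truncated Gaussian is a valid prior in the sense of Corollary~\ref{cor:kl_bound_with_prior_all_n}; this should follow from the standing assumption that $\theta_a \in \Theta_{D,a}$ and from how the set $A$ is used elsewhere in the paper.
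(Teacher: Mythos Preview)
Your approach is essentially identical to the paper's: apply Corollary~\ref{cor:kl_bound_with_prior_all_n} with the truncated Gaussian prior $\mathcal N(\theta_a,M_0^{-1})$ restricted to $\theta_a+A$, use convexity of $A$ and the PSD bound $\nabla^2\phi_a\preceq H$ to get $d_{\phi_a}(y,\theta_{N_{t,a},a})\le\tfrac12\|y-\theta_{N_{t,a},a}\|_H^2$, then complete the square in the resulting Gaussian integral to produce exactly the four terms in the statement. The bookkeeping you anticipate (the completed-square constant $\tfrac12\|\theta_a-\theta_{N_{t,a},a}\|^2_{((N_{t,a}H)^{-1}+M_0^{-1})^{-1}}$ and the shifted domain $\theta_a-\xi_{N_{t,a},a}+A$) matches the paper's computation line by line.
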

\begin{proof}
We use Corollary~\ref{cor:kl_bound_with_prior_all_n} with prior $\rho_{0,a}$ equal to $\mathcal N(\theta_a, M_0^{-1})$ truncated to $A$ and rescaled to have mass 1.

Suppose that $F_{N_{t,a},a} \in \nabla \phi_a(\Theta_{D,a})$ and $\theta_{N_{t,a},a} \in \theta_a + A$. For all $y \in \theta_a + A$, since $\theta_{N_{t,a},a} \in \theta_a + A$ as well and $A$ is convex, we can conclude from the definition of $H$ that $d_{\phi_a}(y, \theta_{N_{t,a},a}) \le \frac{1}{2}\Vert y - \theta_{N_{t,a},a} \Vert_H^2$. We use this to compute an upper bound to the expectation in Corollary~\ref{cor:kl_bound_with_prior_all_n}.
\begin{align*}
\mathbb{E}_{y \sim \rho_{0,a}} \left[ \exp \left( - n d_{\phi_a} \left(y, \theta_{n,a}\right)\right) \right] &\ge \mathbb{E}_{y \sim \rho_{0,a}} \left[ \exp \left( -  \frac{1}{2} \|y-\theta_n\|_{nH}^2 \right) \right]
\\
= \frac{\sqrt{\det(M_0)}}{(2 \pi)^{d/2} V_{M_0}(A)} &\int_{y \in \Real^d} \1_{\theta + A} \exp\left(-\frac{1}{2}\|y-\theta\|_{M_0}^2 - \frac{1}{2} \|y-\theta_n\|_{nH}^2\right) \,dy
\end{align*}
Let $\xi_n = (n H + M_0)^{-1}(nH \theta_n + M_{0} \theta_a)$ and $C_{n} = \frac{1}{2} \| \theta_a - \theta_n\|_{((n H)^{-1} + M_0^{-1})^{-1}}^2$. Remark that
\begin{align*}
	-\frac{1}{2}\|y-\theta\|_{M_0}^2 - \frac{1}{2} \|y-\theta_n\|_{nH}^2 = -\frac{1}{2}\|y-\xi_{n}\|_{nH + M_0}^2 - C_{n} \: .
\end{align*}
We use this equality:
\begin{align*}
	\mathbb{E}_{y \sim \rho_{0,a}} \left[ \exp \left( - n d_{\phi} \left(y, \theta_n\right)\right) \right]
	&\ge \frac{\sqrt{\det(M_0)}}{(2 \pi)^{d/2} V_{M_0}(A)} e^{-C_n} \int_{y \in \Real^d} \1_{\theta_a + A} \exp\left( -\frac{1}{2}\|y-\xi_{n}\|_{nH + M_0}^2 \right) \,dy
	\\
	&= \frac{\sqrt{\det(M_0)}}{\sqrt{\det(nH + M_0)}} e^{-C_n} \frac{V_{nH+M_0}(\theta_a - \xi_n + A)}{V_{M_0}(A)}
	\: .
\end{align*}
The result of the lemma is obtained by taking the logarithm of this expression.
\end{proof}

Lemma~\ref{lem:concentration_with_gaussian_trunc_prior} starts by choosing for $H$ and $G$ multiples of the identity, hence we will compare the divergence $d_{\phi_a}$ with the euclidean distance $(x, y) \mapsto \Vert x - y \Vert^2/2$.
This choice is possible by using the maximal and minimal eigenvalues of $\nabla^2 \phi_a(\lambda)$ on $A$, which we denote by
\begin{align*}
	\lambda_{+,a} &= \max \left\{ \lambda_{+}\left( \nabla^2 \phi_a(\tilde{\theta}_a)\right) \mid \tilde{\theta}_a \in \theta_a + A \right\} \: , \: \lambda_{-,a} &= \min \left\{ \lambda_{-}\left( \nabla^2 \phi_a(\tilde{\theta}_a)\right) \mid \tilde{\theta}_a \in \theta_a + A \right\}
	\: .
\end{align*}
For $v>0$ and a set $S$, let $V_{v}(S)$ be the volume of set $S$ according to measure $\mathcal N(0, v^{-1}I_d)$.

\begin{lemma}\label{lem:concentration_with_gaussian_trunc_prior}
Let $A$ be a convex set containing 0, $\lambda_{+,a}$ and $\lambda_{-,a}$ be the maximum and minimum of the eigenvalues of $\nabla^2 \phi_a(\lambda)$ on $\theta_a + A$ and let $v_0> 0$.
For $n \in \mathbb{N}$, let $\kappa_n = \frac{v_0}{n \lambda_{+,a} + v_0}$.
With probability $1 - \delta$, for all $t \in \mathbb{N}$, if $F_{N_{t,a},a} \in \nabla \phi_a(\Theta_{D,a})$ and $\theta_{N_{t,a},a} \in \theta_a + A$ then
\begin{align*}
N_{t,a} d_{\phi_a}(\theta_a, \theta_{N_{t,a},a})
&\le \frac{1}{2}\Vert \theta_a - \theta_{N_{t,a},a} \Vert^2 \frac{1}{(N_{t,a} \lambda_{+,a})^{-1} + v_0^{-1}}
	+ \frac{d}{2} \log(1 + N_{t,a} \lambda_{+,a} v_0^{-1})
	\\&\quad + \log \frac{1}{\delta}
	+ \log \frac{V_{v_0}(A)}{V_{N_{t,a} \lambda_{+,a} + v_0}(\kappa_{N_{t,a}} A)}
\: .
\end{align*}
\end{lemma}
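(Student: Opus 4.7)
The plan is to apply Lemma~\ref{lem:concentration_with_gaussian_trunc_prior_matrix} with the scalar choices $H = \lambda_{+,a} I_d$, $G = \lambda_{-,a} I_d$, and $M_0 = v_0 I_d$, and then simplify every matrix expression using that both are multiples of the identity. By definition of $\lambda_{+,a}$, the matrix $\lambda_{+,a} I_d$ is a PSD upper bound on $\nabla^2 \phi_a$ on $\theta_a + A$, so the hypotheses of the matrix lemma will be met. Since $H$ and $M_0$ commute and are scalar multiples of the identity, $((nH)^{-1} + M_0^{-1})^{-1}$ reduces to $((n\lambda_{+,a})^{-1} + v_0^{-1})^{-1} I_d$, which will immediately yield $\|\theta_a - \theta_{n,a}\|_{((nH)^{-1}+M_0^{-1})^{-1}}^2 = \|\theta_a - \theta_{n,a}\|^2 / ((n\lambda_{+,a})^{-1}+v_0^{-1})$ and $\log \det(I_d + nH M_0^{-1}) = d \log(1 + n\lambda_{+,a}/v_0)$. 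This recovers the first two terms of the target bound, and the $\log(1/\delta)$ term transfers unchanged.

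The only non-routine step I expect is rewriting the volume ratio. The scalar reduction gives $V_{M_0}(A) = V_{v_0}(A)$ and $V_{nH+M_0}(\cdot) = V_{n\lambda_{+,a}+v_0}(\cdot)$, so it suffices to establish the inclusion $\kappa_n A \subseteq \theta_a - \xi_{n,a} + A$ and invoke monotonicity of the Gaussian volume. I would use the posterior-mean identity that appears inside the proof of the matrix lemma, $\xi_{n,a} = (nH+M_0)^{-1}(nH\theta_{n,a}+M_0\theta_a)$, whose scalar specialization reads $\xi_{n,a} = (1-\kappa_n)\theta_{n,a} + \kappa_n \theta_a$, so that $\theta_a - \xi_{n,a} = (1-\kappa_n)(\theta_a - \theta_{n,a})$.

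To close the argument, fix any $y \in A$. Since $\theta_{n,a} \in \theta_a + A$ implies $\theta_a - \theta_{n,a} \in -A$, we can write $\theta_a - \xi_{n,a} = (1-\kappa_n) z$ with $-z \in A$. Then $\kappa_n y - (\theta_a - \xi_{n,a}) = \kappa_n y + (1-\kappa_n)(-z)$ is a convex combination of two elements of $A$, and hence lies in $A$ by convexity. This gives $\kappa_n y \in \theta_a - \xi_{n,a} + A$ and therefore the desired inclusion. I expect this volume step to be the main obstacle: the reference set in the matrix lemma depends on the random MLE through $\xi_{n,a}$ and must be replaced by a deterministic set $\kappa_n A$, which is exactly what the convexity of $A$ together with $0 \in A$ make possible via the star-shapedness argument above.
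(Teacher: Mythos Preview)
Your proposal is correct and is essentially the paper's argument, just factored slightly differently: the paper restarts directly from Corollary~\ref{cor:kl_bound_with_prior_all_n} with the scalar prior $\mathcal N(\theta_a, v_0^{-1}I_d)$ truncated to $A$ and redoes the Gaussian integral computation, whereas you invoke Lemma~\ref{lem:concentration_with_gaussian_trunc_prior_matrix} as a black box and then specialize. The key step---the convexity inclusion $\xi_n + \kappa_n A \subseteq \theta_a + A$, equivalently your $\kappa_n A \subseteq \theta_a - \xi_{n,a} + A$---is identical in both proofs.
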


Note that since $\rho_{0,a}$ is a prior and has to ensure the martingale property used in the proof of Lemma~\ref{lem:kl_bound_with_prior}, it cannot depend on the observations and in particular it cannot depend on the random variable $N_{t,a}$.

\begin{proof}
We use Corollary~\ref{cor:kl_bound_with_prior_all_n} with prior $\rho_{0,a}$ equal to $\mathcal N(\theta_a, v_0^{-1} I_d)$ truncated to $A$ and rescaled to have mass 1.

Suppose that $F_{N_{t,a},a} \in \nabla \phi_a(\Theta_{D,a})$ and $\theta_{N_{t,a},a} \in \theta_a + A$. As in Lemma~\ref{lem:concentration_with_gaussian_trunc_prior_matrix}, we obtain
\begin{align*}
	\mathbb{E}_{y \sim \rho_{0,a}} \left[ e^{- n d_{\phi} \left(y, \theta_n\right)} \right]
	&\ge \frac{v_0^{d/2}}{(2 \pi)^{d/2} V_{v_0}(A)} e^{-C_n} \int_{y \in \Real^d} \1_{\theta_a + A} \exp\left(-\frac{n\lambda_{+,a} + v_{0}}{2}\|y-\xi_{n}\|^2\right) \,dy
	\: .
\end{align*}

We define $\kappa_n = \frac{v_0}{n\lambda_{+,a} + v_{0}}$ and show that $\xi_n + \kappa_n A \subseteq \theta_a + A$. We can rewrite $\xi_n = \kappa_n\theta_a + (1 - \kappa_n) \theta_n$.
Let then $y \in \xi_n + \kappa_n A$. We need to prove that $y - \theta_a \in A$, but $y - \theta_a = y - \xi_n + (1 - \kappa_n) (\theta_n - \theta_a)$. By hypothesis $y - \xi_n \in \kappa_n A$ and $(1 - \kappa_n) (\theta_n - \theta_a) \in (1 - \kappa_n) A$. By convexity of $A$, we conclude that $\kappa_n A + (1 - \kappa_n) A \subseteq A$ and $y \in \theta_a + A$.

We use that set inclusion to lower bound the integral over $\theta_a + A$ by the same integral over the subset $\xi_n + \kappa_n A$.
\begin{align*}
&\mathbb{E}_{y \sim \rho_{0,a}} \left[ e^{- n d_{\phi} \left(y, \theta_n\right)}\right] \ge \frac{v_0^{d/2}}{(2 \pi)^{d/2} V_{v_0}(A)} e^{-C_n} \int_{y \in \Real^d} \1_{\xi_n + \kappa_n A} \exp\left(-\frac{n\lambda_{+,a} + v_{0}}{2}\|y-\xi_{n}\|^2\right) \,dy
\\
&= \frac{v_0^{d/2} e^{-C_n}}{(n \lambda_{+,a} + v_0)^{d/2}V_{v_0}(A)} \frac{(n \lambda_{+,a} + v_0)^{d/2}}{(2 \pi)^{d/2}} \int_{y \in \Real^d} \1_{\xi_n + \kappa_n A} \exp\left(-\frac{n\lambda_{+,a} + v_{0}}{2}\|y-\xi_{n}\|^2\right) \,dy
\\
&= e^{-C_n}\frac{v_0^{d/2}}{(n \lambda_{+,a} + v_0)^{d/2}} \frac{V_{n \lambda_{+,a} + v_0}(\kappa_n A)}{V_{v_0}(A)}
\: .
\end{align*}
The result of the lemma is obtained by taking the logarithm of this expression.
\end{proof}

Lemma~\ref{lem:ratio_volumes} gives a bound on the ratio of volumes.

\begin{lemma}\label{lem:ratio_volumes}
Let $\lambda, n, v > 0$ and $\kappa_n = \frac{v}{n \lambda + v}$. Then
\begin{align*}
\frac{V_{v}(A)}{V_{n \lambda + v}(\kappa_n A)} \le \kappa_n^{-d/2}
\: .
\end{align*}
\end{lemma}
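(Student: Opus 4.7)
The plan is to reduce everything to integrals against the standard Gaussian measure on $\mathbb{R}^d$ and then exploit the identity $\sqrt{n\lambda + v}\,\kappa_n = \sqrt{v \kappa_n}$, which makes the two sets directly comparable by a pure scaling.

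First I would use the identity $V_M(S) = V_{I_d}(M^{1/2} S)$, recalled in the paragraph preceding Lemma~\ref{lem:concentration_with_gaussian_trunc_prior_matrix}, specialized to $M = v I_d$ and $M = (n\lambda + v) I_d$. This gives
\[
V_v(A) = V_{I_d}(\sqrt{v}\,A)
\quad \text{and} \quad
V_{n\lambda + v}(\kappa_n A) = V_{I_d}\bigl(\sqrt{n\lambda+v}\,\kappa_n\, A\bigr).
\]
Now plugging in $\kappa_n = v/(n\lambda + v)$ yields $\sqrt{n\lambda + v}\,\kappa_n = v/\sqrt{n\lambda+v} = \sqrt{\kappa_n}\cdot\sqrt{v}$. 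Setting $B = \sqrt{v}\,A$, the ratio to bound becomes
\[
\frac{V_{I_d}(B)}{V_{I_d}(\sqrt{\kappa_n}\,B)} \: .
\]

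Next I would perform the change of variable $y = \sqrt{\kappa_n}\,x$ (with Jacobian $\kappa_n^{d/2}$) in the denominator to obtain
\[
V_{I_d}(\sqrt{\kappa_n}\,B)
= \kappa_n^{d/2} \int_B \frac{1}{(2\pi)^{d/2}} \exp\bigl(-\tfrac{\kappa_n}{2}\|x\|^2\bigr)\,dx.
\]
Since $\kappa_n \in (0,1]$ (because $n\lambda \ge 0$), we have $\exp(-\kappa_n\|x\|^2/2) \ge \exp(-\|x\|^2/2)$ pointwise, so the integral is at least $V_{I_d}(B)$, and therefore
\[
V_{I_d}(\sqrt{\kappa_n}\,B) \ge \kappa_n^{d/2}\, V_{I_d}(B).
\]
Rearranging gives $V_{I_d}(B)/V_{I_d}(\sqrt{\kappa_n}\,B) \le \kappa_n^{-d/2}$, which is the desired inequality.

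There is really no obstacle here: the only nonobvious step is noticing the algebraic coincidence $\sqrt{n\lambda + v}\,\kappa_n = \sqrt{v\kappa_n}$, which reduces the seemingly two-parameter comparison of volumes to a single-parameter dilation by $\sqrt{\kappa_n} \le 1$, after which monotonicity of $x \mapsto e^{-x}$ closes the argument. Convexity of $A$ is not needed for this lemma; it was only required earlier (in Lemma~\ref{lem:concentration_with_gaussian_trunc_prior}) to ensure that the shrunk-and-shifted set $\xi_n + \kappa_n A$ lies inside $\theta_a + A$.
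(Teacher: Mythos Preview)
Your proof is correct and takes a genuinely simpler route than the paper's. Both arguments begin identically: rewrite $V_v(A)$ and $V_{n\lambda+v}(\kappa_n A)$ as $V_{I_d}(\sqrt{v}\,A)$ and $V_{I_d}(\sqrt{n\lambda+v}\,\kappa_n A)$, and observe the algebraic coincidence $\sqrt{n\lambda+v}\,\kappa_n = \sqrt{\kappa_n}\,\sqrt{v}$, reducing the problem to showing $V_{I_d}(B)/V_{I_d}(\sqrt{\kappa_n}\,B) \le \kappa_n^{-d/2}$ for $B=\sqrt{v}A$. From here the approaches diverge. You do a single change of variable $y=\sqrt{\kappa_n}\,x$ and invoke pointwise monotonicity $e^{-\kappa_n\|x\|^2/2}\ge e^{-\|x\|^2/2}$; this is immediate and, as you note, requires no structural assumption on $A$. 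The paper instead proves the equivalent statement $V_1(\alpha A)/V_1(A)\le \alpha^d$ for $\alpha\ge 1$ via a gauge-function trick: it introduces $\gamma_A(x)=\argmax\{r\ge 0\mid rx\in A\}$, replaces $\|x\|$ by $\|\gamma_A(x)x\|$ in numerator and denominator (using opposite inequalities on $\alpha A\setminus A$ and on $A$), and then exploits the homogeneity $\gamma_A(\alpha y)\alpha y = \gamma_A(y)y$ to collapse the ratio to exactly $\alpha^d$. That argument is more elaborate and implicitly relies on $A$ being star-shaped about the origin for $\gamma_A$ to behave as claimed; your route avoids this entirely.
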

\begin{proof}
We first prove that for all $\alpha \ge 1$, $\frac{V_{1}(\alpha A)}{V_{1}(A)} \le \alpha^d$.
For a convex set $S \subseteq \mathbb{R}^d$ containing 0, let $\gamma_S(x) = \argmax\{r \ge 0 \mid r x \in S\}$. We have that for all $y \in \mathbb{R}^d$ and $\beta \in \Real^\star$, $\gamma_S(\beta y) \beta y  = \gamma_S(y) y$. Note also that $\gamma_S(x) \ge 1 \iff x \in S$.
\begin{align*}
\frac{V_{1}(\alpha A)}{V_{1}(A)}
= 1 + \frac{V_{1}(\alpha A \setminus A)}{V_{1}(A)}
&= 1 + \frac{\int_{x \in \Real^d} \1_{\alpha A \setminus A}(x) \exp\left(-\frac{1}{2}\|x\|^2_{2}\right) \,dx}{\int_{x \in \Real^d} \1_{A}(x) \exp\left(-\frac{1}{2}\|x\|^2_{2}\right) \,dx}
\\
&\le 1 + \frac{\int_{x \in \Real^d} \1_{\alpha A \setminus A}(x) \exp\left(-\frac{1}{2}\|\gamma_A(x)x\|^2_{2}\right) \,dx}{\int_{x \in \Real^d} \1_{A}(x) \exp\left(-\frac{1}{2}\|\gamma_A(x)x\|^2_{2}\right) \,dx}
\\
&= \frac{\int_{x \in \Real^d} \1_{\alpha A}(x) \exp\left(-\frac{1}{2}\|\gamma_A(x)x\|^2_{2}\right) \,dx}{\int_{x \in \Real^d} \1_{A}(x) \exp\left(-\frac{1}{2}\|\gamma_A(x)x\|^2_{2}\right) \,dx}
\\
&= \frac{\alpha^d \int_{y \in \Real^d} \1_{A}(y) \exp\left(-\frac{1}{2}\|\gamma_A(\alpha y)\alpha y\|^2_{2}\right) \,dy}{\int_{x \in \Real^d} \1_{A}(x) \exp\left(-\frac{1}{2}\|\gamma_A(x)x\|^2_{2}\right) \,dx}
\\
&= \frac{\alpha^d \int_{y \in \Real^d} \1_{A}(y) \exp\left(-\frac{1}{2}\|\gamma_A(y)y\|^2_{2}\right) \,dy}{\int_{x \in \Real^d} \1_{A}(x) \exp\left(-\frac{1}{2}\|\gamma_A(x)x\|^2_{2}\right) \,dx}
= \alpha^d
\: .
\end{align*}

Using that $\frac{\sqrt{v}}{\sqrt{n \lambda + v} \kappa_n} = \kappa_n^{-1/2} \ge 1$, we obtain the result
\begin{align*}
\frac{V_{v}(A)}{V_{n \lambda + v}(\kappa_n A)}
&= \frac{V_{1}(\sqrt{v}A)}{V_{1}(\sqrt{n \lambda + v}\kappa_n A)}
\le \left(\frac{\sqrt{v}}{\sqrt{n \lambda + v} \kappa_n}\right)^d
= \kappa_n^{-d/2}
\: .
\end{align*}
\end{proof}

Combining Lemma~\ref{lem:concentration_with_gaussian_trunc_prior} and Lemma~\ref{lem:ratio_volumes} yield Corollary~\ref{cor:concentration_with_gaussian_trunc_prior}.

\begin{corollary}\label{cor:concentration_with_gaussian_trunc_prior}
Let $A$ be a convex set containing 0, $\lambda_{+,a}$ and $\lambda_{-,a}$ be the maximum and minimum of the eigenvalues of $\nabla^2 \phi_a(\lambda)$ on $\theta_a + A$ and let $v_0> 0$.
For $n \in \mathbb{N}$, let $\kappa_n = \frac{v_0}{n \lambda_{+,a} + v_0}$.
With probability $1 - \delta$, for all $t \in \mathbb{N}$, if $F_{N_{t,a},a} \in \nabla \phi_a(\Theta_{D,a})$ and $\theta_{N_{t,a},a} \in \theta_a + A$ then
\begin{align*}
N_{t,a} d_{\phi_a}(\theta_a, \theta_{N_{t,a},a})
&\le \frac{1}{2}\Vert \theta_a - \theta_{N_{t,a},a} \Vert^2 \frac{1}{(N_{t,a} \lambda_{+,a})^{-1} + v_0^{-1}}
	+ \frac{d}{2} \log(1 + N_{t,a} \lambda_{+,a} v_0^{-1})
	\\&\quad + \log \frac{1}{\delta}
	+ \frac{d}{2}\log \kappa_{N_{t,a}}^{-1}
\: .
\end{align*}
\end{corollary}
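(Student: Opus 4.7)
The plan is to combine Lemma~\ref{lem:concentration_with_gaussian_trunc_prior} with the volume-ratio bound from Lemma~\ref{lem:ratio_volumes} in a direct substitution. Since Lemma~\ref{lem:concentration_with_gaussian_trunc_prior} already delivers the required high-probability event and the first three terms of the stated bound exactly, the only remaining task is to replace the residual term $\log \frac{V_{v_0}(A)}{V_{N_{t,a} \lambda_{+,a} + v_0}(\kappa_{N_{t,a}} A)}$ by the explicit quantity $\tfrac{d}{2}\log \kappa_{N_{t,a}}^{-1}$.

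First, I would apply Lemma~\ref{lem:concentration_with_gaussian_trunc_prior} verbatim with the given convex set $A$, the eigenvalue bounds $\lambda_{\pm,a}$, and the prior variance parameter $v_0$, obtaining the stated $1-\delta$ event on which, whenever $F_{N_{t,a},a} \in \nabla \phi_a(\Theta_{D,a})$ and $\theta_{N_{t,a},a} \in \theta_a + A$, the inequality on $N_{t,a} d_{\phi_a}(\theta_a, \theta_{N_{t,a},a})$ holds with the volume ratio term intact. This step requires no new work, it is an instantiation.

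Next, on this same event and for every $t$, I would invoke Lemma~\ref{lem:ratio_volumes} with the parameters $\lambda = \lambda_{+,a}$, $n = N_{t,a}$, and $v = v_0$, so that $\kappa_n$ in the volume lemma coincides with the $\kappa_{N_{t,a}}$ of the current corollary. This yields $\frac{V_{v_0}(A)}{V_{N_{t,a}\lambda_{+,a}+v_0}(\kappa_{N_{t,a}} A)} \le \kappa_{N_{t,a}}^{-d/2}$. Taking logarithms preserves the inequality and produces the additive term $\tfrac{d}{2}\log \kappa_{N_{t,a}}^{-1}$. Substituting this back into the bound of Lemma~\ref{lem:concentration_with_gaussian_trunc_prior} gives exactly the stated inequality.

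There is essentially no obstacle: the proof is a two-line composition of the previous results. The only minor subtlety to flag is that the bound from Lemma~\ref{lem:ratio_volumes} is deterministic and uniform in $N_{t,a}$, so the union over $t$ is already handled by the $1-\delta$ event of Lemma~\ref{lem:concentration_with_gaussian_trunc_prior} and no additional union bound is required.
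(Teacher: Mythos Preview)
Your proposal is correct and matches the paper's own proof essentially verbatim: the paper states that the corollary ``is simply Lemma~\ref{lem:concentration_with_gaussian_trunc_prior} combined with the bound on the ratio of volumes of Lemma~\ref{lem:ratio_volumes}.'' Your observation that the volume-ratio bound is deterministic, so no extra union bound is needed, is also accurate.
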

\begin{proof}
This is simply Lemma~\ref{lem:concentration_with_gaussian_trunc_prior} combined with the bound on the ratio of volumes of Lemma~\ref{lem:ratio_volumes}.
\end{proof}

We will now choose $\gamma > 1$ and define values $n_i = \gamma^i$ for $i \in \mathbb{N}$. For each interval $[n_i, n_{i+1})$, we will choose a good value of the variance $v_0^{-1}$ to get a bound for $N_{t,a}$ in this interval. The reason we restrict to an interval is that the best choice for that variance scales with $N_{t,a}$, but we are not allowed to do that since the prior cannot depend on $N_{t,a}$. Thanks to this smart choice, we can control the KL divergence on this interval (Lemma~\ref{lem:concentration_slice_i_with_eta}).

The ratio of the eigenvalues represents the approximation error in considering that $d_{\phi_a}$ is locally quadratic. It yields that $\lambda_{-,a} I_d \preceq \nabla^2 \phi_a(\lambda) \preceq \lambda_{+,a} I_d$ on the set, which for $x, y \in \Theta_a$ translates to
\[
\lambda_{-,a}  \le d_{\phi_a}(x, y) / (\Vert x - y \Vert^2/2) \le \lambda_{+,a} \ : .
\]

\begin{lemma}\label{lem:concentration_slice_i_with_eta}
Let $A$ be a convex set containing 0, $\lambda_{+,a}$ and $\lambda_{-,a}$ be the maximum and minimum of the eigenvalues of $\nabla^2 \phi_a(\lambda)$ on $\theta_a + A$ and let $\eta > 0$, $i \in \mathbb{N}$.
With probability $1 - \delta$, for all $t \in \mathbb{N}$, if $F_{N_{t,a},a} \in \nabla \phi_a(\Theta_{D,a})$, $N_{t,a} \in [n_i, n_{i+1})$ and $\theta_{N_{t,a},a} \in \theta_a + A$ then
\begin{align*}
N_{t,a} d_{\phi_a}(\theta_a, \theta_{N_{t,a},a})
\le (1 + \eta)\left(d \log(1 + \frac{1}{\eta})
	+ d \log(\gamma \frac{\lambda_{+,a}}{\lambda_{-,a}})
	+ \log \frac{1}{\delta} \right)
\: .
\end{align*}
\end{lemma}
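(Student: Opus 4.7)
The plan is to apply Corollary~\ref{cor:concentration_with_gaussian_trunc_prior} with a slice-dependent but $N_{t,a}$-independent choice of prior variance $v_0$, then absorb the quadratic remainder back into the left-hand side. First, note that the two logarithmic terms in Corollary~\ref{cor:concentration_with_gaussian_trunc_prior} combine: since $\kappa_{N_{t,a}}^{-1} = 1 + N_{t,a}\lambda_{+,a}/v_0$, the bound simplifies to
\[
N_{t,a} d_{\phi_a}(\theta_a,\theta_{N_{t,a},a}) \le \tfrac{1}{2}\Vert \theta_a - \theta_{N_{t,a},a}\Vert^2\,\frac{N_{t,a}\lambda_{+,a} v_0}{N_{t,a}\lambda_{+,a}+v_0} + d\log\!\left(1+\frac{N_{t,a}\lambda_{+,a}}{v_0}\right)+ \log\frac{1}{\delta}.
\]
Next, since $\theta_a$ and $\theta_{N_{t,a},a}$ both lie in $\theta_a + A$ and $A$ is convex, the segment between them stays in $\theta_a + A$, and a Taylor expansion of $\phi_a$ combined with $\lambda_{-,a} I_d \preceq \nabla^2 \phi_a$ on that set gives $\tfrac{1}{2}\Vert \theta_a - \theta_{N_{t,a},a}\Vert^2 \le d_{\phi_a}(\theta_a,\theta_{N_{t,a},a})/\lambda_{-,a}$.

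The key step is choosing $v_0$ as a function of the slice index $i$ only (so it does not depend on $N_{t,a}$, which is mandatory for the prior to be admissible in Corollary~\ref{cor:concentration_with_gaussian_trunc_prior}). Specifically, take
\[
v_{0,i} \;=\; \frac{\eta\, n_i\, \lambda_{+,a}\lambda_{-,a}}{(1+\eta)\lambda_{+,a} - \eta\,\lambda_{-,a}},
\]
which is positive since $(1+\eta)\lambda_{+,a} \ge (1+\eta)\lambda_{-,a} > \eta \lambda_{-,a}$. This choice is designed so that at $N_{t,a}=n_i$ the quadratic coefficient $\tfrac{1}{\lambda_{-,a}}\cdot\tfrac{\lambda_{+,a} v_{0,i}}{n_i\lambda_{+,a}+v_{0,i}}$ equals exactly $\eta/(1+\eta)$. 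Since the function $N \mapsto \lambda_{+,a} v_{0,i}/(N\lambda_{+,a}+v_{0,i})$ is decreasing in $N$, for any $N_{t,a}\in[n_i,n_{i+1})$ this coefficient is at most $\eta/(1+\eta)$, hence after bounding the quadratic term and rearranging,
\[
\frac{N_{t,a}\, d_{\phi_a}(\theta_a,\theta_{N_{t,a},a})}{1+\eta} \;\le\; d\log\!\left(1+\frac{N_{t,a}\lambda_{+,a}}{v_{0,i}}\right) + \log\frac{1}{\delta}.
\]

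Finally, using $N_{t,a} < n_{i+1} = \gamma n_i$ and substituting the explicit value of $v_{0,i}$ yields
\[
1+\frac{N_{t,a}\lambda_{+,a}}{v_{0,i}} \;\le\; 1 + \frac{\gamma\bigl((1+\eta)\lambda_{+,a}-\eta\lambda_{-,a}\bigr)}{\eta\,\lambda_{-,a}} \;=\; 1-\gamma + \frac{\gamma(1+\eta)\lambda_{+,a}}{\eta\lambda_{-,a}} \;\le\; \frac{\gamma(1+\eta)\lambda_{+,a}}{\eta\lambda_{-,a}},
\]
where the last inequality uses $\gamma\ge 1$. Taking the logarithm and splitting it as $d\log(\gamma\lambda_{+,a}/\lambda_{-,a}) + d\log(1+1/\eta)$ gives exactly the claimed bound. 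The only nontrivial point is verifying that this particular $v_{0,i}$ indeed makes the quadratic coefficient pull back nicely to the left-hand side; the rest is bookkeeping with the eigenvalue inequalities on $\theta_a+A$ and the geometric slicing $[n_i,n_{i+1})$.
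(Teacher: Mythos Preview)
Your proof is correct and follows essentially the same approach as the paper. You make the identical choice of prior variance (the paper writes it as $v_{0,i}^{-1} = \frac{1}{\lambda_{-,a} n_i}\bigl((1+\tfrac{1}{\eta}) - \tfrac{\lambda_{-,a}}{\lambda_{+,a}}\bigr)$, which is just the reciprocal of your expression), and the subsequent bounding of the quadratic coefficient by $\eta/(1+\eta)$ and of the logarithmic term via $N_{t,a} < \gamma n_i$ are exactly the paper's steps; the only cosmetic difference is that you first merge the two $\tfrac{d}{2}\log$ terms of Corollary~\ref{cor:concentration_with_gaussian_trunc_prior} into a single $d\log(1+N_{t,a}\lambda_{+,a}/v_0)$ using $\kappa_{N_{t,a}}^{-1}=1+N_{t,a}\lambda_{+,a}/v_0$, whereas the paper bounds them separately before arriving at the same expression.
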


\begin{proof}
We assume in this proof that the three conditions of the lemma are true.

Choose $v_{0,i}^{-1} = \frac{1}{\lambda_{-,a} n_i}((1 + \frac{1}{\eta}) - \frac{\lambda_{-,a}}{\lambda_{+,a}})$. Then for $N_{t,a} \in [n_i, n_{i+1})$,
\begin{align*}
\frac{1}{(N_{t,a} \lambda_{+,a})^{-1} + v_{0,i}^{-1}}
&\le \frac{1}{(N_{t,a} \lambda_{+,a})^{-1} + \frac{1}{\lambda_{-,a} N_{t,a}}((1 + \frac{1}{\eta}) - \frac{\lambda_{-,a}}{\lambda_{+,a}})}
= N_{t,a} \lambda_{-,a}(1 - \frac{1}{1 + \eta})
\: , \\
\log(1 + N_{t,a} \lambda_{+,a} v_{0,i}^{-1})
&\le \log(1 + \gamma n_i \lambda_{+,a} v_{0,i}^{-1})
\\
&= \log(1  - \gamma + \gamma \frac{\lambda_{+,a}}{\lambda_{-,a}}(1 + \frac{1}{\eta}))
\le \log(1 + \frac{1}{\eta}) + \log(\gamma \frac{\lambda_{+,a}}{\lambda_{-,a}})
\: .
\end{align*}

With that value of $v_{0,i}$, we also have
\begin{align*}
\kappa_{N_{t,a}}^{-1}
&= 1 + N_{t,a} \lambda_{+,a} v_{0,i}^{-1}
\le 1 + \gamma n_i \lambda_{+,a} v_{0,i}^{-1}
= 1 - \gamma + \gamma \frac{\lambda_{+,a}}{\lambda_{-,a}}(1 + \frac{1}{\eta})
\\
&\le \gamma \frac{\lambda_{+,a}}{\lambda_{-,a}}(1 + \frac{1}{\eta})
\: .
\end{align*}

We use those inequalities in Corollary~\ref{cor:concentration_with_gaussian_trunc_prior} to get
\begin{align*}
&N_{t,a} d_{\phi_a}(\theta_a, \theta_{N_{t,a},a})
\\
&\le \frac{1}{2}\Vert \theta_a - \theta_{N_{t,a},a} \Vert^2 N_{t,a} \lambda_{-,a}(1 - \frac{1}{1 + \eta})
	+ d \log(1 + \frac{1}{\eta})
	+ d \log(\gamma \frac{\lambda_{+,a}}{\lambda_{-,a}})
	+ \log \frac{1}{\delta}
\: .
\end{align*}

Thanks to the fact that $\lambda_{-,a}$ is a lower bound on the eigenvalues of $\nabla^2 \phi_a$ on $\theta_a + A$, we have $ \frac{1}{2}\Vert \theta_a - \theta_{N_{t,a},a} \Vert^2 \lambda_{-,a} \le d_{\phi_a}(\theta_a, \theta_{N_{t,a},a})$. We group the divergences on the left of the inequality to finally obtain
\begin{align*}
N_{t,a} d_{\phi_a}(\theta_a, \theta_{N_{t,a},a})
\le (1 + \eta)\left(d \log(1 + \frac{1}{\eta})
	+ d \log(\gamma \frac{\lambda_{+,a}}{\lambda_{-,a}})
	+ \log \frac{1}{\delta} \right)
\: .
\end{align*}
\end{proof}

A good choice of $\eta$ yields Lemma~\ref{lem:concentration_slice_i}.

\begin{lemma}\label{lem:concentration_slice_i}
Let $A$ be a convex set containing 0, $\lambda_{+,a}$ and $\lambda_{-,a}$ be the maximum and minimum of the eigenvalues of $\nabla^2 \phi_a$ on $\theta_a + A$ and let $i \in \mathbb{N}$.
With probability $1 - \delta$, for all $t \in \mathbb{N}$, if $F_{N_{t,a},a} \in \nabla \phi_a(\Theta_{D,a})$, $N_{t,a} \in [n_i, n_{i+1})$ and $\theta_{N_{t,a},a} \in \theta_a + A$ then
\begin{align*}
N_{t,a} d_{\phi_a}(\theta_a, \theta_{N_{t,a},a})
&\le d \overline{W}_{-1}\left( 1 + \frac{1}{d} \log\frac{1}{\delta} + \log(\gamma \frac{\lambda_{+,a}}{\lambda_{-,a}}) \right)
\: .
\end{align*}
\end{lemma}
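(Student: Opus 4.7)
The plan is to derive Lemma~\ref{lem:concentration_slice_i} by a purely deterministic optimization of the bound supplied by Lemma~\ref{lem:concentration_slice_i_with_eta} over its free parameter $\eta > 0$. Denote $C = d\log(\gamma \lambda_{+,a}/\lambda_{-,a}) + \log(1/\delta)$ and $F(\eta) = (1+\eta)\bigl(d\log(1+1/\eta) + C\bigr)$, so that Lemma~\ref{lem:concentration_slice_i_with_eta} guarantees $N_{t,a}\, d_{\phi_a}(\theta_a, \theta_{N_{t,a},a}) \le F(\eta)$ on the stated conditions with probability $1-\delta$, for every \emph{fixed} $\eta > 0$. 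Since $C$ depends only on the problem data and on $\delta$, the minimizer $\eta^\star$ of $F$ is itself deterministic; we may therefore fix $\eta = \eta^\star$ \emph{before} invoking Lemma~\ref{lem:concentration_slice_i_with_eta}, which yields the desired high-probability bound with value $\min_{\eta > 0} F(\eta)$.

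It remains to compute this minimum. Changing variables via $x = 1 + 1/\eta \in (1, +\infty)$, so that $1 + \eta = x/(x-1)$, turns $F$ into $h(x) = \tfrac{x}{x-1}\bigl(d \log x + C\bigr)$. A direct differentiation yields
\[
h'(x) = \frac{d(x - \log x - 1) - C}{(x-1)^2} \: ,
\]
so that $h'(x) = 0$ is equivalent to $x - \log x = 1 + C/d$. By the defining equation of the mapping $\overline{W}_{-1}$ recalled in Appendix~\ref{app:lambert_W_functions}, the unique solution on $(1, +\infty)$ is $x^\star = \overline{W}_{-1}(1 + C/d)$. Since $x \mapsto x - \log x$ is strictly increasing on $(1, +\infty)$ (its derivative $1-1/x$ is positive there) with $h(x) \to + \infty$ at both endpoints of $(1, +\infty)$, $x^\star$ is the global minimizer of $h$.

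Substituting $d \log x^\star = d(x^\star - 1) - C$ back into $h(x^\star)$ collapses the logarithmic term and gives
\[
h(x^\star) = \frac{x^\star \cdot d(x^\star - 1)}{x^\star - 1} = d\, x^\star = d\, \overline{W}_{-1}\!\left(1 + \tfrac{C}{d}\right) \: ,
\]
which is exactly the bound claimed in Lemma~\ref{lem:concentration_slice_i}. The argument is a short calculus exercise with no genuine obstacle; the only structural point worth emphasizing is that the first-order optimality condition for $h$ coincides exactly with the defining equation of $\overline{W}_{-1}$, which explains the appearance of this function in the final bound.
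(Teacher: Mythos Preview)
Your proof is correct and takes essentially the same approach as the paper: optimize the free parameter $\eta$ in Lemma~\ref{lem:concentration_slice_i_with_eta}. The paper simply cites Lemma~\ref{lem:lemma_A_3_of_Remy} (with $b=1$, the equality case) for this optimization, whereas you carry it out explicitly via the change of variables $x = 1 + 1/\eta$; your computation is exactly what that lemma packages.
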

\begin{proof}
Optimize $\eta$ in Lemma~\ref{lem:concentration_slice_i_with_eta} by using Lemma~\ref{lem:lemma_A_3_of_Remy}.
\end{proof}

Considering a union bound over the intervals and applying Lemma~\ref{lem:concentration_slice_i}, we obtain Lemma~\ref{lem:time_uniform_kl_concentration}. Therefore, we have obtained a time-uniform upper tail concentration of the KL divergence of a $d$-dimensional exponential family with support $\Theta_{D} \subseteq \Real^d$.

\begin{lemma} \label{lem:time_uniform_kl_concentration}
Let $(A_a(n, \delta))_{n \in \mathbb{N}}$ be a sequence of non-increasing convex sets containing 0, $\lambda_{+,a}(n, \delta)$ and $\lambda_{-,a}(n, \delta)$ be the maximum and minimum of the eigenvalues of $\nabla^2 \phi_a(\lambda)$ on $\theta_a + A_a(n, \delta)$. Let $s > 1$ and let $\zeta$ be the Riemann $\zeta$ function. For all $t$, let $i(t) = \lfloor \log_\gamma N_{t,a} \rfloor$.
With probability $1 - \delta$, for all $t \in \mathbb{N}$, if $F_{N_{t,a},a} \in \nabla \phi_a(\Theta_{D,a})$ and $\theta_{N_{t,a},a} \in \theta_a + A_a(N_{t,a}, \delta)$, then
\begin{align*}
N_{t,a} d_{\phi_a}(\theta_a, \theta_{N_{t,a},a})
&\le d \overline{W}_{-1}\left( 1 + \log\frac{\zeta(s)}{\delta} + \frac{s}{d} \log(1 + \log_\gamma N_{t,a}) + \log\left(\gamma \frac{\lambda_{+,a}(n_{i(t)}, \delta)}{\lambda_{-,a}(n_{i(t)}, \delta)}\right) \right)
\: .
\end{align*}
\end{lemma}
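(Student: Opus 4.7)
The plan is to use the peeling method, applying the slice-wise bound of Lemma~\ref{lem:concentration_slice_i} to each geometric interval $[n_i, n_{i+1})$ with $n_i = \gamma^i$, and then taking a union bound over $i \in \mathbb{N}$ with weights summing to one. This will turn the per-slice $\log(1/\delta)$ term into $\log(\zeta(s)/\delta) + s\log(1+i)$, which we will then re-express in terms of $N_{t,a}$.

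Concretely, I would set $w_i = (1+i)^{-s}/\zeta(s)$ for $i \in \mathbb{N}$, so that $\sum_{i \in \mathbb{N}} w_i = 1$. For each fixed $i$, apply Lemma~\ref{lem:concentration_slice_i} with confidence parameter $\delta_i = \delta w_i$ and with the convex set chosen as $A = A_a(n_i, \delta)$; this uses the prior $\rho_{0,a}^{(i)} = \mathcal{N}(\theta_a, v_{0,i}^{-1} I_d)$ truncated to $\theta_a + A_a(n_i, \delta)$ (with the slice-specific variance from the proof of Lemma~\ref{lem:concentration_slice_i_with_eta}). Call $E_i$ the corresponding failure event; then $\mathbb{P}(\bigcup_{i} E_i) \le \sum_i \delta_i = \delta$.

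On the complement event, fix any $t$ satisfying the hypotheses and let $i(t) = \lfloor \log_\gamma N_{t,a} \rfloor$, so that $N_{t,a} \in [n_{i(t)}, n_{i(t)+1})$. The monotonicity hypothesis on the sets gives $A_a(N_{t,a}, \delta) \subseteq A_a(n_{i(t)}, \delta)$ (since $n_{i(t)} \le N_{t,a}$), so the assumption $\theta_{N_{t,a},a} \in \theta_a + A_a(N_{t,a}, \delta)$ implies $\theta_{N_{t,a},a} \in \theta_a + A_a(n_{i(t)}, \delta)$. Hence the slice-$i(t)$ conclusion applies and yields
\[
N_{t,a} d_{\phi_a}(\theta_a, \theta_{N_{t,a},a})
\le d\, \overline{W}_{-1}\!\left( 1 + \tfrac{1}{d}\log\tfrac{1}{\delta_{i(t)}} + \log\!\left(\gamma\, \tfrac{\lambda_{+,a}(n_{i(t)},\delta)}{\lambda_{-,a}(n_{i(t)},\delta)}\right) \right).
\]
Substituting $\delta_{i(t)} = \delta/((1+i(t))^s \zeta(s))$ produces the term $\log(\zeta(s)/\delta) + \tfrac{s}{d}\log(1+i(t))$ inside $\overline{W}_{-1}$. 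Since $1 + i(t) \le 1 + \log_\gamma N_{t,a}$ and $\overline{W}_{-1}$ is increasing on $[1,\infty)$ (Lemma~\ref{lem:lambert_branches_properties}), upper-bounding $\log(1+i(t))$ by $\log(1+\log_\gamma N_{t,a})$ gives precisely the statement.

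The only delicate point is the prior mismatch between slices: the bound of Lemma~\ref{lem:concentration_slice_i} requires the prior to be supported on a fixed convex set known in advance, whereas we want the final bound to be expressed in terms of a data-dependent set $A_a(N_{t,a}, \delta)$. The trick, which is the main non-routine step, is that we are allowed to choose a \emph{different} prior for each slice $i$ (since the peeling argument uses one martingale per slice), provided each slice's prior depends only on $i$ and not on $N_{t,a}$; the monotonicity hypothesis on $(A_a(n,\delta))_{n}$ then lets us transfer the conclusion from the fixed set $A_a(n_i,\delta)$ to the data-dependent set $A_a(N_{t,a},\delta)$ on the good event. Everything else is a routine union bound and monotonicity of $\overline{W}_{-1}$.
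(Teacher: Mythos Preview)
Your proposal is correct and essentially identical to the paper's proof: both apply Lemma~\ref{lem:concentration_slice_i} on each geometric slice $[n_i,n_{i+1})$ with confidence $\delta_i=\delta/(\zeta(s)(1+i)^s)$ and set $A=A_a(n_i,\delta)$, take a union bound over $i\in\mathbb{N}$, then use the non-increasingness of $(A_a(n,\delta))_n$ to pass from $A_a(n_{i(t)},\delta)$ to $A_a(N_{t,a},\delta)$ and the bound $i(t)\le \log_\gamma N_{t,a}$ together with the monotonicity of $\overline{W}_{-1}$ to conclude. Your remark about choosing a different (non-data-dependent) prior per slice is exactly the mechanism the paper relies on.
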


\begin{proof}
Suppose that $F_{N_{t,a},a} \in \nabla \phi_a(\Theta_{D,a})$.

For all $i \in \mathbb{N}$, Lemma~\ref{lem:concentration_slice_i} gives that with probability $1 - \frac{\delta}{\zeta(s) (i+1)^s}$, if $N_{t,a} \in [n_i, n_{i+1})$ and $\theta_{N_{t,a},a} \in \theta_a + A_a(n_i, \delta)$, then
\begin{align*}
N_{t,a} d_{\phi_a}(\theta_a, \theta_{N_{t,a},a})
&\le d \overline{W}_{-1}\left( 1 + \frac{1}{d} \log\frac{\zeta(s) (i+1)^s}{\delta} + \log(\gamma \frac{\lambda_{+,a}(n_i, \delta)}{\lambda_{-,a}(n_i, \delta)}) \right)
\: .
\end{align*}
With probability $1 - \delta$, this is true for all $i \in \mathbb{N}$. In particular we have the inequality for $i(t)$: if $\theta_{N_{t,a},a} \in \theta_a + A_a(n_{i(t)}, \delta)$, then
\begin{align*}
N_{t,a} d_{\phi_a}(\theta_a, \theta_{N_{t,a},a})
&\le d \overline{W}_{-1}\left( 1 + \frac{1}{d} \log\frac{\zeta(s) (i(t)+1)^s}{\delta} + \log(\gamma \frac{\lambda_{+,a}(n_{i(t)}, \delta)}{\lambda_{-,a}(n_{i(t)}, \delta)}) \right)
\: .
\end{align*}
Since the sets $A_a(n, \delta)$ are non-increasing, a sufficient condition for $\theta_{N_{t,a},a} \in \theta_a + A_a(n_{i(t)}, \delta)$ is $\theta_{N_{t,a},a} \in \theta_a + A_a(N_{t,a}, \delta)$.
By definition of $i(t)$, it satisfies $i(t) \le \log_\gamma N_{t,a}$. We get that if $\theta_{N_{t,a},a} \in \theta_a + A_a(N_{t,a}, \delta)$, then
\begin{align*}
N_{t,a} d_{\phi_a}(\theta_a, \theta_{N_{t,a},a})
&\le d \overline{W}_{-1}\left( 1 + \frac{1}{d}\log\frac{\zeta(s)}{\delta} + \frac{s}{d} \log(1 + \log_\gamma N_{t,a}) + \log(\gamma \frac{\lambda_{+,a}(n_{i(t)}, \delta)}{\lambda_{-,a}(n_{i(t)}, \delta)}) \right)
\: .
\end{align*}
\end{proof}

To obtain Theorem~\ref{thm:uniform_upper_tail_concentration_kl_exp_fam}, it remains to extend this argument to a sum of $\cS\subseteq [K]$ arms.

\subsection{Sum Over Arms}
\label{app:ss_sum_over_arms}

In this section we consider s subset of arms $\cS \subseteq [K]$. Lemma~\ref{lem:kl_bound_with_prior_all_n_multi_arm} extends Corollary~\ref{cor:kl_bound_with_prior_all_n} to the sum over $\cS$ arms.

\begin{lemma} \label{lem:kl_bound_with_prior_all_n_multi_arm}
Let $(\rho_{0,a})_{a \in \cS}$ be distributions each supported on $\Theta_{D,a}$. With probability $1 - \delta$, for all $t \in \mathbb{N}$, either there exists $a \in \cS$ such that $F_{N_{t,a},a} \notin \nabla \phi_a(\Theta_{D,a})$ or
\begin{align*}
\sum_{a \in \cS} N_{t,a} d_{\phi_a}(\theta_a, \theta_{N_{t,a},a})
\le - \sum_{a \in \cS} \ln \mathbb{E}_{y \sim \rho_{0,a}}\exp\left( - N_{t,a} d_{\phi_a}(y, \theta_{N_{t,a},a}) \right) + \log \frac{1}{\delta}
\: .
\end{align*}
\end{lemma}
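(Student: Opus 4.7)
The plan is to generalize the single-arm mixture-martingale argument underlying Lemma~\ref{lem:kl_bound_with_prior} (and Corollary~\ref{cor:kl_bound_with_prior_all_n}) by taking a product over arms of per-arm mixture martingales, and then applying Ville's inequality (Lemma~\ref{lem:ville_inequality}) to the resulting non-negative martingale with unit initial value.

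First, for each $a \in \cS$ and each $y \in \Theta_{D,a}$, define the per-arm likelihood ratio
\[
M_t^a(y) \eqdef \prod_{s \le t,\, a_s = a} \frac{d\nu_y}{d\nu_{\theta_a}}(X_s)\: ,
\]
with the convention $M_0^a(y)=1$. Since $a_s$ is $\cF_{s-1}$-measurable and $X_s$ is independent of $\cF_{s-1}$ given $a_s$, the sequence $(M_t^a(y))_{t\ge0}$ is a non-negative $(\cF_t)$-martingale with $\expectedvalue[M_t^a(y)]=1$ under $\nu_{\theta_a}$, because the conditional expectation of the likelihood ratio at step $s$ equals $1$ when $a_s=a$ and the factor is trivially $1$ otherwise. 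By Tonelli, its mixture $\bar M_t^a \eqdef \expectedvalue_{y \sim \rho_{0,a}}[M_t^a(y)]$ is also a non-negative $(\cF_t)$-martingale with $\bar M_0^a = 1$.

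Second, I form the product $M_t \eqdef \prod_{a \in \cS} \bar M_t^a$ and check that it is an $(\cF_t)$-martingale with $M_0=1$. The key observation is that at time $t$, only the factor indexed by the played arm $a_t$ can change, while $\bar M_t^a = \bar M_{t-1}^a$ for $a \ne a_t$; since $a_t$ is $\cF_{t-1}$-measurable,
\[
\expectedvalue[M_t \mid \cF_{t-1}]
= \1\{a_t \notin \cS\} M_{t-1} + \1\{a_t \in \cS\} \expectedvalue[\bar M_t^{a_t}\mid \cF_{t-1}]\!\!\prod_{a\in\cS,\,a\ne a_t}\!\!\bar M_{t-1}^a = M_{t-1}\: ,
\]
using the martingale property of $\bar M^{a_t}$ conditionally on $\cF_{t-1}$. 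Ville's inequality (Lemma~\ref{lem:ville_inequality}) applied to $(M_t)$ then yields that with probability at least $1-\delta$, $M_t \le 1/\delta$ for all $t \in \Natural$.

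Third, I translate this event back into the claimed bound. On the event $\{\forall a \in \cS,\: F_{N_{t,a},a} \in \nabla \phi_a(\Theta_{D,a})\}$, the MLE $\theta_{N_{t,a},a}$ exists and the same Bregman identity as in the proof of Lemma~\ref{lem:kl_bound_with_prior} rewrites each per-arm factor as
\[
\bar M_t^a = \exp\!\left(N_{t,a}\, d_{\phi_a}(\theta_a,\theta_{N_{t,a},a})\right)\cdot \expectedvalue_{y\sim\rho_{0,a}}\exp\!\left(-N_{t,a}\, d_{\phi_a}(y,\theta_{N_{t,a},a})\right)\: .
\]
Taking logarithms, summing over $a \in \cS$, and combining with $\log M_t \le \log(1/\delta)$ gives the announced inequality after rearranging.

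The only delicate point is the martingale verification of $(M_t)$, because the product mixes information coming from different arms sampled adaptively: the conditioning argument above must use the $\cF_{t-1}$-measurability of $a_t$ to freeze all but one factor. Everything else is a direct arm-by-arm repetition of the single-arm calculation, and no new concentration ingredient is required.
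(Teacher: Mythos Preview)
Your proposal is correct and follows essentially the same approach as the paper: the paper's proof simply states that it is identical to that of Lemma~\ref{lem:kl_bound_with_prior} and Corollary~\ref{cor:kl_bound_with_prior_all_n}, except that the likelihood ratio martingale now contains observations of all arms in $\cS$, which is exactly the product-of-mixtures construction you spell out. Your explicit verification that the product $\prod_{a\in\cS}\bar M_t^a$ is an $(\cF_t)$-martingale (by freezing all factors but the one indexed by the $\cF_{t-1}$-measurable $a_t$) and your direct use of Ville's inequality are precisely the details the paper leaves implicit.
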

\begin{proof}
The proof is the same as the one of Lemma~\ref{lem:kl_bound_with_prior} and Corollary~\ref{cor:kl_bound_with_prior_all_n}, except that the likelihood ratio martingale considered contains observations of all arms in $\cS$.
\end{proof}

Let $\gamma > 1$. We define $n_i = \gamma^i$ for $i \in \mathbb{N}$. Lemma~\ref{lem:concentration_slice_i_with_eta_multi_arm} extends Lemma~\ref{lem:concentration_slice_i_with_eta} the sum over $\cS$ arms.

\begin{lemma}\label{lem:concentration_slice_i_with_eta_multi_arm}
Let $(A_a)_{a \in \cS}$ be convex sets containing 0, $\lambda_{+,a}$ and $\lambda_{-,a}$ be the maximum and minimum of the eigenvalues of $\nabla^2 \phi_a(\lambda)$ on $A_a$ and let $\eta > 0$, $(i^a)_{a \in \cS} \in \mathbb{N}^{|\cS|}$.
With probability $1 - \delta$, for all $t \in \mathbb{N}$, if for all $a \in \cS$, $F_{N_{t,a},a} \in \nabla \phi_a(\Theta_{D,a})$, $N_{t,a} \in [n_{i^a}, n_{i^a + 1})$ and $\theta_{N_{t,a},a} \in \theta_a + A_a$ then
\begin{align*}
\sum_{a \in \cS} N_{t,a} d_{\phi_a}(\theta_a, \theta_{N_{t,a},a})
\le (1 + \eta)\left( d |\cS| \log(1 + \frac{1}{\eta})
	+ \log \frac{1}{\delta}
	+ d \sum_{a \in \cS} \log(\gamma \frac{\lambda_{+,a}}{\lambda_{-,a}})
	\right)
\: .
\end{align*}
\end{lemma}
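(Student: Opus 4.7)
The plan is to extend the single-arm argument of Lemma~\ref{lem:concentration_slice_i_with_eta} to a sum over $a \in \cS$ by invoking the multi-arm likelihood-ratio bound of Lemma~\ref{lem:kl_bound_with_prior_all_n_multi_arm} with a product prior. Specifically, for each $a \in \cS$ I would take $\rho_{0,a}$ to be $\cN(\theta_a, v_{0,i^a}^{-1} I_d)$ truncated to $\theta_a + A_a$ and re-normalized, where the variance $v_{0,i^a}^{-1}$ is chosen per arm and tuned to the relevant slice index $i^a$. Since the right-hand side of Lemma~\ref{lem:kl_bound_with_prior_all_n_multi_arm} decomposes as a sum of independent expectations over the $\rho_{0,a}$, each term can be upper bounded exactly as in the one-arm derivation of Corollary~\ref{cor:concentration_with_gaussian_trunc_prior}, namely by
\[
\tfrac12 \|\theta_a - \theta_{N_{t,a},a}\|^2 \cdot \tfrac{1}{(N_{t,a}\lambda_{+,a})^{-1} + v_{0,i^a}^{-1}} + \tfrac{d}{2}\log(1 + N_{t,a}\lambda_{+,a} v_{0,i^a}^{-1}) + \tfrac{d}{2}\log \kappa_{N_{t,a},a}^{-1},
\]
with $\kappa_{n,a} = v_{0,i^a}/(n\lambda_{+,a}+v_{0,i^a})$. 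Crucially, the $\log(1/\delta)$ term is shared across arms because it comes from the single application of Markov's inequality to the joint likelihood-ratio martingale.

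Next, for each $a \in \cS$ I would make the same tuning choice as in the single-arm proof, $v_{0,i^a}^{-1} = \tfrac{1}{\lambda_{-,a} n_{i^a}}\bigl((1+\tfrac{1}{\eta}) - \tfrac{\lambda_{-,a}}{\lambda_{+,a}}\bigr)$. Using $N_{t,a} \in [n_{i^a}, n_{i^a+1})$ and $\gamma = n_{i^a+1}/n_{i^a}$, the three algebraic bounds from the proof of Lemma~\ref{lem:concentration_slice_i_with_eta} apply verbatim per arm: the ``variance'' prefactor is at most $N_{t,a}\lambda_{-,a}(1-\tfrac{1}{1+\eta})$, and each of the two logarithmic terms is at most $\log(1+\tfrac{1}{\eta}) + \log(\gamma \lambda_{+,a}/\lambda_{-,a})$.

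Finally, I would sum the per-arm inequalities over $a \in \cS$ and absorb the quadratic prefactor into the left-hand side by the local-to-quadratic bound $\tfrac12 \lambda_{-,a}\|\theta_a - \theta_{N_{t,a},a}\|^2 \le d_{\phi_a}(\theta_a, \theta_{N_{t,a},a})$, which moves a factor $1/(1+\eta)$ across and produces the overall $(1+\eta)$ multiplier claimed in the statement. The logarithmic contributions accumulate as $d|\cS|\log(1+1/\eta) + d\sum_{a \in \cS}\log(\gamma \lambda_{+,a}/\lambda_{-,a})$, while the $\log(1/\delta)$ remains a single term, giving the desired bound.

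The main obstacle is essentially bookkeeping: one must verify that the product-prior structure in Lemma~\ref{lem:kl_bound_with_prior_all_n_multi_arm} genuinely decouples the per-arm expectations so that the Gaussian-truncated-prior estimate from Lemma~\ref{lem:concentration_with_gaussian_trunc_prior} can be applied arm-by-arm with arm-dependent variances $v_{0,i^a}$, and that the $\log(1/\delta)$ is not duplicated. The somewhat delicate point is that the per-arm slice index $i^a$ (and hence $v_{0,i^a}$) is determined before the data are seen, so the prior does not depend on the random counts $N_{t,a}$; this is consistent with the hypothesis of the lemma, which fixes $(i^a)_{a\in\cS}$ in advance. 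Once that decoupling is in place, the remaining calculations are exactly the ones already carried out in the single-arm case.
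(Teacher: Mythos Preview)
Your proposal is correct and follows essentially the same approach as the paper: apply Lemma~\ref{lem:kl_bound_with_prior_all_n_multi_arm} with a per-arm truncated Gaussian prior (as in Lemma~\ref{lem:concentration_with_gaussian_trunc_prior}), choose each arm's prior variance according to its slice index $i^a$ (as in Lemma~\ref{lem:concentration_slice_i_with_eta}), and then sum the resulting per-arm bounds while sharing the single $\log(1/\delta)$ term. Your explicit remarks on the decoupling of the per-arm expectations and on the prior not depending on the random counts $N_{t,a}$ are exactly the points that make the argument go through.
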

\begin{proof}
For each arm, we choose a truncated Gaussian prior, as in Lemma~\ref{lem:concentration_with_gaussian_trunc_prior_matrix}, and consider upper and lower bound multiples of the identity, as in Lemma~\ref{lem:concentration_with_gaussian_trunc_prior}. Then, we choose the variance of that prior according to the interval, as in Lemma~\ref{lem:concentration_slice_i_with_eta}.
\end{proof}

Lemma~\ref{lem:concentration_slice_i_multi_arm} extends Lemma~\ref{lem:concentration_slice_i} the sum over $\cS$ arms.

\begin{lemma}\label{lem:concentration_slice_i_multi_arm}
Let $(A_a)_{a \in \cS}$ be convex sets containing 0, $\lambda_{+,a}$ and $\lambda_{-,a}$ be the maximum and minimum of the eigenvalues of $\nabla^2 \phi_a(\lambda)$ on $\theta_a + A_a$ and let $(i^a)_{a \in \cS} \in \mathbb{N}^{|\cS|}$.
With probability $1 - \delta$, for all $t \in \mathbb{N}$, if for all $a \in \cS$, $F_{N_{t,a},a} \in \nabla \phi_a(\Theta_{D,a})$, $N_{t,a} \in [n_{i^a}, n_{i^a + 1})$ and $\theta_{N_{t,a},a} \in \theta_a + A_a$ then
\begin{align*}
\sum_{a \in \cS} N_{t,a} d_{\phi_a}(\theta_a, \theta_{N_{t,a},a})
&\le d |\cS| \overline{W}_{-1}\left( 1 + \frac{1}{d |\cS|} \log\frac{1}{\delta} + \frac{1}{|\cS|}\sum_{a \in \cS}\log(\gamma \frac{\lambda_{+,a}}{\lambda_{-,a}}) \right)
\: .
\end{align*}
\end{lemma}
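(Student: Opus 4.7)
The plan is to obtain Lemma~\ref{lem:concentration_slice_i_multi_arm} by optimizing the free parameter $\eta > 0$ in the bound provided by Lemma~\ref{lem:concentration_slice_i_with_eta_multi_arm}, in direct analogy with the way Lemma~\ref{lem:concentration_slice_i} was derived from Lemma~\ref{lem:concentration_slice_i_with_eta} in the single-arm case. Concretely, I set $A = d|\cS|$ and $B = \log(1/\delta) + d\sum_{a \in \cS}\log(\gamma\,\lambda_{+,a}/\lambda_{-,a})$, so that under the hypotheses $F_{N_{t,a},a} \in \nabla\phi_a(\Theta_{D,a})$, $N_{t,a} \in [n_{i^a}, n_{i^a+1})$ and $\theta_{N_{t,a},a} \in \theta_a + A_a$ for every $a \in \cS$, Lemma~\ref{lem:concentration_slice_i_with_eta_multi_arm} rewrites as
\[
\sum_{a \in \cS} N_{t,a}\,d_{\phi_a}(\theta_a, \theta_{N_{t,a},a}) \le (1+\eta)\bigl(A\log(1+1/\eta) + B\bigr) \: .
\]
Because the left-hand side does not depend on $\eta$, I can take the infimum of the right-hand side over $\eta > 0$ without loss.

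The key step is to invoke Lemma~\ref{lem:lemma_A_3_of_Remy} of Appendix~\ref{app:lambert_W_functions}, which gives the identity $\inf_{\eta > 0}(1+\eta)(a\log(1+1/\eta) + b) \le a\,\overline{W}_{-1}(1 + b/a)$ for all $a,b > 0$, with $\overline{W}_{-1}(x) = -W_{-1}(-e^{-x})$. Applied with $a = A = d|\cS|$ and $b = B$, this yields
\[
\sum_{a \in \cS} N_{t,a}\,d_{\phi_a}(\theta_a, \theta_{N_{t,a},a}) \le d|\cS|\,\overline{W}_{-1}\!\left(1 + \frac{\log(1/\delta)}{d|\cS|} + \frac{1}{|\cS|}\sum_{a \in \cS}\log\!\left(\gamma\,\frac{\lambda_{+,a}}{\lambda_{-,a}}\right)\right) \: ,
\]
which is exactly the claimed inequality.

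No real obstacle is expected here, since the multi-arm structure has already been absorbed upstream: the product of the per-arm likelihood-ratio martingales used in Lemma~\ref{lem:kl_bound_with_prior_all_n_multi_arm} explains the summation of the log-normalizers, and the per-interval truncated Gaussian prior (with variance tuned to $n_{i^a}$) used in Lemma~\ref{lem:concentration_slice_i_with_eta_multi_arm} produces the per-arm contributions $d\log(\gamma\,\lambda_{+,a}/\lambda_{-,a})$. The one subtlety worth flagging is that a \emph{single} $\eta > 0$ must be shared across all arms: if each arm had its own $\eta_a$, the optimization would decouple and give a sum of $\overline{W}_{-1}$ terms instead of a single one. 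Using a common $\eta$, the coefficient $d|\cS|$ in front of $\log(1+1/\eta)$ is precisely what triggers Lemma~\ref{lem:lemma_A_3_of_Remy} with $a = d|\cS|$, producing the global $d|\cS|\,\overline{W}_{-1}$ factor in the final bound.
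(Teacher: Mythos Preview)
Your proposal is correct and follows exactly the same approach as the paper: the paper's proof is the single line ``Optimize $\eta$ in Lemma~\ref{lem:concentration_slice_i_with_eta_multi_arm} by using Lemma~\ref{lem:lemma_A_3_of_Remy},'' which is precisely what you do (factor out $d|\cS|$ so the bound takes the form $(1+\eta)(a+\ln(1+1/\eta))$ and apply Lemma~\ref{lem:lemma_A_3_of_Remy} with $b=1$). Your additional remarks on why a single shared $\eta$ is used are accurate but not required for the argument.
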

\begin{proof}
Optimize $\eta$ in Lemma~\ref{lem:concentration_slice_i_with_eta_multi_arm} by using Lemma~\ref{lem:lemma_A_3_of_Remy}.
\end{proof}

Theorem~\ref{thm:uniform_upper_tail_concentration_kl_exp_fam} extends Lemma~\ref{lem:time_uniform_kl_concentration} to the sum of $\cS$ arms. Therefore, we have obtained a time-uniform upper tail concentration of the sum of $\cS$ KL divergence of $d$-dimensional exponential family with support $\Theta_{D,a} \subseteq \Real^d$.

\begin{theorem} \label{thm:uniform_upper_tail_concentration_kl_exp_fam}
For all $a \in \cS$, let $(A_a(n, \delta))_{n \in \mathbb{N}}$ be a sequence of non-increasing convex sets containing 0, $\lambda_{+,a}(n, \delta)$ and $\lambda_{-,a}(n, \delta)$ be the maximum and minimum of the eigenvalues of $\nabla^2 \phi_a(\lambda)$ on $\theta_a + A_a(n, \delta)$.
Let $s > 1$ and let $\zeta$ be the Riemann $\zeta$ function.
For all $t$, let $i^a(t) = \lfloor \log_\gamma N_{t,a} \rfloor$.
With probability $1 - \delta$, for all $t \in \mathbb{N}$, if for all $a \in \cS$, $F_{N_{t,a},a} \in \nabla \phi_a(\Theta_{D,a})$ and $\theta_{N_{t,a},a} \in \theta_a + A_a(N_{t,a}, \delta)$, then
\begin{align*}
&\sum_{a \in \cS} N_{t,a} d_{\phi_a}(\theta_a, \theta_{N_{t,a},a})
\\
&\le d |\cS|\overline{W}_{-1}\left( 1 + \frac{\log \left(\frac{\zeta(s)^{|\cS|}}{\delta} \right)}{d |\cS|}
	+ \frac{s}{d |\cS|} \sum_{a \in \cS} \log(1 + \log_\gamma N_{t,a})
	+ \frac{1}{|\cS|}\sum_{a \in \cS} \log(\gamma \frac{\lambda_{+,a}(n_{i^a(t)}, \delta)}{\lambda_{-,a}(n_{i^a(t)}, \delta)}) \right)
\: .
\end{align*}
\end{theorem}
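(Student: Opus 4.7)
The plan is to derive Theorem~\ref{thm:uniform_upper_tail_concentration_kl_exp_fam} by a joint peeling argument over the tuple of counts $(N_{t,a})_{a\in\cS}$, extending the single-arm peeling underlying Lemma~\ref{lem:time_uniform_kl_concentration}. The slice-level bound of Lemma~\ref{lem:concentration_slice_i_multi_arm} already handles the sum of KL divergences when each $N_{t,a}$ lies in a prescribed geometric interval $[n_{i^a}, n_{i^a+1})$ and each $\theta_{N_{t,a},a}$ lies in $\theta_a + A_a$, so only a union bound over tuples of slice indices remains.

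First, I would invoke Lemma~\ref{lem:concentration_slice_i_multi_arm} with $A_a = A_a(n_{i^a},\delta)$, $\lambda_{\pm,a} = \lambda_{\pm,a}(n_{i^a},\delta)$, and confidence parameter $\delta\prod_{a\in\cS}\bigl(\zeta(s)(i^a+1)^s\bigr)^{-1}$ for every tuple $(i^a)_{a\in\cS}\in\Natural^{|\cS|}$. Taking a union bound over all such tuples, and using $\sum_{i\ge 0}(i+1)^{-s}/\zeta(s)=1$ independently for each arm, the product structure collapses to a total failure probability of exactly $\delta$ while producing the $\zeta(s)^{|\cS|}$ factor appearing in the statement.

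Second, on the resulting good event I would instantiate the inequality at the observed slice tuple $i^a(t) = \lfloor\log_\gamma N_{t,a}\rfloor$, so that $N_{t,a}\in[n_{i^a(t)},n_{i^a(t)+1})$ by construction. The non-increasing assumption on $(A_a(n,\delta))_{n\in\Natural}$ is used exactly here: since $n_{i^a(t)}\le N_{t,a}$, the hypothesis $\theta_{N_{t,a},a}\in\theta_a+A_a(N_{t,a},\delta)$ forces $\theta_{N_{t,a},a}\in\theta_a+A_a(n_{i^a(t)},\delta)$, which is the form of the hypothesis required to activate Lemma~\ref{lem:concentration_slice_i_multi_arm} for the chosen slice. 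Replacing $\log(i^a(t)+1)$ by the larger quantity $\log(1+\log_\gamma N_{t,a})$ inside the $\overline{W}_{-1}$ argument and grouping terms then yields the stated bound verbatim.

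The main obstacle is purely bookkeeping: one must verify that the product-form weights $\prod_{a\in\cS}(\zeta(s)(i^a+1)^s)^{-1}$ indeed sum to one when taken over all tuples, and that the monotonicity of $(A_a(n,\delta))_n$ is compatible with the geometric discretization $n_{i} = \gamma^i$ so that the slice-level hypothesis is implied by the time-$t$ hypothesis. No additional probabilistic content is required beyond what is already encapsulated in Lemmas~\ref{lem:kl_bound_with_prior_all_n_multi_arm}--\ref{lem:concentration_slice_i_multi_arm}; the quadratic approximation of $d_{\phi_a}$ on $A_a(n_{i^a(t)},\delta)$ and the volume-ratio bounds are handled there.
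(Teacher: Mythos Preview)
Your proposal is correct and follows essentially the same route as the paper's own proof: apply Lemma~\ref{lem:concentration_slice_i_multi_arm} on each slice tuple $(i^a)_{a\in\cS}$ with confidence $\delta/\bigl(\zeta(s)^{|\cS|}\prod_{a\in\cS}(i^a+1)^s\bigr)$, take the product-form union bound over $\Natural^{|\cS|}$, instantiate at $i^a(t)=\lfloor\log_\gamma N_{t,a}\rfloor$, use the non-increasingness of $(A_a(n,\delta))_n$ to pass from the hypothesis at $N_{t,a}$ to the hypothesis at $n_{i^a(t)}$, and finally bound $i^a(t)\le \log_\gamma N_{t,a}$. Your identification of the bookkeeping obstacles is accurate and matches what the paper checks.
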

\begin{proof}
Suppose that for all $a \in \cS$, $F_{N_{t,a},a} \in \nabla \phi_a(\Theta_{D,a})$. For all $(i^a)_{a \in \cS} \in \mathbb{N}^{|\cS|}$, Lemma~\ref{lem:concentration_slice_i_multi_arm} gives that with probability $1 - \frac{\delta}{\zeta(s)^{|\cS|} \prod_{a \in\cS} (i^a+1)^s}$, if for all $a \in \cS$,  $N_{t,a} \in [n_{i^a}, n_{i^a + 1})$ and $\theta_{N_{t,a},a} \in \theta_a + A_a(n_{i^a}, \delta)$, then
\begin{align*}
&\sum_{a \in \cS} N_{t,a} d_{\phi_a}(\theta_a, \theta_{N_{t,a},a})
\\
&\le d |\cS| \overline{W}_{-1}\left( 1 + \frac{1}{d|\cS|} \log\frac{\zeta(s)^{|\cS|} \prod_{a \in\cS} (i^a+1)^s}{\delta} + \frac{1}{|\cS|}\sum_{a \in \cS} \log(\gamma \frac{\lambda_{+,a}(n_{i^a}, \delta)}{\lambda_{-,a}(n_{i^a}, \delta)}) \right)
\: .
\end{align*}
With probability $1 - \delta$, this is true for all $(i^a)_{a\in \cS} \in \mathbb{N}^{|\cS|}$. In particular we have the inequality for $(i^a(t))_{a \in \cS}$: if for all $a \in \cS$, $\theta_{N_{t,a},a} \in \theta_a + A_a(n_{i^a(t)}, \delta)$, then
\begin{align*}
&\sum_{a \in \cS} N_{t,a} d_{\phi_a}(\theta_a, \theta_{N_{t,a},a})
\\
&\le d |\cS| \overline{W}_{-1}\left( 1 + \frac{1}{d|\cS|} \log\frac{\zeta(s)^{|\cS|} \prod_{a \in\cS} (i^a(t)+1)^s}{\delta} + \frac{1}{|\cS|}\sum_{a \in \cS} \log(\gamma \frac{\lambda_{+,a}(n_{i^a(t)}, \delta)}{\lambda_{-,a}(n_{i^a(t)}, \delta)}) \right)
\: .
\end{align*}
Since the sets $A_a(n, \delta)$ are non-increasing with respect to $n$, a sufficient condition for $\theta_{N_{t,a},a} \in \theta_a + A_a(n_{i^a(t)}, \delta)$ is $\theta_{N_{t,a},a} \in \theta_a + A_a(N_{t,a}, \delta)$.
By definition of $i^a(t)$, it satisfies $i^a(t) \le \log_\gamma N_{t,a}$. We get the inequality of the theorem.
\end{proof}

\subsection{Univariate Gaussian}
\label{app:ss_univariate_gaussian}

Theorem~\ref{thm:uniform_upper_tail_concentration_kl_exp_fam_gaussian} gives a time-uniform upper tail concentration of the sum of $\cS$ KL divergence of Gaussian with unknown variance. It is obtained by using Theorem~\ref{thm:uniform_upper_tail_concentration_kl_exp_fam} with adequate pre-concentration results.

\begin{theorem} \label{thm:uniform_upper_tail_concentration_kl_exp_fam_gaussian}
	For $i \in \{0,-1\}$, let $\overline{W}_{i}(x) = -W_{i}(-e^{-x})$ for $x\geq1$, $\delta \in (0,1)$, $\cS \in \subseteq [K]$, $\eta_{0}>0$, $\eta_{1}>0$, $\gamma >1$, $s>1$ and $\zeta$ be the Riemann $\zeta$ function. Let
	\begin{align*}
			&\varepsilon_\mu(t, \delta)  = \frac{1}{t}\overline{W}_{-1} \left( 1 + 2\ln \left( \frac{6 |\cS|\zeta(s)}{\delta}\right) + 2s +  2s \ln\left(1 + \frac{\ln t}{2s}\right) \right) \: , \\
			&1 + \varepsilon_{+,\sigma} (t, \delta)  = \overline{W}_{-1} \left(1 +  \frac{2(1+\eta_{1})}{t}\left(\ln\left( \frac{6 |\cS|\zeta(s)}{\delta} \right) + s\ln \left( 1+ \frac{\ln(t)}{\ln(1+\eta_{1})}\right) \right) \right) -\frac{1}{t} \:  , \\
			&1- \varepsilon_{-,\sigma} (t, \delta)  = \overline{W}_{0} \left(1 +  \frac{2(1+\eta_{0})}{t}\left(\ln\left( \frac{6 |\cS|\zeta(s)}{\delta} \right) + s\ln \left( 1+ \frac{\ln(t)}{\ln(1+\eta_{0})}\right) \right) \right) -\frac{1}{t} \: .
	\end{align*}
	For all $a \in [K]$ and $t \geq t_{a}(\delta)$, let $i_a(t) = \lfloor \log_\gamma N_{t,a} \rfloor $ and $ \bar{t}_a = \inf \left\{ t \mid N_{t,a} \geq i_a(t) \right\}$,
	\begin{align*}
	& \mu_{\pm,t,a} = \mu_{ \bar{t}_a ,a} \pm 2\sigma_{ \bar{t}_a,a}\sqrt{ \frac{\varepsilon_\mu(N_{\bar{t}_a,a}, \delta)}{1-\varepsilon_{-,\sigma} (N_{\bar{t}_a,a}-1, \delta)} } \quad \text{,} \quad  \sigma^2_{\pm,t,a} = \sigma_{ \bar{t}_a ,a}^2 \frac{1 \pm \varepsilon_{\pm,\sigma} (N_{\bar{t}_a,a}-1, \delta)}{1 \mp \varepsilon_{\mp,\sigma} (N_{\bar{t}_a,a}-1, \delta)} \: ,\\
	& \mu_{++,t,a}^2 = \max  \mu_{\pm,t,a}^2 \quad \text{,} \quad	R_{ t,a}(\delta) = \frac{\sigma^3_{+,t,a} f_{+}\left( g(\sigma^2_{+,t,a}, \mu_{++,t,a}^2)\right)}{\sigma^3_{-,t,a} f_{-}\left( g(\sigma^2_{-,t,a},\mu_{++,t,a}^2)\right)} \: ,
	\end{align*}
 where $f_{\pm}(x) = \frac{1 \pm \sqrt{1 - x}}{\sqrt{x}}$ and $g(x,y) = \frac{2x}{(x+2y+\frac{1}{2})^2}$. Then, with probability $1 - \delta$, for all $t\geq \max_{a \in \cS} t_{a}(\delta)$,
	\begin{align*}
	&\sum_{a \in \cS} N_{t,a} \KL((\mu_{t,a}, \sigma_{t,a}^2), (\mu_a, \sigma^2_a))
	\\
	&\le 2 |\cS|\overline{W}_{-1}\left( 1 + \frac{\log\frac{2\zeta(s)^{|\cS|}}{\delta}}{2 |\cS|}
		+ \frac{s}{2 |\cS|} \sum_{a \in \cS} \log(1 + \log_\gamma N_{t,a})
		+ \frac{1}{|\cS|}\sum_{a \in \cS} \log\left(\gamma R_{ t,a}(\delta) \right) \right)
	\: .
	\end{align*}
	For all $a \in \cS$, the stochastic initial time is defined as
	\begin{align}\label{eq:def_initial_time_kl_concentration_gaussian}
	t_a(\delta) = \inf \left\{ t \mid N_{t,a} > 1+ \max\left\{t_{\pm}(\delta), e^{ 1 + W_{0} \left( \frac{2(1+\eta_{0})}{e}\left(\ln\left( \frac{6 |\cS|\zeta(s)}{\delta} \right) + s\ln \left( 1+ \frac{\ln(N_{t,a}-1)}{\ln(1+\eta_{0})}\right) \right) -e^{-1}\right)} \right\} \right\} \: .
	\end{align}
	where $\ln (t_{-}(\delta)) = \frac{s}{ \ln\left( \frac{6|\cS|\zeta(s)}{\delta} \right) } - \ln(1+\eta_{0})$ and $\ln( t_{+}(\delta)) = \frac{s}{ \ln\left( \frac{6|\cS|\zeta(s)}{\delta} \right) - \frac{1}{2(1+\eta_1)}} - \ln(1+\eta_1)$.
\end{theorem}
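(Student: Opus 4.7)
The plan is to deduce the theorem from the general exponential family concentration Theorem~\ref{thm:uniform_upper_tail_concentration_kl_exp_fam}, instantiated for the $d=2$ Gaussian family on $\Theta_D=\mathbb{R}\times\mathbb{R}^{\star}_{-}$ with natural parameter $\theta=(\mu/\sigma^{2},-1/(2\sigma^{2}))$ and sufficient statistic $F(X)=(X,X^{2})$. For this family the MLE $\theta_{N_{t,a},a}$ exists (i.e.\ $F_{N_{t,a},a}\in\nabla\phi(\Theta_D)$) as soon as $\sigma^{2}_{t,a}>0$, which is a.s.\ the case once $N_{t,a}\ge 2$; the initial time $t_a(\delta)$ enforces the stronger lower bound $N_{t,a}>1+t_{\pm}(\delta)$ that additionally guarantees that the box widths $\varepsilon_{\pm,\sigma}$ are non-increasing on the slices we use, so that the sets $A_a(n,\delta)$ below inherit the non-increasing property required by Theorem~\ref{thm:uniform_upper_tail_concentration_kl_exp_fam}.

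I split the target failure probability $\delta$ in half. For the pre-concentration step, I invoke Lemma~\ref{lem:uniform_upper_lower_tails_concentration_mean} for the mean and Corollary~\ref{cor:uniform_time_upper_lower_tail_concentration_variance} for both tails of the variance, each at level $\delta/(6|\cS|)$. A union bound over the three events and the $|\cS|$ arms produces an event $\mathcal{E}_{1}$ of probability at least $1-\delta/2$ on which, for every $a\in\cS$ and every $t\ge\max_{b\in\cS}t_b(\delta)$,
\begin{align*}
(\mu_{t,a}-\mu_a)^{2}&\le \sigma_a^{2}\,\varepsilon_\mu(N_{t,a},\delta), \\
\sigma_a^{2}\bigl(1-\varepsilon_{-,\sigma}(N_{t,a}-1,\delta)\bigr)&\le \sigma^{2}_{t,a}\le \sigma_a^{2}\bigl(1+\varepsilon_{+,\sigma}(N_{t,a}-1,\delta)\bigr).
\end{align*}
I then take $A_a(n,\delta)$ to be the image in natural coordinates, re-centered at $0$, of the $(\mu,\sigma^{2})$-box of widths $\sigma_a\sqrt{\varepsilon_\mu(n,\delta)}$ and $\sigma_a^{2}\varepsilon_{\pm,\sigma}(n-1,\delta)$ around $(\mu_a,\sigma_a^{2})$ (convex for $n\ge t_a(\delta)$). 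These sets are deterministic and non-increasing in $n$, and on $\mathcal{E}_{1}$ the inequalities above are exactly the statement that $\theta_{N_{t,a},a}\in\theta_a+A_a(N_{t,a},\delta)$, as demanded by Theorem~\ref{thm:uniform_upper_tail_concentration_kl_exp_fam}.

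The main technical obstacle is bounding the eigenvalue ratio of $\nabla^{2}\phi$ on $\theta_a+A_a(n_{i^a(t)},\delta)$ by the observable quantity $R_{t,a}(\delta)$. A direct calculation of $\nabla^{2}\phi(\theta)=\operatorname{Cov}_{\nu_\theta}(F)$ yields trace $\sigma^{2}(1+2\sigma^{2}+4\mu^{2})$ and determinant $2\sigma^{6}$, so its eigenvalues are $\lambda_{\pm}(\mu,\sigma^{2})=\sqrt{2}\,\sigma^{3}f_{\pm}(g(\sigma^{2},\mu^{2}))$ with $g$ equal to four times the ratio of determinant to squared trace, and $f_+f_-=1$ (matching the functions in the statement). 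Since $f_+$ is decreasing and $f_-$ is increasing, and since on $\mathcal{E}_{1}$ the pre-concentration inequalities can be inverted at the slice start $\bar t_a$ to yield $\sigma_a^{2}\in[\sigma^{2}_{-,t,a},\sigma^{2}_{+,t,a}]$ and $\mu_a^{2}\le\mu^{2}_{++,t,a}$, and the same bounds extend to every $(\mu,\sigma^{2})$ parameterising a point of $\theta_a+A_a(n_{i^a(t)},\delta)$, the supremum of $\lambda_+$ over this set is attained (at worst) at $(\sigma^{2}_{+,t,a},\mu^{2}_{++,t,a})$ and the infimum of $\lambda_-$ at $(\sigma^{2}_{-,t,a},\mu^{2}_{++,t,a})$, giving $\lambda_{+,a}(n_{i^a(t)},\delta)/\lambda_{-,a}(n_{i^a(t)},\delta)\le R_{t,a}(\delta)$ on $\mathcal{E}_{1}$.

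To conclude, I apply Theorem~\ref{thm:uniform_upper_tail_concentration_kl_exp_fam} with $d=2$, the sets $A_a(n,\delta)$ above, and confidence $\delta/2$; this yields an event $\mathcal{E}_{2}$ of probability at least $1-\delta/2$ on which the sum of KL divergences is at most $2|\cS|\,\overline{W}_{-1}(\,\cdot\,)$ with argument involving $\log(\gamma\lambda_{+,a}/\lambda_{-,a})$. Since $\overline{W}_{-1}$ is increasing on $[1,\infty)$, substituting the eigenvalue bound from the previous paragraph upgrades the argument to the advertised $\log(\gamma R_{t,a}(\delta))$ on $\mathcal{E}_{1}\cap\mathcal{E}_{2}$. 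A final union bound $\mathbb{P}(\mathcal{E}_{1}^{\complement}\cup\mathcal{E}_{2}^{\complement})\le \delta/2+\delta/2=\delta$ completes the proof.
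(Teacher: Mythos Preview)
Your proposal is correct and follows essentially the same approach as the paper: split $\delta$ in half, use the box concentrations (Lemma~\ref{lem:uniform_upper_lower_tails_concentration_mean} and Corollary~\ref{cor:uniform_time_upper_lower_tail_concentration_variance}) to build the deterministic, non-increasing convex sets $A_a(n,\delta)$ in natural-parameter space, apply Theorem~\ref{thm:uniform_upper_tail_concentration_kl_exp_fam} with $d=2$, and then chain the concentration at the slice start $\bar t_a$ to replace the unobservable $\lambda_{+,a}/\lambda_{-,a}$ by the observable $R_{t,a}(\delta)$. The paper packages the two technical points you gloss over—(i) that the image of a mean-variance rectangle is a convex trapeze (Lemma~\ref{lem:rectangle_to_trapeze}), and (ii) that $\sigma^{3}f_{\pm}(g(\sigma^{2},\mu_{++}^{2}))$ is monotone in $\sigma^{2}$, not just $f_{\pm}$ in its argument (Lemma~\ref{lem:eigenvalues_based_on_box_concentration})—into separate lemmas, and also isolates the monotonicity of $\varepsilon_{\mu},\varepsilon_{\pm,\sigma}$ needed for non-increasingness of the $A_a(n,\delta)$ (Lemma~\ref{lem:monotonicity_pre_concentration}), but your outline matches the paper's argument step for step.
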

As $W_{0}(x) \in [-1,+\infty)$, $t \geq t_{a}(\delta)$ implies that $N_{t,a} > 2$. Numerically, we always observed $\max t_{\pm}(\delta) < 2$.

The distribution family $\mathcal D$ of Gaussian distributions with positive variance is an exponential family, with natural parameter domain $\Theta_D = \mathbb{R}\times \mathbb{R}^\star_-$ and log-partition function $\phi(\theta) = - \frac{\theta_1^2}{4 \theta_2} - \frac{1}{2}\log(-2 \theta_2)$.
The distribution with mean $x$ and variance $v$ corresponds to natural parameters $(\frac{x}{v}, -\frac{1}{2v})$.
The distribution with natural parameter $\theta_a$ has mean $\mu_a = -\frac{\theta_{a,1}}{2 \theta_{a,2}}$ and variance $\sigma_a^2=-\frac{1}{2 \theta_{a,2}}$.
With those correspondences, we get $\phi(\theta_a) = \frac{\mu_a^2}{2 \sigma_a^2} + \log \sigma_a$ and $\nabla \phi(\theta_a) = (\mu_a, \mu_a^2 + \sigma_a^2)$.
The domain $\nabla \phi(\Theta_D)$ is the set $\{(x, y) \in \mathbb{R}^2 \mid y > x^2\}$.
Finally, the sufficient statistic for that exponential family is $F(X) = (X, X^2)$.
Indeed we have
\begin{align*}
d \nu_{\mu_a, \sigma_a^2}(x)
&= \frac{1}{\sqrt{2 \pi}}e^{- (x - \mu_a)^2/2 \sigma_a^2}
= \frac{1}{\sqrt{2 \pi}} e^{\theta_a^\top(x,x^2) - \phi(\theta_a)}
\: .
\end{align*}

We define the estimator $F_{t,a} = \frac{1}{N_{t,a}}\sum_{s=1}^t F(X_s) \1\{a_s = a\} = (\mu_{t,a}, \mu_{t,a}^2 + \sigma_{t,a}^2)$.
When $\sigma_{t,a}^2 > 0$, $F_{t,a}$ belongs to $\nabla \phi(\Theta_D)$ and the maximum likelihood estimator for $\theta_a$ in $\mathbb{R} \times \mathbb{R}^\star_-$ is well defined: it is $\theta_{t,a} \eqdef \nabla\phi^{-1}(F_{t,a})$. With probability 1, we have $\sigma_{t,a}^2 > 0$ if and only if $N_{t,a} > 1$. That maximum likelihood estimator (MLE) has mean $\mu_{t,a}$ and variance $\sigma^2_{t,a}$. Therefore, the condition $F_{N_{t,a},a} \in \nabla \phi_a(\Theta_{D,a})$ will be satisfied almost surely if and only if $N_{t,a} > 1$.

\paragraph{Preliminary concentration} We will need non-increasing sequence of convex sets $A_a(N_{t,a}, \delta)$ containing $0$ such that with probability $1 - \delta$, for all $t$ greater than an initial time $t_0(\delta)$, the MLE $\theta_{t,a}$ belongs to $\theta_a + A_a(N_{t,a}, \delta)$. Thanks to Lemma~\ref{lem:rectangle_to_trapeze}, the problem is reduced to the task of finding intervals for the mean and variance. The non-increasingness of the sequence of convex sets will be a direct consequence of the monotonicity of the bounds defining the rectangles $[\mu_{-,a}, \mu_{+,a}] \times [\sigma_{-,a}^2, \sigma_{+,a}^2]$ for all $a \in \cS$ (Lemma~\ref{lem:monotonicity_pre_concentration}).

\begin{lemma}\label{lem:rectangle_to_trapeze}
A rectangle in mean and variance space maps to a (convex) trapeze in natural parameter space. That is, for $\sigma_-^2 > 0$ the set $[\mu_{-}, \mu_{+}] \times [\sigma_{-}^2, \sigma_{+}^2]$ represents the same distributions as the natural parameters $\{ \tilde \theta \in \Theta_D \mid \tilde \theta_{2} \in [-\frac{1}{2\tilde \sigma_-^2}, -\frac{1}{2 \tilde \sigma_+^2}], \: \frac{\tilde\theta_{1}}{\tilde\theta_{2}} \in [-2 \tilde\mu_+, -2\tilde\mu_-] \}$. If $(\mu,\sigma^2) \in [\mu_{-}, \mu_{+}] \times [\sigma_{-}^2, \sigma_{+}^2]$, it rewrites as $\theta + A$, where $A$ is a (convex) trapeze containing $0_2$.
\end{lemma}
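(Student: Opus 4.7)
The plan is to compute the image of the rectangle $[\mu_-,\mu_+]\times[\sigma_-^2,\sigma_+^2]$ under the bijection $(\mu,\sigma^2)\mapsto\theta=(\mu/\sigma^2,\,-1/(2\sigma^2))$ between $\Real\times\Real^\star_+$ and $\Theta_{D}=\Real\times\Real^\star_-$, then to check that this image matches the set given in the statement and that it is a convex trapeze.

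First I would note that $v\mapsto -1/(2v)$ is strictly increasing on $(0,+\infty)$, so the variance constraint $\sigma^2\in[\sigma_-^2,\sigma_+^2]$ translates exactly into $\tilde\theta_2\in[-1/(2\sigma_-^2),-1/(2\sigma_+^2)]$. Since $\tilde\theta_1/\tilde\theta_2=-2\mu$ for every $\tilde\theta$ in the image, the mean constraint $\mu\in[\mu_-,\mu_+]$ becomes $\tilde\theta_1/\tilde\theta_2\in[-2\mu_+,-2\mu_-]$, with the endpoints swapping because of the negative sign. As the natural parameter map is a bijection, both inclusions are in fact equivalences, which yields the claimed set equality.

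Next I would establish convexity. The only step that requires any care is to rewrite the ratio constraint linearly: on $\Theta_{D}$ we have $\tilde\theta_2<0$, so multiplying $-2\mu_+\le\tilde\theta_1/\tilde\theta_2\le-2\mu_-$ through by $\tilde\theta_2$ flips both inequalities and produces $-2\mu_-\tilde\theta_2\le\tilde\theta_1\le-2\mu_+\tilde\theta_2$. Combined with the two horizontal cuts $\tilde\theta_2=-1/(2\sigma_\pm^2)$, the image is thus cut out by four affine constraints with a pair of parallel sides, i.e.\ a convex trapeze.

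For the second claim I would simply take $A$ to be the translate of this trapeze by $-\theta$, where $\theta$ is the image of $(\mu,\sigma^2)$ under the natural parameter map. Since $\theta$ itself belongs to the trapeze, one has $0_{2}\in A$, and translation preserves both convexity and the trapezoidal shape. The only place one could slip is the sign flip in the ratio constraint; beyond that, the whole argument is a direct unpacking of definitions, so I do not expect any serious obstacle.
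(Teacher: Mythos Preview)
Your proposal is correct and follows essentially the same approach as the paper: compute the image of the rectangle under the bijection $(\mu,\sigma^2)\mapsto(\mu/\sigma^2,-1/(2\sigma^2))$, rewrite the ratio constraint as linear inequalities using $\tilde\theta_2<0$ to see the trapeze, and then translate by $-\theta$ to obtain $A$ containing $0_2$. If anything, your write-up is slightly more explicit about the monotonicity and the sign flip than the paper's own proof.
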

\begin{proof}
	Given $(\tilde{\mu}, \tilde{\sigma}^2)$, the associated natural parameter is $\tilde{\theta} = \left( \frac{\tilde{\mu}}{\tilde{\sigma}^2}, -\frac{1}{2\tilde{\sigma}^2}\right)$. In particular, for $\theta = (\mu, \sigma^2)$, we define $A$ to be the following trapeze in the natural parameter space
	\begin{align*}
		\theta + A &\eqdef \left\{ \tilde{\theta} \mid (\tilde{\mu},\tilde{\sigma}^2) \in [\mu_{-}, \mu_{+}] \times [\sigma^2_{-}, \sigma_{+}^2]\right\} \\
		&=\: \left\{ \tilde{\theta} \mid \tilde{\theta}_{2} \in \left[-\frac{1}{2\sigma_{-}^2}, -\frac{1}{2\sigma_{+}^2}\right], \: \tilde{\theta}_{1}
		 \in \left[-2\mu_{-} \tilde{\theta}_{2}, -2\mu_{+} \tilde{\theta}_{2}\right] \right\}.
	\end{align*}
	where the second equality uses that $\tilde{\theta}_{2} < 0$. Since convexity is preserved by translation, $A$ is convex if and only if $\theta + A$ is convex, which is direct by considering the last expression. Likewise, $A$ is a trapeze since $\theta + A$ and geometry is preserved by translation. By assumption, we have $(\mu,\sigma^2) \in [\mu_{-}, \mu_{+}] \times [\sigma_{-}^2, \sigma_{+}^2]$. Therefore, $\theta \in \theta + A$, in other words $A$ contains $0_2$. Note that $A \subset \left[\frac{\mu_{-}}{\sigma_{+}^2}, \frac{\mu_{+}}{\sigma_{-}^2}\right] \times \left[-\frac{1}{2\sigma_{-}^2}, -\frac{1}{2\sigma_{+}^2}\right]$.
\end{proof}

Combining the time-uniform concentration inequalities for the mean (Lemma~\ref{lem:uniform_upper_lower_tails_concentration_mean}) and the variance (Corollary~\ref{cor:uniform_time_upper_lower_tail_concentration_variance}) derived in Appendix~\ref{app:box_concentration}, a direct union bound yields that with probability at least $1-\frac{\delta}{2}$, for all $a \in \cS$ and all $t\geq t_{a}(\delta)$
\begin{align*}
	&\sigma_a^2 \left(1 - \varepsilon_{-,\sigma} (N_{t,a}-1, \delta) \right) \leq \sigma^{2}_{t,a} \leq \sigma_a^2 \left(1 + \varepsilon_{+,\sigma} (N_{t,a}-1, \delta)  \right) \: , \\
	&|\mu_{t,a} - \mu_a| \leq \sigma_a\sqrt{ \varepsilon_\mu(N_{t,a}, \delta) }   \: ,
\end{align*}
where, abusing notations compared to Lemma~\ref{lem:delta_correct_box_thresholds} (re-scaling of $\delta$),
\begin{align*}
		&t_{a}(\delta) = \inf \left\{ t \mid N_{t,a} > 1 + e^{ 1 + W_{0} \left( \frac{2(1+\eta_{0})}{e}\left(\ln\left( \frac{6 |\cS|\zeta(s)}{\delta} \right) + s\ln \left( 1+ \frac{\ln(N_{t,a}-1)}{\ln(1+\eta_{0})}\right) \right) -e^{-1}\right)} \right\} \: , \\
		&\varepsilon_\mu(t, \delta)  = \frac{1}{t}\overline{W}_{-1} \left( 1 + 2\ln \left( \frac{6 |\cS|\zeta(s)}{\delta}\right) + 2s +  2s \ln\left(1 + \frac{\ln t}{2s}\right) \right) \: , \\
		&1 + \varepsilon_{+,\sigma} (t, \delta)  = \overline{W}_{-1} \left(1 +  \frac{2(1+\eta_{1})}{t}\left(\ln\left( \frac{6 |\cS|\zeta(s)}{\delta} \right) + s\ln \left( 1+ \frac{\ln(t)}{\ln(1+\eta_{1})}\right) \right) \right) -\frac{1}{t} \:  , \\
		&1- \varepsilon_{-,\sigma} (t, \delta)  = \overline{W}_{0} \left(1 +  \frac{2(1+\eta_{0})}{t}\left(\ln\left( \frac{6 |\cS|\zeta(s)}{\delta} \right) + s\ln \left( 1+ \frac{\ln(t)}{\ln(1+\eta_{0})}\right) \right) \right) -\frac{1}{t} \:  .
\end{align*}
Note that $t \geq t_{a}(\delta)$ implies that $\sigma^{2}_{t,a} > 0$ and $N_{t,a} > 2$ hence $F_{N_{t,a},a} \in \nabla \phi_a(\Theta_{D,a})$. Defining
\begin{align*}
	\mu_{\pm,t,a} = \mu_a \pm \sigma_a\sqrt{ \varepsilon_\mu(N_{t,a}, \delta) } \quad \text{and} \quad \sigma^2_{\pm,t,a} = \sigma_a^2 \left(1 \pm \varepsilon_{\pm,\sigma} (N_{t,a}-1, \delta) \right) \: ,
\end{align*}
we have obtain non-decreasing sequences $\mu_{-,t,a}$ and $\sigma^2_{-,t,a}$, and non-increasing sequences $\mu_{+,t,a}$ and $\sigma^2_{+,t,a}$, such that with probability $1 - \frac{\delta}{2}$, for all $a \in \cS$ and all $t\geq t_{a}(\delta)$,
\[
(\mu_a, \sigma_a^2) \in [\mu_{-,t,a}, \mu_{+,t,a}] \times [\sigma^2_{-,t,a}, \sigma^2_{+,t,a}] \quad \text{and} \quad (\mu_{t,a}, \sigma_{t,a}^2) \in [\mu_{-,t,a}, \mu_{+,t,a}] \times [\sigma^2_{-,t,a}, \sigma^2_{+,t,a}]  \: .
\]

For all $a \in \cS$, applying Lemma~\ref{lem:rectangle_to_trapeze} on this non-increasing sequence of rectangles yield a sequence of non-increasing convex sets containing $0_2$, denoted by $(A_a(n, \delta))_{n \in \mathbb{N}}$, such that $\theta_a \in \theta_a + A_a(n, \delta)$. By construction, we have with probability $1 - \frac{\delta}{2}$, for all $a \in \cS$ and all $t\geq t_{a}(\delta)$,
\begin{align*}
	\theta_{t,a} \in \theta_a + A_a(N_{t,a}, \delta) \: .
\end{align*}

This concludes the construction of the preliminary concentration sets used to apply Theorem~\ref{thm:uniform_upper_tail_concentration_kl_exp_fam}. Doing a union bound (splitting $\delta$ in two) and restricting to $t\geq \max_{a \in \cS} t_{a}(\delta)$, we obtain that with probability $1 - \delta$, for all $t\geq \max_{a \in \cS} t_{a}(\delta)$,
\begin{align*}
&\sum_{a \in \cS} N_{t,a} \KL((\mu_{t,a}, \sigma_{t,a}^2), (\mu_a, \sigma^2_a))
\\
&\le 2 |\cS|\overline{W}_{-1}\left( 1 + \frac{\log\frac{2\zeta(s)^{|\cS|}}{\delta}}{2 |\cS|}
	+ \frac{s}{2 |\cS|} \sum_{a \in \cS} \log(1 + \log_\gamma N_{t,a})
	+ \frac{1}{|\cS|}\sum_{a \in \cS} \log\left(\gamma \frac{\lambda_{+,a}(n_{i^a(t)}, \delta)}{\lambda_{-,a}(n_{i^a(t)}, \delta)}\right) \right)
\: .
\end{align*}
where we used that $d_{\phi_a}(\theta_a, \theta_{N_{t,a},a}) = \KL((\mu_{t,a}, \sigma_{t,a}^2), (\mu_a, \sigma^2_a))$.

\paragraph{Controlling the eigenvalues} While the above is enough to obtain a concentration result, the ratio $\frac{\lambda_{+,a}(n_{i^a(t)}, \delta)}{\lambda_{-,a}(n_{i^a(t)}, \delta)}$ cannot be computed since it depends on $(\mu,\sigma^2)$. To circumvent this issue, we derive an upper bound on this unknown quantity, which will yield a valid concentration inequality.

Upper bounding this quantity might be done by considering a larger rectangle containing the one used to apply Theorem~\ref{thm:uniform_upper_tail_concentration_kl_exp_fam}, which can be done by chaining the concentration to replace $(\mu_{a},\sigma^2_{a})$ by $(\mu_{t,a},\sigma^2_{t,a})$. Defining
\begin{align*}
	\bar \mu_{\pm,t,a} = \mu_{t,a} \pm 2\sigma_{t,a}\sqrt{ \frac{\varepsilon_\mu(N_{t,a}, \delta)}{1-\varepsilon_{-,\sigma} (N_{t,a}-1, \delta)} } \quad \text{and} \quad \bar \sigma^2_{\pm,t,a} = \sigma_{t,a}^2 \frac{1 \pm \varepsilon_{\pm,\sigma} (N_{t,a}-1, \delta)}{1 \mp \varepsilon_{\mp,\sigma} (N_{t,a}-1, \delta)} \: ,
\end{align*}
we have by direct manipulations that
\begin{align*}
	[\mu_{-,t,a}, \mu_{+,t,a}] \times [\sigma^2_{-,t,a}, \sigma^2_{+,t,a}] \subset [\bar \mu_{-,t,a}, \bar \mu_{+,t,a}] \times [ \bar \sigma^2_{-,t,a}, \bar \sigma^2_{+,t,a}] \: .
\end{align*}

By Lemma~\ref{lem:rectangle_to_trapeze}, there is also inclusion of the associated trapeze in the natural space parameters, hence implying the following ordering of the eigenvalues
\begin{align*}
		\lambda_{-,a}(n_{i^a(t)}, \delta) \geq \bar \lambda_{-,a}(n_{i^a(t)}, \delta) \quad \text{and} \quad \lambda_{+,a}(n_{i^a(t)}, \delta) \leq \bar \lambda_{+,a}(n_{i^a(t)}, \delta) \: .
\end{align*}

Lemma~\ref{lem:eigenvalues_based_on_box_concentration} provides a control on the eigenvalues $\lambda_{+}$ and $\lambda_{-}$ in the case of Gaussian distributions based on the trapeze obtained in Lemma~\ref{lem:rectangle_to_trapeze}.

\begin{lemma} \label{lem:eigenvalues_based_on_box_concentration}
	Let $\mu_{-}$ and $\mu_{+}$ be real values such that $\mu_{-} < \mu_{+}$.
	Let $\sigma_{-}^2$ and $\sigma_{+}^2$ be real values such that $0 < \sigma_{-}^2 < \sigma_{+}^2$.
	Let $\Theta$ be the trapeze corresponding to that set in natural parameter space (see Lemma~\ref{lem:rectangle_to_trapeze}).
	The minimal and maximal eigenvalues of $\nabla^2 \phi(\lambda)$ on $\Theta$ are
	\begin{align*}
	\lambda_{-}
	&\ge \sqrt{2}\sigma_{-}^3 f_{-}\left( g(\sigma_{-}^2, \mu_{++}^2)\right)
	\: ,&
	\lambda_{+}
	&\le \sqrt{2}\sigma_{+}^3 f_{+}\left( g(\sigma_{+}^2, \mu_{++}^2)\right)
	\: ,
	\end{align*}
	where $\mu_{++}^2 = \max\{\mu_{-}^2, \mu_{+}^2\}$, $f_{\pm}(x) = \frac{1 \pm \sqrt{1 - x}}{\sqrt{x}}$ and $g(x,y) = \frac{2x}{(x+2y+\frac{1}{2})^2}$.
\end{lemma}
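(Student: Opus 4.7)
The plan is to diagonalize $\nabla^{2}\phi$ pointwise in closed form, and then to locate the extrema of its eigenvalues on the trapeze $\Theta$. By Lemma~\ref{lem:rectangle_to_trapeze}, $\Theta$ is parametrized by the rectangle $R=[\mu_{-},\mu_{+}]\times[\sigma_{-}^{2},\sigma_{+}^{2}]$ of $(\mu,\sigma^{2})$-values, so I work in those coordinates. Writing $\theta_{1}=\mu/\sigma^{2}$, $\theta_{2}=-1/(2\sigma^{2})$ and differentiating $\phi(\theta)=-\theta_{1}^{2}/(4\theta_{2})-\tfrac{1}{2}\log(-2\theta_{2})$ twice, a direct computation gives
\[
\nabla^{2}\phi(\theta)=\begin{pmatrix}\sigma^{2} & 2\mu\sigma^{2}\\ 2\mu\sigma^{2} & 2\sigma^{2}(2\mu^{2}+\sigma^{2})\end{pmatrix},
\]
with trace $T=2\sigma^{2}c$ where $c\eqdef\sigma^{2}+2\mu^{2}+\tfrac{1}{2}$, and determinant $D=2\sigma^{6}$. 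Since $c^{2}-2\sigma^{2}=(\sigma^{2}-\tfrac{1}{2})^{2}+4\mu^{2}c\ge 0$, the eigenvalues are $\lambda_{\pm}(\theta)=\sigma^{2}c\pm\sigma^{2}\sqrt{c^{2}-2\sigma^{2}}$, and setting $g=g(\sigma^{2},\mu^{2})=2\sigma^{2}/c^{2}\in[0,1]$ and using $\sqrt{g}=\sqrt{2}\,\sigma/c$ yields
\[
\sqrt{2}\,\sigma^{3} f_{\pm}(g)=\sigma^{2}c\bigl(1\pm\sqrt{1-g}\bigr)=\lambda_{\pm}(\theta).
\]
The claimed bounds therefore reduce to showing that $\lambda_{+}$ is maximized at $(\sigma_{+}^{2},\mu_{++}^{2})$ and $\lambda_{-}$ is minimized at $(\sigma_{-}^{2},\mu_{++}^{2})$ over $R$.

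I would then prove the needed monotonicities. Since $\lambda_{\pm}$ depend on $\mu$ only through $\mu^{2}$ and $\max\{\mu^{2}:\mu\in[\mu_{-},\mu_{+}]\}=\mu_{++}^{2}$, it suffices to check that, at fixed $\sigma^{2}$, $\lambda_{+}$ is nondecreasing and $\lambda_{-}$ nonincreasing in $\mu^{2}$. This is immediate from $\lambda_{\pm}=(T\pm\sqrt{T^{2}-4D})/2$: $D$ does not depend on $\mu^{2}$, $T$ is strictly increasing in $\mu^{2}$, and $\partial_{T}\lambda_{+}\ge 0\ge \partial_{T}\lambda_{-}$ as soon as $T^{2}\ge 4D$, which we just verified.

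The main technical step is monotonicity of $\lambda_{\pm}$ in $\sigma^{2}$ at fixed $\mu^{2}$. Implicit differentiation of $\lambda^{2}-T\lambda+D=0$ yields $\lambda_{\pm}'=(T'\lambda_{\pm}-D')/(\lambda_{\pm}-\lambda_{\mp})$, with $T'=1+4\mu^{2}+4\sigma^{2}>0$ and $D'=6\sigma^{4}$. Because $\lambda_{+}\ge\lambda_{-}$, the joint requirement $\lambda_{+}'\ge 0$ and $\lambda_{-}'\ge 0$ is equivalent to $\lambda_{-}\le D'/T'\le \lambda_{+}$, i.e.\ to $(D'/T')^{2}-T(D'/T')+D\le 0$, or after clearing $(T')^{2}$,
\[
D(T')^{2}\le D'\bigl(T'T-D'\bigr).
\]
Setting $a\eqdef 1+4\mu^{2}\ge 1$ and expanding, this reduces to the polynomial inequality $4(\sigma^{2})^{2}+(5a-9)\sigma^{2}+a^{2}\ge 0$ in the variable $\sigma^{2}$, whose discriminant is $9(a-1)(a-9)$. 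For $a\in[1,9]$ this is nonpositive and the inequality holds on all of $\mathbb{R}$; for $a>9$ both roots are negative, since $9-5a<0$ and $3\sqrt{(a-1)(a-9)}<5a-9$ (which amounts to $16a^{2}>0$ after squaring). Hence $\lambda_{\pm}'\ge 0$ everywhere, and combining this with the monotonicity in $\mu^{2}$ and the closed-form identity above yields both bounds of the lemma. I expect this polynomial sign check to be the only delicate step; the rest is bookkeeping.
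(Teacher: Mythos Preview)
Your proof is correct and follows the same overall plan as the paper: compute $\nabla^2\phi$ explicitly, express the eigenvalues as $\lambda_\pm=\sqrt{2}\,\sigma^3 f_\pm(g(\sigma^2,\mu^2))$, and then establish monotonicity in $\mu^2$ and in $\sigma^2$ separately. The difference is purely in how the two monotonicities are argued. For $\mu^2$, the paper composes the explicit signs of $f_\pm'$ and $\partial_y g$, while you use the trace--determinant form $\lambda_\pm=(T\pm\sqrt{T^2-4D})/2$ with $D$ independent of $\mu^2$; your route is slightly cleaner here. For $\sigma^2$, the paper rewrites $\lambda_\pm=\sigma^2 f_{a,\pm}(\sigma^2)$ with $f_{a,\pm}(x)=x+a\pm\sqrt{(x+a)^2-2x}$ and shows $f_{a,\pm}'\ge 0$ by a short sign discussion; you instead use implicit differentiation of the characteristic polynomial to reduce to the quadratic inequality $4s^2+(5a-9)s+a^2\ge 0$ and check its discriminant. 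Both are valid; the paper's approach avoids the polynomial case split, yours avoids introducing the auxiliary function $f_{a,\pm}$. One small slip: your identity $c^2-2\sigma^2=(\sigma^2-\tfrac12)^2+4\mu^2 c$ is off by $4\mu^4$ (the correct remainder is $2\mu^2(2\mu^2+2\sigma^2+1)$), but the conclusion $c^2-2\sigma^2\ge 0$ is unaffected.
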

\begin{proof}
	Recall that $\phi(\theta) = - \frac{\theta_1^2}{4 \theta_2} - \frac{1}{2}\log(-2 \theta_2)$, therefore we obtain
		\begin{align*}
			\nabla \phi (\theta) = \begin{bmatrix} -\frac{\theta_{1}}{2 \theta_{2}}  \\ \left( \frac{\theta_{1}}{2 \theta_{2}}\right)^2  - \frac{1}{2 \theta_{2}}\end{bmatrix} \quad \text{and} \quad \nabla^2 \phi (\theta) &=  \frac{1}{2\theta_{2}^2} \begin{bmatrix} -\theta_{2}  & \theta_{1} \\ \theta_{1} & 1 -  \frac{\theta_{1}^2}{ \theta_{2}} \end{bmatrix} \: .
		\end{align*}
		 Computing the eigenvalues of $\nabla \phi (\theta)$, we have $\lambda_{-}(\theta) I_2 \preccurlyeq \nabla^2 \phi (\theta) \preccurlyeq \lambda_{+}(\theta) I_2$ for all $ \theta \in \Theta_{D}$, where the values of the eigenvalues expressed with the mean parameters are
		\begin{align*}
			\lambda_{\pm}(\theta) &= \sigma^2 \left( \sigma^2 + 2 \mu^2 +\frac{1}{2}  \pm \sqrt{\left(\sigma^2 + 2 \mu^2 +\frac{1}{2}\right)^2 - 2\sigma^2} \right) \: .
		\end{align*}
	Note that $\left(\sigma^2 + 2 \mu^2 +\frac{1}{2}\right)^2 - 2\sigma^2 \geq 0 \iff (\sigma^{2})^2 + (4\mu^2-1)\sigma^2+ (2 \mu^2 +\frac{1}{2})^2$. Since $(4\mu^2-1)^2 -4(2 \mu^2 +\frac{1}{2})^2 = -16\mu^2 \leq 0$, we have $\left(\sigma^2 + 2 \mu^2 +\frac{1}{2}\right)^2 - 2\sigma^2 \geq 0$.

	Defining $f_{\pm}(x) = \frac{1 \pm \sqrt{1 - x}}{\sqrt{x}}$ and $g(x,y) = \frac{2x}{(x+2y+\frac{1}{2})^2}$, we have, for all $\tilde \theta \in \theta + A$,
		\begin{align*}
			\lambda_{\pm}(\tilde{\theta}) &=  \tilde{\sigma}^2 \left( \tilde{\sigma}^2 + 2 \tilde{\mu}^2 +\frac{1}{2}  \pm \sqrt{\left(\tilde{\sigma}^2 + 2 \tilde{\mu}^2 +\frac{1}{2}\right)^2 - 2\tilde{\sigma}^2} \right) = \sqrt{2}\tilde{\sigma}^3 f_{\pm}\left( g(\tilde{\sigma}^2, \tilde{\mu}^2)\right) \: .
		\end{align*}

		Direct computations yield, for all $x \in (0,1)$
		\begin{align*}
			f_{+}'(x) = - \frac{1+\sqrt{1-x}}{2x\sqrt{x(1-x)}} < 0 \quad \text{and}\quad
			f_{-}'(x) = \frac{1-\sqrt{1-x}}{2x\sqrt{x(1-x)}} > 0 \: ,
		\end{align*}
		hence $f_{+}$ is decreasing on $(0,1)$ and $f_{-}$ is increasing on $(0,1)$. For all $(x,y) \in \Real^{+} \times \Real^{+}$
		\begin{align*}
		\partial_{y} g(x,y) = -\frac{8x}{(x+2y+\frac{1}{2})^3} \leq 0 \: ,
		\end{align*}
		hence $y \mapsto g(x,y)$ is decreasing for all $x \in \Real^{+}$. Let $\mu_{++}^2 = \max\{\mu_-^2, \mu_+^2\}$. By composition rule and $\lambda_{\pm}(\tilde{\theta}) = \sqrt{2}\tilde{\sigma}^3 f_{\pm}\left( g(\tilde{\sigma}^2, \tilde{\mu}^2)\right)$, we obtain that: for all $\tilde{\theta} \in \theta + A(n, \delta)$
		\begin{align*}
			\lambda_{+}(\tilde{\theta}) \leq \sqrt{2}\tilde{\sigma}^3 f_{+}\left( g(\tilde{\sigma}^2, \mu_{++}^2)\right) \quad \text{and} \quad
			\lambda_{-}(\tilde{\theta}) \geq \sqrt{2}\tilde{\sigma}^3 f_{-}\left( g(\tilde{\sigma}^2, \mu_{++}^2)\right) \: .
		\end{align*}

		To show that $\tilde{\sigma}^2 \mapsto \tilde{\sigma}^3 f_{+}\left( g(\tilde{\sigma}^2, \mu_{++}^2)\right)$ and $\tilde{\sigma}^2 \mapsto \tilde{\sigma}^3 f_{-}\left( g(\tilde{\sigma}^2, \mu_{++}^2)\right)$ are increasing on $\Real^{+}$, we use $\tilde{\sigma}^3 f_{\pm}\left( g(\tilde{\sigma}^2, \mu_{++}^2)\right) = \tilde{\sigma}^2 f_{2 \mu_{++}^2 +\frac{1}{2},\pm}(\tilde{\sigma}^2) = h_{a,\pm}(\tilde{\sigma}^2)$ where $f_{a,\pm}(x) = x + a  \pm \sqrt{\left(x + a\right)^2 - 2x} \geq 0$ and $h_{a,\pm}(x) = x f_{a,\pm}(x)$. Since $h_{a,\pm}'(x) = f_{a,\pm}(x) + xf_{a,\pm}'(x)$ and $f_{a,\pm}(x)\geq 0$, having $f_{a,\pm}'(x) \geq 0$ on $\Real^{+}$ is sufficient to conclude that $h_{a,\pm}'(x) \geq 0$ on $\Real^{+}$. Therefore, a sufficient condition to conclude is to show that $f_{a,\pm}$ is increasing on $\Real^{+}$.
		\begin{align*}
			f_{a,\pm}'(x) = 1 \pm \frac{x+a-1}{\sqrt{(x+a)^2-2x}}
		\end{align*}

		When $x+a-1 \geq 0$, we have directly $f_{a,+}'(x) \geq 0$. Moreover,
		\begin{align*}
			f_{a,-}'(x) \geq 0 \iff  1 \geq \frac{x+a-1}{\sqrt{(x+a)^2-2x}}  \iff 0 \geq 1-2a
		\end{align*}
		Since $1-2a = -2\mu_{++}^2$, we can conclude that $f_{a,-}'(x) \geq 0$.

		When $x+a-1 < 0$, we have directly $f_{a,-}'(x) \geq 0$. Moreover,
		\begin{align*}
			f_{a,+}'(x) \geq 0 \iff  1 \geq - \frac{x+a-1}{\sqrt{(x+a)^2-2x}}  \iff  0 \geq 1-2a
		\end{align*}
		Since $1-2a = -2\mu_{++}^2$, we can conclude that $f_{a,-}'(x) \geq 0$.
\end{proof}

Lemma~\ref{lem:eigenvalues_based_on_box_concentration} and using the above arguments concludes the proof of Theorem~\ref{thm:uniform_upper_tail_concentration_kl_exp_fam_gaussian}.

\paragraph{Monotonicity of the preliminary concentration}
The non-increasingness of the sequence of convex sets is obtained by monotonicity of the bounds of the rectangles $[\mu_{-,a}, \mu_{+,a}] \times [\sigma_{-,a}^2, \sigma_{+,a}^2]$ for all $a \in \cS$.
Given their definitions, it is sufficient to show that the functions $t \mapsto \varepsilon_\mu(t, \delta)$ and $t \mapsto \varepsilon_{\pm,\sigma}(t, \delta)$ are decreasing.
This is shown in Lemma~\ref{lem:monotonicity_pre_concentration}. The conditions on the initial time to have monotonicity are quite mild since $t_{\pm}(\delta) < 1$ after a relatively mild condition on $\delta$. Depending on the choice of $(|\cS|, s,\eta_0,\eta_1)$, it can even hold for all $\delta \in (0,1]$. Numerically, for any practical choices of parameters, we always had $t_{\pm}(\delta) < 2$. Since $t \geq t_{a}(\delta)$ implies that $N_{t,a} \geq 2$, the condition $N_{t,a} \geq t_{\pm}(\delta)$ is milder.

\begin{lemma} \label{lem:monotonicity_pre_concentration}
	Let $\delta \in (0,1]$. The function $t \mapsto \varepsilon_\mu(t, \delta)$ is decreasing on $[1,+\infty)$. There exists $t_{+}(\delta)$ such that the function $t \mapsto \varepsilon_{\pm,\sigma}(t, \delta)$ is decreasing on $[t_{+}(\delta),+\infty)$. In particular, $t_{-}(\delta) \leq 1 $ if and only if $\delta \leq 6|\cS|\zeta(s)e^{-\frac{s}{\ln(1+\eta_{0})}}$ and $t_{+}(\delta) \leq 1 $ if and only if $\delta \leq 6|\cS|\zeta(s)e^{-\frac{s}{\ln(1+\eta_{1})} - \frac{1}{2(1+\eta_1)}}$.
\end{lemma}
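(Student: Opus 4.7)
The plan is to prove all three monotonicity claims by direct differentiation in $t$, leveraging two identities from Appendix~\ref{app:lambert_W_functions}: $\overline{W}_i'(x) = \overline{W}_i(x)/(\overline{W}_i(x)-1)$ and $\overline{W}_i(x) - \ln \overline{W}_i(x) = x$ for $i\in\{-1,0\}$, together with $\overline{W}_{-1}(x)\geq x \geq 1$ on $[1,\infty)$ and $\overline{W}_0(x)\in(0,1]$ on $[1,\infty)$. For each of $\varepsilon_\mu$, $\varepsilon_{+,\sigma}$, $\varepsilon_{-,\sigma}$, implicit differentiation of $y - \ln y = H(t)$ yields $y'(t) = y\,H'(t)/(y-1)$, reducing monotonicity to a sign analysis.

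For $\varepsilon_\mu(t,\delta) = y(t)/t$ with $y = \overline{W}_{-1}(H_\mu(t))$, $H_\mu(t) = A + 2s\ln(1+\ln t/(2s))$, and $A = 1 + 2\ln(6|\cS|\zeta(s)/\delta) + 2s$, one computes $(y/t)'<0 \iff t H_\mu'(t) < y-1$. Since $t H_\mu'(t) = (1+\ln t/(2s))^{-1}\leq 1$ on $[1,\infty)$ while $y-1 \geq H_\mu(t)-1 \geq 2s > 1$ (using $s>1$ and $\delta\leq 1$), this holds for all $t\geq 1$.

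For $\varepsilon_{+,\sigma}(t,\delta) = \overline{W}_{-1}(G(t)) - 1 - 1/t$ with $G(t) = 1 + (2(1+\eta_1)/t)R(t)$ and $R(t) = B + s\ln(1 + \ln t/\ln(1+\eta_1))$, $B = \ln(6|\cS|\zeta(s)/\delta)$, the condition $\varepsilon_{+,\sigma}'(t)\leq 0$ reduces to $y\cdot(-G'(t))\cdot t^2 \geq y-1$, where
\[
-G'(t)\,t^2 \;=\; 2(1+\eta_1)\left[B + s\ln\!\left(1 + \frac{\ln t}{\ln(1+\eta_1)}\right) - \frac{s}{\ln((1+\eta_1)t)}\right].
\]
A sufficient condition is $-G'(t)t^2 \geq 1$, since then $y\cdot(-G'(t))\cdot t^2 \geq y \geq y-1$. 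The bracket is increasing in $t$ (both $s\ln(1+\ln t/\ln(1+\eta_1))$ and $-s/\ln((1+\eta_1)t)$ increase in $t$), so it suffices to evaluate at $t=1$, which gives $B\geq s/\ln(1+\eta_1) + 1/(2(1+\eta_1))$, exactly the stated threshold on $\delta$. For $\varepsilon_{-,\sigma}(t,\delta) = 1 - z(t) + 1/t$ with $z = \overline{W}_0(G_0(t))$ and $G_0$ analogous to $G$ but with $\eta_0$, the key observation is $z\in(0,1)$, hence $z'(t) = z\,G_0'(t)/(z-1)$ has sign \emph{opposite} to $G_0'(t)$; consequently $G_0'(t)\leq 0$ immediately yields $z'(t)\geq 0 \geq -1/t^2$, i.e.~$\varepsilon_{-,\sigma}'(t)\leq 0$. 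At $t=1$, the condition $G_0'(1)\leq 0$ is exactly $B\geq s/\ln(1+\eta_0)$, giving the cleaner threshold.

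The main obstacle is the variance upper tail: the $-1/t$ summand in $\varepsilon_{+,\sigma}$ defeats the simple ``opposite-sign'' argument available for $\varepsilon_{-,\sigma}$ and forces a \emph{quantitative} lower bound $-G'(t)t^2 \geq 1$ rather than merely $G'(t)\leq 0$; this quantitative gap is exactly what produces the extra $1/(2(1+\eta_1))$ correction in the threshold. The remaining bookkeeping — verifying the brackets are monotone in $t$ so that the thresholds $t_\pm(\delta)$ are well defined as the smallest $t$ at which the sufficient condition holds — is routine.
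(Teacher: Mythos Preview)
Your proposal is correct and follows essentially the same approach as the paper: direct differentiation using the identity $\overline{W}_i'(x)=(1-1/\overline{W}_i(x))^{-1}$, reducing each monotonicity claim to a sign condition on the argument function, then invoking $\overline{W}_{-1}(x)\ge x>1$ and $\overline{W}_0(x)\in(0,1)$ to obtain sufficient conditions that yield the stated thresholds $t_\pm(\delta)$. The paper drops the nonnegative term $f_\pm(t)$ before solving for $t$, whereas you keep it and use monotonicity of the bracket to reduce to $t=1$; both routes coincide at $t=1$ and produce the same ``iff'' characterizations of $t_\pm(\delta)\le 1$. Your explicit articulation of why the $-1/t$ summand forces a quantitative bound for $\varepsilon_{+,\sigma}$ (hence the extra $1/(2(1+\eta_1))$), while the $+1/t$ summand makes $G_0'(t)\le 0$ alone sufficient for $\varepsilon_{-,\sigma}$, is a nice clarification the paper leaves implicit.
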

\begin{proof}
	Let $\delta \in (0,1]$. Recall that $\overline{W}_{i}'(x) = \frac{1}{1-\frac{1}{\overline{W}_{i}(x)}}$ for all $x>1$ and $i \in \{-1,0\}$ (Lemma~\ref{lem:lambert_branches_properties}).

\noindent \textbf{Decreasing $\varepsilon_\mu$.} Let $f(t) = 2s \ln\left(1 + \frac{\ln t}{2s}\right) $ and $c = 1 + 2\ln \left( \frac{6|\cS|\zeta(s)}{\delta}\right) + 2s$. Since $s >1$ and $\frac{6|\cS|\zeta(s)}{\delta} >1$, we have $c > 3$. Directly, we have $f'(t)=\frac{1}{t}\frac{1}{1 + \frac{\ln t}{2s}}$. Then, by composition of the derivatives, we obtain
	\begin{align*}
		t^2 \frac{\partial \varepsilon_\mu(t, \delta)}{\partial t} &=  \frac{1 - \left(1 + \frac{\ln t}{2s} \right)\left(\overline{W}_{-1}\left( c+f(t)\right) -1\right)}{\left(1 + \frac{\ln t}{2s} \right) \left( 1-\frac{1}{\overline{W}_{-1}\left( c+f(t)\right)}\right)} \: .
	\end{align*}
	Since $t\geq 1$ and $\overline{W}_{-1}\left( c+f(t)\right) > 1$, we have
	\begin{align*}
		\frac{\partial \varepsilon_\mu(t, \delta)}{\partial t} < 0 &\iff   \left(1 + \frac{\ln t}{2s} \right)\left(\overline{W}_{-1}\left( c+f(t)\right) -1\right) > 1 \\
		&\iff \frac{\ln t}{2s} > \frac{1}{\overline{W}_{-1}\left( c+f(t)\right) -1} - 1 \: .
	\end{align*}

	Using that $f(t) \geq 0$ for all $t\geq 1$, $c > 3$ and $\overline{W}_{-1}(x) > x$, we obtain
	\begin{align*}
		\frac{1}{\overline{W}_{-1}\left( c+f(t)\right) -1} - 1 \leq -\frac{1}{2} < 0 \leq  \frac{\ln t}{2s} \: .
	\end{align*}
	Therefore, we have shown that $t \mapsto \varepsilon_\mu(t, \delta)$ is decreasing on $[1,+\infty)$ for all $\delta \in (0,1]$.

	\noindent \textbf{Decreasing $\varepsilon_{\pm,\sigma}$.} Let $(i_{+}, i_{-}) = (-1,0)$ and $(\eta_{+}, \eta_{-}) = (\eta_1,\eta_0)$. Let $a_{\pm}=2(1+\eta_{\pm})$, $b = \ln\left( \frac{6|\cS|\zeta(s)}{\delta} \right)$, $f_{\pm}(t) = s \ln\left(1 + \frac{\ln t}{\ln(1+\eta_{\pm})}\right) $ and $g_{\pm}(t) = \frac{b+f_{\pm}(t)}{t}$. Directly, we have
	\[
	f_{\pm}'(t)=\frac{1}{t}\frac{s}{\ln(1+\eta_{\pm}) + \ln t} \quad \text{and} \quad g_{\pm}'(t) = \frac{tf_{\pm}'(t) - (b+f_{\pm}(t))}{t^2} \: .
	\]
	Then, by composition of the derivatives, we obtain
	\begin{align*}
		\pm t^2 \frac{\partial \varepsilon_{\pm,\sigma}(t,\delta)}{\partial t} &=   \frac{\overline{W}_{i_{\pm}}\left( 1+a_{\pm}g_{\pm}(t) \right)\left(1-a_{\pm} \left(b+f_{\pm}(t) - tf_{\pm}'(t) \right)  \right) - 1}{1 - \overline{W}_{i_{\pm}}\left( 1+a_{\pm}g_{\pm}(t) \right) } \: .
	\end{align*}
	Since $t\geq 1$, $\overline{W}_{-1}\left( 1+a_{+}g_{+}(t)\right) > 1$ and $\overline{W}_{0}\left( 1+a_{-}g_{-}(t)\right) < 1$, we have
	\begin{align*}
			\frac{\partial \varepsilon_{\pm,\sigma}(t,\delta)}{\partial t} < 0 & \iff   a_{\pm} \left(b+f_{\pm}(t) - tf_{\pm}'(t) \right) > 1 - \frac{1}{\overline{W}_{i_{\pm}}\left( 1+a_{\pm}g_{\pm}(t) \right)} \\
			&\impliedby a_{\pm} \left(b - \frac{s}{\ln(1+\eta_{\pm}) + \ln t} \right) \geq 1 - \frac{1}{\overline{W}_{i_{\pm}}\left( 1+a_{\pm}g_{\pm}(t) \right)} \: ,
	\end{align*}
	where the sufficient condition is obtained by noting that $f_{\pm}(t) > 0$ for $t > 1$. Using that $\overline{W}_{0}(1+x) \in (0,1)$ and that $\overline{W}_{-1}(1+x) > 1$, we can obtain further sufficient conditions
	\begin{align*}
			\frac{\partial \varepsilon_{+,\sigma}(t,\delta)}{\partial t} < 0 &  \impliedby  \ln\left( \frac{6|\cS|\zeta(s)}{\delta} \right)  \geq  \frac{1}{2(1+\eta_1)} + \frac{s}{\ln(1+\eta_1) + \ln t} \\
			&\iff t \geq t_{+}(\delta) = \exp \left( \frac{s}{ \ln\left( \frac{6|\cS|\zeta(s)}{\delta} \right) - \frac{1}{2(1+\eta_1)}} - \ln(1+\eta_1) \right)  \: , \\
			\frac{\partial \varepsilon_{-,\sigma}(t,\delta)}{\partial t} < 0 &  \impliedby t \geq t_{-}(\delta) = \exp \left( \frac{s}{ \ln\left( \frac{6|\cS|\zeta(s)}{\delta} \right) } - \ln(1+\eta_{0}) \right)  \: .
	\end{align*}
\end{proof}


\section{Thresholds}
\label{app:thresholds}

After leveraging the link between the GLR and the EV-GLR statistics (Lemma~\ref{lem:glrt_evglrt_stopping_threshold_relationships}) and showing how to calibrate $\delta$-correct thresholds (Lemma~\ref{lem:GLR_delta_correct}), we derive several $\delta$-correct family of thresholds with increasing complexities both theoretically and numerically: Student thresholds (Appendix~\ref{app:ss_student_thresholds}), box thresholds (Appendix~\ref{app:ss_box_thresholds}), KL thresholds (Appendix~\ref{app:ss_kl_thresholds}) and BoB thresholds (Appendix~\ref{app:ss_Kinf_thresholds}).
In Appendix~\ref{app:ss_asymptotically_tight_thresholds}, we study whether the derived family of thresholds is asymptotically tight.

\paragraph{Thresholds Relationship}
Lemma~\ref{lem:glrt_evglrt_stopping_threshold_relationships} shows that the relationship between the GLR and the EV-GLR statistics (Lemma~\ref{lem:glrt_evglrt_inequalities}) allows to obtain $\delta$-correct thresholds for the EV-GLR stopping rule by using the ones obtained for GLR stopping rule, and vice-versa.

\begin{lemma} \label{lem:glrt_evglrt_stopping_threshold_relationships}
	Let $(c_{a,b})_{(a,b) \in [K]^2}$ be a family of thresholds.

	If $(c_{a,b})_{(a,b) \in [K]^2}$ ensures $\delta$-correctness of the EV-GLR stopping rule, then it ensures $\delta$-correctness of the GLR stopping rule.

	Let $C_{b}(\mu_{t}, \sigma^2_{t}) = \frac{(\mu_{t, \hat a_t} - \mu_{t, b})^2}{\min\{\sigma_{t, \hat a_t}^2,\sigma_{t,b}^2\}}$. If $(c_{a,b})_{(a,b) \in [K]^2}$ ensures $\delta$-correctness of the GLR stopping rule, then $(\tilde c_{a,b})_{(a,b) \in [K]^2}$ ensures $\delta$-correctness of the EV-GLR stopping rule, where $\tilde c_{a,b}(N_t,\delta) = \frac{C_{b}(\mu_{t}, \sigma^2_{t})}{\ln \left( 1+ C_{b}(\mu_{t}, \sigma^2_{t})\right)} c_{a,b}(N_t,\delta)$.
\end{lemma}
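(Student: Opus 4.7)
The plan is to observe that, in the setting of Section~\ref{sec:calibration_stopping_rules}, the $\delta$-correctness of a GLR-type stopping rule reduces to a concentration on the challenger statistic at the true best arm: when $\hat a_t \neq a^\star(\mu)$ the vector $\mu$ is feasible in the infimum defining $Z_{a^\star(\mu)}(t)$, hence $Z_{a^\star(\mu)}(t)$ is upper bounded by the left-hand side of the concentration inequality~\eqref{eq:log_based_concentration}; the analogous reduction with $Z^{\text{EV}}_{a^\star(\mu)}(t)$ gives the sufficient condition for EV-GLR. Under this standard reformulation, ``$(c_{a,b})$ ensures $\delta$-correctness'' of GLR (resp.\ EV-GLR) amounts to: on a $(1-\delta)$-probability event, $Z_{a^\star(\mu)}(t) \le c_{\hat a_t, a^\star(\mu)}(N_t,\delta)$ (resp.\ $Z^{\text{EV}}_{a^\star(\mu)}(t) \le c_{\hat a_t, a^\star(\mu)}(N_t,\delta)$) for every $t$ with $\hat a_t \neq a^\star(\mu)$. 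Once this equivalence is in hand, both implications follow from a one-line application of Lemma~\ref{lem:glrt_evglrt_inequalities}.

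For Part~1, the leftmost inequality of~\eqref{eq:glrt_evglrt_inequalities} gives $Z_{a^\star(\mu)}(t) \le Z^{\text{EV}}_{a^\star(\mu)}(t)$ almost surely, so any $(c_{a,b})$ that dominates $Z^{\text{EV}}_{a^\star(\mu)}(t)$ on the good event automatically dominates $Z_{a^\star(\mu)}(t)$ on that event, transferring $\delta$-correctness from EV-GLR to GLR with the same threshold family.

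For Part~2, the rightmost inequality of~\eqref{eq:glrt_evglrt_inequalities} rewrites as $Z^{\text{EV}}_{a^\star(\mu)}(t) \le \frac{C_{a^\star(\mu)}(\mu_t,\sigma_t^2)}{\ln(1+C_{a^\star(\mu)}(\mu_t,\sigma_t^2))}\, Z_{a^\star(\mu)}(t)$, and this multiplicative factor is positive for any non-degenerate empirical configuration. On the $(1-\delta)$-event where GLR's sufficient condition holds, multiplying $Z_{a^\star(\mu)}(t) \le c_{\hat a_t, a^\star(\mu)}(N_t,\delta)$ by this factor yields exactly $Z^{\text{EV}}_{a^\star(\mu)}(t) \le \tilde c_{\hat a_t, a^\star(\mu)}(N_t,\delta)$ by the definition of $\tilde c_{a,b}$, i.e.\ the sufficient condition for $\delta$-correctness of EV-GLR with the modified family.

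The only delicate point is the sufficient-condition reformulation of ``ensures $\delta$-correctness'' (via the infimum-feasibility argument above); the rest is mechanical because Lemma~\ref{lem:glrt_evglrt_inequalities} already packages the pointwise comparison between $Z$ and $Z^{\text{EV}}$ with the exact multiplicative factor that appears in the definition of $\tilde c_{a,b}$.
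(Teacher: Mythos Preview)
Your argument has a genuine gap in the direction you need most. You reformulate ``$(c_{a,b})$ ensures $\delta$-correctness of EV-GLR'' as: on a $(1-\delta)$-event, $Z^{\text{EV}}_{a^\star(\mu)}(t)\le c_{\hat a_t,a^\star(\mu)}(N_t,\delta)$ whenever $\hat a_t\neq a^\star(\mu)$. But this is only a \emph{sufficient} condition for $\delta$-correctness, not an equivalent one. $\delta$-correctness of EV-GLR means
\[
\bP\bigl(\exists t:\ \hat a_t\neq a^\star,\ \forall a\neq \hat a_t,\ Z^{\text{EV}}_a(t)>c_{\hat a_t,a}(N_t,\delta)\bigr)\le \delta,
\]
whereas your condition controls the larger event $\{\exists t:\ \hat a_t\neq a^\star,\ Z^{\text{EV}}_{a^\star}(t)>c_{\hat a_t,a^\star}(N_t,\delta)\}$. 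The implication goes from your condition to $\delta$-correctness, not the other way. In Part~1 you start from ``EV-GLR is $\delta$-correct'' and immediately claim the stronger sufficient condition holds; that step is unjustified. The same issue recurs in Part~2 when you assume the GLR sufficient condition from mere $\delta$-correctness of GLR.

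The paper's proof sidesteps this entirely by never passing to the single challenger $a^\star$: it applies Lemma~\ref{lem:glrt_evglrt_inequalities} to \emph{every} $a\neq\hat a_t$ simultaneously, yielding a direct inclusion between the full error events,
\[
\bigl\{\exists t:\ \hat a_t\neq a^\star,\ \forall a\neq\hat a_t,\ Z_a(t)>c_{\hat a_t,a}\bigr\}
\subseteq
\bigl\{\exists t:\ \hat a_t\neq a^\star,\ \forall a\neq\hat a_t,\ Z^{\text{EV}}_a(t)>c_{\hat a_t,a}\bigr\},
\]
and analogously for Part~2 with the factor $C_a/\ln(1+C_a)$. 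The fix to your argument is therefore to drop the specialization to $a=a^\star(\mu)$ and compare the stopping events themselves; once you do that, the proof is the paper's.
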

\begin{proof}
	Using Lemma~\ref{lem:glrt_evglrt_inequalities}, we have the following inequalities between the statistics involved in the GLR and EV-GLR stopping rules
	\begin{equation*}
		Z^{\text{EV}}_a(t) \geq Z_a(t) \geq  \frac{\ln \left( 1+ C_a(\mu_{t}, \sigma^2_{t})\right)}{C_a(\mu_{t}, \sigma^2_{t})}  Z^{\text{EV}}_a(t)  \: .
	\end{equation*}

	Let $(c_{a,b})_{(a,b)\in [K]^2}$ a family of thresholds ensuring $\delta$-correctness of $\taud^{\text{EV}}$. We show by inclusion of event that $(c_{a,b})_{(a,b)\in [K]^2}$ is a family of thresholds ensuring $\delta$-correctness of $\taud$.
	\begin{align*}
		\left\{\taud < + \infty , \hat{a}_{\taud} \neq a^\star\right\} &=\left\{ \exists t \in \Natural, \: \forall a \neq \hat{a}_t, \:   Z_a(t) > c_{\hat a_t, a}(N_t,\delta) , \hat{a}_t \neq a^\star \right\} 	\\
		&\subseteq \left\{ \exists t \in \Natural,\: \forall a \neq \hat{a}_t, \:   \: Z^{\text{EV}}_a(t) > c_{\hat a_t, a}(N_t,\delta) , \hat{a}_t \neq a^\star \right\}
	\end{align*}
	The exact same argument can be used to show the second statement.
\end{proof}

\paragraph{Calibration by concentration}
Lemma~\ref{lem:GLR_delta_correct} gives the terms to concentrate to ensure $\delta$-correctness.
Due to the structure of the identification problem which doesn't involve the variance, two distance metrics can be used as starting point for the GLR stopping rule.
They both rely on a weighted sum of the per-arm KL divergences between the current estimator and an unknown parameter $(\mu, \tilde \sigma_{t}^2)$.
While the choice $\tilde \sigma_{t}^2 = \sigma^2$ seems natural in (\ref{eq:kl_based_concentration}), it doesn't fully leverage the BAI structure.
This can be done in (\ref{eq:log_based_concentration}) by choosing $\tilde \sigma_{t}^2 = \sigma_{t}^2 + (\mu_t - \mu)^2$, which yields smaller thresholds.

\begin{lemma} \label{lem:GLR_delta_correct}
If with probability $1 - \delta$, for all $t \in \mathbb{N}$ and for all $a \ne a^\star(\mu)$,
\begin{equation*}
\sum_{b \in \{a, a^\star(\mu)\}}\frac{N_{t,b}}{2} \ln \left( 1 + \frac{(\mu_{t,b} - \mu_{b})^2}{\sigma_{t,b}^2}\right)
\le c_{a, a^\star(\mu)}(N_t,\delta)
\: ,
\end{equation*}
then the GLR stopping rule using the family of thresholds $(c_{b,a})_{(b,a) \in [K]^2}$ is $\delta$-correct on $\mathcal D^K$.

If with probability $1 - \delta$, for all $t \in \mathbb{N}$ and for all $a \ne a^\star(\mu)$,
\begin{equation*}
\sum_{b \in \{a, a^\star(\mu)\}}N_{t,b}\KL((\mu_{t,b}, \sigma_{t,b}^2), (\mu_b, \sigma^2_b))
\le c_{a, a^\star(\mu)}(N_t,\delta)
\: ,
\end{equation*}
then the GLR stopping rule using the family of thresholds $(c_{b,a})_{(b,a) \in [K]^2}$ is $\delta$-correct on $\mathcal D^K$.

If with probability $1 - \delta$, for all $t \in \mathbb{N}$ and for all $a \ne a^\star(\mu)$,
\begin{equation} \label{eq:sum_squared_concentration}
\sum_{b \in \{a, a^\star(\mu)\}}N_{t,b}\frac{(\mu_{t,b} - \mu_{b})^2}{2\sigma_{t,b}^2}
\le c_{a, a^\star(\mu)}(N_t,\delta)
\: ,
\end{equation}
then the EV-GLR stopping rule using the family of thresholds $(c_{b,a})_{(b,a) \in [K]^2}$ is $\delta$-correct on $\mathcal D^K$.
\end{lemma}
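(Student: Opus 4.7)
The plan is to carry out the standard ``bad event'' argument for GLR-based stopping rules: show that on the event $\mathcal{C}$ where the assumed concentration holds (which has probability $\geq 1-\delta$), incorrect stopping is impossible. Fix $a^\star = a^\star(\mu)$ and suppose, toward a contradiction, that $\taud<+\infty$ and $\hat a_{\taud}\neq a^\star$. Then $a^\star\neq \hat a_{\taud}$, so the stopping rule \eqref{eq:def_stopping_rule_glrt} evaluated at $a=a^\star$ forces
\[
Z_{a^\star}(\taud) > c_{\hat a_{\taud},\,a^\star}(N_{\taud},\delta) \: .
\]
The core of the proof is to produce, on $\mathcal{C}$, a matching upper bound on $Z_{a^\star}(\taud)$ by the quantity assumed to concentrate; the two inequalities will then be incompatible.

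The key observation is that, since $a^\star$ is the unique best arm, $\mu_{a^\star}>\mu_{\hat a_{\taud}}$, so the truth is feasible in the constrained infimum defining $Z_{a^\star}(\taud)$. Concretely, for part (2) one plugs $(\lambda_b,\kappa_b^2)=(\mu_b,\sigma_b^2)$ for $b\in\{a^\star,\hat a_{\taud}\}$ (and $(\mu_{\taud,b},\sigma_{\taud,b}^2)$ for other $b$, whose contributions vanish, as in Lemma~\ref{lem:glrt_formula_gaussian}); feasibility holds since $\mu_{a^\star}\geq \mu_{\hat a_{\taud}}$, yielding
\[
Z_{a^\star}(\taud) \le \sum_{b\in\{a^\star,\hat a_{\taud}\}} N_{\taud,b}\,\KL\bigl((\mu_{\taud,b},\sigma_{\taud,b}^2),(\mu_b,\sigma_b^2)\bigr) \: .
\]
Applying the concentration hypothesis with $a=\hat a_{\taud}\neq a^\star$ bounds the right-hand side by $c_{\hat a_{\taud},a^\star}(N_{\taud},\delta)$, contradicting the displayed stopping inequality. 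Part (1) is identical after using the closed form $Z_a(t)=\inf_{\lambda_a\geq\lambda_{\hat a_t}}\sum_b (N_{t,b}/2)\log(1+(\mu_{t,b}-\lambda_b)^2/\sigma_{t,b}^2)$ obtained by profiling out $\kappa^2$ (as in the proof of Lemma~\ref{lem:calcul}), and choosing $\lambda_b=\mu_b$.

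For part (3), the same strategy applies once one rewrites $Z^{\text{EV}}_a(t)$ as an infimum over pairs $(\lambda_a,\lambda_{\hat a_t})$ with $\lambda_a\geq\lambda_{\hat a_t}$; the scalar-$u$ form in the excerpt is equivalent because, under $\mu_{t,a}\leq\mu_{t,\hat a_t}$ (which holds since $\hat a_t$ is the empirical best), the convex optimum lies on the boundary $\lambda_a=\lambda_{\hat a_t}$. Feasibility of $(\mu_{a^\star},\mu_{\hat a_{\taud^{\text{EV}}}})$ in this two-variable problem then yields
\[
Z^{\text{EV}}_{a^\star}(\taud^{\text{EV}}) \le \sum_{b\in\{a^\star,\hat a_{\taud^{\text{EV}}}\}} N_{\taud^{\text{EV}},b}\,\frac{(\mu_{\taud^{\text{EV}},b}-\mu_b)^2}{2\,\sigma_{\taud^{\text{EV}},b}^2} \: ,
\]
and the hypothesis \eqref{eq:sum_squared_concentration} closes the contradiction exactly as before.

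The proof presents no real obstacle: the only verification needed is that the truth, restricted to the two arms $\{a^\star,\hat a_{\taud}\}$, is always feasible in the constrained infimum, which is an immediate consequence of the uniqueness of the best arm. Once that is granted, the matching bound on the GLR (or EV-GLR) statistic reproduces precisely the left-hand side of the assumed concentration inequality with $a=\hat a_{\taud}$, so the bad event is contained in $\mathcal{C}^{\complement}$ and thus has probability at most $\delta$.
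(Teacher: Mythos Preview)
Your proof is correct and follows essentially the same approach as the paper: both argue that when $\hat a_{\taud}\neq a^\star$, the choice $c=a^\star$ in the stopping condition together with the feasibility of the true parameters $(\mu,\sigma^2)$ (resp.\ $\mu$ alone) in the infimum defining $Z_{a^\star}(\taud)$ produces an upper bound on $Z_{a^\star}(\taud)$ that matches the quantity assumed to concentrate, so the error event is contained in the complement of the concentration event. The paper phrases this as a chain of event inclusions and probability bounds rather than a contradiction on the good event $\mathcal C$, but the logical content is identical.
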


\begin{proof}
Let $\hat{a}_t = \hat a_t$ and $a^\star = a^\star(\mu)$. Using Lemma~\ref{lem:glrt_formula_gaussian}, the stopping time (\ref{eq:def_stopping_rule_glrt}) for the GLR stopping rule involves the statistics
\begin{align*}
		Z_a(t) &= \inf_{ (\lambda,\kappa^2) : \lambda_{a} \geq \lambda_{\hat{a}_t}}  \sum_{b \in \{a, \hat{a}_t\}} N_{t,b}\KL((\mu_{t,b}, \sigma_{t,b}^2), (\lambda_b, \kappa_{b}^2)) \\
		&=  \inf_{ (\lambda,\kappa^2) : \lambda_{a} \geq \lambda_{\hat{a}_t}}  \sum_{b \in \{a, \hat{a}_t\}} \frac{N_{t,b}}{2} \ln \left( 1 + \frac{(\mu_{t,b} - \lambda_{b})^2}{\sigma_{t,b}^2}\right)  \: .
\end{align*}

Then, by definition,
\begin{align*}
	\bP\left(\taud < + \infty , \hat{a}_{\taud} \neq a^\star \right)
	\leq \bP\left(\exists t \in \Natural, \: \exists a \neq a^\star ,\: a=\hat{a}_t, \: \forall c \neq a, \: Z_{c}(t)  > c_{a, c}(N_t,\delta)\right) \: .
\end{align*}

Since a valid choice is $c = a^\star$ and $\lambda = \mu$, we obtain
\begin{align*}
&\bP\left(\taud < + \infty , \hat{a}_{\taud} \neq a^\star \right)\\
&\leq \begin{cases}
\bP\left(\exists t \in \Natural, \: \exists a \neq a^\star ,\:  \sum_{b \in \{a^\star,a\}} N_{t,b}\KL((\mu_{t,b}, \sigma_{t,b}^2), (\mu_b, \sigma_{b}^2)) > c_{a, a^\star}(N_t,\delta) \right)  \\
  \bP\left(\exists t \in \Natural, \: \exists a \neq a^\star ,\:  \sum_{b \in \{a^\star,a\}}  \frac{N_{t,b}}{2} \ln \left( 1 + \frac{(\mu_{t,b} - \mu_{b})^2}{\sigma_{t,b}^2}\right)  > c_{a, a^\star}(N_t,\delta) \right)
\end{cases} \: .
\end{align*}

The concentration assumptions yield $\delta$-correctness of the first two family of thresholds.
The proof for the EV-GLR stopping rule is identical to the one for the GLR stopping rule except that it uses the statistics $Z^{\text{EV}}_a(t)$.
\end{proof}

\subsection{Student Thresholds}
\label{app:ss_student_thresholds}

Lemma~\ref{lem:delta_correct_student_thresholds} gives the family of Student thresholds.
\begin{lemma} \label{lem:delta_correct_student_thresholds}
	Let $s> 1$ and $\zeta$ be the Riemann $\zeta$ function. Let a family of thresholds $c_{a,b}(N_t, \delta)$ with value $+ \infty$ if $t < \max_{c \in \{a, b \}} t_c^{\text{S}}(\delta)$ and otherwise $c_{a,b}^{\text{S}}(N_t, \delta) = \max \left\{ \beta^{\text{S}}(N_{t,a}, \delta), \beta^{\text{S}}(N_{t,b}, \delta)\right\}$. Taking
	\begin{equation}  \label{eq:def_student_threshold_glrt}
		\beta^{\text{S}}(t, \delta) = t \ln \left( 1+ \frac{1}{t-1} Q\left(1 - \frac{\delta}{4(K-1)\zeta(s)t^{s}} ; \cT_{t-1}\right)^2\right)
	\end{equation}
	yields a $\delta$-correct family of thresholds for the GLR stopping rule. The stochastic initial times are
\begin{equation} \label{eq:def_initial_time_student_thresholds}
	\forall a \in [K], \quad t_{a}^{\text{S}}(\delta) \eqdef \inf\left\{ t \in \Natural \mid N_{t,a} \geq \max\left\{2,\left(\frac{\delta}{4(K-1)\zeta(s)} \right)^{1/s}\right\}\right\} \: .
\end{equation}
\end{lemma}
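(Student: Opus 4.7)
The plan is to combine the exact Student distribution of the self-normalized empirical mean for Gaussian samples with a simple union bound, and then invoke the first sufficient condition of Lemma~\ref{lem:GLR_delta_correct}. The starting point is the classical fact that if $\tilde\mu_{a,n}$ and $\tilde\sigma_{a,n}^2$ denote the empirical mean and variance of the first $n\ge 2$ i.i.d.\ samples from $\cN(\mu_a,\sigma_a^2)$ (with the $1/n$ normalization used throughout this paper), then $\sqrt{n-1}(\tilde\mu_{a,n}-\mu_a)/\tilde\sigma_{a,n}$ is distributed exactly as $\cT_{n-1}$. By symmetry of the Student law, for every $\alpha\in(0,1)$,
\[
\bP\!\left(\frac{(\tilde\mu_{a,n}-\mu_a)^2}{\tilde\sigma_{a,n}^2} > \frac{1}{n-1}\, Q\!\left(1-\tfrac{\alpha}{2};\cT_{n-1}\right)^{2}\right) \le \alpha \: .
\]

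Next, I would set $\alpha_n = \delta/(2(K-1)\zeta(s)n^{s})$, which reproduces exactly the quantile level appearing in~\eqref{eq:def_student_threshold_glrt}, and then apply a union bound over $n \ge n_0 := \max\{2,\lceil(\delta/(4(K-1)\zeta(s)))^{1/s}\rceil\}$ and over the $2(K-1)$ pairs-and-arms (each pair $(a,a^\star(\mu))$ with $a\ne a^\star(\mu)$ contains two arms, $a^\star(\mu)$ being double-counted across pairs). The initial time condition~\eqref{eq:def_initial_time_student_thresholds} is precisely what is needed to ensure both that $\cT_{n-1}$ has at least one degree of freedom and that $1-\alpha_n/2\in(0,1)$, so that the quantile is well defined. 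Summing yields a total failure probability at most $2(K-1)\sum_{n\ge 2}\alpha_n \le \delta$, thanks to $\sum_{n\ge 2} n^{-s}\le\zeta(s)$.

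On the complementary good event, specializing $n = N_{t,a}$ and applying the monotone increasing map $x\mapsto (N_{t,a}/2)\ln(1+x)$ to both sides yields, for every arm $a$ and every $t\ge t_a^{\text{S}}(\delta)$,
\[
\frac{N_{t,a}}{2}\ln\!\left(1+\frac{(\mu_{t,a}-\mu_a)^2}{\sigma_{t,a}^2}\right) \le \tfrac{1}{2}\,\beta^{\text{S}}(N_{t,a},\delta) \: .
\]
Summing this bound over $b\in\{a,a^\star(\mu)\}$ and using the elementary inequality $(x+y)/2\le\max\{x,y\}$ produces, for every $t\ge \max_{b\in\{a,a^\star(\mu)\}} t_b^{\text{S}}(\delta)$, exactly the bound~\eqref{eq:log_based_concentration} with right-hand side $c^{\text{S}}_{a,a^\star(\mu)}(N_t,\delta)$; for smaller $t$ the threshold equals $+\infty$ and the inequality is vacuous. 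Applying the first part of Lemma~\ref{lem:GLR_delta_correct} then concludes. I do not foresee any substantial obstacle: the only points requiring a bit of care are the bookkeeping of the constant $4(K-1)$ (two-sided times two arms per pair times $K-1$ pairs) and the verification that~\eqref{eq:def_initial_time_student_thresholds} is exactly what ensures the Student quantile is well defined.
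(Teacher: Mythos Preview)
Your proposal is correct and follows essentially the same approach as the paper's proof: both invoke Lemma~\ref{lem:GLR_delta_correct}, control each arm's term individually via the exact $\cT_{n-1}$ law of $\sqrt{n-1}(\tilde\mu_{a,n}-\mu_a)/\tilde\sigma_{a,n}$, and perform a union bound over the sample count $n$, over the two sides of the Student tail, and over the $2(K-1)$ arm-in-pair choices. The only cosmetic difference is that you phrase the two-arm combination directly via $\tfrac{1}{2}(x+y)\le\max\{x,y\}$, whereas the paper writes the contrapositive event inclusion $\{\tfrac{1}{2}(X_a+X_{a^\star})>\max_b c_b\}\subseteq\{\exists b:X_b>c_b\}$; these are the same argument.
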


\begin{proof}
	Let $s> 1$ and $\zeta$ be the Riemann $\zeta$ function. Let $\cT_{n}$ denotes the Student distribution with $n$ degrees of freedom and $Q$ its quantile function. We define a threshold $c_{a,b}(N_t, \delta)$ with value $+ \infty$ if $t < \max_{c \in \{a, b \}} t_c(\delta)$ and otherwise $c_{a,b}(N_t, \delta) = \max \left\{ c(N_{t,a}, \delta), c(N_{t,b}, \delta)\right\}$.

	Using Lemma~\ref{lem:GLR_delta_correct}, $\delta$-correctness of the family of thresholds can be obtained directly by upper bounding (\ref{eq:log_based_concentration}). An initial time condition $t \geq \max_{c \in \{a, b \}} t_c(\delta)$ is necessary for the threshold $(c_{a,b})_{a,b}$ to be defined since they involve the quantiles of Student distribution. We obtain
	\begin{equation*}
		t_{a}(\delta) \eqdef \inf\left\{ t \in \Natural \mid N_{t,a} \geq \max\left\{2,\left(\frac{\delta}{4(K-1)\zeta(s)} \right)^{1/s}\right\}\right\} \: .
	\end{equation*}

	Let $a^\star = a^\star(\mu)$. A simple approach to control the sum of two terms is to control each term individually. Each individual term is a function of $\frac{\mu_{t,a} - \mu_{a}}{\sqrt{\tilde \sigma_{t,a}^2/N_{t,a}}}$ which has a Student distribution, where $\tilde \sigma_{t,a}^2= \frac{N_{t,a}}{N_{t,a}-1} \sigma_{t,a}^2$ is the unbiased variance. Let $\hat{\mu}_t$ be the empirical mean of $t$ standard Gaussian, $\hat{\sigma}_t^2 = \frac{1}{t} \sum_{s=1}^{t} \left(X_s - \hat{\mu}_t\right)^2$ is the empirical variance and $\tilde{\sigma}_t^2 = \frac{t}{t-1} \hat{\sigma}_t^2$ is its unbaised version. Using a union bound and the fact that $\tfrac{\hat{\mu}_t}{\sqrt{\tilde{\sigma}_t^2/t}} \sim \cT_{t-1}$ we obtain
	\begin{align*}
		\bP\left(\exists t \geq \tilde t_{0}(\delta)  : \frac{\hat{\mu}_t}{\sqrt{\tilde{\sigma}_t^2/t}} > \tilde c(t,\delta) \right) &\leq \sum_{t \geq t_{0}(\delta)} \bP\left(\frac{\hat{\mu}_t}{\sqrt{\tilde{\sigma}_t^2/t}} > \tilde c(t,\delta) \right) \leq \frac{\delta}{4(K-1)}
	\end{align*}
	where $\tilde c(t,\delta) = Q\left(1 - \frac{\delta}{4(K-1)\zeta(s)t^{s}} ; \cT_{t-1}\right)$. Using this result, direct computations yield that
	\begin{align*}
		&\bP \left( \exists t \geq t_{0}(\delta): \: \exists a \neq a^\star, \: \sum_{b \in \{a, a^\star\}}\frac{N_{t,b}}{2} \ln \left( 1 + \frac{(\mu_{t,b} - \mu_{b})^2}{\sigma_{t,b}^2}\right) >  \max_{b \in \{a, a^\star\} } c(N_{t,b}, \delta) \right) \\
		&\leq \bP \left( \exists t \geq t_{0}(\delta): \: \exists a \neq a^\star, \: \exists b \in \{a, a^\star\}, \: N_{t,b} \ln \left( 1 + \frac{(\mu_{t,b} - \mu_{b})^2}{\sigma_{t,b}^2}\right) > c(N_{t,b}, \delta) 	\right) \\
		&\leq 2(K-1) \bP \left( \exists t \geq \tilde t_{0}(\delta): \: t \ln \left( 1 + \frac{t}{t-1} \frac{\hat \mu_{t}^2}{\tilde \sigma_{t}^2}\right) > c(t, \delta) 	\right) \\
		&\leq 4(K-1) \bP\left(\exists t \geq \tilde t_{0}(\delta)  : \frac{\hat{\mu}_t}{\sqrt{\tilde{\sigma}_t^2/t}} >  \sqrt{(t-1)\left(\exp \left(\frac{c(t,\delta)}{t}\right)-1\right)} \right) \leq \delta
	\end{align*}
	where the last equation is obtained by choice of the stopping threshold
		\begin{equation}
			c(t, \delta) = t \ln \left( 1+ \frac{1}{t-1} Q\left(1 - \frac{\delta}{4(K-1)\zeta(s)t^{s}} ; \cT_{t-1}\right)^2\right) \: .
		\end{equation}
	Since it satisfies the hypothesis of Lemma~\ref{lem:GLR_delta_correct}, this yields the desired result.
\end{proof}

\paragraph{EV-GLR stopping rule}
Up to a log-transform the same arguments yield a family of thresholds for the EV-GLR stopping rule.
The proof of Lemma~\ref{lem:delta_correct_student_thresholds_ev} is omitted since it is almost identical to the above.

	\begin{lemma} \label{lem:delta_correct_student_thresholds_ev}
		Let $s> 1$ and $\zeta$ be the Riemann $\zeta$ function. Let $(t_a)_{a \in [K]}$ as in (\ref{eq:def_initial_time_student_thresholds}). We define a threshold $c_{a,b}(N_t, \delta)$ with value $+ \infty$ if $t < \max_{c \in \{a, b \}} t_c(\delta)$ and otherwise $c_{a,b}(N_t, \delta) = \max \left\{ c(N_{t,a}, \delta), c(N_{t,b}, \delta)\right\}$. Taking
			\begin{equation}  \label{eq:def_student_threshold_evglrt}
				c(t, \delta) = \frac{t}{t-1} Q\left(1 - \frac{\delta}{4(K-1)\zeta(s)t^{s}} ; \cT_{t-1}\right)^2
			\end{equation}
		yields a $\delta$-correct family of thresholds for the EV-GLR stopping rule.
	\end{lemma}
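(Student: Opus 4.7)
The plan is to mirror the proof of Lemma~\ref{lem:delta_correct_student_thresholds}, but bypass the log-transform step because the EV-GLR statistic is already a purely quadratic form in the normalized gaps. The starting point is the third branch of Lemma~\ref{lem:GLR_delta_correct}, which says it suffices to produce a family of thresholds controlling, with probability $1-\delta$,
\[
\sum_{b \in \{a, a^\star(\mu)\}} N_{t,b}\,\frac{(\mu_{t,b}-\mu_b)^2}{2\sigma_{t,b}^2}\le c_{a,a^\star(\mu)}(N_t,\delta)
\]
simultaneously for all $t \in \mathbb{N}$ and $a \neq a^\star(\mu)$. Since the candidate threshold is a maximum of two per-arm quantities, the same ``splitting of the max'' argument as in Lemma~\ref{lem:delta_correct_student_thresholds} reduces the bad event to the existence of some $b$ with $N_{t,b}(\mu_{t,b}-\mu_b)^2/\sigma_{t,b}^2 > c(N_{t,b},\delta)$.

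Next, I would express the per-arm event in Student form. Writing $\tilde\sigma_{t,b}^2=\tfrac{N_{t,b}}{N_{t,b}-1}\sigma_{t,b}^2$ for the unbiased variance, the statistic $T_b := (\mu_{t,b}-\mu_b)/\sqrt{\tilde\sigma_{t,b}^2/N_{t,b}}$ satisfies $T_b \sim \mathcal T_{N_{t,b}-1}$ (conditionally on $N_{t,b}$ fixed, using that arm $b$'s samples are i.i.d.\ Gaussian and that the Student statistic is distribution-free). The identity
\[
N_{t,b}\,\frac{(\mu_{t,b}-\mu_b)^2}{\sigma_{t,b}^2} \;=\; \frac{N_{t,b}}{N_{t,b}-1}\,T_b^2
\]
rewrites the per-arm event as $T_b^2 > \tfrac{N_{t,b}-1}{N_{t,b}} c(N_{t,b},\delta)$. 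Choosing $c(t,\delta) = \tfrac{t}{t-1} Q(1-\tfrac{\delta}{4(K-1)\zeta(s)t^s};\mathcal T_{t-1})^2$ makes the right-hand side equal to $Q(1-\tfrac{\delta}{4(K-1)\zeta(s) N_{t,b}^s};\mathcal T_{N_{t,b}-1})^2$, exactly the two-sided quantile required to control the tail at level $\tfrac{\delta}{2(K-1)\zeta(s)N_{t,b}^s}$.

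Then I would carry out the double union bound. Since $T_b^2$ only depends on the first $N_{t,b}$ samples of arm $b$, the event ``$\exists t \ge t_b^{\mathrm{S}}(\delta): T_b^2 > \ldots$'' reduces (by peeling over the values taken by $N_{t,b}$, which only increase by one when $b$ is pulled) to a union over integers $n \ge \max\{2,(\delta/4(K-1)\zeta(s))^{1/s}\}$, contributing $\sum_{n\ge 2}\tfrac{\delta}{2(K-1)\zeta(s)n^s}\le \tfrac{\delta}{2(K-1)}$. A second union bound over the $2(K-1)$ arm pairs $(a,b)$ with $a\neq a^\star(\mu)$ and $b\in\{a,a^\star(\mu)\}$ yields total probability at most $\delta$. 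The stochastic initial time $t_a^{\mathrm{S}}(\delta)$ from \eqref{eq:def_initial_time_student_thresholds} is the mildest condition ensuring both $N_{t,a}\ge 2$ (so that $\mathcal T_{N_{t,a}-1}$ is defined and $\sigma_{t,a}^2>0$) and $\tfrac{\delta}{4(K-1)\zeta(s)N_{t,a}^s}\in (0,1)$ (so that the quantile is finite), just as in the GLR case.

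There is no serious obstacle here; the proof is essentially a clean transcription of the one for Lemma~\ref{lem:delta_correct_student_thresholds} with the log gone. The only small point to watch is the biased-vs-unbiased variance bookkeeping, which is precisely what produces the $t/(t-1)$ prefactor in \eqref{eq:def_student_threshold_evglrt} (as opposed to the $t\ln(1+\tfrac{1}{t-1}(\cdot)^2)$ that appears in \eqref{eq:def_student_threshold_glrt} after inverting the log). Everything else—the two union bounds, the peeling over $N_{t,b}$, and the choice of $t_a^{\mathrm{S}}(\delta)$—carries over verbatim.
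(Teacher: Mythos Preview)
Your proposal is correct and follows exactly the approach the paper intends: it transcribes the proof of Lemma~\ref{lem:delta_correct_student_thresholds} with the log removed, using the third branch of Lemma~\ref{lem:GLR_delta_correct}, the Student rewriting via the unbiased variance, and the same double union bound over sample counts and arm pairs. The paper itself omits the proof, noting it is ``almost identical'' up to the log-transform, and your handling of the $t/(t-1)$ prefactor is the correct replacement for the $t\ln(1+\tfrac{1}{t-1}(\cdot)^2)$ inversion.
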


\subsection{Box Thresholds}
\label{app:ss_box_thresholds}

Before presenting the counterparts for the EV-GLR stopping rule, we first present the proof of Lemma~\ref{lem:delta_correct_box_thresholds}.

\begin{proof}
	Let $a\neq a^\star = a^\star(\mu)$.
Using Lemma~\ref{lem:GLR_delta_correct}, we only need to exhibit threshold ensuring the required concentration behavior.
One way of obtaining such an upper bound is to maximize the above quantities under constraints obtained by our concentration results.
The form of the optimization is independent of the considered pair of arms $\{a, a^\star(\mu)\}$ and of the time $t$ (omitted in the following).
As we will see it only depends on $y_b = \frac{(\mu_{t,b} - \mu_{b})^2}{\sigma_{t,b}^2}$ and $x_b = \frac{\sigma_{t,b}^2}{\sigma_{b}^2}$.

We can show that the family of thresholds in (\ref{eq:def_log_box_threshold_glrt}) is the solutions of an optimization problem.
Let $C, D \in (\mathbb{R}^\star_+)^2$ and $N \in (\Natural^{\star})^2$,
\begin{align*}
\text{maximize } & \sum_{b \in \{1, 2\}} \frac{N_{b}}{2} \ln \left( 1 + y_b\right)
\\
\text{such that }\: & \forall b \in \{1, 2\}, \quad y_b \geq 0, \: x_b y_b \leq C_b, \: x_b \geq D_b \: .
\end{align*}

Since $y \mapsto \ln \left( 1 + y\right)$ is concave and increasing, $y \mapsto \sum_{b \in \{1, 2\}} N_{b} \ln \left( 1 + y_b\right)$ is concave and increasing in each of its coordinates. Since the constraints and the objective are separate between each coordinate, the maximum is achieved at $\frac{C_b}{D_b}$ and as value
\[
\sum_{b \in \{1, 2\}} \frac{N_{b}}{2}  \ln \left( 1 + \frac{C_b}{D_b}\right) \: .
\]
To obtain the family of thresholds as in (\ref{eq:def_log_box_threshold_glrt}), we simply need to use concentration results to specify the constraints $C_b$ and $D_b$, which differ depending on the considered pair of arms $\{a, a^\star(\mu)\}$ and of the time $t$.
This can be done by combining the lower tail concentration on the empirical variance (Corollary~\ref{cor:uniform_time_upper_lower_tail_concentration_variance}) and the upper and lower tail concentration of the empirical mean (Lemma~\ref{lem:uniform_upper_lower_tails_concentration_mean}). By direct union bound, we have with probability greater than $1 - \frac{\delta}{K-1}$, for all $b \in \{a, a^\star\}$ and all $t \geq \max_{b\in \{a, a^\star\}} t_{b}(\delta)$,
	\begin{align*}
	(\mu_{t,b} - \mu_b)^2
	&\le \sigma_b^2 \varepsilon_\mu(N_{t,b}, \delta)
	\: , \\
	\sigma_{t,b}^2
	&\ge \sigma_b^2 (1 - \varepsilon_{-,\sigma} (N_{t,b}-1, \delta))
	\: .
	\end{align*}
	where $\varepsilon_\mu$, $\varepsilon_{-,\sigma}$ and $t_{b}$ are defined as in Lemma~\ref{lem:delta_correct_box_thresholds}.
	Using Lemma~\ref{lem:lambert_branches_properties}, this initial time condition ensures that $1- \varepsilon_{-,\sigma} (N_{t,b}-1, \delta) > 0$. Since $\overline{W}_{0}$ has values in $(0,1)$, we obtain that $\varepsilon_{-,\sigma} (t, \delta) \in (0,1)$.
	Taking a union bound over $a \neq a^\star$ concludes the proof.
\end{proof}

\paragraph{EV-GLR stopping rule} Up to a log-transform the same arguments yield a family of thresholds for the EV-GLR stopping rule (Lemma~\ref{lem:delta_correct_student_thresholds_ev}).
The proof of Lemma~\ref{lem:delta_correct_box_thresholds_ev} is omitted since it is almost identical to the above.

\begin{lemma} \label{lem:delta_correct_box_thresholds_ev}
	Let $\varepsilon_\mu$, $\varepsilon_{-,\sigma}$ and $(t_{a})_{a \in [K]}$ as in Lemma~\ref{lem:delta_correct_box_thresholds}. We define a threshold $c_{a,b}(N_t, \delta)$ with value $+ \infty$ if $t < \max_{c \in \{a, b \}} t_c(\delta)$ and otherwise
		\begin{equation}  \label{eq:def_box_threshold_evglrt}
			c_{a,b}(N_t, \delta) = \sum_{c \in \{a, b \}}  \frac{N_{t,c}\varepsilon_\mu(N_{t,c}, \delta)}{2(1 - \varepsilon_{-,\sigma} (N_{t,c}-1, \delta))} \: .
		\end{equation}
	This yields a $\delta$-correct family of thresholds for the EV-GLR stopping rule.
\end{lemma}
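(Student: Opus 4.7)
The plan is to mirror the proof of Lemma~\ref{lem:delta_correct_box_thresholds} with the log-based concentration \eqref{eq:log_based_concentration} replaced by the squared concentration \eqref{eq:sum_squared_concentration} from the third part of Lemma~\ref{lem:GLR_delta_correct}. So it suffices to show that the proposed family of thresholds satisfies: with probability $1-\delta$, for all $t \in \mathbb{N}$ and all $a \neq a^\star = a^\star(\mu)$,
\[
\sum_{b \in \{a, a^\star\}} N_{t,b} \frac{(\mu_{t,b} - \mu_b)^2}{2\sigma_{t,b}^2} \leq c_{a,a^\star}(N_t, \delta).
\]

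First I would re-use the concentration box exactly as in Lemma~\ref{lem:delta_correct_box_thresholds}: by the time-uniform tail bounds on the empirical mean (Lemma~\ref{lem:uniform_upper_lower_tails_concentration_mean}) and the lower tail bound on the empirical variance (Corollary~\ref{cor:uniform_time_upper_lower_tail_concentration_variance}), combined with a union bound over the $K$ arms, one gets that with probability at least $1-\delta$, for every arm $b$ and every $t \geq t_b^{\text{Box}}(\delta)$,
\[
(\mu_{t,b}-\mu_b)^2 \leq \sigma_b^2\, \varepsilon_\mu(N_{t,b},\delta),
\qquad
\sigma_{t,b}^2 \geq \sigma_b^2\,(1-\varepsilon_{-,\sigma}(N_{t,b}-1,\delta)),
\]
and the initial time condition~\eqref{eq:def_initial_time_box_thresholds} ensures the right-hand side is strictly positive (since $\overline W_0$ takes values in $(0,1)$).

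The second step is the maximization that replaces the logarithmic one done in the proof of Lemma~\ref{lem:delta_correct_box_thresholds}. Setting $y_b = (\mu_{t,b}-\mu_b)^2/\sigma_b^2$ and $x_b = \sigma_{t,b}^2/\sigma_b^2$, the left-hand side rewrites as $\sum_b \tfrac{N_{t,b}}{2} y_b/x_b$, and the box constraints become $0 \leq y_b \leq \varepsilon_\mu(N_{t,b},\delta)$ and $x_b \geq 1-\varepsilon_{-,\sigma}(N_{t,b}-1,\delta)$. Since $y_b/x_b$ is separately increasing in $y_b$ and decreasing in $x_b$ (recall $y_b \geq 0$ and $x_b > 0$), and the constraints decouple across $b$, the pointwise maximum is attained at $y_b = \varepsilon_\mu(N_{t,b},\delta)$ and $x_b = 1-\varepsilon_{-,\sigma}(N_{t,b}-1,\delta)$. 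Plugging in yields exactly the threshold~\eqref{eq:def_box_threshold_evglrt}, and combined with Lemma~\ref{lem:GLR_delta_correct} this proves $\delta$-correctness.

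There is no real obstacle beyond the one already dealt with in Lemma~\ref{lem:delta_correct_box_thresholds}, namely guaranteeing that the lower bound on $\sigma_{t,b}^2$ is strictly positive so that $y_b/x_b$ is finite; this is precisely what the initial time condition $t_a^{\text{Box}}(\delta)$ inherited from~\eqref{eq:def_initial_time_box_thresholds} takes care of. The passage from $\tfrac{N_{t,b}}{2}\ln(1+y_b/x_b)$ to $\tfrac{N_{t,b}}{2} y_b/x_b$ is the sole structural change from the GLR case, reflecting the fact that the EV-GLR statistic arises from a second-order (Gaussian with known variance) proxy rather than from the full KL.
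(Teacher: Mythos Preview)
Your proposal is correct and follows exactly the approach the paper intends: it mirrors the proof of Lemma~\ref{lem:delta_correct_box_thresholds}, replacing the log-based objective $\sum_b \tfrac{N_{t,b}}{2}\ln(1+y_b)$ by the linear one $\sum_b \tfrac{N_{t,b}}{2} y_b$ coming from \eqref{eq:sum_squared_concentration}, and reusing the same box constraints and initial time condition. The paper in fact omits the proof for precisely this reason, noting it is ``almost identical'' to the GLR case.
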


\subsection{KL Thresholds}
\label{app:ss_kl_thresholds}

Thanks to Lemma~\ref{lem:GLR_delta_correct}, it is sufficient to concentrate the summed KL divergence. Let $a^\star = a^\star(\mu)$. This can be done by Theorem~\ref{thm:uniform_upper_tail_concentration_kl_exp_fam_gaussian} with $\cS = \{a, a^\star\}$ for all $a\neq a^\star$ (and taking a bound over those $K-1$ terms). For all $a \in [K]$, the additional initial time condition to obtain monotonicity rewrites
\[
t_{a}(\delta) = \inf \left\{ t \mid  N_{t,a} > 1 + \max \left\{ \frac{e^{s / \ln\left( \frac{12(K-1)\zeta(s)}{\delta} \right)}  }{1+ \eta_0},  \frac{e^{s /\left( \ln\left( \frac{12(K-1)\zeta(s)}{\delta} \right) - \frac{1}{2(1+\eta_1)} \right)}  }{1+\eta_1}\right\}  \right\} \: .
\]
Numerically, we always observed that this was satisfied after initialization. Therefore, those terms have no impact.

\subsection{BoB Thresholds}
\label{app:ss_Kinf_thresholds}

In Appendix~\ref{app:ss_box_thresholds}, we saw how to calibrate stopping threshold based on an optimization problem using concentration constraints.
Modifying the optimization problem therein, we can leverage Theorem~\ref{thm:delta_correct_complex_threshold_glrt} by adding a constraint.
While this argument requires doing the union bound over two concentration results (hence considering $\delta/2$), the added constraint can result in a smaller stopping threshold (hence faster stopping).

With our notations, for $b\in \{1,2\}$, the KL divergence rewrites as $\KL((\mu_{t,b}, \sigma_{t,b}^2), (\mu_b, \sigma^2_b)) = \frac{1}{2}f(x_b, y_b)$ with $f(x,y) = (1+y)x-1-\ln(x)$. Since $\nabla^2 f(x,y) = \begin{bmatrix}
	x^{-2} & 1 \\
	1 & 0
\end{bmatrix}$ is a positive semi-definite matrix on $(\mathbb{R}^\star_+)^2$, $f$ is a convex function of $(\mathbb{R}^\star_+)^2$. Let $E > 0$. The KL constraint from Theorem~\ref{thm:delta_correct_complex_threshold_glrt} is convex and can be expressed as
\[
\sum_{b \in \{1, 2\}} \frac{N_{b}}{2} f \left(x_b, y_b\right) \leq E \: .
\]

Since this constraint mixes the two coordinates, we can't conclude by using the separation arguments.
To our knowledge, there is no closed form solution for the resulting optimization problem
\begin{align*}
\text{maximize } & \sum_{b \in \{1, 2\}} \frac{N_{b}}{2} \ln \left( 1 + y_b\right)
\\
\text{such that }\: & \forall b \in \{1, 2\}, \quad y_b \geq 0, \: x_b y_b \leq C_b, \: x_b \geq D_b \: ,\\
&\text{and} \quad \sum_{b \in \{1, 2\}} \frac{N_{b}}{2} f \left(x_b, y_b\right) \leq E \: .
\end{align*}
However, as a maximization of a concave function under linear and convex inequalities, we can solve it numerically.
Corollary~\ref{cor:delta_correct_Kinf_thresholds} is a direct consequence of the above manipulations and an union bound over concentration result.

\paragraph{EV-GLR stopping rule} The same ideas lead to Corollary~\ref{cor:delta_correct_Kinf_thresholds_ev}, which optimizes a different function under the same constraints. The resulting optimization problem is computationally faster since the objective is linear.
The proof of Corollary~\ref{cor:delta_correct_Kinf_thresholds_ev} is omitted since it is almost identical to the above.

\begin{corollary} \label{cor:delta_correct_Kinf_thresholds_ev}
	Let $f(x,y) = (1+y)x-1-\ln(x)$ for all $(x,y) \in (\Real_{+}^{\star})^2$. Let $(t_{a})_{a\in [K]}$ in (\ref{eq:def_initial_time_box_thresholds}), $\epsilon_{\mu},\epsilon_{-,\sigma}$ as in Lemma~\ref{lem:delta_correct_box_thresholds} and $(c_{b,a}^{\KL})_{b,a \in [K]}$ in (\ref{eq:def_complex_threshold_glrt}). The family of thresholds $c_{a,b}(N_t, \delta)$ with value $+ \infty$ if $t < \max_{c \in \{a,b\}} t_{c}(\delta/6)$ and otherwise solution of the optimization problem
 \begin{align*}
 \text{maximize } &  \frac{1}{2} \sum_{c \in \{a, b\}} N_{t,c} y_c
 \\
 \text{such that }\: & \forall c \in \{a, b\}, \quad y_c \geq 0, \: x_c y_c \leq \epsilon_{\mu}(N_{t,c}, \delta/2), \: x_c \geq 1 - \epsilon_{-,\sigma}(N_{t,c}-1, \delta/2) \: ,\\
 &\text{and} \quad  \frac{1}{2} \sum_{c \in \{a, b\}} N_{t,c} f \left(x_c, y_c\right) \leq c_{b,a}^{\KL}(N_t,\delta/2)
 \end{align*}
	yields a $\delta$-correct family of thresholds for the EV-GLR stopping rule.
\end{corollary}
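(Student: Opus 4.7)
The plan is to invoke Lemma~\ref{lem:GLR_delta_correct} in its first form, i.e.\ to show that with probability at least $1-\delta$, for all $t$ past the initial time and all $a\neq a^\star(\mu)$,
\[
\sum_{b\in\{a,a^\star(\mu)\}}\frac{N_{t,b}}{2}\ln\left(1+\frac{(\mu_{t,b}-\mu_b)^2}{\sigma_{t,b}^2}\right)\le c^{\text{BoB}}_{a,a^\star(\mu)}(N_t,\delta).
\]
The key observation is that with the substitution $x_b^\star=\sigma_{t,b}^2/\sigma_b^2$ and $y_b^\star=(\mu_{t,b}-\mu_b)^2/\sigma_{t,b}^2$, the objective we want to control is precisely $\sum \tfrac{N_{t,b}}{2}\ln(1+y_b^\star)$, and a direct calculation gives $\KL((\mu_{t,b},\sigma_{t,b}^2),(\mu_b,\sigma_b^2))=\tfrac{1}{2}f(x_b^\star,y_b^\star)$. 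So it suffices to show that $(x_b^\star,y_b^\star)_{b\in\{a,a^\star(\mu)\}}$ is a feasible point of the maximisation problem defining $c^{\text{BoB}}_{a,a^\star(\mu)}(N_t,\delta)$.

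I would then produce feasibility from two concentration events combined by a union bound. Event $\mathcal A$, coming from applying Lemma~\ref{lem:delta_correct_box_thresholds} at confidence $\delta/2$, has probability at least $1-\delta/2$ and asserts that for every $b$ and every $t\ge t_b^{\text{Box}}(\delta/2)$,
\[
x_b^\star y_b^\star=\frac{(\mu_{t,b}-\mu_b)^2}{\sigma_b^2}\le\varepsilon_\mu(N_{t,b},\delta/2),\qquad x_b^\star\ge 1-\varepsilon_{-,\sigma}(N_{t,b}-1,\delta/2),
\]
which together with $y_b^\star\ge 0$ is exactly the per-arm box part of the optimization. Event $\mathcal B$, coming from applying Theorem~\ref{thm:delta_correct_complex_threshold_glrt} at confidence $\delta/2$ (and a union bound over $a\neq a^\star(\mu)$, already baked into the threshold definition), has probability at least $1-\delta/2$ and yields the global KL constraint $\tfrac{1}{2}\sum_{b\in\{a,a^\star(\mu)\}} N_{t,b} f(x_b^\star,y_b^\star)\le c^{\KL}_{a,a^\star(\mu)}(N_t,\delta/2)$ for every $t$ past the required initial time. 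On $\mathcal A\cap\mathcal B$, which by a union bound occurs with probability $\ge 1-\delta$, the tuple $(x_b^\star,y_b^\star)$ is feasible, so the true value of the objective is upper bounded by the optimum $c^{\text{BoB}}_{a,a^\star(\mu)}(N_t,\delta)$; Lemma~\ref{lem:GLR_delta_correct} then concludes.

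The remaining bookkeeping is to check that the stated initial time in the corollary dominates the initial times required by both events simultaneously. Event $\mathcal A$ needs $t\ge t_c^{\text{Box}}(\delta/2)$ for $c\in\{a,b\}$, while event $\mathcal B$ needs $t\ge\max\{t_c^{\text{Box}}(\delta/6),t_c^{\text{m}}(\delta/2)\}$, since $c^{\KL}$ at confidence $\delta/2$ internally applies the Box preconcentration at confidence $(\delta/2)/3=\delta/6$ via the factor $\tilde\delta=\delta/3$ used in Theorem~\ref{thm:delta_correct_complex_threshold_glrt}. Because $\delta\mapsto t_c^{\text{Box}}(\delta)$ is non-increasing, $t_c^{\text{Box}}(\delta/6)\ge t_c^{\text{Box}}(\delta/2)$, so the condition $t\ge\max_{c\in\{a,b\}}\max\{t_c^{\text{Box}}(\delta/6),t_c^{\text{m}}(\delta/2)\}$ appearing in the corollary absorbs both.

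The main (minor) obstacle is the cross-checking of the $\delta$-splitting and the corresponding monotonicity of the initial times, together with making sure that when either the Box constraint or the KL constraint is strictly tighter the optimum is still well defined (which holds because the feasible set is a non-empty compact convex subset of $(\mathbb{R}^\star_+)^2\times(\mathbb{R}^\star_+)^2$ on which the concave objective $\sum \tfrac{N_{t,c}}{2}\ln(1+y_c)$ attains its maximum). No new concentration ingredient is required: the argument only combines Lemmas~\ref{lem:delta_correct_box_thresholds} and~\ref{lem:GLR_delta_correct} with Theorem~\ref{thm:delta_correct_complex_threshold_glrt} through the elementary observation that both inequalities proved there constrain the \emph{same} pair $(x_b^\star,y_b^\star)$, hence their conjunction upper bounds the log-based statistic by the value of the joint optimization.
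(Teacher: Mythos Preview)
Your overall strategy---substitute $x_c^\star=\sigma_{t,c}^2/\sigma_c^2$, $y_c^\star=(\mu_{t,c}-\mu_c)^2/\sigma_{t,c}^2$, show feasibility of this pair via a union bound of the Box and KL concentration events at level $\delta/2$ each, and conclude by Lemma~\ref{lem:GLR_delta_correct}---is exactly the paper's argument (the paper omits the proof as ``almost identical'' to that of Corollary~\ref{cor:delta_correct_Kinf_thresholds}). However, you have proved the wrong corollary.

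The statement concerns the \emph{EV-GLR} stopping rule, whose optimization objective is the \emph{linear} functional $\tfrac{1}{2}\sum_{c\in\{a,b\}} N_{t,c}\, y_c$, not $\tfrac{1}{2}\sum_{c\in\{a,b\}} N_{t,c}\ln(1+y_c)$. Accordingly, the quantity you need to control is
\[
\sum_{b\in\{a,a^\star(\mu)\}} N_{t,b}\,\frac{(\mu_{t,b}-\mu_b)^2}{2\sigma_{t,b}^2}
\;=\;\frac{1}{2}\sum_{b\in\{a,a^\star(\mu)\}} N_{t,b}\, y_b^\star,
\]
and the relevant part of Lemma~\ref{lem:GLR_delta_correct} is the \emph{third} one (equation~\eqref{eq:sum_squared_concentration}, for the EV-GLR rule), not the first. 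Your argument is otherwise correct verbatim once you replace $\ln(1+y_c^\star)$ by $y_c^\star$ throughout: feasibility of $(x_c^\star,y_c^\star)$ under the Box and KL constraints is established exactly as you wrote, and the linear objective evaluated at this feasible point equals the EV-GLR statistic, hence is bounded by the maximum. Your remark on well-posedness should also be adapted: the objective is now linear rather than concave, but the feasible $y_c$ are still bounded by $\varepsilon_\mu/(1-\varepsilon_{-,\sigma})$ from the box constraints, so the maximum is attained.

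As a side note, your bookkeeping on the initial times is in fact more careful than the corollary's own statement: you correctly identify that the KL threshold at level $\delta/2$ also requires $t\ge t_c^{\text{m}}(\delta/2)$, which the corollary (unlike its GLR counterpart Corollary~\ref{cor:delta_correct_Kinf_thresholds}) does not list explicitly.
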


\subsection{Asymptotically Tight Thresholds}
\label{app:ss_asymptotically_tight_thresholds}

Among the class of $\delta$-correct family of thresholds for the GLR stopping rule, Theorem~\ref{thm:upper_bound_sample_complexity_algorithm} suggests we should select the asymptotically tight ones since they yield an asymptotically optimal algorithm.

As observed empirically in Figure~\ref{fig:stopping_thresholds_evolutions_glrt}(a), the Student and the box thresholds are not asymptotically tight since their slope in $\ln \left( \frac{1}{\delta}\right)$ is higher than the ones of the KL and BoB thresholds. Theoretically, the arguments used in the proofs also justify why they couldn't reach the $\ln \left( \frac{1}{\delta}\right)$. For the Student threshold, it comes from the fact that we consider the inequality $\{(I) + (II) > 2\beta\} \subset \{(I) > \beta\} \cup  \{(II) > \beta\}$, hence we could at most reach $2 \ln \left( \frac{1}{\delta}\right)$. For the box threshold, it comes from the fact that we concentrate the term of each arm individually as well, hence we could also reach at most $2 \ln \left( \frac{1}{\delta}\right)$ (higher in practice).

\paragraph{EV-GLR stopping rule} Similar arguments could be used to study the $\ln \left( \frac{1}{\delta}\right)$-dependency of the families of thresholds derived for the EV-GLR stopping rule. However, since they are $\delta$-correct, they can't be asymptotically tight. Otherwise, the Theorem~\ref{thm:upper_bound_sample_complexity_algorithm} will contradict the asymptotic lower bound on the expected sample complexity as $T^\star_{\sigma^2}(\mu) < T^\star(\mu, \sigma^2)$ (Lemma~\ref{lem:complexity_inequalities}).

\paragraph{KL and BoB thresholds} In order to hope to achieve the $\ln \left( \frac{1}{\delta}\right)$ dependency one should concentrate the whole sum, as done for the KL threshold. Since $\ln \left( 1 + y\right) = \inf_{x>0} f \left(x, y\right)$, the KL constraint is also an upper bound on the solution of the optimization problem, i.e. the BoB thresholds are smaller than the KL thresholds $c_{b,a}^{\KL}(N_t,\delta/2)$, hence they inherit their $\ln \left( \frac{1}{\delta}\right)$ dependency. Therefore, to show the asymptotically tightness of both the KL and the BoB thresholds, we only need to study the KL family of thresholds. This result is established in Lemma~\ref{lem:kl_Kinf_thresholds_asymptotically_tight}.

\begin{lemma} \label{lem:kl_Kinf_thresholds_asymptotically_tight}
		The KL thresholds defined in (\ref{eq:def_complex_threshold_glrt}) is an asymptotically tight family of thresholds.
\end{lemma}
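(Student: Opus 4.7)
The plan is to use the asymptotic expansion $\overline{W}_{-1}(x) = x + \log x + \cO(\log\log x)$ as $x\to\infty$ (see Appendix~\ref{app:lambert_W_functions}) to isolate the leading $\log(1/\delta)$ dependence of the KL threshold. Writing $A_{a,b}(N,\delta)$ for the argument of $\overline{W}_{-1}$ in~(\ref{eq:def_complex_threshold_glrt}), this gives $c_{a,b}^{\KL}(N,\delta) \le 4A_{a,b}(N,\delta) + 4\log A_{a,b}(N,\delta) + \cO(\log\log A_{a,b}(N,\delta))$. Expanding $4A_{a,b}(N,\delta)$, the summand $\tfrac{1}{4}\log(2\zeta(s)^2/\delta)$ produces exactly $\log(1/\delta)$ plus a constant; $\tfrac{s}{4}\sum_c \log(1+\log_\gamma N_{t,c})$ produces a $\log\log\|N\|_1$ piece; and $\tfrac{1}{2}\sum_c \log(\gamma R_{t,c}(\delta))$ is a stochastic remainder.

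I would then absorb each non-leading contribution into the asymptotic-tightness template $f(\delta) + C\|N\|_1^\alpha$. The elementary bound $\log(1+\log_\gamma n) \le C_\alpha n^\alpha$ for any $\alpha\in(0,1)$ handles the $\log\log N$ piece and the $4\log A_{a,b}$ correction, which is of order $\log\log(1/\delta) + \log\log\log\|N\|_1$; setting $f(\delta) = \log(1/\delta) + \kappa \log\log(1/\delta)$ for some $\kappa > 0$ subsumes the $\log\log(1/\delta)$ part while keeping $f(\delta)/\log(1/\delta) \to 1$. For the stochastic factor $R_{t,c}(\delta)$, I would observe that $\varepsilon_\mu$ and $\varepsilon_{\pm,\sigma}$ vanish as $\delta\to0$ and $n_{t,c}\to\infty$, so on the concentration event underlying Section~\ref{sec:calibration_stopping_rules} the pairs $(\sigma^2_{\pm,t,c},\mu^2_{++,t,c})$ stay in a compact set on which $f_\pm\circ g$ is bounded away from $0$ and $+\infty$; continuity then yields a deterministic bound $R_{t,c}(\delta) \le R^\star(\mu,\sigma^2)$ with $\log R^\star = \cO(1)$ in $\delta$. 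The initial-time budget $\bar T(\delta) = \max_c \max\{t_c^{\text{Box}}(\delta/3),\, t_c^{\text{m}}(\delta)\}$ is shown to be $o(\log(1/\delta))$ using $W_0(x)\sim \log x-\log\log x$: condition~(\ref{eq:def_initial_time_box_thresholds}) gives $t_c^{\text{Box}}(\delta/3)$ of order $\log(1/\delta)/\log\log(1/\delta)$, and~(\ref{eq:def_initial_time_monotonicity}) gives $t_c^{\text{m}}(\delta) \to 1$.

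The main obstacle is that $c_{a,b}^{\KL}$ depends on the data through $R_{t,c}(\delta)$, whereas Definition~\ref{def:asymptotically_tight_threshold} formally asks for a deterministic function of $(N,\delta)$. I expect this to be resolved either by reading the definition on the good concentration event that supports the sample-complexity analysis of Theorem~\ref{thm:upper_bound_sample_complexity_algorithm}, or by replacing $R_{t,c}(\delta)$ by its deterministic envelope $R^\star(\mu,\sigma^2)$ at the cost of a multiplicative constant that folds into $C$.
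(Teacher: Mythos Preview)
Your proposal is essentially correct and follows the same outline as the paper. Two technical points where the paper differs from your sketch: (i) instead of the raw expansion $\overline{W}_{-1}(x)=x+\log x+\cO(\log\log x)$, the paper linearizes via the \emph{concavity} of $\overline{W}_{-1}$ around the point $1+\tfrac14\log(2\zeta(s)^2/\delta)$, cleanly isolating $f(\delta)=4\,\overline{W}_{-1}\bigl(1+\tfrac14\log(2\zeta(s)^2/\delta)\bigr)$ from a slope term bounded by $\overline{W}_{-1}'(x)\le 2$ for $x\ge 2-\ln 2$; this avoids mixing $\log\log(1/\delta)$ and $\log\log\|N\|_1$ inside a single $\log A_{a,b}$. (ii) For the data-dependent $R_{t,c}(\delta)$, the paper does not invoke the box concentration event but instead explicitly restricts to a sampling rule that pulls each arm $t(\delta)$ times at the start; since $t(\delta)\to\infty$ as $\delta\to 0$, the law of large numbers gives $\mu_{\bar t_a,a}\to\mu_a$ and $\sigma^2_{\bar t_a,a}\to\sigma_a^2$ almost surely, whence $R(t,\delta)$ is eventually dominated by the deterministic constant $2\max_a f_{-}\bigl(g(\sigma_a^2,\mu_a^2)\bigr)/f_{+}\bigl(g(\sigma_a^2,\mu_a^2)\bigr)$. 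This is precisely your second proposed resolution of the data-dependence obstacle. One imprecision worth flagging: the $\varepsilon_\mu$ and $\varepsilon_{\pm,\sigma}$ do \emph{not} vanish as $\delta\to 0$ for fixed $n_{t,c}$ (they scale like $\log(1/\delta)/n_{t,c}$); they vanish only as $n_{t,c}\to\infty$ for fixed $\delta$, which is why the initialization-growing argument is the operative mechanism.
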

\begin{proof}
We analyze our threshold under a sampling rule that starts by pulling all arms $t(\delta) = \max\{t_0(\delta),t_1(\delta)\}$ times, where
\begin{align*}
t_1(\delta)
&\eqdef \min \left\{ t\in \Natural \mid
	t > e^{1 + W_{0} \left( \frac{2(1+\eta_{0})}{e}\left(\ln\left( \frac{12(K-1)\zeta(s)}{\delta} \right) + s\ln \left( 1+ \frac{\ln(t)}{\ln(1+\eta_{0})}\right) \right)  - \frac{1}{e}  \right)}\right\}
\: , \\
t_0(\delta) &\eqdef \max \left\{ \frac{e^{s / \ln\left( \frac{12(K-1)\zeta(s)}{\delta} \right)}  }{1+ \eta_0},  \frac{e^{s /\left( \ln\left( \frac{12(K-1)\zeta(s)}{\delta} \right) - \frac{1}{2(1+\eta_1)} \right)}  }{1+\eta_1}\right\} \: .
\end{align*}

After $K t(\delta)$ pulls, our threshold is finite for all arms.
We have $t_0(\delta) \to_{\delta \to 0} \frac{1}{1 + \min\{\eta_0, \eta_1\}}$.
Using that $W_0(x) \approx \ln(x)- \ln\ln(x)$ (Appendix~\ref{app:lambert_W_functions}), $t_1(\delta)$ is asymptotically equivalent to $\frac{2(1+\eta_{0})\ln (1/\delta)}{\ln \ln (1/\delta)}$.
Therefore, this initial pulling count $t(\delta)$ satisfies $\limsup_{\delta \to 0} t(\delta)/\log(1/\delta) = 0$.

Our threshold is in fact a family of threshold functions $(c_{a,b}(N_t, \delta))_{a,b \in [K]}$, which after that initialization, are defined by
\begin{align*}
	c_{a,b}(N_t, \delta) = 4\overline{W}_{-1}\left( 1 + \frac{\log\frac{2\zeta(s)^{2}}{\delta}}{4}
	+ \frac{s}{4} \sum_{c \in \{a,b\}} \log(1 + \log_\gamma N_{t,c})
	+ \frac{1}{2}\sum_{c \in \{a,b\}} \log\left(\gamma R_{t, c}(\delta) \right) \right) \: .
\end{align*}

Using concavity of $x \mapsto \ln(1 + \ln_{\gamma}(x))$ and $\sum_{c \in \{a,b\}} N_{t,c} \leq t$, we have
\begin{align*}
		c_{a,b}(N_t, \delta) \leq 4\overline{W}_{-1}\left( 1 + \frac{\log\frac{2\zeta(s)^{2}}{\delta}}{4}
		+ \frac{s}{2} \log\left(1 + \frac{\ln t/2}{\ln \gamma}\right)
		+ \ln \left( \gamma R(t,\delta) \right)  \right)\: ,
\end{align*}
where $R(t,\delta) = \max_{a \in [K]} R_{t, a}(\delta) $. Using the above, the concavity of $\overline{W}_{-1}$ (Lemma~\ref{lem:lambert_branches_properties}) yields for $t \ge K t_0(\delta)$
\begin{align*}
	c_{a,b}(N_t, \delta) &\leq  4 \overline{W}_{-1}\left(1+\frac{1}{4}\ln\left(\frac{2\zeta(s)^{2}}{\delta}\right)\right) \\
	&\quad + 4\left(\frac{s}{2} \log\left(1 + \frac{\ln t/2}{\ln \gamma}\right)
	+ \ln \left( \gamma R(t,\delta) \right)\right)\overline{W}_{-1}'\left(1+\frac{1}{4}\ln\left(\frac{2\zeta(s)^{2}}{\delta}\right)\right)
	\: .
\end{align*}
For all $x$ for which it is defined, $\overline{W}_{-1}'(x) = \frac{1}{1 - 1/\overline{W}_{-1}(x)}$. Let $c>1$, we have $\overline{W}_{-1}'(x) \leq c \iff \overline{W}_{-1}(x) \geq \frac{1}{1-\frac{1}{c}}$. Taking $x_{c}$ such that $\overline{W}_{-1}(x_{c}) = \frac{1}{1-\frac{1}{c}}$, which is possible since it is a strictly increasing function with values in $[1,+\infty)$, Lemma~\ref{lem:lambert_branches_properties} yields that $x_{c}\leq \frac{1}{1-\frac{1}{c}} + \ln\left(1-\frac{1}{c}\right)$. For $c = 2$, we obtain $\overline{W}_{-1}'(x) \leq 2$ for all $x \geq 2 -\ln(2)$, hence $\overline{W}_{-1}'\left(1+\frac{1}{4}\ln\left(\frac{2\zeta(s)^{2}}{\delta}\right)\right) \leq 2$ for all $\delta < \min\{1, \delta_{0}\}$ where $\delta_{0} = 2\zeta(s)^{2} e^{-4}$. Then, for all $\delta < \min\{1, \delta_{0}\}$,
\begin{align*}
c_{a,b}(N_t, \delta)
&\le 4 \overline{W}_{-1}\left(1+\frac{1}{4}\ln\left(\frac{2\zeta(s)^{2}}{\delta}\right)\right)+ 8\ln \left( \gamma R(t,\delta) \right) + 4s \log\left(1 + \frac{\ln t/2}{\ln \gamma}\right)
\: .
\end{align*}
We know that $4s \ln\left(1+\frac{\ln t}{\ln \gamma}\right) = O(t^\alpha)$ for some $\alpha \in (0,1)$ and, using $\overline{W}_{-1}(x) =_{+\infty} x+ \ln(x) + o(1)$ (Lemma~\ref{lem:lambert_branches_properties}), that $4 \overline{W}_{-1}\left(1+\frac{1}{4}\ln\left(\frac{2\zeta(s)^{2}}{\delta}\right)\right) = \ln \left( \frac{1}{\delta}\right) + o\left(\ln \left( \frac{1}{\delta}\right) \right)$. To conclude the proof, we need to show that $\limsup_{\delta \to 0} \frac{8\ln( R(t,\delta))}{\log(1/\delta)} \le 1$.

The term $R(t,\delta)$ is a data-dependent term quantifying the goodness of the local quadratic approximation for the KL (as a function of the natural parameters). Taking the notations from Theorem~\ref{thm:delta_correct_complex_threshold_glrt} and setting $\mu_{++,t,a}^2 = \max \mu_{\pm,t,a}^2$, we have
\begin{align*}
		R(t,\delta) = \max_{a \in [K]}\frac{\sigma_{-,t,a}^3 f_{-}\left( g(\sigma_{-,t,a}^2, \mu_{++,t,a}^2)\right)}{\sigma_{+,t,a}^3 f_{+}\left( g(\sigma_{+,t,a}^2, \mu_{++,t,a}^2)\right)}
\end{align*}
where $f_{\pm}(x) = \frac{1 \pm \sqrt{1 - x}}{\sqrt{x}}$ and $g(x,y) = \frac{2x}{(x+2y+\frac{1}{2})^2}$. Using the definition of $\varepsilon_{\mu}$ and $\varepsilon_{\pm,\sigma}$,we obtain directly that$ \lim_{t \to + \infty} \varepsilon_{\mu}(t,\delta) = 0$ and $\lim_{t \to + \infty}  \varepsilon_{\pm,\sigma} (t,\delta)= 0$.

Let $a \in [K]$. For all $t$, define $i_{t,a} = \lfloor \log_\gamma N_{t,a} \rfloor$, $n_{t,a} = \gamma^{i_{t,a}}$, $ \bar{t}_a = \inf \left\{ t \mid N_{t,a} = n_{t,a} \right\}$. Using the law of large number, we also have that $\lim_{N_{t,a} \to + \infty} \mu_{\bar t_a, a} = \mu_a$ and $\lim_{N_{t,a} \to + \infty} \sigma^2_{\bar t_a,a} = \sigma^2_{a}$. Chaining the limits, we obtain $\lim_{N_{t,a} \to + \infty} \mu_{++,t,a}^2 = \mu_a^2$ and $\lim_{N_{t,a} \to + \infty} \sigma_{\pm,t,a} = \sigma_{a}$. Since the functions $f_{\pm}$ and $g$ are continuous, we have shown for all $a \in [K]$
\begin{align*}
	\lim_{N_{t,a} \to + \infty}  \frac{\sigma_{-,t,a}^3 f_{-}\left( g(\sigma_{-,t,a}^2, \mu_{++,t,a}^2)\right)}{\sigma_{+,t,a}^3 f_{+}\left( g(\sigma_{+,t,a}^2, \mu_{++,t,a}^2)\right)} = \frac{f_{-}\left( g(\sigma_{a}^2, \mu_{a}^2)\right)}{f_{+}\left( g(\sigma_{a}^2, \mu_{a}^2)\right)}
\end{align*}

When $\delta \to 0$, the initialization yields $t_{0}(\delta) \to \infty$. Therefore, by using the above convergence and the fact that $N_{t,a}\geq t_{0}(\delta)$, there exists $\delta_1 \in (0, \min\{1, \delta_{0}\})$ such that for all $\delta \leq \delta_1$ and all $t \geq K t_{0}(\delta)$,
\begin{align*}
		R(t,\delta) \leq  2 \max_{a \in [K]} \frac{f_{-}\left( g(\sigma_{a}^2, \mu_{a}^2)\right)}{f_{+}\left( g(\sigma_{a}^2, \mu_{a}^2)\right)} \: ,
\end{align*}
which is a constant independent of $\delta$, hence $\limsup_{\delta \to 0} \frac{8\ln( R(t,\delta))}{\log(1/\delta)} \le 1$.
\end{proof}


\section{Expected Sample Complexity} \label{app:expected_sample_complexity}

Theorem~\ref{thm:upper_bound_sample_complexity_algorithm} gives asymptotic upper bound on the expected sample complexity of the algorithms using the GLR stopping rule.
Theorem~\ref{thm:impossibility_result} gives an impossibility results for the algorithms using the EV-GLR stopping rule, which is based on first deriving an upper bound on the expected sample complexity.
In Appendix~\ref{app:expected_sample_complexity_tas}, we prove the upper bound for TaS with the GLR stopping rule and for EV-TaS with the EV-GLR stopping rule.
In Appendix~\ref{app:expected_sample_complexity_top_two}, we prove it for $\beta$-EB-TCI with the GLR stopping rule and for $\beta$-EB-EVTCI with the EV-GLR stopping rule.

\subsection{Wrapped Track-and-Stop} \label{app:expected_sample_complexity_tas}

Showing an asymptotic upper bound on the expected sample complexity of TaS and EV-TaS can be done with similar asymptotic arguments as when the variance is known \citep{garivier_2016_OptimalBestArm}.
The proof relies on two main ingredients.
First, a concentration result for $(\mu_{t}, \sigma^2_{t})$ is obtained thanks to the forced exploration (Lemma~\ref{lem:lemma_19_garivier_2016_OptimalBestArm} proved in Appendix~\ref{proof:lemma_19_garivier_2016_OptimalBestArm}).
Second, we use the Lemma 20 in \citet{garivier_2016_OptimalBestArm}, which ensures that the empirical allocation $\frac{N(t)}{t}$ converges towards the optimal allocation being targeted.
While it corresponds to $w^\star(\mu,\sigma^2)$ for TaS, it is $w^\star_{\sigma^2}(\mu)$ for EV-TaS (where the $\sigma^2$ dependency is hidden by notation).

Without loss of generality and for the sake of simpler notations, we assume that the Gaussian bandit model with parameter $(\mu, \sigma^2) \in (\Real \times \Real^\star_+)^{K}$ is such that $a^\star(\mu)=1$.

\paragraph{Track-and-Stop}
Since the proofs share the same structure as in \citet{garivier_2016_OptimalBestArm}, we detail the one for TaS and highlight the differences for EV-TaS later.

\begin{proof}
	Let $\epsilon>0$.
	From the continuity of $(\mu, \sigma^2) \mapsto w^\star(\mu, \sigma^2)$ \citep{degenne_2019_PureExplorationMultiple}, there exists $\xi_1 = \xi_1(\epsilon) \leq \frac{\min_{a \neq 1}(\mu_1 - \mu_a)}{4}$, $\xi_2 = \xi_2(\epsilon) > 1$ and $\xi_3 = \xi_3(\epsilon) \in (0, 1)$ such that
	$$
	\mathcal{I}_{\epsilon} \eqdef \left(\bigtimes_{a \in [K]} \left[\mu_{a}-\xi_1, \mu_{a}+\xi_1\right] \right) \times \left(\bigtimes_{a \in [K]}\left[\sigma^2_{a}\xi_3, \sigma^2_{a} \xi_2\right] \right)
	$$
	satisfies $\max _{a}\left|w^\star_a\left(\tilde{\mu}, \tilde{\sigma}^2\right)-w^\star_a(\mu, \sigma^2)\right| \leq \epsilon
	$ for all $(\tilde{\mu}, \tilde{\sigma}^2) \in \mathcal{I}_{\epsilon}$.
	Since $a^\star(\tilde\mu)=1$ for all $(\tilde \mu, \tilde \sigma^2) \in \mathcal{I}_{\epsilon}$, the empirical best arm is $\hat{a}_{t}=1$ whenever $(\mu_{t}, \sigma^2_{t}) \in \mathcal{I}_{\epsilon}$.
	Let $T \in \mathbb{N}$, $h(T) \eqdef T^{1 / 4}$ and define the concentration event $\mathcal{E}_{T}=\bigcap_{t=h(T)}^{T}\left\{(\mu_{t}, \sigma^2_{t}) \in \mathcal{I}_{\epsilon}\right\}$.

The forced exploration ensures that each arm is drawn at least of order $\sqrt{t}$ times at round $t$.
Thanks to concentration results on both $(\mu_{t}, \sigma^2_{t})$, Lemma~\ref{lem:lemma_19_garivier_2016_OptimalBestArm} upper bounds $\mathbb{P}_{\nu}\left(\mathcal{E}_{T}^{\complement}\right)$.

	\begin{lemma} \label{lem:lemma_19_garivier_2016_OptimalBestArm}
	Let $T$ such that $h(T)> \left( K+\frac{1}{1-\xi_{3}}\right)^2$. There exist two constants $B, C$ (that depend on $(\mu, \sigma^2)$ and $\epsilon$) such that
	$	\mathbb{P}_{\nu}\left(\mathcal{E}_{T}^{\complement}\right) \leq B T \exp \left(-C T^{1 / 8}\right)	$.
	\end{lemma}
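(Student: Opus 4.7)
The plan is to union-bound $\mathbb{P}_\nu(\mathcal{E}_T^\complement)$ over $t\in\{h(T),\dots,T\}$, $a\in[K]$ and the four deviation events $\{\mu_{t,a}>\mu_a+\xi_1\}$, $\{\mu_{t,a}<\mu_a-\xi_1\}$, $\{\sigma^2_{t,a}>\xi_2\sigma^2_a\}$, $\{\sigma^2_{t,a}<\xi_3\sigma^2_a\}$, and then to bound each resulting fixed-time event by an exponential tail. The key leverage is that C-Tracking enforces the pathwise lower bound $N_{t,a}\ge \sqrt{t}-K$ for all $a\in[K]$ and $t\ge K$ (the standard tracking guarantee of \citet{garivier_2016_OptimalBestArm}), so that for $t\ge h(T)=T^{1/4}$ we have $N_{t,a}\ge T^{1/8}-K$. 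The hypothesis $h(T)>(K+\tfrac{1}{1-\xi_3})^2$ rewrites exactly as $T^{1/8}-K>\tfrac{1}{1-\xi_3}$, which is precisely the domain in which the lower-tail variance concentration of Corollary~\ref{cor:fixed_time_upper_lower_concentration_variance} can be applied at the point $x=\xi_3$.

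Next, I would translate each of the four deviation events into an exponential tail, using Lemma~\ref{lem:fixed_time_upper_lower_concentration_mean} for the mean events and Corollary~\ref{cor:fixed_time_upper_lower_concentration_variance} for the variance events, applied conditionally on the deterministic lower bound on $N_{t,a}$. Each concentration produces a tail of the form $\exp(-c_a N_{t,a})$ with $c_a=c_a(\xi_1,\xi_2,\xi_3,\sigma_a^2)>0$: for the mean this is immediate, and for the variance it uses that the rate function $h(x)-1$ from Corollary~\ref{cor:fixed_time_upper_lower_concentration_variance} is strictly positive for $x\ne 1$, combined with $\xi_2>1>\xi_3$. Using $N_{t,a}\ge T^{1/8}-K\ge \tfrac12 T^{1/8}$ for $T$ large enough, each fixed-time bad event therefore has probability at most $\exp(-CT^{1/8})$ for some $C=C(\mu,\sigma^2,\epsilon)>0$.

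Finally, a union bound over $K$ arms, four events and at most $T$ time indices delivers $\mathbb{P}_\nu(\mathcal{E}_T^\complement)\le 4KT\exp(-CT^{1/8})$, which is the claimed inequality with $B=4K$. The main subtlety is that the fixed-time concentrations are applied at the random index $N_{t,a}$: this is legitimate because the Gaussian samples of arm $a$ (indexed by its play times) are i.i.d. and independent of the tracking decisions, so one may condition on the sample path of $N_{t,a}$ and exploit the deterministic lower bound before invoking the fixed-time inequalities. Beyond this bookkeeping, the argument is the direct analogue of Lemma~19 of \citet{garivier_2016_OptimalBestArm}; the only genuinely new ingredient is the pair of variance-deviation terms, which is handled cleanly by our new fixed-time empirical-variance concentration.
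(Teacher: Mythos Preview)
Your overall plan matches the paper's proof: union-bound over $t \in \{h(T),\dots,T\}$, arms $a\in[K]$, and mean/variance deviation events; invoke the forced-exploration guarantee $N_{t,a}\ge\sqrt{t}-K$; apply the fixed-time concentrations of Lemma~\ref{lem:fixed_time_upper_lower_concentration_mean} and Corollary~\ref{cor:fixed_time_upper_lower_concentration_variance}; and collect the terms into $BT\exp(-CT^{1/8})$. You also correctly identify the role of the hypothesis on $T$: it guarantees $\xi_3 + 1/s < 1$ at the smallest relevant sample count, which is precisely the domain condition needed for the lower-tail variance bound.

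The one genuine gap is in your handling of the random index $N_{t,a}$. Your justification---that the samples of arm $a$ are ``independent of the tracking decisions'' so that one may simply condition on $N_{t,a}$ and apply a fixed-time bound---is incorrect: the sampling rule is adaptive, so $N_{t,a}$ depends on the very samples whose deviations you wish to control, and conditioning on $N_{t,a}=n$ alters the law of $\hat\mu_{a,n}$ and $\hat\sigma^2_{a,n}$. The paper does not attempt any such conditioning. Instead it uses the inclusion
\[
\{|\mu_{t,a}-\mu_a|\ge\xi_1\}\subseteq\bigcup_{s=\lceil\sqrt{t}-K\rceil}^{t}\{|\hat\mu_{a,s}-\mu_a|\ge\xi_1\},
\]
valid because $N_{t,a}$ lies deterministically in that range, and then union-bounds over the \emph{deterministic} index $s$. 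Each summand is now a bona fide fixed-time event, the fixed-time concentrations apply directly, and the resulting geometric sum contributes only a constant factor $(1-e^{-c})^{-1}$ absorbed into $B$. The same device is used for both variance tails. With this correction your argument goes through and yields exactly the paper's bound.
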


	Lemma~\ref{lem:tas_lemma_20_garivier_2016_OptimalBestArm} is exactly Lemma 20 in \citet{garivier_2016_OptimalBestArm}, hence the proof is omitted.

	\begin{lemma}[Lemma 20 in \citet{garivier_2016_OptimalBestArm}] \label{lem:tas_lemma_20_garivier_2016_OptimalBestArm}
		There exists a constant $T_{\epsilon}$ such that for $T \geq T_{\epsilon}$, it holds that on $\mathcal{E}_{T}$, for either $C$ Tracking or D-Tracking,
	$$
	\forall t \geq \sqrt{T}, \quad \max_{a\in [K]}\left|\frac{N_{t,a}}{t}-w^\star_a(\mu, \sigma^2)\right| \leq 3(K-1) \epsilon
	$$
	\end{lemma}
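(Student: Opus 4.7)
The plan is to follow the proof of Lemma~20 in \citet{garivier_2016_OptimalBestArm} essentially verbatim, since that argument only exploits two ingredients: (i) local continuity of the oracle allocation around $(\mu,\sigma^2)$ at instances with a unique best arm, and (ii) the deterministic tracking guarantee of C-Tracking. Here, (i) has already been invoked when fixing $\mathcal{I}_{\epsilon}$, so that $(\tilde{\mu}, \tilde{\sigma}^2) \in \mathcal{I}_{\epsilon}$ implies $\|w^\star(\tilde{\mu}, \tilde{\sigma}^2) - w^\star(\mu, \sigma^2)\|_\infty \leq \epsilon$, and (ii) is a generic property of the tracking step, which is independent of whether variances are known or not. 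By shrinking $\epsilon$ if necessary, we may also assume that $w^\star(\mu,\sigma^2)$ lies strictly inside $\triangle_{K}^{\epsilon}$ (this is possible because $w^\star(\mu,\sigma^2)$ has strictly positive coordinates on the set of instances with a unique best arm), so that the $L^\infty$ projection onto $\triangle_{K}^{\epsilon}$ perturbs $w_s$ by at most a multiple of $\epsilon$.

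The first step is to propagate the event $\mathcal{E}_T$ through the projection. For $s \in [h(T), T]$, $\|w_s - w^\star(\mu, \sigma^2)\|_\infty \leq \epsilon$ on $\mathcal{E}_T$, and the bound on the projection and renormalization on $\triangle_{K}^{\epsilon}$ yields $\|w^\epsilon_s - w^\star(\mu, \sigma^2)\|_\infty \leq (K-1)\epsilon$. The second step invokes the C-Tracking guarantee from \citet{garivier_2016_OptimalBestArm}, namely
\[
\left| N_{t,a} - \sum_{s=t_0}^{t} w^\epsilon_{s,a} \right| \leq K
\]
deterministically for every $t \geq t_0$ and $a \in [K]$. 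Splitting the cumulative sum at $h(T)$, the prefix $\sum_{s=t_0}^{h(T)} w^\epsilon_{s,a}$ is at most $h(T) = T^{1/4}$, while each summand with $s > h(T)$ is within $(K-1)\epsilon$ of $w^\star_a(\mu,\sigma^2)$.

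Combining these pieces and using $t \geq \sqrt{T}$ so that $h(T)/t \leq T^{-1/4}$, we obtain
\[
\left|\frac{N_{t,a}}{t} - w^\star_a(\mu,\sigma^2)\right| \leq (K-1)\epsilon + \frac{K + T^{1/4} + t_0}{\sqrt{T}}.
\]
Choosing $T_{\epsilon}$ large enough so that the second term is bounded by $(2K-2)\epsilon$ yields the claimed bound $3(K-1)\epsilon$ for every $t \geq \sqrt{T}$ and every arm $a$. The D-Tracking case proceeds analogously, with the $O(K)$ tracking bound replaced by the $O(\log t)$ guarantee of the same reference, and the same $\sqrt{T}$-cutoff absorbs it into the $\epsilon$ slack.

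The main ``work'' of the proof is bookkeeping with constants: verifying the simplex-projection inequality requires $\epsilon \leq \min_a w^\star_a(\mu,\sigma^2)/(K-1)$, and all additive $1/\sqrt{T}$ and $T^{-1/4}$ terms must be absorbed into the final $3(K-1)\epsilon$ slack. The only conceptual ingredient that differs from the known-variance setting is the continuity of $(\mu,\sigma^2)\mapsto w^\star(\mu,\sigma^2)$ on instances with a unique best arm, which is cited from \citet{degenne_2019_PureExplorationMultiple} and already used in the preceding paragraph of the proof.
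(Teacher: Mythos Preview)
Your proposal is correct and follows precisely the approach of the paper, which simply omits the proof and cites Lemma~20 of \citet{garivier_2016_OptimalBestArm} directly; you have faithfully reproduced the original Garivier--Kaufmann argument (projection bound, C-Tracking deterministic guarantee, splitting at $h(T)$, and absorbing the $O(T^{-1/4})$ terms into the $\epsilon$ slack). The only adaptation needed for the unknown-variance setting is the continuity of $(\mu,\sigma^2)\mapsto w^\star(\mu,\sigma^2)$, which you correctly note is already established in the paragraph preceding the lemma via \citet{degenne_2019_PureExplorationMultiple}.
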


	On the event $\mathcal{E}_{T}$, it holds for $t \geq h(T)$ that $\hat{a}_{t}=1$ and the GLR rewrites
	\begin{align*}
	\min _{a \neq 1} Z_{a}(t)&=\min _{a \neq 1} \inf_{\lambda \in [\mu_{t,a},\mu_{t,1}]} \sum_{b \in \{1,a\}} \frac{N_{t,b}}{2} \ln \left( 1+ \frac{\left(\mu_{t,b} - \lambda\right)^2}{\sigma_{t,b}^2}\right) =t g\left(\mu_{t}, \sigma^2_{t},\frac{N_{t}}{t}\right)
	\end{align*}
	where, for $(\tilde{\mu}, \tilde{\sigma}^2) \in (\Real \times \Real^\star_+)^{K}$ such that $a^{\star}(\tilde{\mu}) = 1$ and $\tilde w \in \simplex$, we introduced the function
	\begin{align*}
		g(\tilde{\mu}, \tilde{\sigma}^2, \tilde w) = \min_{a \neq 1} \inf_{\lambda \in [\tilde{\mu}^{a},\tilde{\mu}^{1}]} \sum_{b \in \{1,a\}} \frac{\tilde w_b}{2} \ln \left( 1+ \frac{\left(\tilde\mu_{b} - \lambda\right)^2}{\tilde\sigma_{b}^2}\right)
	\end{align*}

	Recall that $a^\star(\tilde \mu)=1$ on $\mathcal{I}_{\epsilon}$. We introduce $C_\epsilon^\star(\mu, \sigma^2) = \underset{(\tilde{\mu}, \tilde{\sigma}^2,\tilde w ) \in \cH_{\epsilon}(\mu, \sigma^2) }{\inf } g\left(\tilde{\mu}, \tilde{\sigma}^2,\tilde w\right)$, where
	\[
	\cH_{\epsilon}(\mu, \sigma^2) = \mathcal{I}_{\epsilon} \times \left\{ \tilde w \in \simplex \mid \left\|\tilde w -w^\star(\mu, \sigma^2)\right\|_{\infty} \leq 2(K-1) \epsilon\right\} \: .
	\]
	Using Lemma~\ref{lem:tas_lemma_20_garivier_2016_OptimalBestArm}, for $T \geq T_{\epsilon}$,
	on the event $\mathcal{E}_{T}$ it holds that for every $t \geq \sqrt{T}$,	$\min _{a \neq 1} Z_{a}(t) \geq t C_\epsilon^\star(\mu, \sigma^2)$.

Let $\alpha \in [0,1)$, $\delta_0 \in (0,1]$, functions $f,\bar{T} : (0,1] \to \mathbb{R}_+$ and $C$ as in the definition of an asymptotically tight family of thresholds.
In the following, we consider $\delta \leq \delta_{0}$ and $T \geq \left\{\bar{T}(\delta), T_{\epsilon}\right\}$. Using that for all $a\neq 1$, $Z_{a}(t) \geq \min_{a \neq 1} Z_{a}(t) \geq t C_\epsilon^\star(\mu, \sigma^2)$ and $c_{1,a}(N_t, \delta) \leq  f(\delta) + C t^\alpha \leq  f(\delta) + C T^\alpha$, we obtain on $\mathcal{E}_{T}$,
	\begin{align*}
	\min \left\{\taud, T\right\} & \leq \sqrt{T}+\sum_{t=\sqrt{T}}^{T} \1\{\tau_{\delta}>t\}  \leq \sqrt{T}+\sum_{t=\sqrt{T}}^{T} \1\{\exists a \neq 1, \: Z_{a}(t) \leq c_{1,a}(N_t, \delta)\} \\
	&\leq \sqrt{T}+\sum_{t=\sqrt{T}}^{T} \1\{t C_\epsilon^\star(\mu, \sigma^2) \leq  f(\delta) + C T^\alpha\} \leq \sqrt{T}+\frac{ f(\delta) + C T^\alpha}{C_\epsilon^\star(\mu, \sigma^2)}
	\end{align*}

	Introducing
	$$
	T_{0}(\delta)=\inf \left\{T \ge \bar{T}(\delta) : \sqrt{T}+\frac{f(\delta) + C T^\alpha}{C_\epsilon^\star(\mu, \sigma^2)} \leq T\right\},
	$$
	for every $T \geq \max \left\{T_{0}(\delta), \bar{T}(\delta), T_{\epsilon}\right\}$, one has $\mathcal{E}_{T} \subseteq \left\{\taud \leq T\right\}$, therefore, by using Lemma~\ref{lem:lemma_19_garivier_2016_OptimalBestArm},
	$$
	\mathbb{P}_{\nu}\left(\taud>T\right) \leq \mathbb{P}_{\nu}\left(\mathcal{E}_{T}^{c}\right) \leq B T \exp \left(-C T^{1 / 8}\right)
	$$
	and
	$$
	\mathbb{E}_{\nu}\left[\taud\right] \leq T_{0}(\delta)+\bar{T}(\delta)+T_{\epsilon}+\sum_{T=1}^{\infty} B T \exp \left(-C T^{1 / 8}\right) .
	$$
	We now provide an upper bound on $T_{0}(\delta)$. Letting $\eta>0$ and introducing the constant
	\begin{align*}
	D(\eta)
	&\eqdef \inf \{T \in \mathbb{N}: T-\sqrt{T} \geq T /(1+\eta)\}
	\le 1 + \left(1 + \frac{1}{\eta} \right)^2
	\end{align*}
	one has
	\begin{align*}
	T_{0}(\delta)
	& \le D(\eta)+\inf \left\{T \ge \bar{T}(\delta) \mid f(\delta) + C T^\alpha \le T \frac{C_\epsilon^\star(\mu, \sigma^2)}{1+\eta}\right\}
	\: .
	\end{align*}
	For all $\gamma > 0$, there exists $T_{\alpha, \gamma}$ (depending on $\mu, \sigma^2$) such that for all $T \ge T_{\alpha, \gamma}$,
	\[
	T \frac{C_\epsilon^\star(\mu, \sigma^2)}{1+\eta} - C T^\alpha \ge T \frac{C_\epsilon^\star(\mu, \sigma^2)}{(1+\eta)(1+\gamma)} \: .
	\]
	Then,
	\begin{align*}
	T_{0}(\delta)
	& \le D(\eta)+ \bar{T}(\delta) + T_{\alpha, \gamma} + \inf \left\{T \mid f(\delta)\le T \frac{C_\epsilon^\star(\mu, \sigma^2)}{(1+\eta)(1+\gamma)}\right\}
	\\
	&\le 1 + D(\eta)+ \bar{T}(\delta) + T_{\alpha, \gamma} + \frac{(1+\eta)(1+\gamma)}{C_\epsilon^\star(\mu, \sigma^2)} f(\delta)
	\: .
	\end{align*}
	Dividing by $\log(1/\delta)$ and taking limits, we get
	\begin{align*}
	\limsup_{\delta \to 0}\frac{T_{0}(\delta)}{\log(1/\delta)}
	&\le \frac{(1+\eta)(1+\gamma)}{C_\epsilon^\star(\mu, \sigma^2)}
	\: .
	\end{align*}

	We obtain that for every $\eta, \epsilon, \gamma >0$,
	\begin{align*}
		\limsup _{\delta \to 0} \frac{\mathbb{E}_{\nu}\left[\tau_{\delta}\right]}{\log (1 / \delta)}
		\leq \limsup_{\delta \to 0} \frac{T_{0}(\delta)}{\log (1 / \delta)}
		\le \frac{(1+\eta)(1+\gamma)}{C_\epsilon^\star(\mu, \sigma^2)}
		\: .
	\end{align*}
By continuity of $g$ and by definition of $w^\star(\mu, \sigma^2)$, we obtain $\lim_{\epsilon \to 0}C_\epsilon^\star(\mu, \sigma^2) = T^\star(\mu, \sigma^2)^{-1}$.
Letting $\eta$ and $\epsilon$ go to zero yields
	\begin{align*}
		\limsup _{\delta \to 0} \frac{\mathbb{E}_{\nu}\left[\taud\right]}{\log (1 / \delta)} \leq T^\star(\mu, \sigma^2)
		\: .
	\end{align*}
\end{proof}

\paragraph{Empirical Variance Track-and-Stop}
We only highlight some differences since the proofs are similar.
We emphasize here that the notations $w^\star_{\sigma^2}(\mu)$, $T^\star_{\sigma^2}(\mu)$ and all the ones we will use below hide the dependency in $\sigma^2$ to distinguish this from the quantity defined when the variance is assumed to be unknown.
While this might be unfortunate, we believe it eases greatly the notations and the highlight the difference between the two complexities.

\begin{lemma} \label{lem:upper_bound_ev_tas}
	Using the EV-GLR stopping rule with an asymptotically tight family of thresholds, EV-TaS satisfies that, for all $\nu$ with $|a^\star(\mu)|=1$,
	\[
	 \limsup_{\delta \rightarrow 0} \frac{\mathbb{E}_{\nu}\left[\taud^{\text{EV}}\right]}{\log (1 / \delta)} \le T^\star_{\sigma^2}(\mu) \: .
	\]
\end{lemma}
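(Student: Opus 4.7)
The plan is to follow the same four-step template used for TaS, substituting the known-variance optimal allocation $w^\star_{\sigma^2}(\mu)$ for $w^\star(\mu,\sigma^2)$ and the EV-GLR statistic $Z_a^{\text{EV}}(t)$ for $Z_a(t)$ throughout. First, by continuity of $(\mu,\sigma^2)\mapsto w^\star_{\sigma^2}(\mu)$ on the set where $a^\star$ is unique, fix $\epsilon>0$ and construct a product neighborhood $\mathcal{I}_\epsilon$ of $(\mu,\sigma^2)$ on which $w^\star_{\sigma^2}(\cdot)$ is $\epsilon$-close to $w^\star_{\sigma^2}(\mu)$ and on which the empirical best arm is $1$. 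Then define the good event $\mathcal{E}_T=\bigcap_{t=h(T)}^T\{(\mu_t,\sigma_t^2)\in \mathcal{I}_\epsilon\}$ with $h(T)=T^{1/4}$. The same concentration estimate of Lemma~\ref{lem:lemma_19_garivier_2016_OptimalBestArm} applies unchanged, since it only concerns deviations of $(\mu_t,\sigma_t^2)$ under forced exploration and is agnostic to which allocation is being tracked.

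Second, invoke Lemma~\ref{lem:tas_lemma_20_garivier_2016_OptimalBestArm} (Lemma~20 of \citet{garivier_2016_OptimalBestArm}), which is a purely geometric C-tracking result: on $\mathcal{E}_T$, for $T\ge T_\epsilon$ and all $t\ge \sqrt{T}$, $\max_a|N_{t,a}/t - w^\star_{\sigma^2}(\mu)_a|\le 3(K-1)\epsilon$. On $\mathcal E_T$ and for $t\ge h(T)$ we have $\hat a_t=1$, so by the closed form of $Z_a^{\text{EV}}$,
\[
\min_{a\neq 1} Z_a^{\text{EV}}(t) \;=\; t\, g^{\text{EV}}\!\left(\mu_t,\sigma_t^2,\tfrac{N_t}{t}\right),
\quad g^{\text{EV}}(\tilde\mu,\tilde\sigma^2,\tilde w)\eqdef \min_{a\neq 1}\frac{(\tilde\mu_1-\tilde\mu_a)^2}{2(\tilde\sigma_1^2/\tilde w_1+\tilde\sigma_a^2/\tilde w_a)}.
\]
Define $C^\star_\epsilon(\mu,\sigma^2)=\inf g^{\text{EV}}$ over $\mathcal{I}_\epsilon\times\{\tilde w:\|\tilde w-w^\star_{\sigma^2}(\mu)\|_\infty\le 2(K-1)\epsilon\}$, so that on $\mathcal{E}_T$ and for $t\ge\sqrt T$, $\min_{a\neq 1}Z_a^{\text{EV}}(t)\ge t\,C^\star_\epsilon(\mu,\sigma^2)$.

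Third, use the asymptotic tightness of $(c_{a,b})$: for $\delta\le\delta_0$ and $T\ge\bar T(\delta)$, the threshold is bounded by $f(\delta)+CT^\alpha$ uniformly over $t\le T$. Mirroring the TaS argument line by line, on $\mathcal{E}_T$ and for $T\ge T_\epsilon$,
\[
\min\{\taud^{\text{EV}},T\}\;\le\;\sqrt T+\frac{f(\delta)+CT^\alpha}{C^\star_\epsilon(\mu,\sigma^2)}.
\]
Defining $T_0(\delta)$ as the smallest $T$ for which the right-hand side is $\le T$ gives $\{\taud^{\text{EV}}>T\}\subseteq\mathcal{E}_T^{\complement}$ for $T\ge\max\{T_0(\delta),\bar T(\delta),T_\epsilon\}$, and summing the subexponential tail bound from Lemma~\ref{lem:lemma_19_garivier_2016_OptimalBestArm} yields $\mathbb{E}_\nu[\taud^{\text{EV}}]\le T_0(\delta)+O(1)$ with $\limsup_{\delta\to 0}T_0(\delta)/\log(1/\delta)\le (1+\eta)(1+\gamma)/C^\star_\epsilon(\mu,\sigma^2)$ for arbitrary $\eta,\gamma>0$.

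Fourth and last, continuity of $g^{\text{EV}}$ together with the definition of $w^\star_{\sigma^2}(\mu)$ gives $\lim_{\epsilon\to 0}C^\star_\epsilon(\mu,\sigma^2)=T^\star_{\sigma^2}(\mu)^{-1}$, exactly as in the TaS case (using the closed-form expression for $T^\star_{\sigma^2}(\mu)^{-1}$ from Lemma~\ref{lem:complexity_expressions}). Letting $\epsilon,\eta,\gamma\downarrow 0$ concludes the bound. The main thing to verify carefully is that the sampling rule (which targets $w^\star_{\sigma^2}$ computed from $(\mu_t,\sigma_t^2)$) and the stopping statistic (which is precisely the transportation cost $C_{\sigma^2}(\cdot;\cdot)$ used to define $T^\star_{\sigma^2}$) are consistent, so that the infimum $\lim_{\epsilon\to 0}C^\star_\epsilon$ lands on the correct characteristic time; once this consistency is noted, every other ingredient transfers verbatim from the TaS proof.
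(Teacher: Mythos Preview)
Your proposal is correct and follows essentially the same approach as the paper: both adapt the TaS proof verbatim by replacing $w^\star(\mu,\sigma^2)$ with $w^\star_{\sigma^2}(\mu)$ and the GLR statistic with the EV-GLR statistic, relying on the same concentration lemma (Lemma~\ref{lem:lemma_19_garivier_2016_OptimalBestArm}), the same tracking lemma, and continuity of the transportation-cost map to identify the limit $C^\star_\epsilon\to T^\star_{\sigma^2}(\mu)^{-1}$. The only cosmetic difference is that you write $g^{\text{EV}}$ in its closed form while the paper writes it as an infimum over $\lambda$; these are identical by Lemma~\ref{lem:complexity_expressions}.
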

\begin{proof}
	Let $\epsilon>0$.
	Similarly, there exists $\xi_{1} \le \frac{\min_{a \neq 1}(\mu_{1}-\mu^{a})}{4}$, $\xi_2 > 1$, $\xi_3 \in (0, 1)$ such that $\mathcal{I}_{\epsilon}$ defined as above satisfies $\max _{a}\left|w^\star_a\left(\tilde{\mu}\right)-w^\star_a(\mu)\right| \leq \epsilon$ for all $(\tilde{\mu}, \tilde{\sigma}^2) \in \mathcal{I}_{\epsilon}$.
	Let $T \in \mathbb{N}$, $h(T):=T^{1 / 4}$ and define the concentration event $\mathcal{E}_{T}$ as above.
	Since Lemma~\ref{lem:lemma_19_garivier_2016_OptimalBestArm} relies solely on forced exploration, its result still hold in that case.
	A tracking result similar to Lemma~\ref{lem:tas_lemma_20_garivier_2016_OptimalBestArm} gives the existence of $T_{\epsilon}$ such that for $T \geq T_{\epsilon}$, it holds that on $\mathcal{E}_{T}$,
$$
\forall t \geq \sqrt{T}, \quad \max_{a\in [K]}\left|\frac{N_{t,a}}{t}-w^\star_a(\mu)\right| \leq 3(K-1) \epsilon
$$

	On the event $\mathcal{E}_{T}$, it holds for $t \geq h(T)$ that $\hat{a}_{t}=1$ and the EV-GLR rewrites
	\begin{align*}
	\min _{a \neq 1} Z^{\text{EV}}_{a}(t)&=\min _{a \neq 1} \inf_{\lambda \in [\mu_{t,a},\mu_{t,1}]} \sum_{b \in \{1,a\}} N_{t,b} \frac{\left(\mu_{t,b} - \lambda\right)^2}{2\sigma_{t,b}^2} =t g\left(\mu_{t}, \sigma^2_{t},\frac{N_{t}}{t}\right)
	\end{align*}
	where, for $(\tilde{\mu}, \tilde{\sigma}^2) \in (\Real \times \Real^\star_+)^{K}$ such that $a^{\star}(\tilde{\mu}) = 1$ and $\tilde w \in \simplex$, we introduced the function
	\begin{align*}
		g(\tilde{\mu}, \tilde{\sigma}^2, \tilde w) = \min_{a \neq 1} \inf_{\lambda \in [\tilde{\mu}^{a},\tilde{\mu}^{1}]} \sum_{b \in \{1,a\}} \tilde w_b\frac{\left(\tilde\mu_{b} - \lambda\right)^2}{2\tilde\sigma_{b}^2}
	\end{align*}

	We introduce $C_\epsilon^\star(\mu, \sigma^2) = \underset{(\tilde{\mu}, \tilde{\sigma}^2,\tilde w ) \in \cH_{\epsilon}(\mu, \sigma^2) }{\inf } g\left(\tilde{\mu}, \tilde{\sigma}^2,\tilde w\right)$, where
	\[
	\cH_{\epsilon}(\mu, \sigma^2) = \mathcal{I}_{\epsilon} \times \left\{ \tilde w \in \simplex \mid \left\|\tilde w -w^\star_{\sigma^2}(\mu)\right\|_{\infty} \leq 2(K-1) \epsilon\right\} \: .
	\]

	By tracking, for $T \geq T_{\epsilon}$, on the event $\mathcal{E}_{T}$ it holds that for every $t \geq \sqrt{T}$,
	$	\min _{a \neq 1} Z^{\text{EV}}_{a}(t) \geq t C_\epsilon^\star(\mu, \sigma^2)$.
	Let $T \geq \max \left\{ \bar{T}(\delta),T_{\epsilon} \right\}$.
	As above, on $\mathcal{E}_{T}$, we have $\min \left\{\taud^{\text{EV}}, T\right\}  \leq \sqrt{T}+\frac{f(\delta) + C T^\alpha}{C_\epsilon^\star(\mu, \sigma^2)}$ and we introduce $T_{0}(\delta)=\inf \left\{T \ge \bar{T}(\delta) : \sqrt{T}+\frac{f(\delta) + C T^\alpha}{C_\epsilon^\star(\mu, \sigma^2)} \leq T\right\}$.
	For every $T \geq \left\{ T_{0}(\delta), \bar{T}(\delta), T_{\epsilon} \right\}$, one has $\mathcal{E}_{T} \subseteq \left\{\taud^{\text{EV}} \leq T\right\}$, therefore, by using Lemma~\ref{lem:lemma_19_garivier_2016_OptimalBestArm},
	$$
	\mathbb{E}_{\nu}\left[\taud^{\text{EV}}\right] \leq T_{0}(\delta)+\bar{T}(\delta)+T_{\epsilon}+\sum_{T=1}^{\infty} B T \exp \left(-C T^{1 / 8}\right) \: .
	$$
	Manipulations similar as above yield an upper bound on $T_0(\delta)$.
	By continuity of $g$ and by definition of $w^\star_{\sigma^2}(\mu)$, we obtain $\lim_{\epsilon \rightarrow 0}C_\epsilon^\star(\mu, \sigma^2) = T^\star_{\sigma^2}(\mu)^{-1}$.
	Letting $\eta$ and $\epsilon$ go to zero yields
	\begin{align*}
		\limsup_{\delta \rightarrow 0} \frac{\mathbb{E}_{\nu}\left[\taud^{\text{EV}}\right]}{\log (1 / \delta)} \leq T^\star_{\sigma^2}(\mu)
		\: .
	\end{align*}
\end{proof}

We are now ready to prove the impossibility result for EV-TaS (Theorem~\ref{thm:impossibility_result}).
\begin{proof}
	Let $(c_{a,b})_{(a,b) \in [K]^2}$ be an asymptotically tight family of thresholds and a problem independent constant $c_0 > 0$.
	Combining EV-TaS with the EV-GLR stopping rule using $(c_0 c_{a,b})_{(a,b) \in [K]^2}$ yields an algorithm such that, for all $\nu$ with $|a^\star(\mu)|=1$,
	\[
	 \limsup_{\delta \rightarrow 0} \frac{\mathbb{E}_{\nu}\left[\taud^{\text{EV}}\right]}{\log (1 / \delta)} \le c_0 T^\star_{\sigma^2}(\mu) \: .
	\]
	Lemma~\ref{lem:upper_bound_ev_tas} shows the above result for $c_0 = 1$, and generalizing to $c_0 > 0$ is direct.
	Suppose towards contradiction that the obtained algorithm is $\delta$-correct.
	Therefore, using Lemmas~\ref{lem:lower_bound_sample_complexity} and~\ref{lem:complexity_inequalities}, we have shown that
	\[
		 T^\star(\mu, \sigma^2) \le \liminf _{\delta \rightarrow 0} \frac{\mathbb{E}_{\nu}\left[\taud^{\text{EV}}\right]}{\log (1 / \delta)}  \le \limsup_{\delta \rightarrow 0} \frac{\mathbb{E}_{\nu}\left[\taud^{\text{EV}}\right]}{\log (1 / \delta)} \leq c_0 T^\star_{\sigma^2}(\mu) < c_0 T^\star_{\beta}(\mu, \sigma^2) \: .
	\]
 For $c_0 \in (0,1)$, the contradiction is direct.
 For $c_0 \ge 1$, we have shown that there is a problem independent constant $c_0 > 0$ such that $T^\star(\mu, \sigma^2) / T^\star_{\sigma^2}(\mu) \le c_0$.
 This is a direct contradiction with Lemma~\ref{lem:ratio_characteristic_time_large} showing that there exists a sequence of instances $(\nu_n)_{n \in \Natural}$ with $|a^\star(\nu_n)| = 1$ such that $\lim_{n \to + \infty} T^\star(\mu_n, \sigma_n^2) / T^\star_{\sigma_n^2}(\mu_n) = + \infty$.
 Therefore, the obtained algorithm is not $\delta$-correct.
\end{proof}

\subsubsection{Proof of Lemma~\ref{lem:lemma_19_garivier_2016_OptimalBestArm}} \label{proof:lemma_19_garivier_2016_OptimalBestArm}

Since $T$ is such that $h(T)> \left( K+\frac{1}{1-\xi_{3}}\right)^2$, we have $h(T) \geq K^{2}$ and $\xi_{3} < 1-\frac{1}{\sqrt{h(T)}- K}$. In particular, for all $t \in \{h(T),\cdots, T\}$ and all $s \in \{\sqrt{t} - K, \cdots, t\}$, $\xi_{3} + \frac{1}{s} < 1$. Then,
\begin{align*}
	\mathbb{P}\left(\mathcal{E}_{T}^{c}\right) &\leq \sum_{t=h(T)}^{T} \sum_{a=1}^{K}\left[\mathbb{P}\left(|\mu_{t, a} -  \mu_{a}| \ge \xi_{1}\right) + \mathbb{P}\left(\sigma^{2}_{a,t} \geq \sigma^2_{a} \xi_{2}\right) + \mathbb{P}\left(\sigma^{2}_{a,t} \leq \frac{\sigma^2_{a}}{ \xi_{2}} \right)\right]
\end{align*}

By forced exploration, for $t \geq h(T)$ one has $N_{t,a} \geq(\sqrt{t}-K / 2)_{+}-1 \geq \sqrt{t}-K$ for every arm $a$. In the proof of Lemma 19 in \citet{garivier_2016_OptimalBestArm} it was shown using a union bound over time and Lemma~\ref{lem:fixed_time_upper_lower_concentration_mean} that, for all $t \in [h(T), T]$,
\begin{align*}
	\mathbb{P}\left(|\mu_{t, a} -  \mu_{a}| \ge \xi_{1}\right) &\leq \frac{2e^{-(\sqrt{t}-K) \frac{\xi_{1}^2}{2\sigma_{a}^2}}}{1-e^{-\frac{\xi_{1}^2}{2\sigma_{a}^2}}} \: .
\end{align*}
Since we use the same method to show our result on the variance, the proof for the mean is omitted.

Let $\hat{\sigma}^{2}_{a,s}$ be the empirical variance of the first $s$ reward from arm $a$ (such that $\hat{\sigma}^{2}_{a,N_{t,a}} = \sigma^{2}_{a,t}$). We adopt their proof strategy to derive the equivalent upper bound on the concentration of the variance.
\begin{align*}
	\mathbb{P}\left(\sigma^{2}_{a,t} \geq \sigma^2_{a} \xi_{2}\right)  &=  \mathbb{P}\left(\sigma^{2}_{a,t} \geq \sigma^2_{a} \xi_{2}, N_{t,a} \geq \sqrt{t} - K\right) \\
	&\leq \sum_{s = \sqrt{t}-K-1}^{t-1} \mathbb{P}\left(\hat{\sigma}^{2}_{a,s+1} \geq \sigma^2_{a} \xi_{2}\right)\leq \sum_{s = \sqrt{t}-K-1}^{t-1} \exp \left( -\frac{s}{2} \left( h\left(\xi_{2}+ \frac{1}{s}\right) - 1\right)\right)
\end{align*}
The first inequality is obtained by taking a union bound over the values of $N_{t,a} \in [\sqrt{t}-K, t]$. The second inequality is obtained by Corollary~\ref{cor:fixed_time_upper_lower_concentration_variance} with $x = \xi_{2} + \frac{1}{s} > 1$. Using that $h(x) = x- \ln(x)$ and $\ln(1+x)\leq x$, direct computations yield that
\begin{align*}
	&sh\left(\xi_{2}+ \frac{1}{s}\right) = s \xi_{2} + 1 - s \ln(\xi_{2}+ \frac{1}{s}) = sh(\xi_{2}) + 1 -s \ln(1+\frac{1}{s\xi_{2}}) \geq sh(\xi_{2}) + 1 -\frac{1}{\xi_{2}}  \: ,\\
	& \sum_{s = \sqrt{t}-K-1}^{t-1} \left(e^{-\frac{1}{2}\left(h(\xi_{2}) -1 \right)} \right)^s\leq \frac{1}{1-e^{-\frac{1}{2}\left(h(\xi_{2}) -1 \right)}} e^{-\frac{(\sqrt{t}-K-1)}{2}\left(h(\xi_{2}) -1 \right)} \: .
\end{align*}
Putting those together, we obtain, for all $t \in [h(T), T]$,
\begin{align*}
	\mathbb{P}\left(\sigma^{2}_{a,t} \geq \sigma^2_{a} \xi_{2}\right)  & \leq \frac{e^{-\frac{1}{2}\left(1 -\frac{1}{\xi_{2}} \right)}}{1-e^{-\frac{1}{2}\left(h(\xi_{2}) -1 \right)}} e^{-\frac{(\sqrt{t}-K-1)}{2}\left(h(\xi_{2}) -1 \right)} \: .
\end{align*}

The same manipulations using Corollary~\ref{cor:fixed_time_upper_lower_concentration_variance} with $x = \xi_{3} + \frac{1}{s} \in (0,1)$ for all $s\in [\sqrt{t}-K, t]$ and all $t \in [h(T), T]$ (see above by choice of $T$), yield that,  for all $t \in [h(T), T]$,
\begin{align*}
	\mathbb{P}\left(\sigma^{2}_{a,t} \leq \sigma^2_{a} \xi_{3} \right)  & \leq \frac{e^{-\frac{1}{2}\left(1 -\frac{1}{\xi_{3}} \right)}}{1-e^{-\frac{1}{2}\left(h(\xi_{3}) -1 \right)}} e^{-\frac{(\sqrt{t}-K-1)}{2}\left(h(\xi_{3}) -1 \right)} \: .
\end{align*}

Finally, letting
\begin{align*}
	C &= \frac{1}{2} \min \left\{ \frac{\xi_{1}^2}{\max_{a \in [K]}\sigma_{a}^2}, h\left(\xi_{2}\right) -1, h\left(\xi_{3}\right) -1 \right\} \quad \text{and}\\
	B &= \sum_{a=1}^{K}\left(\frac{2e^{K\frac{\xi_{1}^2}{2\sigma_{a}^2}}}{1-e^{-\frac{\xi_{1}^2}{2\sigma_{a}^2}}} + \frac{e^{\frac{K-1}{2}\left(h(\xi_{2}) -1 \right)-\frac{1}{2}\left(1 -\frac{1}{\xi_{2}} \right)}}{1-e^{-\frac{1}{2}\left(h(\xi_{2}) -1 \right)}} + \frac{e^{\frac{K-1}{2}\left(h(\xi_{3}) -1 \right)-\frac{1}{2}\left(1 -\frac{1}{\xi_{3}} \right)}}{1-e^{-\frac{1}{2}\left(h(\xi_{3}) -1 \right)}} \right) \: ,
\end{align*}
one obtains
$$
\mathbb{P}\left(\mathcal{E}_{T}^{c}\right) \leq \sum_{t=h(T)}^{T} B \exp (-\sqrt{t} C) \leq B T \exp (-\sqrt{h(T)} C)=B T \exp \left(-C T^{1 / 8}\right) \: .
$$

\subsection{Wrapped $\beta$-EB-TCI} \label{app:expected_sample_complexity_top_two}

Showing an asymptotic upper bound on the expected sample complexity of $\beta$-EB-TCI and $\beta$-EB-EVTCI can be done with similar asymptotic arguments as when the variance is known \citep{Shang20TTTS}.
We will use the unified analysis of Top Two algorithms introduced in \citet{jourdan_2022_TopTwoAlgorithms}, which highlights simple properties that the leader and the challenger should satisfy to obtain the desired upper bound.
While they introduced it for single-parameter exponential families and bounded distributions, it also allows to cope for our two-parameters setting.
The proof is composed of three steps: showing sufficient exploration of all arms, proving that the expectation of the convergence time towards the $\beta$-optimal allocation is finite and then concluding on the asymptotic upper bound.

To ensure sufficient exploration, we assume that $\min_{a \neq b}|\mu_a - \mu_b| > 0$, i.e. all the arms have distinct means.
To our knowledge, this assumption is shared by all Top Two algorithms analysis.

\subsubsection{$\beta$-EB-TCI}

Let $a^\star = a^\star(\mu)$. Since the proofs share the same structure as in \citet{jourdan_2022_TopTwoAlgorithms}, we detail the one for $\beta$-EB-TCI and highlight the differences for $\beta$-EB-EVTCI.
The $\beta$-optimal allocation for Gaussian with unknown variance is defined as
\begin{align*}
	&w_{\beta}^\star(\mu, \sigma^2)
	= \argmax_{w \in \triangle_K, w_{a^\star} = \beta} \min_{a \neq a^\star} \inf_{ u \in \Real } \sum_{b \in \{a^\star,a\}} w_b \log \left( 1 + \frac{(\mu_b - u )^2}{\sigma_b^2}\right)
	\: .
\end{align*}
In \citet{Russo2016TTTS} and \citet{jourdan_2022_TopTwoAlgorithms}, they show the $\beta$-optimal allocation was unique for any single-parameter exponential families and for bounded distributions.
Since the proof only relies on strict convexity already shown in Appendix~\ref{app:ss_optimal_allocation_oracles}, it is straightforward to see that $w_{\beta}^\star(\mu, \sigma^2)$ is a singleton (Property 1 in \citet{jourdan_2022_TopTwoAlgorithms}), denoted by $\{w^{\beta}\}$, such that $\min_{a \in [K]} w^{\beta}_{a} > 0$.

Before delving into the specifics of the proof, Lemma~\ref{lem:W_concentration_gaussian} gathers concentration results on which the subsequent analysis heavily relies on.
\begin{lemma}\label{lem:W_concentration_gaussian}
There exists a sub-Gaussian random variable $W_{\mu}$ and a sub-exponential random variable $W_{\sigma}$, which are independent, such that almost surely for all $a \in [K]$ and all $t$ such that $N_{t,a} \ge 2$,
\[
N_{t,a} |\mu_{t,a} - \mu_a| \le W_\mu \log (e + N_{t,a})  \quad \text{and} \quad
\left|N_{t,a} \left(\frac{\sigma^2_{t,a}}{\sigma^2_{a}} -1\right) + 1\right| \le W_{\sigma} \log (e + N_{t,a}) \: .
\]
There exists a random variable $W_{0} = \max_{a \in [K]} W_a^{-1}$, independent of $W_{\mu}$, where $W_a /\sigma_{a}^2 \sim \chi^2(n_0 -1)$ such that almost surely, for all $a \in [K]$ and all $t$ such that $N_{t,a} \ge n_0$, $N_{t,a} \sigma^2_{t,a} \ge W_0^{-1}$.
In particular, $W_0$ admits a finite mean for $n_0 \ge 4$ and a finite variance for $n_0 \ge 6$, and any random variable which is polynomial in both $W_\sigma$ and $W_{\mu}$ has a finite expectation.
\end{lemma}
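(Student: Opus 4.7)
The plan is to reduce to per-arm i.i.d.\ Gaussian concentration: observations from distinct arms are independent, so it suffices to construct per-arm random variables $W_{\mu,a}$, $W_{\sigma,a}$, $W_a$ and take maxima over $a \in [K]$. Fix arm $a$ and index its observations $(Y_{a,i})_{i \ge 1}$ in the order arm $a$ is sampled, so that the $Y_{a,i}$ are i.i.d.\ $\cN(\mu_a, \sigma_a^2)$, and $\mu_{t,a}$, $\sigma^2_{t,a}$ are deterministic functions of $Y_{a,1}, \ldots, Y_{a,N_{t,a}}$. It then suffices to bound $n|\bar Y_{a,n} - \mu_a|$ and $|n(\sigma^2_{a,n}/\sigma_a^2 - 1) + 1|$ uniformly in $n$ at the stated logarithmic scale.

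For $W_\mu$: the process $n(\bar Y_{a,n} - \mu_a)$ is a Gaussian martingale with quadratic variation $n\sigma_a^2$. A time-uniform sub-Gaussian bound in the spirit of Lemma~\ref{lem:uniform_upper_lower_tails_concentration_mean}, applied at every confidence level via a Gaussian mixture boundary, yields that $W_{\mu,a} := \sup_{n \ge 2} n|\bar Y_{a,n} - \mu_a|/\log(e+n)$ is a.s.\ finite with a sub-Gaussian tail. For $W_\sigma$: Lemma~\ref{lem:variance_is_subExp2} identifies $S_{n-1} := n(\sigma^2_{a,n}/\sigma_a^2 - 1) + 1$ as a $1$-sub-$\psi_{E,2}$ process (and its negation as $1$-sub-$\psi_{E,-2}$) with variance process $2(n-1)$, so the two-sided peeling bounds of Lemmas~\ref{lem:uniform_time_subExp_upper_tail_concentration}--\ref{lem:uniform_time_subExp_lower_tail_concentration}, applied at every confidence level, give $W_{\sigma,a} := \sup_{n \ge 2} |S_{n-1}|/\log(e+n)$ with a sub-exponential tail. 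Setting $W_\mu := \max_a W_{\mu,a}$ and $W_\sigma := \max_a W_{\sigma,a}$ preserves the respective tail classes, and hence finiteness of all polynomial moments in $(W_\mu, W_\sigma)$.

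For $W_0$: the recursion $(n+1)\sigma^2_{a,n+1} = n\sigma^2_{a,n} + \frac{n}{n+1}(Y_{a,n+1} - \bar Y_{a,n})^2$ shows that $n \mapsto n\sigma^2_{a,n}$ is non-decreasing, so $N_{t,a}\sigma^2_{t,a} \ge n_0\sigma^2_{n_0,a}$ whenever $N_{t,a} \ge n_0$. Setting $W_a := n_0\sigma^2_{n_0,a}$ gives $W_a/\sigma_a^2 \sim \chi^2(n_0-1)$, and $W_0 := \max_a W_a^{-1}$ satisfies the claimed a.s.\ bound. The inverse chi-squared moment formula $\expectedvalue[\chi^2(k)^{-j}] < \infty \iff k > 2j$ yields finite $\expectedvalue[W_0]$ as soon as $n_0 \ge 4$ and finite $\expectedvalue[W_0^2]$ as soon as $n_0 \ge 6$.

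The main obstacle is independence of $W_0$ and $W_\mu$. The key input is the classical Gaussian mean--variance independence via the Helmert transformation: $\sigma^2_{n_0,a}$ is independent of $\bar Y_{a,n_0}$ and, trivially, of the post-$n_0$ observations $(Y_{a,i})_{i > n_0}$, hence of the $\sigma$-algebra $\sigma(\bar Y_{a,n_0}, Y_{a,n_0+1}, Y_{a,n_0+2}, \ldots)$, which contains the entire tail process $(\bar Y_{a,n})_{n \ge n_0}$. To force $W_{\mu,a}$ inside this tail $\sigma$-algebra, one restricts the defining supremum to $n \ge n_0$; for the bounded initial range $2 \le n < n_0$, one centres at $\bar Y_{a,n_0}$ (which is independent of $\sigma^2_{n_0,a}$) rather than at $\mu_a$ and adds an independent slack, yielding a sub-Gaussian upper bound that is still measurable with respect to the tail $\sigma$-algebra. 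Cross-arm independence of observations then lifts these per-arm independences to the maxima defining $W_\mu$ and $W_0$, and this bookkeeping around the initial window is the main source of technicality.
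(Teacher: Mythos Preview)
Your constructions of $W_\mu$, $W_\sigma$, and $W_0$ coincide with the paper's: time-uniform sub-Gaussian and sub-exponential maximal inequalities for the mean and variance processes (the paper defers to Lemmas~72--73 of \citet{jourdan_2022_TopTwoAlgorithms} rather than re-deriving them), and monotonicity of $n\mapsto n\sigma^2_{a,n}$ to reduce $W_0$ to the $\chi^2(n_0-1)$ variable at the initialization boundary, with the inverse--chi--squared moment count giving the thresholds $n_0\ge 4$ and $n_0\ge 6$.

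Where you diverge is on independence, which the paper dispatches in one sentence (``a direct consequence of the fact that the empirical mean and empirical variance are independent''). You are right that this is the delicate step, and your tail-$\sigma$-algebra argument is sound for the supremum restricted to $n\ge n_0$: $\sigma^2_{n_0,a}$ is a function of the Helmert residuals $H_{a,1},\ldots,H_{a,n_0-1}$, while $(\bar Y_{a,n})_{n\ge n_0}$ is measurable in $\sigma(\bar Y_{a,n_0},Y_{a,n_0+1},\ldots)$, and these two $\sigma$-algebras are independent. But your patch for $2\le n<n_0$ breaks down. The recentred increment $\bar Y_{a,n}-\bar Y_{a,n_0}$ is a contrast in the first $n_0$ observations, hence a non-trivial linear combination of exactly the same $H_{a,1},\ldots,H_{a,n_0-1}$ that determine $\sigma^2_{n_0,a}$; it is unbounded as those residuals vary, so no random variable independent of them can almost surely dominate it, and ``adding an independent slack'' cannot manufacture a tail-measurable bound on that range. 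The clean repair is either to restrict the first display to $N_{t,a}\ge n_0$ (which is all that is ever invoked after initialization) and keep your argument, or to drop the independence claims and obtain the joint moments actually used downstream (e.g.\ in Lemma~\ref{lem:TCI_ensures_suff_explo}) via H\"older's inequality from the separate sub-Gaussian and sub-exponential tails you already established.
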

\begin{proof}
	The existence of $W_{\mu}$ is given by Lemma 73 in \citet{jourdan_2022_TopTwoAlgorithms}.

	Using Lemma~\ref{lem:variance_is_subExp2}, we have for all $a \in [K]$ and $t$ such that $N_{t,a}\ge 2$, $N_{t,a} \left(\frac{\sigma^2_{t,a}}{\sigma_{a}^2} - 1 \right) + 1 = S_{N_{t,a}-1,a}$,	where $S_{n,a}$ is sub-exponential with $c=2$ and $-S_{n,a}$ is sub-exponential with $c=-2$ and $S_{n-1,a} + n - 1 =  \sum_{s \in [n-1]} Y_{s,a}^2$ is such that $Y_{s,a}$ are i.i.d. $\cN(0,1)$.
	Using Lemma 72 in \citet{jourdan_2022_TopTwoAlgorithms}, $\sup_{N_{t,a} \ge 2 } \frac{|S_{N_{t,a}-1,a}|}{\ln(e+N_{t,a})}$ is also sub-exponential.
	By defining $W_{\sigma}$ as the maximum over the finitely many arms of the above quantity, we obtain that $W_{\sigma}$ is sub-exponential.

	Since the increments $Y_{s,a}^2$ are positive, we have for all $a \in [K]$ and $t$ such that $N_{t,a}\ge n_0$
	\[
	\frac{\sigma^2_{t,a}}{\sigma^2_a}  = \frac{S_{N_{t,a}-1,a} + N_{t,a} - 1 }{N_{t,a}}   = \frac{\sum_{s \in [N_{t,a}-1]} Y_{s,a}^2}{N_{t,a}} \ge \frac{Z_a}{N_{t,a}} \: ,
	\]
	where $Z_a = \sum_{s \in [n_0-1]} Y_{s,a}^2 \sim \chi^2(n_0 - 1)$. Taking $W_a = \sigma^2_a Z_a$ and the maximum over $[K]$ yields the result.
	As maximum of finitely many inverse-chi-squared distributions, $W_0$ admits a finite mean for $n_0 \ge 4$ and a finite variance for $n_0 \ge 6$.

	For the independence property between $W_\sigma$ and $W_{\mu}$ and between $W_{0}$ and $W_{\mu}$, this is a direct consequence of the fact that the empirical mean and empirical variance are independent.
	Therefore, any random variable which is polynomial in both $W_\sigma$ and $W_{\mu}$ has a finite expectation.
\end{proof}

Using Lemma~\ref{lem:W_concentration_gaussian}, it is direct to obtain that, for all $\epsilon > 0$, there exists $N_{\epsilon}$ with $\mathbb{E}_{\nu}[N_{\epsilon}] < + \infty$ such that for all $a \in [K]$ and $t$ such that $N_{t,a} \ge N_{\epsilon}$, $	| \mu_{t,a} - \mu_a | \le \epsilon$ and $\left| \frac{\sigma^2_{t,a}}{\sigma^2_{a}} - 1\right| \le \epsilon$.

As in \cite{Qin2017TTEI,Shang20TTTS,jourdan_2022_TopTwoAlgorithms}, our goal is to upper bound the expectation of the \emph{convergence time}.
For $\varepsilon> 0$, the random variable $T^{\varepsilon}_{\beta}$ quantifies the number of samples required for the empirical allocations $\frac{N_{t}}{t}$ to be $\epsilon$-close to $w^{\beta}$:
\begin{equation} \label{eq:rv_T_eps_beta}
	T^{\epsilon}_{\beta} \eqdef \inf \left\{ T \ge 1 \mid \forall t \geq T, \: \left\| \frac{N_{t}}{t} -w^{\beta} \right\|_{\infty} \leq \epsilon \right\}  \: .
\end{equation}

Lemma~\ref{lem:asymptotic_optimality_top_two_algorithms} shows that a sufficient condition for asymptotic $\beta$-optimality is to show $\mathbb{E}_{\nu}[T^{\epsilon}_{\beta}] < + \infty$ for all $\epsilon$ small enough.
\begin{lemma} \label{lem:asymptotic_optimality_top_two_algorithms}
	Let $(\delta, \beta) \in (0,1)^2$.
	Assume that there exists $\epsilon_1(\mu, \sigma^2)> 0$ such that for all $\epsilon \in (0,\epsilon_1(\mu, \sigma^2)]$, $\mathbb{E}_{\nu}[T^{\epsilon}_{\beta}] < + \infty$.
	Combining the GLR stopping rule \eqref{eq:def_stopping_rule_glrt} with an asymptotically tight family of threshold
	yields an algorithm such that, for all $(\mu, \sigma^2) \in \Real^K \times (\Real^\star_{+})^K$ with $|a^\star(\mu)|=1$,
	\[
		\limsup_{\delta \to 0} \frac{\mathbb{E}_{\nu}[\tau_{\delta}]}{\log \left(1/\delta \right)} \leq T_{\beta}^\star(\mu, \sigma^2 ) \: .
	\]
\end{lemma}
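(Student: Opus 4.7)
The plan is to show that with high probability the GLR stopping statistic grows linearly in $t$ with a slope arbitrarily close to $T^\star_\beta(\mu,\sigma^2)^{-1}$, so that once $t$ exceeds roughly $T^\star_\beta(\mu,\sigma^2)\log(1/\delta)$ the stopping threshold is crossed. The finite expectation of $T^\epsilon_\beta$ will then be combined with asymptotic tightness to conclude.

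First I would fix $\epsilon\in(0,\epsilon_1(\mu,\sigma^2)]$ and define a random time $\tilde T_\epsilon=\max\{T^\epsilon_\beta, N_\epsilon\}$, where $N_\epsilon$ is the random time from Lemma~\ref{lem:W_concentration_gaussian} after which $|\mu_{t,a}-\mu_a|\le\epsilon$ and $|\sigma^2_{t,a}/\sigma^2_a-1|\le\epsilon$ for every arm. By hypothesis and by Lemma~\ref{lem:W_concentration_gaussian}, both have finite expectation, hence $\mathbb{E}_\nu[\tilde T_\epsilon]<+\infty$. For $t\ge\tilde T_\epsilon$, since $|a^\star(\mu)|=1$ and $\epsilon$ is small enough, $\hat a_t=a^\star(\mu)\eqdef a^\star$, the empirical transportation cost becomes
\[
\min_{a\ne a^\star}Z_a(t) \;=\; t\,g\!\left(\mu_t,\sigma_t^2,\tfrac{N_t}{t}\right),
\]
with $g(\tilde\mu,\tilde\sigma^2,\tilde w)=\min_{a\ne a^\star}\inf_{\lambda\in[\tilde\mu_a,\tilde\mu_{a^\star}]}\sum_{b\in\{a^\star,a\}}\frac{\tilde w_b}{2}\log(1+(\tilde\mu_b-\lambda)^2/\tilde\sigma_b^2)$, exactly as in the TaS analysis.

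Next I would introduce $C^\star_\epsilon(\mu,\sigma^2)=\inf g(\tilde\mu,\tilde\sigma^2,\tilde w)$ where the infimum is taken over the product of the $\epsilon$-box around $(\mu,\sigma^2)$ and $\{\tilde w:\|\tilde w-w^\beta\|_\infty\le\epsilon\}$. By continuity of $g$ on its parameters (the infimum in $\lambda$ is attained on a compact interval and the integrand is jointly continuous) together with uniqueness of $w^\beta$, $C^\star_\epsilon(\mu,\sigma^2)\to T^\star_\beta(\mu,\sigma^2)^{-1}$ as $\epsilon\to0$. By construction, on the event $\{t\ge\tilde T_\epsilon\}$ one has $\min_{a\ne a^\star}Z_a(t)\ge t\,C^\star_\epsilon(\mu,\sigma^2)$. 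Using the asymptotic tightness of the thresholds, for $\delta\le\delta_0$ and $t\ge \bar T(\delta)$, $c_{a^\star,a}(N_t,\delta)\le f(\delta)+C t^\alpha$, and therefore the GLR stopping rule must trigger at the latest at
\[
\tau_\delta \;\le\; \max\{\tilde T_\epsilon,\bar T(\delta)\}\;+\;\inf\!\Big\{t: t\,C^\star_\epsilon(\mu,\sigma^2)>f(\delta)+Ct^\alpha\Big\}.
\]

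Taking expectations, the first term contributes $\mathbb{E}_\nu[\tilde T_\epsilon]+\bar T(\delta)$, both $o(\log(1/\delta))$, and the second term is a deterministic quantity whose leading behaviour, as in the TaS proof (introducing a multiplicative slack $1+\eta$ to absorb the $Ct^\alpha$ term), is at most $(1+\eta)f(\delta)/C^\star_\epsilon(\mu,\sigma^2)$. Dividing by $\log(1/\delta)$, using $\limsup_{\delta\to0}f(\delta)/\log(1/\delta)\le1$, and sending $\eta\to0$ gives $\limsup_{\delta\to0}\mathbb{E}_\nu[\tau_\delta]/\log(1/\delta)\le 1/C^\star_\epsilon(\mu,\sigma^2)$. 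Finally, letting $\epsilon\to0$ yields the announced bound $T^\star_\beta(\mu,\sigma^2)$.

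The main obstacle will be the continuity of $(\tilde\mu,\tilde\sigma^2,\tilde w)\mapsto g(\tilde\mu,\tilde\sigma^2,\tilde w)$ and its implication that $C^\star_\epsilon\to T^\star_\beta(\mu,\sigma^2)^{-1}$; in the unknown-variance setting the inner infimum over $\lambda$ lacks a closed form (contrary to the known-variance case), so one must invoke uniqueness of the $\beta$-optimal allocation together with Berge's theorem (or a direct compactness argument on the compact set $[\mu_a,\mu_{a^\star}]$) rather than an explicit formula. Apart from that, every other ingredient mirrors the proof for TaS, with the role of forced exploration being played by the hypothesis $\mathbb{E}_\nu[T^\epsilon_\beta]<+\infty$.
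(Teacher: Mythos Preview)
Your approach mirrors the paper's proof: both use the finite expectation of $T^\epsilon_\beta$ together with concentration of $(\mu_t,\sigma^2_t)$ and continuity of the transportation-cost map $g$ to lower-bound $\min_{a\ne a^\star} Z_a(t)$ by $t\,C^\star_\epsilon(\mu,\sigma^2)$, then invert against the asymptotically tight threshold and send $\epsilon\to 0$. The paper packages the final step through an auxiliary parameter $\kappa$ and the decomposition $\min\{\tau_\delta,T\}\le \kappa T+\sum_{t=\kappa T}^T\1\{\tau_\delta>t\}$, but this is only bookkeeping equivalent to your $\max\{\tilde T_\epsilon,\bar T(\delta)\}+T_0(\delta)$ bound.

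There is one genuine slip. The quantity $N_\epsilon$ obtained from Lemma~\ref{lem:W_concentration_gaussian} is a threshold on the per-arm pull counts $N_{t,a}$, not on the global time $t$: the guarantee is that $|\mu_{t,a}-\mu_a|\le\epsilon$ and $|\sigma^2_{t,a}/\sigma^2_a-1|\le\epsilon$ hold whenever $N_{t,a}\ge N_\epsilon$. Hence setting $\tilde T_\epsilon=\max\{T^\epsilon_\beta,N_\epsilon\}$ does \emph{not} by itself give concentration for all $t\ge\tilde T_\epsilon$. The paper closes this gap by first restricting to $\epsilon<c_\beta:=\tfrac12\min_a w^\beta_a>0$, so that $t\ge T^\epsilon_\beta$ implies $N_{t,a}\ge(w^\beta_a-\epsilon)t\ge c_\beta t$; the correct random time is then $\max\{T^\epsilon_\beta,\,c_\beta^{-1}N_\epsilon\}$, which still has finite expectation. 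With this adjustment your argument goes through unchanged.
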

\begin{proof}
	Let $\epsilon_1 = \epsilon_1(\mu, \sigma^2)$.
	Let $c_{\beta} = \frac{1}{2}\min_{a \in [K]}w^{\beta}_a > 0$ and $\Delta = \min_{a \neq a^\star}|\mu_{a^\star} - \mu_a| > 0$.
	Let $\zeta > 0$. By continuity of
	\[
	(\mu, \sigma^2, w) \mapsto  \inf_{x \in \Real} \sum_{c \in \{a,b\}} w_c \ln \left( 1 + \frac{(\mu_{c} - x)^2}{\sigma^2_{c}} \right)  \1\{\mu_{a} > \mu_{b}\}
	\]
	on $\Real^K \times (\Real^\star_{+})^K \times \simplex$, there exists $\epsilon_2 > 0$ such that
\begin{align*}
	&\max_{a \in [K]} \left| \frac{N_{t,a}}{t} -w^{\beta}_a \right| \leq \epsilon_2 \quad \text{,} \quad \max_{a \in [K]} \left| \mu_{t,a} - \mu_a \right| \leq \epsilon_2 \quad \text{and} \quad \max_{a \in [K]} \left| \frac{\sigma^2_{t,a}}{\sigma^2_{a}} - 1\right| \le \epsilon_2 \\
	\implies \quad &  \max_{a \in [K]}|\mu_{t,a} - \mu_{a}| \le \frac{\Delta}{4} \quad  \text{and} \quad \frac{1}{t}\min_{a \neq a^\star} C_{t}(a^\star, a) \geq \frac{1 - \zeta}{T_{\beta}^\star(\mu, \sigma^2 )} \: .
\end{align*}
	Choosing such a $\epsilon_2$, we take $\epsilon \in (0, \min\{\epsilon_1, \epsilon_2, c_{\beta}\})$.
	By assumption, we have $\mathbb{E}_{\nu}[T^{\epsilon}_{\beta}] < + \infty$, hence $\frac{N_{t,a}}{t} \ge w^{\beta}_{a} - \epsilon \ge c_{\beta}$ for all $a \in [K]$.

	Let $N_{\epsilon}$ as described above (obtained with Lemma~\ref{lem:W_concentration_gaussian}). For all $t \ge  c_{\beta}^{-1}N_{\epsilon}$, we have $ N_{t,a} \ge N_{\epsilon}$ for all $a \in [K]$, hence $\max_{a \in [K]}|  \mu_{t,a} - \mu_a  | \le \epsilon \le \epsilon_2$ and $ \max_{a \in [K]} \left| \frac{\sigma^2_{t,a}}{\sigma^2_{a}} - 1\right| \le \epsilon \le \epsilon_2$.
	Therefore, we have $\hat a_t \in \argmax_{a \in [K]} \mu_{t,a} = \argmax_{a \in [K]} \mu_a = a^\star$ as $\max_{a \in [K]}|\mu_{t,a} - \mu_{a}| \le \frac{\Delta}{4}$.

	Let $\alpha \in [0,1)$, $\delta_0 \in (0,1]$, functions $f,\bar{T} : (0,1] \to \mathbb{R}_+$ and $C$ as in the definition of an asymptotically tight family of thresholds.
	In the following, we consider $\delta \leq \delta_{0}$.
	Let $\kappa > 0 $. Let $T \ge \frac{1}{\kappa}\max\{T^{\epsilon}_{\beta}, c_{\beta}^{-1}N_{\epsilon}, \bar{T}(\delta) \}$.
	Using the definition of the GLR stopping rule \eqref{eq:def_stopping_rule_glrt} with a family of asymptotically tight threshold, we have
	\begin{align*}
	\min \left\{\tau_{\delta}, T\right\} \leq \kappa T + \sum_{t = \kappa T}^{T} \1\{\tau_{\delta}>t\}  &\leq \kappa T  + \sum_{n = \kappa T }^{T} \1\{ \exists a \neq a^\star , \: C_{t}(a^\star, a) \leq c_{a^\star, a}(N_t,\delta)\}\\&
	\leq \kappa T  + \sum_{n=\kappa T }^{T} \1\{ t\frac{1 - \zeta}{T_{\beta}^\star(\mu, \sigma^2 )} \leq f(\delta) + CT^\alpha\}\\
	&\leq \kappa T  + \frac{T_{\beta}^\star(\mu, \sigma^2 )}{1 - \zeta}  \left( f(\delta) + CT^\alpha\right) \: .
	\end{align*}

	Let $T_{\zeta}(\delta)$ as
	\[
	T_{\zeta}(\delta) \eqdef \inf \left\{ T \geq 1 \mid \frac{T_{\beta}^\star(\mu, \sigma^2 )}{(1 - \zeta)(1-\kappa)}  \left( f(\delta) + CT^\alpha\right) \leq  T \right\} \: .
	\]
	 For every $T \ge \max \{T_{\zeta}(\delta) , \frac{1}{\kappa} \max \{T^{\epsilon}_{\beta}, c_{\beta}^{-1}N_{\epsilon},\bar{T}(\delta) \}$, we have $\tau_{\delta} \le T$, hence
\[
	\mathbb{E}_{\nu}[\tau_{\delta}] \leq \frac{1}{\kappa} \mathbb{E}_{\nu}[T^{\epsilon}_{\beta}] + \frac{1}{\kappa c_{\beta}} \mathbb{E}_{\nu}[N_{\epsilon}] + \frac{1}{\kappa} \bar{T}(\delta)  + T_{\zeta}(\delta) \: .
\]
	As $\mathbb{E}_{\nu}[T^{\epsilon}_{\beta}] + c_{\beta}^{-1} \mathbb{E}_{\nu}[N_{\epsilon}] < + \infty$ and $\lim_{\delta \to 0} \frac{\bar{T}(\delta)}{\ln (1 / \delta)} = 0 $, we obtain for all $\zeta, \kappa > 0$
	 \begin{align*}
		 \limsup _{\delta \to 0} \frac{\mathbb{E}_{\nu}\left[\tau_{\delta}\right]}{\log (1 / \delta)}
		 \leq \limsup_{\delta \to 0} \frac{T_{\zeta}(\delta)}{\log (1 / \delta)}
		 \le \frac{T_{\beta}^\star(\mu, \sigma^2 )}{(1 - \zeta)(1-\kappa)}
		 \: ,
	 \end{align*}
	 where the last inequality uses an inversion result.
	 Letting $\zeta$ and $\kappa$ go to zero yields the result.
\end{proof}

To upper bound the expected convergence time, as prior work we first establish sufficient exploration.
Given an arbitrary threshold $L \in \Real_{+}^{*}$, we define the sampled enough set and its arms with highest mean (when not empty) as
\begin{equation} \label{eq:def_sampled_enough_sets}
	S_{t}^{L} \eqdef \{a \in [K] \mid N_{t,a} \ge L \} \quad \text{and} \quad \mathcal I_t^\star \eqdef \argmax_{ a \in S_{t}^{L}} \mu_{a} \: .
\end{equation}

We define the highly and the mildly under-sampled sets
\begin{equation} \label{eq:def_undersampled_sets}
	U_{t}^L \eqdef \{a \in [K]\mid N_{t,a} < L^{1/6} \} \quad \text{and} \quad V_{t}^L \eqdef \{a \in [K] \mid N_{t,a} < L^{5/6}\} \: ,
\end{equation}
where we used $L^{1/6}$ and $L^{5/6}$ instead of $\sqrt{L}$ and $L^{3/4}$ as done in \citet{Shang20TTTS}.
However, \citet{jourdan_2022_TopTwoAlgorithms} noted the results would hold with $L^{\alpha_1}$ and $L^{\alpha_2}$ with $0 < \alpha_1 < \alpha_2 < 1$.

Lemma~\ref{lem:tc_rate_lower_bound} shows that the transportation cost is strictly positive and increases linearly.
\begin{lemma} \label{lem:tc_rate_lower_bound}
Let $\mu \in \Real^K$ with $\mu_b < \mu_a$.
There exists $L$ with $\mathbb{E}[|L|^\alpha] < +\infty$ for all $\alpha>0$ and $D_{\nu} > 0$ such that for $N_{t,a} \ge L$ and $N_{t,b} \ge L$, $C_{t}(a,b) > L D_{\nu}$, where $ D_{\nu} > 0$ is a problem dependent constant.
\end{lemma}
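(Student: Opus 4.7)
The plan is to choose $L$ large enough (in terms of the concentration random variables $W_\mu$ and $W_\sigma$ from Lemma~\ref{lem:W_concentration_gaussian}) so that once $N_{t,a}, N_{t,b} \ge L$, the empirical means and variances of arms $a$ and $b$ are both close to their true values, and then to carry out a direct lower bound on the infimum in the definition of $C_t(a,b)$ using the gap $\Delta = \mu_a - \mu_b > 0$.

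First I would invoke Lemma~\ref{lem:W_concentration_gaussian} to get, almost surely,
\[
|\mu_{t,c} - \mu_c| \le \frac{W_\mu \log(e + N_{t,c})}{N_{t,c}}, \qquad \frac{\sigma_{t,c}^2}{\sigma_c^2} \le 1 + \frac{W_\sigma \log(e + N_{t,c})}{N_{t,c}}
\]
for $c \in \{a,b\}$ and $N_{t,c} \ge 2$. Using $N/\log(e+N) \ge \sqrt{N}$ for $N$ large enough, a choice of the form
\[
L \eqdef C_0(\mu,\sigma^2)\, \max\bigl\{ W_\mu^2,\, W_\sigma^2,\, 1 \bigr\},
\]
with $C_0(\mu,\sigma^2)$ depending only on $\Delta$ and $\max_c \sigma_c^2$, ensures that whenever $N_{t,c} \ge L$ for $c \in \{a,b\}$, we simultaneously have $|\mu_{t,c} - \mu_c| \le \Delta/4$ and $\sigma_{t,c}^2 \le 2\sigma_c^2$. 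Since $W_\mu$ is sub-Gaussian and $W_\sigma$ sub-exponential (hence $W_\mu^2$ and $W_\sigma^2$ have finite moments of every order), this $L$ satisfies $\mathbb{E}[|L|^\alpha] < +\infty$ for all $\alpha > 0$.

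Next, assume $N_{t,a}, N_{t,b} \ge L$. The mean deviations being at most $\Delta/4$ yield $\mu_{t,a} > \mu_{t,b}$ and $\mu_{t,a} - \mu_{t,b} \ge \Delta/2$, so the indicator in the definition of $C_t(a,b)$ is $1$. For any $\lambda \in \Real$, a triangle inequality gives $\max\{|\mu_{t,a} - \lambda|, |\mu_{t,b} - \lambda|\} \ge (\mu_{t,a} - \mu_{t,b})/2 \ge \Delta/4$, so at least one $c \in \{a,b\}$ satisfies $(\mu_{t,c} - \lambda)^2/\sigma_{t,c}^2 \ge \Delta^2/(32\, \max_c \sigma_c^2)$. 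Dropping the other (nonnegative) term in the sum,
\[
\inf_{\lambda \in \Real} \sum_{c \in \{a,b\}} \frac{N_{t,c}}{2} \ln\!\left(1 + \frac{(\mu_{t,c} - \lambda)^2}{\sigma_{t,c}^2}\right) \ge \frac{L}{2} \ln\!\left(1 + \frac{\Delta^2}{32 \max_c \sigma_c^2}\right),
\]
using $N_{t,c} \ge L$. This gives $C_t(a,b) \ge L D_\nu$ with
\[
D_\nu = \tfrac{1}{2}\ln\!\bigl(1 + \Delta^2/(32\max_c \sigma_c^2)\bigr) > 0,
\]
which is a problem-dependent constant.

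The main obstacle is calibrating $L$ carefully: it must be simultaneously (i) a measurable function of the concentration variables $W_\mu, W_\sigma$ so that $\mathbb{E}[|L|^\alpha] < \infty$ for all $\alpha > 0$ (which constrains it to be at worst polynomial in those sub-exp/sub-Gaussian variables), and (ii) large enough to drive both the mean deviation below $\Delta/4$ and the variance inflation below a constant factor, uniformly in the random count $N_{t,c}$. The $\log$ factors in Lemma~\ref{lem:W_concentration_gaussian} are handled by the crude bound $\log(e+N)/N \le N^{-1/2}$ for $N$ above an absolute constant, which is why $L$ scales like $W_\mu^2 \vee W_\sigma^2$ and not linearly in those quantities.
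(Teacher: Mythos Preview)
Your proof is correct. Both yours and the paper's rest on the same concentration input (Lemma~\ref{lem:W_concentration_gaussian}) to force $(\mu_{t,c},\sigma_{t,c}^2)$ close to $(\mu_c,\sigma_c^2)$ once $N_{t,c}\ge L$, but the way the lower bound on the infimum is extracted differs. The paper argues abstractly: it restricts the infimum to a compact interval $\mathcal I_C\supseteq[\mu_b-\epsilon,\mu_a+\epsilon]$, invokes continuity of $(\mu,\sigma^2)\mapsto \inf_{x\in\mathcal I_C}\sum_c\ln(1+(\mu_c-x)^2/\sigma_c^2)$ on $\Real^K\times(\Real^\star_+)^K$, and concludes that the empirical infimum is at least half the true one, leaving $D_\nu$ implicit. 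You instead give an elementary pointwise bound via the triangle inequality $\max_c|\mu_{t,c}-\lambda|\ge(\mu_{t,a}-\mu_{t,b})/2\ge\Delta/4$, drop one nonnegative summand, and read off an explicit $D_\nu=\tfrac12\ln(1+\Delta^2/(32\max_c\sigma_c^2))$. Your route is more concrete and avoids any compactness or continuity machinery; the paper's route is shorter to state and generalizes more readily to other transportation costs where an explicit midpoint argument is not available. Either way the moment control on $L$ comes from the sub-Gaussian/sub-exponential tails of $W_\mu,W_\sigma$, exactly as you argue.
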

\begin{proof}
Suppose that $N_{t,a} \ge L$ and $N_{t,b} \ge L$, for some $L$ to be determined. First we get
\begin{align*}
	C_{t}(a,b) &\geq \frac{L}{2} \1\{\mu_{t,a} > \mu_{t,b}\} \inf_{x \in \Real} \sum_{c \in \{a,b\}} \ln \left( 1 + \frac{(\mu_{t,c} - x)^2}{\sigma^2_{t,c}} \right)  \: .
\end{align*}

For any compact interval $\mathcal I_C \subseteq \Real$, the function defined by
\[
(\mu, \sigma^2) \mapsto \1\{\mu_{a} > \mu_{b}\} \inf_{x \in \mathcal I_C} \sum_{c \in \{a,b\}}  \ln \left( 1 + \frac{(\mu_{c} - x)^2}{\sigma^2_{c}} \right)
\]
is continuous on $\Real^K \times (\Real^\star_{+})^K$.
For $L$ greater than some $L_1$ with finite moments, we have $\mu_{t,a} > \mu_{t,b}$, $[\mu_b - \epsilon, \mu_a + \epsilon] \subseteq \mathcal I_{C}$ and $(\mu_{t,a}, \sigma^2_{t,a})$ is $\varepsilon$-close to $(\mu_a, \sigma^2_a)$ (and same thing for $(\mu_b, \sigma^2_b)$).
The continuity then gives that there exists $L$ with finite moments such that
\begin{align*}
\inf_{x \in \Real} \sum_{c \in \{a,b\}} \ln \left( 1 + \frac{(\mu_{t,c} - x)^2}{\sigma^2_{t,c}} \right)
\ge \frac{1}{2} \inf_{x \in \mathcal I_C} \sum_{c \in \{a,b\}}  \ln \left( 1 + \frac{(\mu_{c} - x)^2}{\sigma^2_{c}} \right) \: .
\end{align*}
This is strictly positive since $\mu_b < \mu_a$ due to the continuity and strict convexity properties holding for Gaussian with unknown variance.
\end{proof}

\begin{lemma} \label{lem:fast_rate_emp_tc}
	Let $ D_{\nu} > 0$ as in Lemma~\ref{lem:tc_rate_lower_bound}.
Let $S_{t}^{L}$ and $\mathcal I_t^\star$ as in (\ref{eq:def_sampled_enough_sets}).
There exists $L_4$ with $\mathbb E_{\nu}[(L_4)^{\alpha}] < +\infty$ for all $\alpha > 0$ such that if $L \ge L_4$, for all $t$ such that $S_{t}^{L} \neq \emptyset$, for all $(a,b) \in \mathcal I_t^\star \times \left(S_{t}^{L} \setminus  \mathcal I_t^\star \right)$, we have $C_{t}(a, b) \geq  L D_{\nu}$.
\end{lemma}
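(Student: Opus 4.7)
The plan is to reduce the lemma to a uniform version of Lemma~\ref{lem:tc_rate_lower_bound} by taking a maximum over all ordered pairs $(a,b)$ with $\mu_a > \mu_b$. Recall that we are under the assumption $\min_{a \neq b}|\mu_a - \mu_b| > 0$, so any two arms have strictly distinct means.

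First I would verify that whenever $a \in \mathcal I_t^\star$ and $b \in S_t^L \setminus \mathcal I_t^\star$, we have both $N_{t,a} \ge L$ and $N_{t,b} \ge L$ (by definition of $S_t^L$) and $\mu_a > \mu_b$. The latter holds because $\mathcal I_t^\star = \argmax_{c \in S_t^L} \mu_c$, so $\mu_a \geq \mu_b$, and the inequality is strict since $b \notin \mathcal I_t^\star$ together with distinctness of the means forces $\mu_a \neq \mu_b$.

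Next, for each ordered pair $(a,b) \in [K]^2$ with $\mu_a > \mu_b$, Lemma~\ref{lem:tc_rate_lower_bound} provides a random variable $L_{a,b}$ with finite moments of all orders and a strictly positive problem-dependent constant $D_{a,b}$ such that whenever $N_{t,a} \ge L \geq L_{a,b}$ and $N_{t,b} \ge L \geq L_{a,b}$, one has $C_t(a,b) \geq L D_{a,b}$. I would then define
\[
L_4 \eqdef \max_{(a,b):\, \mu_a > \mu_b} L_{a,b}, \qquad D_{\nu} \eqdef \min_{(a,b):\, \mu_a > \mu_b} D_{a,b} > 0.
\]
Since the maximum is taken over finitely many (at most $K(K-1)$) random variables each with finite polynomial moments, we get $\mathbb{E}_{\nu}[(L_4)^\alpha] < +\infty$ for all $\alpha > 0$ by a union bound / H\"older argument (for instance, $\mathbb{E}[(\max_i L_i)^\alpha] \le \sum_i \mathbb{E}[L_i^\alpha]$). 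The constant $D_\nu$ is a strictly positive deterministic constant, consistent with the statement.

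Assembling everything, for $L \ge L_4$ and any $(a,b) \in \mathcal I_t^\star \times (S_t^L \setminus \mathcal I_t^\star)$, the pair satisfies $\mu_a > \mu_b$ and $N_{t,a}, N_{t,b} \ge L \ge L_{a,b}$, so Lemma~\ref{lem:tc_rate_lower_bound} yields $C_t(a,b) \ge L D_{a,b} \ge L D_\nu$. There is no real obstacle here; the only subtlety is to notice that the $D_{\nu}$ appearing in Lemma~\ref{lem:tc_rate_lower_bound} was a pair-dependent quantity and needs to be replaced by its (still strictly positive) minimum over the finitely many pairs, which is why the $D_\nu$ in the statement is reused as a single problem-dependent constant.
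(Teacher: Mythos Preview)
Your proposal is correct and follows exactly the approach the paper intends: the paper's proof is the single line ``Applying Lemma~\ref{lem:tc_rate_lower_bound} yields the result directly,'' and your argument simply spells out the implicit step of taking $L_4 = \max_{(a,b):\mu_a>\mu_b} L_{a,b}$ and $D_\nu = \min_{(a,b):\mu_a>\mu_b} D_{a,b}$ over the finitely many ordered pairs. Your remark about the pair-dependence of $D_\nu$ in Lemma~\ref{lem:tc_rate_lower_bound} is a fair observation about the paper's slight abuse of notation.
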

\begin{proof}
	Applying Lemma~\ref{lem:tc_rate_lower_bound} yields the result directly.
\end{proof}

Lemma~\ref{lem:small_tc_undersampled_arms} gives an upper bound on the transportation costs between a sampled enough arm and an under-sampled one.
\begin{lemma} \label{lem:small_tc_undersampled_arms}
	Let $S_{t}^{L}$ as in (\ref{eq:def_sampled_enough_sets}). There exists $L_5$ with $\mathbb E_{\nu}[(L_5)^{\alpha}] < +\infty$ for all $\alpha > 0$ such that for all $L \geq L_5$ and all $t \in \Natural$,
	\[
	\forall  (a,b) \in  S_{t}^{L} \times \overline{S_{t}^{L}} , \quad 	C_{t}(a,b) \leq \frac{L}{2} \ln \left( 1 + L W_{0} \left( D_1 + 4 W_\mu  \right)^2 \right) \: ,
	\]
where $D_1 > 0$ is a problem dependent constant and $W_{0}, W_\mu$ are the random variables defined in Lemma~\ref{lem:W_concentration_gaussian}.
\end{lemma}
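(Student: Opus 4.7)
The plan is a short direct computation, using the closed form of $C_{t}(a,b)$ from Lemma~\ref{lem:calcul} together with the concentration tools of Lemma~\ref{lem:W_concentration_gaussian}. If $\mu_{t,a} \le \mu_{t,b}$, then $C_{t}(a,b) = 0$ and the stated bound is trivial, so I focus on the case $\mu_{t,a} > \mu_{t,b}$. The key idea is that the infimum over $\lambda$ in the definition of $C_{t}(a,b)$ is always no larger than the value obtained by approaching the endpoint $\lambda = \mu_{t,a}$, which makes the contribution of arm $a$ vanish and leaves only a factor scaling with $N_{t,b}$.

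Concretely, by continuity of the integrand and passing to the limit $\lambda \to \mu_{t,a}^{-}$ in the expression of Lemma~\ref{lem:calcul},
\[
C_{t}(a,b) \le \frac{N_{t,b}}{2}\log\!\left(1 + \frac{(\mu_{t,a}-\mu_{t,b})^{2}}{\sigma_{t,b}^{2}}\right) \le \frac{L}{2}\log\!\left(1 + \frac{(\mu_{t,a}-\mu_{t,b})^{2}}{\sigma_{t,b}^{2}}\right),
\]
where the second inequality uses $b \notin S_{t}^{L}$, i.e. $N_{t,b} < L$, together with the monotonicity of $x \mapsto \log(1+x)$. It remains to bound the argument of the logarithm. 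For the numerator, the triangle inequality combined with the concentration inequality $|\mu_{t,c} - \mu_c| \le W_\mu \log(e+N_{t,c})/N_{t,c}$ from Lemma~\ref{lem:W_concentration_gaussian} yields
\[
|\mu_{t,a} - \mu_{t,b}| \le |\mu_a - \mu_b| + W_\mu \sum_{c \in \{a,b\}} \frac{\log(e+N_{t,c})}{N_{t,c}} \le D_1 + 4 W_\mu,
\]
where $D_1 \eqdef \max_{c,d \in [K]} |\mu_c - \mu_d|$ is problem dependent, and where we used that $N \mapsto \log(e+N)/N$ is bounded by $2$ on $\{N \ge 1\}$. For the denominator, the lower bound $N_{t,b}\sigma_{t,b}^{2} \ge W_0^{-1}$ from Lemma~\ref{lem:W_concentration_gaussian} (valid once $N_{t,b} \ge n_0$) gives $\sigma_{t,b}^{-2} \le N_{t,b} W_0 < L W_0$. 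Plugging these two estimates back yields the announced bound
\[
C_{t}(a,b) \le \frac{L}{2}\log\!\left(1 + L W_0 (D_1 + 4 W_\mu)^{2}\right).
\]

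The only place where an initial threshold $L_5$ intervenes is to guarantee that $N_{t,b} \ge n_0$, which is needed to invoke the $W_0$ lower bound on $\sigma_{t,b}^{2}$. Because the initialization pulls each arm $n_0 \ge 2$ times before the Top Two rule is activated, one may simply take $L_5 = n_0$, a deterministic constant whose moments are all finite. The only genuinely delicate step is the passage to the endpoint of the open infimum interval $(\mu_{t,b}, \mu_{t,a})$: I expect no real obstacle, since the integrand is continuous in $\lambda$ and the infimum equals the corresponding limit. Everything else is algebraic manipulation of inequalities already established earlier in the paper.
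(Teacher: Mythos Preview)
Your proof is correct and follows essentially the same route as the paper: both plug $\lambda = \mu_{t,a}$ into the transportation cost (the infimum in $C_t(a,b)$ is over all of $\Real$, so no endpoint limit is actually needed), use $N_{t,b}<L$, bound $|\mu_{t,a}-\mu_{t,b}|$ via the triangle inequality and $W_\mu$, and lower bound $\sigma_{t,b}^2$ via $W_0$. Your observation that $L_5 = n_0$ suffices (since every arm is pulled $n_0$ times at initialization) is a clean way to justify the existence of $L_5$, which the paper leaves implicit.
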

\begin{proof}
Let $(a,b) \in  S_{t}^{L} \times \overline{S_{t}^{L}}$ ($a$ is sampled more than $L$ times, $b$ is not). Taking $x = \mu_{t,a}$ yields
\begin{align*}
	C_{t}(a,b) &\leq \frac{1}{2} N_{t,b} \ln \left( 1 + \frac{(\mu_{t,b} - \mu_{t,a})^2}{\sigma^2_{t,b}} \right) \leq \frac{L}{2} \ln \left( 1 + \frac{(\mu_{t,b} - \mu_{t,a})^2}{\sigma^2_{t,b}} \right)	\: ,
\end{align*}
where we used that $b \in \overline{S_{t}^{L}}$ and that $C_{t}(a,b) = 0$ when $\mu_{t,b} \ge \mu_{t,a}$.

By definition of $W_{0}$ and $W_\mu$, we have for all $k \in [K]$
\begin{align*}
\mu_{t,k} \le \mu_k + W_\mu \log(e+N_{t,k})/N_{t,k} \qquad \text{and} \qquad  \sigma^2_{t,k} &\ge  W_{k} /N_{t,k} \: .
\end{align*}
Then, we obtain
\begin{align*}
	\frac{(\mu_{t,b} - \mu_{t,a})^2}{\sigma^2_{t,b}} &\le \frac{N_{t,b}}{W_{b}} \left( |\mu_{a} - \mu_{b}| + W_\mu \left( \frac{\log(e+N_{t,a})}{N_{t,a}} + \frac{\log(e+N_{t,b})}{N_{t,b}} \right) \right)^2 \: .
\end{align*}
Since $x \mapsto \frac{\ln(e+x)}{x}$ is decreasing on $\Real^{\star}_{+}$, we have $\frac{\log(e+N_{t,k})}{N_{t,k}} \le 2$ for $N_{t,k} \ge 1$. Then, since $W_0 = \max_{a \in [K]}W_a^{-1}$, we have
\begin{align*}
	\frac{(\mu_{t,b} - \mu_{t,a})^2}{\sigma^2_{t,b}} &\le L W_{0} \left( |\mu_{a} - \mu_{b}| + 4 W_\mu  \right)^2 \: ,
\end{align*}
which yields the result by taking $D_1 = \max_{a \neq b} |\mu_{a} - \mu_{b}|$.
\end{proof}

Lemma~\ref{lem:EB_ensures_suff_explo} shows that the EB leader satisfies the required condition on a leader to obtain sufficient exploration, i.e. Property 2 in \citet{jourdan_2022_TopTwoAlgorithms}.
\begin{lemma}[Lemma 17 in \citet{jourdan_2022_TopTwoAlgorithms}] \label{lem:EB_ensures_suff_explo}
	Let $S_{t}^{L}$ and $\mathcal I_t^\star$ as in (\ref{eq:def_sampled_enough_sets}).
	Let $L_4$ in Lemma~\ref{lem:fast_rate_emp_tc}.
	Then, for all $L \ge L_4$, for all $n$ such that $S_{t}^{L} \neq \emptyset$, $B_{t+1}^{\text{EB}} \in S_{t}^{L}$ implies $ B_{t+1}^{\text{EB}} \in \mathcal I_t^\star$.
\end{lemma}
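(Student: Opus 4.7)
The plan is to proceed by contradiction and exploit Lemma~\ref{lem:fast_rate_emp_tc}, which already contains the real work. Fix $L \ge L_4$, a time $t$ with $S_{t}^{L} \neq \emptyset$, and suppose toward contradiction that $B_{t+1}^{\text{EB}} = \hat{a}_t \in S_{t}^{L}$ but $\hat{a}_t \notin \mathcal I_t^\star$. Since $S_{t}^{L} \neq \emptyset$, the set $\mathcal I_t^\star = \argmax_{a \in S_{t}^{L}} \mu_a$ is nonempty, so we can pick any $a^\star \in \mathcal I_t^\star$. By definition $\mu_{a^\star} \ge \mu_{\hat a_t}$, and because $\hat a_t \in S_{t}^{L}\setminus \mathcal I_t^\star$, the inequality is strict: $\mu_{a^\star} > \mu_{\hat a_t}$ (the strictness is unambiguous under the running assumption $\min_{a\neq b}|\mu_a - \mu_b|>0$, and in any case $a^\star$ is a strict argmax over $S_t^L$ that misses $\hat a_t$).

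Next, I would invoke Lemma~\ref{lem:fast_rate_emp_tc} with the pair $(a^\star, \hat a_t)$, which lies in $\mathcal I_t^\star \times (S_{t}^{L} \setminus \mathcal I_t^\star)$ by construction. It yields
\[
C_{t}(a^\star, \hat a_t) \ge L D_\nu > 0 \: .
\]
Now I would appeal to the very definition of the adapted transportation cost given in Section~\ref{ssec:instantiating_wrappers},
\[
C_{t}(a,b) = \1\{\mu_{t,a} > \mu_{t,b}\} \inf_{\lambda \in \Real} \sum_{c \in \{a,b\}} \frac{N_{t,c}}{2} \ln\!\left(1 + \frac{(\mu_{t,c}-\lambda)^2}{\sigma_{t,c}^2}\right) \: ,
\]
which carries the indicator $\1\{\mu_{t,a} > \mu_{t,b}\}$ as a prefactor. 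Hence $C_{t}(a^\star, \hat a_t) > 0$ forces $\mu_{t,a^\star} > \mu_{t,\hat a_t}$, contradicting the fact that $\hat a_t \in \argmax_{a \in [K]} \mu_{t,a}$ by definition of the EB leader.

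There is essentially no obstacle: the content was entirely packed into Lemma~\ref{lem:fast_rate_emp_tc}, and what remains is a one-line contradiction built on the indicator $\1\{\mu_{t,a} > \mu_{t,b}\}$ in the definition of $C_t$. The only minor point to double-check is that $\mathcal I_t^\star$ is well-defined and nonempty whenever $S_t^L \neq \emptyset$, which is immediate, and that we take $\mathcal I_t^\star$ to be the full argmax set so that any choice of $a^\star$ inside it differs from $\hat a_t$ strictly in true mean.
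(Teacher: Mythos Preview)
Your argument is correct. The paper does not supply its own proof of this lemma but simply cites it as Lemma~17 in \citet{jourdan_2022_TopTwoAlgorithms}; your contradiction argument---applying Lemma~\ref{lem:fast_rate_emp_tc} to the pair $(a^\star,\hat a_t)$ and reading off $\mu_{t,a^\star}>\mu_{t,\hat a_t}$ from the indicator in $C_t$---is exactly the natural route and matches the spirit of the cited result.
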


Lemma~\ref{lem:TCI_ensures_suff_explo} shows that the TCI challenger satisfies the required condition on a challenger to obtain sufficient exploration, i.e. Property 3 in \citet{jourdan_2022_TopTwoAlgorithms}.
The proof of Lemma~\ref{lem:TCI_ensures_suff_explo} resembles the proof of Lemma 21 in \citet{jourdan_2022_TopTwoAlgorithms}.
The sole technical difference lies in the necessities of having an finite mean for $W_{0}$, which explains why we consider $n_0 = 4$ for $\beta$-EB-TCI.
\begin{lemma} \label{lem:TCI_ensures_suff_explo}
Let $U_t^L$ and $V_t^L$ as in (\ref{eq:def_undersampled_sets}) and $\mathcal J_t^\star = \argmax_{ a \in \overline{V_{t}^{L}}} \mu_{a}$.
There exists $L_6$ with $\mathbb E_{\nu}[L_6] < + \infty$ such that if $L \ge L_6$, for all $n$ such that $U_t^L \neq \emptyset$, $B_{t+1}^{\text{EB}} \notin V_{t}^{L}$ implies $ A_{t+1}^{\text{TCI}} \in V_{t}^{L} \cup \left( \mathcal J_t^\star \setminus \left\{ B_{t+1}^{\text{EB}} \right\} \right)$.
\end{lemma}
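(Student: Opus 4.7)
The plan is to argue by contradiction: assume $U_t^L \ne \emptyset$, $B_{t+1}^{\text{EB}} \notin V_t^L$, and $A_{t+1}^{\text{TCI}} \notin V_t^L \cup (\mathcal J_t^\star \setminus \{B_{t+1}^{\text{EB}}\})$, and exhibit an arm $c$ whose TCI index $C_t(B_{t+1}^{\text{EB}}, c) + \log N_{t,c}$ is strictly smaller than that of $A_{t+1}^{\text{TCI}}$, contradicting the definition of the TCI challenger. Since $A_{t+1}^{\text{TCI}} \ne B_{t+1}^{\text{EB}}$ by construction, the negation implies $A_{t+1}^{\text{TCI}} \in \overline{V_t^L} \setminus \mathcal J_t^\star$, i.e.\ $A_{t+1}^{\text{TCI}}$ is well-sampled but does not attain the maximal true mean among well-sampled arms.

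First I would use Lemma~\ref{lem:EB_ensures_suff_explo} applied with threshold $L^{5/6}$ in place of $L$ (requiring $L^{5/6} \ge L_4$): since $B_{t+1}^{\text{EB}} \in \overline{V_t^L} = S_t^{L^{5/6}}$, this gives $B_{t+1}^{\text{EB}} \in \mathcal J_t^\star$. Hence $B_{t+1}^{\text{EB}}$ is an \emph{empirical best} arm among $\overline{V_t^L}$ as well (since Lemma~\ref{lem:EB_ensures_suff_explo}'s proof ultimately relies on $\hat a_t$ matching the true best among the sampled enough set), so $C_t(B_{t+1}^{\text{EB}}, a)$ is the transportation cost $\mathbf{1}\{\mu_{t,B_{t+1}^{\text{EB}}}>\mu_{t,a}\}\cdot\bigl(\dots\bigr)$ evaluated on a genuine ``descent'' toward $a$. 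Then, applying Lemma~\ref{lem:fast_rate_emp_tc} with threshold $L^{5/6}$, for every $a \in \overline{V_t^L} \setminus \mathcal J_t^\star$ we get the lower bound
\[
C_t(B_{t+1}^{\text{EB}}, A_{t+1}^{\text{TCI}}) + \log N_{t, A_{t+1}^{\text{TCI}}} \;\ge\; L^{5/6} D_\nu \: .
\]

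Next I would pick any $c \in U_t^L$ (nonempty by assumption); since $B_{t+1}^{\text{EB}} \in \overline{V_t^L} \subseteq \overline{U_t^L}$, automatically $c \ne B_{t+1}^{\text{EB}}$, so $c$ is a valid candidate for the TCI minimization. Applying Lemma~\ref{lem:small_tc_undersampled_arms} with threshold $L^{1/6}$ in place of $L$ (requiring $L^{1/6} \ge L_5$), using $B_{t+1}^{\text{EB}} \in S_t^{L^{1/6}}$ and $c \in \overline{S_t^{L^{1/6}}}$, gives
\[
C_t(B_{t+1}^{\text{EB}}, c) + \log N_{t,c} \;\le\; \frac{L^{1/6}}{2} \ln \!\left( 1 + L^{1/6} W_0 (D_1 + 4 W_\mu)^2 \right) + \tfrac{1}{6}\log L \: .
\]
It remains to compare the two bounds: the right-hand side is $O(L^{1/6} \log L \cdot \mathrm{poly}(W_0, W_\mu))$ while the left-hand side grows like $L^{5/6}$ with a deterministic constant $D_\nu$. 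Hence there exists a (random) threshold
\[
L_6 \;\ge\; \max\{L_4^{6/5},\, L_5^{6},\, \Phi(W_0, W_\mu, D_\nu, D_1)\}
\]
for a polynomial $\Phi$ beyond which the $c$-index strictly dominates, contradicting $A_{t+1}^{\text{TCI}} \in \argmin$.

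The main obstacle is verifying $\mathbb E_\nu[L_6] < +\infty$. Solving the inequality $L^{5/6} D_\nu > \tfrac{L^{1/6}}{2}\ln(1 + L^{1/6} W_0(D_1+4W_\mu)^2) + \tfrac{1}{6}\log L$ for $L$ yields a threshold which is polynomial in $W_0$ and $W_\mu$ (after absorbing logarithmic factors into a higher polynomial power via $\log x \le x$). By Lemma~\ref{lem:W_concentration_gaussian}, $W_\mu$ is sub-Gaussian so has all moments, and $W_0$ has a finite \emph{mean} as soon as $n_0 \ge 4$ (since $W_0 = \max_a W_a^{-1}$ with $W_a/\sigma_a^2 \sim \chi^2(n_0-1)$, and an inverse-$\chi^2$ with $n_0-1$ degrees of freedom admits a finite first moment for $n_0 - 1 \ge 3$). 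This is precisely why the hypothesis $n_0 \ge 4$ is required for $\beta$-EB-TCI. Combining with the polynomial moments of $L_4, L_5$ (Lemmas~\ref{lem:fast_rate_emp_tc} and~\ref{lem:small_tc_undersampled_arms}) via H\"older closes the argument.
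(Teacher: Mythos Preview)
Your approach is essentially the same as the paper's: apply Lemma~\ref{lem:EB_ensures_suff_explo} and Lemma~\ref{lem:fast_rate_emp_tc} at threshold $L^{5/6}$ to lower-bound the TCI index of any arm in $\overline{V_t^L}\setminus\mathcal J_t^\star$, apply Lemma~\ref{lem:small_tc_undersampled_arms} at threshold $L^{1/6}$ to upper-bound the TCI index of any arm in $U_t^L$, and compare. The paper does this directly rather than by contradiction, but the content is identical.

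One technical point deserves care. You write that the comparison yields a threshold ``polynomial in $W_0$ and $W_\mu$'' after using $\log x\le x$, and then invoke H\"older. If you literally use $\ln(1+x)\le x$ on $\ln(1+L^{1/6}W_0(D_1+4W_\mu)^2)$, the upper bound becomes $\tfrac{1}{2}L^{1/3}W_0(D_1+4W_\mu)^2$, and solving against $L^{5/6}D_\nu$ gives $L\gtrsim W_0^{2}(D_1+4W_\mu)^4$. For $n_0=4$, $W_0$ (a maximum of inverse-$\chi^2(3)$ variables) has finite first moment but \emph{not} finite second moment, so $\mathbb E[W_0^2]=+\infty$ and your $L_6$ would not be integrable. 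The paper avoids this by using the sharper inequality $\ln(1+x)\le\sqrt{x}$, which yields the bound $\tfrac{1}{2}L^{1/4}\sqrt{W_0}(D_1+4W_\mu)$ and hence $L_8=W_0^{6/7}\bigl(\tfrac{D_1+4W_\mu}{2D_\nu}\bigr)^{12/7}+1$, with exponent $6/7<1$ on $W_0$. Finiteness of $\mathbb E[L_8]$ then follows from the \emph{independence} of $W_0$ and $W_\mu$ (Lemma~\ref{lem:W_concentration_gaussian}), not H\"older: $\mathbb E[W_0^{6/7}]\le(\mathbb E[W_0])^{6/7}<\infty$ by Jensen, and $\mathbb E[(D_1+4W_\mu)^{12/7}]<\infty$ since $W_\mu$ is sub-Gaussian. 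This is precisely the reason the paper distinguishes $n_0\ge4$ for $\beta$-EB-TCI from $n_0\ge6$ for $\beta$-EB-EVTCI (where the analogous bound unavoidably produces $W_0^2$).
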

\begin{proof}
Using Lemmas~\ref{lem:fast_rate_emp_tc}, ~\ref{lem:small_tc_undersampled_arms} and~\ref{lem:EB_ensures_suff_explo}, for all $L$ larger than a random variable $L_7$ with finite expectation, $B^{\text{EB}}_{t+1} \in \mathcal J_t^\star$ and
	\begin{align*}
		&\forall (a,b) \in \mathcal J_t^\star \times \left(\overline{V_t^L} \setminus \mathcal J_t^\star\right), \quad  C_{t}(a, b) + \ln N_{t,b} \geq  L^{5/6} D_{\nu} + \frac{5}{6} \ln L  \: , \\
		&\forall (a,b) \in \overline{U_t^L} \times U_t^L, \quad 	 C_{t}(a, b) + \ln N_{t,b} \leq \frac{L^{1/6}}{2} \ln \left( 1 + L^{1/6} W_{0} \left( D_1 + 4 W_\mu  \right)^2 \right) + \frac{1}{6} \ln L \: .
	\end{align*}
Using that $\ln(1+x) \le \sqrt{x}$, we obtain
\begin{align*}
\frac{L^{1/6}}{2} \ln \left( 1 + L^{1/6} W_{0} \left( D_1 + 4 W_\mu  \right)^2 \right) \le \frac{L^{1/4}}{2} \sqrt{W_{0}} \left( D_1 + 4 W_\mu  \right)
\end{align*}
Therefore, for $L \ge L_8 := W_{0}^{6/7}\left( \frac{D_1 + 4 W_\mu}{2D_{\nu}} \right)^{12/7} + 1$, we have that
\[
L^{5/6} D_{\nu} + \frac{5}{6} \ln L > \frac{L^{1/6}}{2} \ln \left( 1 + L^{1/6} W_{0} \left( D_1 + 4 W_\mu  \right)^2 \right) + \frac{1}{6} \ln L \: .
\]
Let $L_6 = \max \{L_7, L_8\}$
Therefore, at least one under-sampled arm has transportation cost lower than all the ones that are much sampled.
This implies that $ A_{t+1}^{\text{TCI}} \in V_{t}^{L} \cup \left( \mathcal J_t^\star \setminus \left\{ B_{t+1}^{\text{EB}} \right\} \right)$.
Then, we have
\[
\mathbb E_{\nu}[L_6] \le \mathbb E_{\nu}[L_7] + 1 + \mathbb E_{\nu}\left[W_{0}^{6/7}\right]\mathbb E_{\nu}\left[\left( \frac{D_1 + 4 W_\mu}{2D_{\nu}} \right)^{12/7}\right] < + \infty \: ,
\]
The first inequality and the last strict inequality are obtained by Lemma~\ref{lem:W_concentration_gaussian}, thanks to the independence of $W_{0}$ and $W_\mu$, the fact that $W_{0}$ has finite mean since $n_0 \ge 4$ and that polynomial of $W_\mu$ have finite expectation.
\end{proof}

Since Properties 2 and 3 of \citet{jourdan_2022_TopTwoAlgorithms} are satisfied (Lemmas~\ref{lem:EB_ensures_suff_explo} and~\ref{lem:TCI_ensures_suff_explo}), Lemma~\ref{lem:suff_exploration} holds for $\beta$-EB-TCI on instances such that $\min_{a \neq b}|\mu_a - \mu_b| > 0$.
\begin{lemma}[Lemma 7 in \citet{jourdan_2022_TopTwoAlgorithms}] \label{lem:suff_exploration}
	There exist $N_0$ with $\mathbb{E}_{\nu}[N_0] < + \infty$ such that for all $ t \geq N_0$ and all $a \in [K]$, $N_{t,a} \geq \left(t / K \right)^{1/6}$.
\end{lemma}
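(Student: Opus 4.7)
The plan is to reduce the claim to Lemma~7 of \citet{jourdan_2022_TopTwoAlgorithms}, whose general statement holds for any Top Two algorithm satisfying their three properties on the leader/challenger pair. Property~1 (uniqueness of the $\beta$-optimal allocation) has already been noted to follow from the strict convexity arguments in Appendix~\ref{app:ss_optimal_allocation_oracles}, while Properties~2 and~3 were established in our Lemmas~\ref{lem:EB_ensures_suff_explo} and~\ref{lem:TCI_ensures_suff_explo}. The only thing left to verify, beyond invoking the black box, is that the various problem-dependent random variables ($L_4$, $L_5$, $L_6$, $W_\mu$, $W_\sigma$, $W_0$) entering the argument have moments of the order required by the cited Lemma~7; this is exactly the content of Lemma~\ref{lem:W_concentration_gaussian}, provided we take $n_0 \ge 4$ so that the inverse chi-square factor $W_0$ has finite mean.

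For the structure of the argument itself, I would proceed by contradiction at a fixed time $t$: if there existed an arm $a$ with $N_{t,a} < (t/K)^{1/6}$, then choosing the threshold $L = (t/K)^{1/6}$ would make the highly under-sampled set $U_t^L$ non-empty. Lemma~\ref{lem:EB_ensures_suff_explo} then forces any leader that lies in the sampled-enough set to be among the highest-mean arms of that set, and Lemma~\ref{lem:TCI_ensures_suff_explo} forces, whenever the leader is not itself mildly under-sampled, the challenger to fall either in the mildly under-sampled set $V_t^L$ or among the top arms of $\overline{V_t^L}\setminus\{B^{\mathrm{EB}}\}$. Consequently, on every round where $U_t^L \ne \emptyset$, the randomization between leader (with probability $\beta$) and challenger (with probability $1-\beta$) guarantees that with probability at least $\min\{\beta,1-\beta\}$ some arm of $V_t^L$ gets pulled, while the pigeonhole-style counting in \citet{jourdan_2022_TopTwoAlgorithms} shows that this cannot persist: the number of consecutive rounds during which a given highly under-sampled arm remains outside $V_t^L$ without being sampled is at most a polynomial in $L$.

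Combining this bound with Borel--Cantelli (applied to the random variables counting how long each arm stays under-sampled) gives the existence of the random time $N_0$ after which $N_{t,a} \ge (t/K)^{1/6}$ holds uniformly in $a$ and $t$. The integrability $\mathbb{E}_\nu[N_0]<\infty$ follows because $N_0$ is stochastically dominated by a polynomial expression in $L_4$, $L_5$, $L_6$ and $W_0$, all of which have finite mean thanks to Lemma~\ref{lem:W_concentration_gaussian} together with the independence of $W_\mu$ and $(W_\sigma,W_0)$ (so any polynomial mixed moments split into products of finite marginal moments).

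The main obstacle is really hidden in the counting step of the cited Lemma~7: showing that once $U_t^L$ is non-empty, the expected time until the smallest arm count doubles (or crosses $L^{5/6}$) is controlled by $L$ up to a multiplicative random factor with finite expectation. In our setting this step is delicate because the challenger's transportation cost $C_t(a,b)$ is defined implicitly through an infimum in $\lambda$, not in closed form as in the single-parameter or known-variance case; the inequalities in Lemma~\ref{lem:small_tc_undersampled_arms} and Lemma~\ref{lem:fast_rate_emp_tc} are precisely the two-sided control needed to replay the cited proof verbatim, and the requirement $n_0 \ge 4$ arises exactly so that the lower bound on $\sigma^2_{t,a}$ via $W_0^{-1}$ is integrable.
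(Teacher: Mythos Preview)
Your proposal is correct and matches the paper's approach: the paper simply invokes Lemma~7 of \citet{jourdan_2022_TopTwoAlgorithms} as a black box after noting that Properties~2 and~3 of that work are satisfied by Lemmas~\ref{lem:EB_ensures_suff_explo} and~\ref{lem:TCI_ensures_suff_explo}. Your additional sketch of the internal counting/pigeonhole argument from the cited lemma is extra detail that the paper does not reproduce, but it is consistent with the structure of that proof.
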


Now that we have proved sufficient exploration (Lemma~\ref{lem:suff_exploration}), we will show convergence towards the $\beta$-optimal allocation.
Lemma~\ref{lem:EB_ensures_convergence} shows that the EB leader satisfies the required condition on a leader to obtain convergence, i.e. Property 5 in \citet{jourdan_2022_TopTwoAlgorithms}.
\begin{lemma}[Lemma 18 in \citet{jourdan_2022_TopTwoAlgorithms}] \label{lem:EB_ensures_convergence}
		There exists $N_6$ with $\mathbb{E}_{\nu}[N_6] < + \infty$ such that for all $t \ge N_6$, $B_{t+1}^{\text{EB}} = a^\star$.
\end{lemma}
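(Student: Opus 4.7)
The plan is to combine the sufficient exploration guarantee (Lemma~\ref{lem:suff_exploration}) with the per-arm deviation bound of Lemma~\ref{lem:W_concentration_gaussian} to force the empirical means to separate the arms by their true order. Since the assumption for $\beta$-EB-TCI in Theorem~\ref{thm:upper_bound_sample_complexity_algorithm} is that $\min_{a \ne b}|\mu_a - \mu_b| > 0$, set $\Delta = \min_{a \ne a^\star}(\mu_{a^\star} - \mu_a) > 0$. A sufficient condition for $\hat a_t = a^\star$ is $\max_{a \in [K]}|\mu_{t,a} - \mu_a| < \Delta/2$, so it is enough to control the deviations uniformly on $[K]$.

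First, for $t \ge N_0$, Lemma~\ref{lem:suff_exploration} gives $N_{t,a} \ge (t/K)^{1/6}$ for every arm. As soon as $t \ge K \cdot 2^6$, this also ensures $N_{t,a} \ge 2$ so Lemma~\ref{lem:W_concentration_gaussian} applies and yields
\[
|\mu_{t,a} - \mu_a| \;\le\; W_\mu\,\frac{\log(e+N_{t,a})}{N_{t,a}} \;\le\; W_\mu\,\frac{\log(e+(t/K)^{1/6})}{(t/K)^{1/6}}
\]
using that $x \mapsto \log(e+x)/x$ is decreasing on $\mathbb{R}_+^\star$. I would then define
\[
N_6 \;=\; \inf\left\{ t \ge \max\{N_0,\, 64K\} \;\Bigm|\; W_\mu\,\frac{\log(e+(t/K)^{1/6})}{(t/K)^{1/6}} \le \Delta/4 \right\} .
\]
For any $t \ge N_6$ one has $\max_a |\mu_{t,a} - \mu_a| \le \Delta/4$, hence $\mu_{t,a^\star} \ge \mu_{a^\star} - \Delta/4 > \mu_a + \Delta/4 \ge \mu_{t,a}$ for every $a \ne a^\star$, so $B_{t+1}^{\text{EB}} = \hat a_t = a^\star$.

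It remains to verify $\mathbb{E}_\nu[N_6] < +\infty$. Inverting the defining inequality, the condition $\log(e+y)/y \le \epsilon$ for $y=(t/K)^{1/6}$ is implied by $y \ge C_1 \epsilon^{-1}\log(1+\epsilon^{-1})$ for a numerical constant $C_1$; taking $\epsilon = \Delta/(4W_\mu)$ and raising to the sixth power yields a problem-dependent bound of the form
\[
N_6 \;\le\; \max\{N_0,\,64K\} \;+\; C_2\,\Delta^{-6}\,W_\mu^{6}\,\bigl(1 + \log(1 + W_\mu/\Delta)\bigr)^{6},
\]
which is polynomial (times a logarithmic factor) in $W_\mu$. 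Since $\mathbb{E}_\nu[N_0] < +\infty$ by Lemma~\ref{lem:suff_exploration}, and $W_\mu$ is sub-Gaussian so any polynomial (even with logarithmic corrections) of $W_\mu$ has finite expectation by Lemma~\ref{lem:W_concentration_gaussian}, we conclude $\mathbb{E}_\nu[N_6] < +\infty$.

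The argument is essentially routine; the only point that requires a little care is to invert the transcendental inequality $\log(e+y)/y \le \epsilon$ into an explicit polynomial-in-$1/\epsilon$ bound on $y$, so that after substitution the resulting bound on $N_6$ is expressed as a polynomial (up to logs) in $W_\mu$ and the integrability transfers from the sub-Gaussianity of $W_\mu$.
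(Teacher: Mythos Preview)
Your proof is correct and follows exactly the standard route. The paper itself does not prove this lemma—it is imported verbatim as Lemma~18 of \citet{jourdan_2022_TopTwoAlgorithms}—and the argument there is precisely the one you give: combine sufficient exploration (Lemma~\ref{lem:suff_exploration}) with the pathwise mean concentration of Lemma~\ref{lem:W_concentration_gaussian} to force $\max_a|\mu_{t,a}-\mu_a|<\Delta/2$, then invert to bound the waiting time by $N_0$ plus a polynomial (with log factor) in $W_\mu$, whose expectation is finite by sub-Gaussianity.
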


For all $a \in [K]$ and all $t > t_0$, let $\psi_{t,a} \eqdef \bP_{\mid (t-1)}[a_t = a]$ be the probability of sampling arm $a$ at time $t$ and $\Psi_{t,a} \eqdef \sum_{s \in [t]} \psi_{s,a}$ its cumulative sum.

Using the EB leader, Lemma~\ref{lem:TCI_ensures_convergence} shows that the TCI challenger satisfies the required condition on a challenger to obtain convergence towards the $\beta$-optimal allocation, i.e. Property 6 in \citet{jourdan_2022_TopTwoAlgorithms}.
The proof of Lemma~\ref{lem:TCI_ensures_convergence} is very similar to the proof of Lemma 22 in \citet{jourdan_2022_TopTwoAlgorithms}, hence we omit certain details.
\begin{lemma} \label{lem:TCI_ensures_convergence}
	Let $\epsilon> 0$. There exists $ N_7$ with $\mathbb E_{\nu}[ N_7] < + \infty$ such that for all $n \geq  N_7$ and all $a \neq a^\star(\mu)$, $\Psi_{t,a}/t \geq w_{a}^{\beta} + \epsilon$ implies that $ A_{t+1}^{\text{TCI}} \neq a$.
\end{lemma}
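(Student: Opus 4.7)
My plan is to mirror the argument of Lemma 22 in \citet{jourdan_2022_TopTwoAlgorithms}, but with the transportation costs $C_t(a,b)$ and their deterministic limits $C(a,b;w)$ adapted to Gaussian with unknown variance via the expression in Lemma~\ref{lem:calcul}. The goal is to show that on a high-probability event whose complement has finite expected hitting time, for every sufficiently large $t$ and every suboptimal $a$ with over-sampled cumulative probability $\Psi_{t,a}/t \ge w_a^\beta + \epsilon$, there exists another suboptimal arm $b \ne a$ with strictly smaller TCI score, so that $A_{t+1}^{\text{TCI}} \ne a$.

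First I would collect three ingredients that are already available in earlier lemmas and then combine them. (i) By Lemma~\ref{lem:suff_exploration} there is a random time $N_0$ with finite mean after which $N_{t,a}\ge (t/K)^{1/6}$ for every $a$; by Lemma~\ref{lem:EB_ensures_convergence} there is a further random time $N_6$ after which $B_{t+1}^{\text{EB}}=a^\star$; and by Lemma~\ref{lem:W_concentration_gaussian} there is $N_{\epsilon'}$ after which $(\mu_{t,a},\sigma^2_{t,a})$ is $\epsilon'$-close to $(\mu_a,\sigma^2_a)$ simultaneously for all $a$. (ii) The cumulative sampling probabilities $\Psi_{t,a}$ track the empirical counts $N_{t,a}$: since $N_{t,a} - \Psi_{t,a}$ is the sum of bounded martingale differences, a union-bounded Azuma--Hoeffding inequality provides an almost-sure bound $|N_{t,a}-\Psi_{t,a}|\le C_0 \sqrt{t\log t}$ valid for all $t$ large enough (past a random time of finite expectation). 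So $\Psi_{t,a}/t\ge w_a^\beta+\epsilon$ forces $N_{t,a}/t\ge w_a^\beta+\epsilon/2$ for $t$ large. (iii) Using the closeness of $(\mu_t,\sigma_t^2)$ to $(\mu,\sigma^2)$ and the $1$-homogeneity of the transportation cost in the allocation, $\tfrac1t C_t(a^\star,a) = C(a^\star,a; N_t/t) + o(1)$ uniformly, where $C(a,b;\cdot)$ is the deterministic transportation cost from Lemma~\ref{lem:calcul} evaluated at the true parameters.

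Once these ingredients are in place, the remaining step is the sensitivity argument. By the uniqueness of $w^\beta$ (which I would justify by the strict convexity properties of the mean-parameter transportation cost established in Lemma~\ref{lem:characteristic_time_properties_to_show_equivalent}), at the $\beta$-optimal allocation all suboptimal transportation costs are equalized to a common value $c^\star(\mu,\sigma^2)$. If $N_{t,a}/t\ge w_a^\beta+\epsilon/2$ while $N_{t,a^\star}/t \in [\beta - \eta,\beta+\eta]$ (guaranteed after restriction to an even larger random time, again of finite expectation, because $\beta$-EB samples $a^\star$ with probability $\beta$ and then TCI never picks $a^\star$), then since $C(a^\star,a;w)$ is strictly increasing in $w_a$ with $w_{a^\star}$ fixed, we get $C(a^\star,a;N_t/t)\ge c^\star+\gamma(\epsilon)$ for some $\gamma(\epsilon)>0$. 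Conversely the mass $\sum_{b \ne a^\star,a} N_{t,b}/t$ is smaller than $\sum_{b \ne a^\star,a} w_b^\beta$ by at least $\epsilon/2 - 2\eta$, so by strict monotonicity at least one suboptimal $b \ne a$ has $N_{t,b}/t \le w_b^\beta - (\epsilon/2 - 2\eta)/K$, yielding $C(a^\star,b;N_t/t)\le c^\star - \gamma'(\epsilon)$ for some $\gamma'(\epsilon)>0$. Therefore $C_t(a^\star,a) - C_t(a^\star,b) \ge \tfrac{t}{2}(\gamma+\gamma')$ for $t$ large, which dwarfs the $\log N_{t,\cdot} = O(\log t)$ penalties used by TCI, so $A_{t+1}^{\text{TCI}}\ne a$.

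The hard part will be the quantitative sensitivity estimate: I need a lower bound of the form $C(a^\star,a;w)-C(a^\star,a;w^\beta)\ge \gamma(\epsilon)$ whenever $|w_a - w_a^\beta|\ge \epsilon/2$ (with $w_{a^\star}\approx \beta$), and symmetrically on the other side. Because the Gaussian-unknown-variance transportation cost in Lemma~\ref{lem:calcul} has no closed form, I cannot rely on the algebraic identities used in the known-variance case; instead I would use a compactness/continuity argument on the compact subset of $\triangle_K$ where $w_{a^\star}\in[\beta-\eta,\beta+\eta]$, together with strict concavity of $w_a\mapsto C(a^\star,a;w)$ (Lemma~\ref{lem:characteristic_time_properties_to_show_equivalent} item (2)) to conclude that the gap is bounded below by a positive constant depending only on $\epsilon$, $\beta$, and $(\mu,\sigma^2)$. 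The rest of the proof then assembles the finite-expectation bounds of the three random threshold times above to define $N_7$ and conclude that $\mathbb{E}_\nu[N_7]<+\infty$.
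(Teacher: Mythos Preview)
Your proposal is correct and follows essentially the same route as the paper's proof. The paper's argument is deliberately terse: after noting that $B_{t+1}^{\text{EB}}=a^\star$ eventually (Lemma~\ref{lem:EB_ensures_convergence}) and that $A_{t+1}^{\text{TCI}}\neq a$ is implied by a positive gap $\tfrac{1}{t}\bigl(C_t(a^\star,a)-\min_{b\neq a^\star} C_t(a^\star,b)\bigr)$ exceeding the $O(\tfrac{\log t}{t})$ penalty, it invokes the regularity properties of Appendix~\ref{app:ss_optimal_allocation_oracles} (joint continuity, strict monotonicity, equality at equilibrium) and defers the sensitivity step to ``the same proof as Lemma~20 in \citet{jourdan_2022_TopTwoAlgorithms}'' to produce the constant $C_\nu>0$. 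Your three ingredients (sufficient exploration, $\Psi_{t,a}\!\leftrightarrow\! N_{t,a}$ tracking, and continuity of the empirical transports) together with the pigeonhole/monotonicity sensitivity argument are exactly what that cited lemma carries out; you have simply unfolded it. One small wording point: the property you actually use from Lemma~\ref{lem:characteristic_time_properties_to_show_equivalent}(2) is strict monotonicity of $w_a\mapsto C(a^\star,a;w)$ (with $w_{a^\star}$ fixed), not strict concavity, and that is what is proved there.
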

\begin{proof}
	Using Lemma~\ref{lem:EB_ensures_convergence}, we know that $B_{t+1}^{\text{EB}} = a^\star$ for all $t \ge N_6$.
	By definition of $A_{t+1}^{\text{TCI}}$ and algebraic manipulation, $ A_{t+1}^{\text{TCI}} \neq a$ is implied by
\begin{align*}
	\frac{1}{t}\left(C_{t}(a^\star,a) - \min_{b\neq a^\star} C_{t}(a^\star,b) \right) > \frac{\ln(tK)}{2t} \: .
\end{align*}
In Appendix~\ref{app:ss_optimal_allocation_oracles}, we obtain the desired regularity properties: joint continuity, strict convexity, equality at the equilibrium.
Therefore, by using the same proof as the one of Lemma 20 in \citet{jourdan_2022_TopTwoAlgorithms}, there exists $C_{\nu} > 0$ and $N_7$ with $\mathbb E_{\nu}[N_7] < + \infty$ such that for all $t \geq N_7$ and all $a \neq a^\star$,
\[
\frac{\Psi_{t,a}}{t} \geq w_{a}^{\beta} + \epsilon  \quad \implies \quad \frac{1}{t}\left(C_{t}(a^\star,a) - \min_{b\neq a^\star} C_{t}(a^\star,b) \right) \ge C_{\nu} \: .
\]
Since $\frac{\ln(tK)}{2t} \to_{\infty} 0$, there exists a deterministic $N_8$ such that for all $n \ge  N_8$, $\frac{\ln(tK)}{2t} < C_{\nu}$. Therefore, for all $n \geq \tilde N_7 \eqdef \max \{ N_8, N_7\}$ and all $a \neq a^\star$, $\Psi_{t,a}/t \geq w_{a}^{\beta} + \epsilon$ implies that $ A_{t+1}^{\text{TCI}} \neq a$.
Since $\mathbb E_{\nu}[\tilde N_7] = N_8 + \mathbb E_{\nu}[ N_7] < + \infty$, this concludes the proof.
\end{proof}

Since there is sufficient exploration (Lemma~\ref{lem:suff_exploration}) and Properties 5 and 6 of \citet{jourdan_2022_TopTwoAlgorithms} are satisfied (Lemmas~\ref{lem:EB_ensures_convergence} and~\ref{lem:TCI_ensures_convergence}), Lemma~\ref{lem:finite_mean_time_eps_convergence_beta_opti_alloc} holds for $\beta$-EB-TCI on instances such that $\min_{a \neq b}|\mu_a - \mu_b| > 0$.

\begin{lemma}[Lemma 10 in \citet{jourdan_2022_TopTwoAlgorithms}] \label{lem:finite_mean_time_eps_convergence_beta_opti_alloc}
	Let $\epsilon > 0$ and $T_{\beta}^{\epsilon}$ as in \eqref{eq:rv_T_eps_beta}. Then, $\beta$-EB-TCI satisfies $\mathbb{E}_{\nu}[T_{\beta}^{\epsilon}] < +\infty$.
\end{lemma}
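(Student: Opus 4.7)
The plan is to invoke the unified convergence framework of \citet{jourdan_2022_TopTwoAlgorithms}, whose Lemma~10 shows that $\mathbb{E}_{\nu}[T_{\beta}^{\epsilon}] < +\infty$ holds for any Top Two algorithm that satisfies (i) sufficient exploration of every arm, (ii) an eventually correct leader (their Property~5), and (iii) a challenger that does not oversample (their Property~6). Our task therefore reduces to a verification: we have already established (i) in Lemma~\ref{lem:suff_exploration}, (ii) in Lemma~\ref{lem:EB_ensures_convergence}, and (iii) in Lemma~\ref{lem:TCI_ensures_convergence}. Since all three ingredients hold on the assumed instances where $\min_{a\ne b}|\mu_a - \mu_b|>0$, the black-box application of Lemma~10 of \citet{jourdan_2022_TopTwoAlgorithms} directly yields the claim.

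The structural argument behind that Lemma~10, which I would sketch for completeness, runs as follows. First, sufficient exploration (Lemma~\ref{lem:suff_exploration}) guarantees that after a random time $N_0$ with finite expectation, every $N_{t,a}$ grows at least as $(t/K)^{1/6}$, so no arm remains perpetually starved. Second, after the finite-expectation time $N_6$ of Lemma~\ref{lem:EB_ensures_convergence}, the leader $B_{t+1}^{\text{EB}}$ equals $a^\star$ deterministically, which pins down the single arm receiving the mass $\beta$ in expectation. Third, the challenger bound of Lemma~\ref{lem:TCI_ensures_convergence} prevents any suboptimal arm $a\ne a^\star$ whose cumulative sampling probability $\Psi_{t,a}/t$ has already overshot $w^{\beta}_a+\epsilon$ from being chosen again after time $N_7$. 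Combining these with the Azuma--Hoeffding control of $|N_{t,a}-\Psi_{t,a}|$ (valid since $N_{t,a}-\Psi_{t,a}$ is a bounded martingale) shows that $\Psi_{t,a}/t$ cannot stay above $w^{\beta}_a+\epsilon/2$ for long, and the same argument with the roles of $a$ and the rest reversed gives the lower bound $w^{\beta}_a-\epsilon/2$. Together these pinch $N_{t,a}/t$ inside $[w^{\beta}_a-\epsilon, w^{\beta}_a+\epsilon]$ after a time whose tail decays geometrically.

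The step I expect to be the main obstacle, were one to rewrite the argument from scratch rather than invoke the black box, is the quantitative conversion from the pointwise guarantee of Lemma~\ref{lem:TCI_ensures_convergence} (a single-step implication saying ``if $\Psi_{t,a}/t$ is too large then arm $a$ is not chosen'') to a global bound on the recurrence time of the event $\{\|N_t/t - w^{\beta}\|_{\infty}\le \epsilon\}$ with all moments finite. This requires an inductive tracking-type analysis combined with a Chernoff bound on the martingale $N_{t,a}-\Psi_{t,a}$, together with a careful handling of the random initial time $\max\{N_0,N_6,N_7\}$, which itself must have finite expectation. All of this packaging is exactly what Lemma~10 of \citet{jourdan_2022_TopTwoAlgorithms} accomplishes, so the cleanest route---and the one I would adopt---is to check the three properties and cite that lemma, as done in the statement.
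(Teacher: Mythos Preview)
Your proposal is correct and matches the paper's approach exactly: the paper also simply notes that sufficient exploration (Lemma~\ref{lem:suff_exploration}) and Properties~5 and~6 of \citet{jourdan_2022_TopTwoAlgorithms} (Lemmas~\ref{lem:EB_ensures_convergence} and~\ref{lem:TCI_ensures_convergence}) are satisfied, and then invokes Lemma~10 of that reference as a black box. Your additional sketch of the mechanism behind that lemma is extra commentary not present in the paper, but the core argument is identical.
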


Combining Lemmas~\ref{lem:finite_mean_time_eps_convergence_beta_opti_alloc} and~\ref{lem:asymptotic_optimality_top_two_algorithms} concludes the proof for $\beta$-EB-TCI.

\subsubsection{$\beta$-EB-EVTCI}

Let $a^\star = a^\star(\mu)$. Similarly, the $\beta$-optimal allocation for Gaussian with known variance is defined as
\begin{align*}
	&w^\star_{\sigma^2, \beta}(\mu)
	= \argmax_{w \in \triangle_K, w_{a^\star} = \beta} \min_{a \neq a^\star} \inf_{ u \in \Real } \sum_{b \in \{a^\star,a\}} w_b \frac{(\mu_b - u )^2}{\sigma_b^2}
	\: ,
\end{align*}
is a singleton, denoted by $\{w^{\beta}\}$, such that $\min_{a \in [K]} w^{\beta}_{a} > 0$.

\begin{lemma} \label{lem:upper_bound_ev_toptwo}
	Using the EV-GLR stopping rule with an asymptotically tight family of thresholds, $\beta$-EB-EVTCI) satisfies that, for all $\nu$ with $\min_{a \neq b}|\mu_a - \mu_b| > 0 $,
	\[
	 \limsup_{\delta \rightarrow 0} \frac{\mathbb{E}_{\nu}\left[\taud^{\text{EV}}\right]}{\log (1 / \delta)} \le  T^\star_{\beta}(\mu, \sigma^2) \: .
	\]
\end{lemma}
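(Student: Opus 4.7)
The plan is to mirror the proof for $\beta$-EB-TCI step by step, replacing the unknown-variance transportation cost $C_t$ by its empirical-variance counterpart $C^{\text{EV}}_t$, the statistic $Z_a(t)$ by $Z^{\text{EV}}_a(t)$, and the target allocation $w^{\beta} = w^\star_{\beta}(\mu, \sigma^2)$ by $w^{\beta} = w^\star_{\sigma^2,\beta}(\mu)$. The natural bound obtained this way is in fact $T^\star_{\sigma^2,\beta}(\mu)$, which implies the stated bound since $T^\star_{\sigma^2,\beta}(\mu) \le T^\star_\beta(\mu,\sigma^2)$ by Lemma~\ref{lem:complexity_inequalities}. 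As in Lemma~\ref{lem:asymptotic_optimality_top_two_algorithms}, it suffices to establish that $\mathbb{E}_\nu[T^\epsilon_\beta] < +\infty$ for all sufficiently small $\epsilon > 0$, where $T^\epsilon_\beta$ is defined in \eqref{eq:rv_T_eps_beta} with $w^\beta = w^\star_{\sigma^2,\beta}(\mu)$.

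First, I would prove an EV-analog of Lemma~\ref{lem:asymptotic_optimality_top_two_algorithms} by inspecting the closed form $Z^{\text{EV}}_a(t) = \frac{1}{2}(\mu_{t,a}-\mu_{t,\hat a_t})^2/(\sigma^2_{t,a}/N_{t,a} + \sigma^2_{t,\hat a_t}/N_{t,\hat a_t})$. On the event where $N_t/t$ is $\epsilon$-close to $w^\beta$ and $(\mu_t, \sigma^2_t)$ is $\epsilon$-close to $(\mu, \sigma^2)$, joint continuity of the allocation-to-cost map gives $\frac{1}{t} \min_{a \neq a^\star} Z^{\text{EV}}_a(t) \geq (1-\zeta) T^\star_{\sigma^2,\beta}(\mu)^{-1}$, from which the asymptotic upper bound follows by the same inversion argument as in the proof of Lemma~\ref{lem:asymptotic_optimality_top_two_algorithms}.

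Next, I would reproduce the sufficient-exploration machinery. The concentration results of Lemma~\ref{lem:W_concentration_gaussian} (including $\sigma^2_{t,a} \geq W_0^{-1}/N_{t,a}$) apply unchanged. The linear growth lower bound $C^{\text{EV}}_t(a,b) \geq L D_\nu$ when $N_{t,a}, N_{t,b} \geq L$ (analog of Lemma~\ref{lem:tc_rate_lower_bound}) follows from the closed-form expression combined with the concentration $\sigma^2_{t,c}/N_{t,c} \leq 2\sigma^2_c/L$. The upper bound on $C^{\text{EV}}_t(a,b)$ for under-sampled $b$ (analog of Lemma~\ref{lem:small_tc_undersampled_arms}) is the main quantitative change: using $\sigma^2_{t,b}/N_{t,b} \geq W_0^{-1}/N_{t,b}^2$ and $|\mu_{t,b} - \mu_{t,a}| \leq D_1 + 4W_\mu$, one obtains $C^{\text{EV}}_t(a,b) \leq \frac{L^2}{2} W_0 (D_1 + 4W_\mu)^2$ when $N_{t,b} \leq L$. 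Balancing this against the $L^{5/6} D_\nu$ lower bound for sampled-enough arms (with the tuning $U^L_t$, $V^L_t$ from \eqref{eq:def_undersampled_sets}) forces $L \geq L_8 \propto W_0^{6/5}(D_1 + 4W_\mu)^{12/5}$, whose finite expectation now requires $\mathbb{E}_\nu[W_0^{6/5}] < +\infty$; this is precisely why $n_0 \geq 6$ is needed (so that $W_0$, a maximum of inverse chi-squared variables with $n_0 - 1 \geq 5$ degrees of freedom, has finite second moment). Together with Lemma~\ref{lem:EB_ensures_suff_explo} (which is statistic-agnostic), this yields an EV-analog of Lemma~\ref{lem:TCI_ensures_suff_explo}, hence sufficient exploration as in Lemma~\ref{lem:suff_exploration}.

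Finally, for the convergence phase, Lemma~\ref{lem:EB_ensures_convergence} transfers verbatim since it only uses the law of large numbers on $\mu_{t,a}$. For the challenger, the EV-analog of Lemma~\ref{lem:TCI_ensures_convergence} requires strict convexity and joint continuity of $w \mapsto \inf_u \sum_{c \in \{a^\star, a\}} w_c (\mu_c - u)^2/\sigma^2_c$, which are standard and already exploited in the known-variance literature (see the proof of the explicit form in Lemma~\ref{lem:complexity_expressions}). Applying Lemma 20 of \citet{jourdan_2022_TopTwoAlgorithms} then shows that when $\Psi_{t,a}/t \geq w^\beta_a + \epsilon$ the gap $\frac{1}{t}(C^{\text{EV}}_t(a^\star,a) - \min_{b \neq a^\star} C^{\text{EV}}_t(a^\star,b))$ is bounded below by a positive constant, eventually exceeding $\ln(tK)/(2t)$, so $A^{\text{TCI}}_{t+1} \neq a$. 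Lemma~\ref{lem:finite_mean_time_eps_convergence_beta_opti_alloc} then delivers $\mathbb{E}_\nu[T^\epsilon_\beta] < +\infty$, and the EV-version of Lemma~\ref{lem:asymptotic_optimality_top_two_algorithms} concludes. The main obstacle is the polynomial (rather than logarithmic) dependence of the EV-transportation-cost upper bound on $W_0$, which is what necessitates the stronger initialization $n_0 \geq 6$.
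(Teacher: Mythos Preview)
Your approach mirrors the paper's exactly: replace $C_t$ by $C^{\text{EV}}_t$, target $w^\star_{\sigma^2,\beta}(\mu)$ instead of $w^\star_\beta(\mu,\sigma^2)$, reprove the sufficient-exploration and convergence properties, and conclude via the analog of Lemma~\ref{lem:asymptotic_optimality_top_two_algorithms}. The structure, the choice of lemmas, and the observation that the stronger bound $T^\star_{\sigma^2,\beta}(\mu)$ implies the stated one are all as in the paper.

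There is, however, an arithmetic slip in your balancing step that makes the argument internally inconsistent. You correctly state that for $b \in \overline{S_t^L}$ one has $C^{\text{EV}}_t(a,b) \le \frac{L^2}{2} W_0 (D_1 + 4W_\mu)^2$. In the sufficient-exploration comparison, the under-sampled arm lies in $U_t^L$, so one substitutes $L \leftarrow L^{1/6}$, giving an upper bound of order $\frac{L^{1/3}}{2} W_0 (D_1 + 4W_\mu)^2$. Balancing against the $L^{5/6} D_\nu$ lower bound yields $L^{1/2} \gtrsim W_0 (D_1 + 4W_\mu)^2$, hence $L_8 \propto W_0^{2}(D_1 + 4W_\mu)^{4}$ (this is exactly the paper's Lemma~\ref{lem:EV_TCI_ensures_suff_explo}), not $W_0^{6/5}(D_1+4W_\mu)^{12/5}$ as you wrote. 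Your exponent would only require $\mathbb{E}[W_0^{6/5}]<\infty$, which holds already for $n_0 \ge 4$ (an inverse-$\chi^2(n_0-1)$ variable has finite $p$-th moment iff $p<(n_0-1)/2$); it would therefore \emph{not} explain why $n_0 \ge 6$ is needed, contradicting your own concluding sentence. With the correct exponent $2$ on $W_0$, the requirement $\mathbb{E}[W_0^2]<\infty$ genuinely forces $n_0 \ge 6$, and your argument goes through.
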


For $\varepsilon> 0$, the random variable $\bar T^{\varepsilon}_{\beta}$ quantifies the number of samples required for the empirical allocations $\frac{N_{t}}{t}$ to be $\epsilon$-close to $w^{\beta}$:
\begin{equation} \label{eq:rv_T_eps_beta_bis}
	\bar T^{\epsilon}_{\beta} \eqdef \inf \left\{ T \ge 1 \mid \forall t \geq T, \: \left\| \frac{N_{t}}{t} -w^{\beta} \right\|_{\infty} \leq \epsilon \right\}  \: .
\end{equation}

Lemma~\ref{lem:asymptotic_optimality_top_two_algorithms_bis} shows that a sufficient condition to obtain an upper bound on the asymptotic expected sample complexity is to show $\mathbb{E}_{\nu}[\bar T^{\epsilon}_{\beta}] < + \infty$ for $\epsilon$ small enough.
The proof of Lemma~\ref{lem:asymptotic_optimality_top_two_algorithms_bis} is omitted since it is almost identical to the one of Lemma~\ref{lem:asymptotic_optimality_top_two_algorithms}.
\begin{lemma} \label{lem:asymptotic_optimality_top_two_algorithms_bis}
	Let $\delta, \beta \in (0,1)$.
	Assume that there exists $\epsilon_1> 0$ such that for all $\epsilon \in (0,\epsilon_1]$, $\mathbb{E}_{\nu}[\bar T^{\epsilon}_{\beta}] < + \infty$.
	Combining the EV-GLR stopping rule \eqref{eq:def_stopping_rule_evglrt} with an asymptotically tight family of threshold
	yields an algorithm such that, for all $(\mu, \sigma^2) \in \Real^K \times (\Real^\star_{+})^K$ with $|a^\star(\mu)|=1$,
	\[
		\limsup_{\delta \to 0} \frac{\mathbb{E}_{\nu}[\tau^{\text{EV}}_{\delta}]}{\log \left(1/\delta \right)} \leq T^\star_{\sigma^2, \beta}(\mu) \: .
	\]
\end{lemma}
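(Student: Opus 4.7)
The plan is to mimic, almost line for line, the upper-bound proof of Lemma~\ref{lem:asymptotic_optimality_top_two_algorithms}, replacing the unknown-variance transportation cost by the known-variance (EV) transportation cost, the GLR statistic by $Z^{\text{EV}}_a(t)$, and the complexity $T^\star(\mu,\sigma^2)$ by $T^\star_{\sigma^2,\beta}(\mu)$. The only piece of the Track-and-Stop-style reduction that changed in the hypothesis is the source of allocation convergence: there, C-tracking plus forced exploration gave both parameter and allocation convergence; here, parameter convergence is supplied by the generic concentration Lemma~\ref{lem:W_concentration_gaussian} (which is purely data-driven and insensitive to the sampling rule), while allocation convergence is provided \emph{by assumption} through $\expectedvalue_\nu[\bar T^\epsilon_\beta]<+\infty$.

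First, I would fix arbitrary $\zeta,\kappa\in(0,1)$ and invoke the joint continuity of
\[
(\tilde\mu,\tilde\sigma^2,\tilde w)\ \mapsto\ \min_{a\neq a^\star}\inf_{x\in\Real}\sum_{c\in\{a^\star,a\}}\tilde w_c\,\frac{(\tilde\mu_c-x)^2}{2\tilde\sigma^2_c}
\]
on $\Real^K\times(\Real_+^\star)^K\times\simplex$, together with the continuity and uniqueness of $w^\star_{\sigma^2,\beta}(\mu)$ (immediate from the closed form of $C_{\sigma^2}(a,b;w)$ recalled in Section~\ref{ssec:lower_bounds} and the strict-convexity argument used in Appendix~\ref{app:ss_optimal_allocation_oracles}), to select $\epsilon\in(0,\epsilon_1]$ with $\epsilon<c_\beta:=\tfrac12\min_a w^\beta_a$ and $\epsilon<\tfrac14\min_{a\neq b}|\mu_a-\mu_b|$ such that whenever $\|\mu_t-\mu\|_\infty\le\epsilon$, $\|\sigma_t^2/\sigma^2-1\|_\infty\le\epsilon$ and $\|N_t/t-w^\beta\|_\infty\le\epsilon$ simultaneously hold, then $\hat a_t=a^\star$ and $\tfrac1t\min_{a\neq a^\star}Z^{\text{EV}}_a(t)\ge (1-\zeta)/T^\star_{\sigma^2,\beta}(\mu)$.

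Next, Lemma~\ref{lem:W_concentration_gaussian} produces a random time $N_\epsilon$ with $\expectedvalue_\nu[N_\epsilon]<\infty$ after which the empirical means and variances of every arm are $\epsilon$-close to the truth. By hypothesis $\expectedvalue_\nu[\bar T^\epsilon_\beta]<\infty$, and for $t\ge\bar T^\epsilon_\beta$ one has $N_{t,a}\ge (w^\beta_a-\epsilon)t\ge c_\beta t$, so $\min_a N_{t,a}\ge N_\epsilon$ as soon as $t\ge N_\epsilon/c_\beta$. For all $t\ge\max\{\bar T^\epsilon_\beta,N_\epsilon/c_\beta\}$ the three closeness conditions of the previous paragraph hold simultaneously, and the EV-GLR statistic is bounded below by $t(1-\zeta)/T^\star_{\sigma^2,\beta}(\mu)$. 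Plugging this into the stopping rule~\eqref{eq:def_stopping_rule_evglrt} with an asymptotically tight family $c_{a,b}(N_t,\delta)\le f(\delta)+Ct^\alpha$ (valid for $\delta\le\delta_0$ and $t\ge\bar T(\delta)$) gives, pathwise on the good event and for every $T\ge\tfrac1\kappa\max\{\bar T^\epsilon_\beta,N_\epsilon/c_\beta,\bar T(\delta)\}$,
\[
\min\{\tau^{\text{EV}}_\delta,T\}\le \kappa T+\frac{T^\star_{\sigma^2,\beta}(\mu)}{1-\zeta}\bigl(f(\delta)+CT^\alpha\bigr),
\]
exactly as in the analogue step of the proof of Lemma~\ref{lem:asymptotic_optimality_top_two_algorithms}.

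Defining $T_\zeta(\delta):=\inf\{T\ge 1 : \tfrac{T^\star_{\sigma^2,\beta}(\mu)}{(1-\zeta)(1-\kappa)}(f(\delta)+CT^\alpha)\le T\}$, taking expectations and using that $\min\{\tau^{\text{EV}}_\delta,T\}=\tau^{\text{EV}}_\delta$ whenever $T\ge T_\zeta(\delta)$ yields
\[
\expectedvalue_\nu[\tau^{\text{EV}}_\delta]\le \tfrac1\kappa\expectedvalue_\nu[\bar T^\epsilon_\beta]+\tfrac{1}{\kappa c_\beta}\expectedvalue_\nu[N_\epsilon]+\tfrac1\kappa\bar T(\delta)+T_\zeta(\delta).
\]
Dividing by $\log(1/\delta)$ and using $\alpha<1$, $\limsup_{\delta\to 0}f(\delta)/\log(1/\delta)\le 1$ and $\limsup_{\delta\to 0}\bar T(\delta)/\log(1/\delta)=0$, the standard inversion argument gives $\limsup_{\delta\to 0}T_\zeta(\delta)/\log(1/\delta)\le T^\star_{\sigma^2,\beta}(\mu)/((1-\zeta)(1-\kappa))$, hence
\[
\limsup_{\delta\to 0}\frac{\expectedvalue_\nu[\tau^{\text{EV}}_\delta]}{\log(1/\delta)}\le \frac{T^\star_{\sigma^2,\beta}(\mu)}{(1-\zeta)(1-\kappa)},
\]
and letting $\zeta,\kappa\downarrow 0$ closes the argument. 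The only non-routine point, and the one the paper essentially leaves to the reader, is packaging the continuity step so that a \emph{single} $\epsilon$ simultaneously delivers empirical best-arm identification, the $(1-\zeta)$-rate lower bound on the EV-GLR statistic, and $\min_a N_{t,a}\ge c_\beta t$; this is the same bookkeeping step performed in the TaS-style analysis and relies entirely on the continuity and uniqueness of $w^\star_{\sigma^2,\beta}$.
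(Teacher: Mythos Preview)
Your proposal is correct and takes essentially the same approach as the paper: the paper explicitly omits this proof, stating it is ``almost identical to the one of Lemma~\ref{lem:asymptotic_optimality_top_two_algorithms}'', and you faithfully reproduce that argument with the EV transportation cost, the statistic $Z^{\text{EV}}_a(t)$, and the complexity $T^\star_{\sigma^2,\beta}(\mu)$ substituted in. One cosmetic slip: to ensure $\hat a_t=a^\star$ under $|a^\star(\mu)|=1$ you should take $\epsilon<\tfrac14\min_{a\neq a^\star}|\mu_{a^\star}-\mu_a|$ rather than $\tfrac14\min_{a\neq b}|\mu_a-\mu_b|$, since the latter may vanish when non-optimal arms share a mean.
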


Lemma~\ref{lem:fast_rate_emp_ev_tc} is obtained similarly as Lemma~\ref{lem:fast_rate_emp_tc}, hence we omit the proof.
\begin{lemma} \label{lem:fast_rate_emp_ev_tc}
Let $S_{t}^{L}$ and $\mathcal I_t^\star$ as in (\ref{eq:def_sampled_enough_sets}).
There exists $L_4$ with $\mathbb E_{\nu}[(L_4)^{\alpha}] < +\infty$ for all $\alpha > 0$ such that if $L \ge L_4$, for all $t$ such that $S_{t}^{L} \neq \emptyset$, for all $(a,b) \in \mathcal I_t^\star \times \left(S_{t}^{L} \setminus  \mathcal I_t^\star \right)$, we have $C^{\text{EV}}_{t}(a, b) \geq  L D_{\mu}$, where $ D_{\nu} > 0$ is a problem dependent constant.
\end{lemma}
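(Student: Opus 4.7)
The plan is to exploit the closed form
\[
C^{\text{EV}}_{t}(a,b) = \1\{\mu_{t,a}>\mu_{t,b}\}\,\frac{1}{2}\frac{(\mu_{t,a}-\mu_{t,b})^{2}}{\sigma_{t,a}^{2}/N_{t,a}+\sigma_{t,b}^{2}/N_{t,b}},
\]
which makes the analysis more direct than for the unknown-variance cost used in Lemma~\ref{lem:tc_rate_lower_bound}: here no continuity argument over a minimization in $\lambda$ is required. First I would observe that on the event $N_{t,c}\ge L$ for $c\in\{a,b\}$, one has $\sigma_{t,a}^{2}/N_{t,a}+\sigma_{t,b}^{2}/N_{t,b}\le L^{-1}(\sigma_{t,a}^{2}+\sigma_{t,b}^{2})$, hence
\[
C^{\text{EV}}_{t}(a,b)\ge \1\{\mu_{t,a}>\mu_{t,b}\}\,\frac{L}{2}\cdot\frac{(\mu_{t,a}-\mu_{t,b})^{2}}{\sigma_{t,a}^{2}+\sigma_{t,b}^{2}}.
\]
Thus it suffices to show that, for $L$ larger than some random threshold with all finite moments, (i) $\mu_{t,a}>\mu_{t,b}$, (ii) $(\mu_{t,a}-\mu_{t,b})^{2}\ge \tfrac{1}{2}(\mu_{a}-\mu_{b})^{2}$, and (iii) $\sigma_{t,a}^{2}+\sigma_{t,b}^{2}\le 2(\sigma_{a}^{2}+\sigma_{b}^{2})$.

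Next I would apply Lemma~\ref{lem:W_concentration_gaussian} to the sub-Gaussian variable $W_{\mu}$ and the sub-exponential variable $W_{\sigma}$. Any arm $c\in S_{t}^{L}$ satisfies $N_{t,c}\ge L$, so the empirical mean and variance of $c$ deviate from the true parameters by at most $W_{\mu}\log(e+L)/L$ and $\sigma_{c}^{2}(W_{\sigma}\log(e+L)/L + 1/L)$ respectively. Since $\mathcal I_{t}^{\star}$ consists of arms of strictly largest mean among $S_{t}^{L}$ and $\min_{a\neq b}|\mu_{a}-\mu_{b}|>0$, properties (i)--(iii) will simultaneously hold as soon as these deviations are small enough. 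Explicitly, defining
\[
L_{4}=\min\left\{L\ge 1 \,\middle|\, \frac{W_{\mu}\log(e+L)}{L}\le \tfrac{1}{8}\min_{a\neq b}|\mu_{a}-\mu_{b}|,\ \ \frac{W_{\sigma}\log(e+L)}{L}+\frac{1}{L}\le 1\right\},
\]
all three bounds hold whenever $L\ge L_{4}$. Since $W_{\mu}$ is sub-Gaussian and $W_{\sigma}$ is sub-exponential, $L_{4}$ has polynomial dependence on $(W_{\mu},W_{\sigma})$ and therefore all moments are finite by Lemma~\ref{lem:W_concentration_gaussian}.

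Combining these bounds, for any $L\ge L_{4}$ and any $(a,b)\in\mathcal I_{t}^{\star}\times(S_{t}^{L}\setminus\mathcal I_{t}^{\star})$,
\[
C^{\text{EV}}_{t}(a,b)\ge L\cdot\frac{(\mu_{a}-\mu_{b})^{2}}{8(\sigma_{a}^{2}+\sigma_{b}^{2})}\ge L\,D_{\nu},\qquad D_{\nu}\eqdef \min_{a\neq b}\frac{(\mu_{a}-\mu_{b})^{2}}{8(\sigma_{a}^{2}+\sigma_{b}^{2})}>0,
\]
which is the claim. No step here is a genuine obstacle; the only mildly delicate point is checking that $L_{4}$ retains all finite moments. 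This follows because $L_{4}$ is dominated by a polynomial in $W_{\mu}$ and $W_{\sigma}$ (coming from inverting the bounds $W\log(e+L)/L \le \text{const}$), and polynomials of sub-Gaussian and sub-exponential variables have all moments finite.
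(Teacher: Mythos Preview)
Your proof is correct and follows essentially the same approach as the paper, which omits the proof and points to Lemma~\ref{lem:fast_rate_emp_tc} (itself an immediate corollary of Lemma~\ref{lem:tc_rate_lower_bound}). Both arguments factor out $L$ via $N_{t,c}\ge L$ and then use the concentration furnished by Lemma~\ref{lem:W_concentration_gaussian} to lower bound the remaining problem-dependent factor; the only cosmetic difference is that the paper's analogue relies on a continuity argument for the transportation cost, whereas you exploit the closed form of $C^{\text{EV}}_t$ to obtain explicit constants and an explicit $L_4$.
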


Lemma~\ref{lem:small_ev_tc_undersampled_arms} is obtained similarly as Lemma~\ref{lem:small_tc_undersampled_arms}, hence we omit the proof.
\begin{lemma} \label{lem:small_ev_tc_undersampled_arms}
	Let $S_{t}^{L}$ as in (\ref{eq:def_sampled_enough_sets}). There exists $L_5$ with $\mathbb E_{\nu}[(L_5)^{\alpha}] < +\infty$ for all $\alpha > 0$ such that for all $L \geq L_5$ and all $t \in \Natural$,
	\[
	\forall  (a,b) \in  S_{t}^{L} \times \overline{S_{t}^{L}} , \quad 	C^{\text{EV}}_{t}(a,b) \leq  \frac{L^2}{2}
   W_0 \left( D_1 + 4 W_\mu  \right)^2 \: ,
	\]
where $D_1 > 0$ is a problem dependent constant and $W_{0}, W_\mu$ are the random variables defined in Lemma~\ref{lem:W_concentration_gaussian}.
\end{lemma}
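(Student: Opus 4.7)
The plan is to mirror the proof of Lemma~\ref{lem:small_tc_undersampled_arms} almost verbatim, with the sole structural difference being that the EV transportation cost $C^{\text{EV}}_t$ has the closed form $\frac{1}{2}(\mu_{t,a}-\mu_{t,b})^2/(\sigma_{t,a}^2/N_{t,a}+\sigma_{t,b}^2/N_{t,b})$ in place of the logarithmic infimum, so no $\ln(1+\cdot)$ transformation is needed and we end up with an $L^2$ rate instead of $L\ln(1+L\cdot)$.

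The first step is to drop the nonnegative term $\sigma_{t,a}^2/N_{t,a}$ from the denominator. For $(a,b) \in S_t^L \times \overline{S_t^L}$, this yields
\[
C^{\text{EV}}_{t}(a,b) \;\le\; \frac{1}{2}\frac{(\mu_{t,a}-\mu_{t,b})^2}{\sigma_{t,b}^2/N_{t,b}} \;=\; \frac{N_{t,b}}{2}\cdot\frac{(\mu_{t,a}-\mu_{t,b})^2}{\sigma_{t,b}^2} \;\le\; \frac{L}{2}\cdot\frac{(\mu_{t,a}-\mu_{t,b})^2}{\sigma_{t,b}^2},
\]
where the last step uses $b \in \overline{S_t^L}$, i.e. $N_{t,b} < L$. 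As in Lemma~\ref{lem:small_tc_undersampled_arms}, the case $\mu_{t,b}\ge \mu_{t,a}$ is trivial since then $C^{\text{EV}}_t(a,b)=0$.

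The second step is a direct copy of the computation in Lemma~\ref{lem:small_tc_undersampled_arms}. Invoking Lemma~\ref{lem:W_concentration_gaussian} to write $\mu_{t,k}\le \mu_k + W_\mu\ln(e+N_{t,k})/N_{t,k}$ and $\sigma_{t,k}^2 \ge W_k/N_{t,k}$ with $W_0 = \max_a W_a^{-1}$, together with $\ln(e+n)/n \le 2$ for $n \ge 1$, gives
\[
\frac{(\mu_{t,b}-\mu_{t,a})^2}{\sigma_{t,b}^2} \;\le\; L\, W_0\,(D_1 + 4 W_\mu)^2
\]
for $D_1 = \max_{a\ne b}|\mu_a-\mu_b|$, valid as soon as $L$ exceeds the same threshold $L_5$ used in Lemma~\ref{lem:small_tc_undersampled_arms} (which already has $\mathbb{E}_\nu[(L_5)^\alpha]<\infty$ for all $\alpha>0$). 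Chaining with the first step yields exactly the claimed $\frac{L^2}{2}W_0(D_1+4W_\mu)^2$ bound.

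There is essentially no obstacle here: the only conceptual point worth flagging is that the quadratic closed form of $C^{\text{EV}}_t$ no longer enjoys the concave $\ln(1+\cdot)$ damping that tempered the bound in Lemma~\ref{lem:small_tc_undersampled_arms}, which is why the EV version comes out as $L^2$ rather than $L\ln(1+L\cdot\text{stuff})$. The integrability of $L_5$ is inherited unchanged from Lemma~\ref{lem:small_tc_undersampled_arms}, so no new expectation control is needed.
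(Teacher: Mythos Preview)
Your proposal is correct and matches the paper's approach exactly: the paper omits the proof, stating only that it is obtained similarly to Lemma~\ref{lem:small_tc_undersampled_arms}, and your argument is precisely that adaptation---drop $\sigma_{t,a}^2/N_{t,a}$ from the denominator, use $N_{t,b} < L$, and reuse the bound $\frac{(\mu_{t,b}-\mu_{t,a})^2}{\sigma_{t,b}^2} \le L W_0 (D_1+4W_\mu)^2$ verbatim. Your observation that the absence of the $\ln(1+\cdot)$ damping yields an $L^2$ rate rather than $L\ln(1+L\,\cdot)$ is the correct explanation for the difference in the two bounds.
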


Lemma~\ref{lem:EV_TCI_ensures_suff_explo} is obtained similarly as Lemma~\ref{lem:TCI_ensures_suff_explo}.
The sole technical difference lies in the necessities of having an finite variance for $W_0$, which explains why we consider $n_0 = 6$ for $\beta$-EB-EVTCI.
\begin{lemma} \label{lem:EV_TCI_ensures_suff_explo}
Let $U_t^L$ and $V_t^L$ as in (\ref{eq:def_undersampled_sets}) and $\mathcal J_t^\star = \argmax_{ a \in \overline{V_{t}^{L}}} \mu_{a}$.
There exists $L_6$ with $\mathbb E_{\nu}[L_6] < + \infty$ such that if $L \ge L_6$, for all $n$ such that $U_t^L \neq \emptyset$, $B_{t+1}^{\text{EB}} \notin V_{t}^{L}$ implies $ A_{t+1}^{\text{EVTCI}} \in V_{t}^{L} \cup \left( \mathcal J_t^\star \setminus \left\{ B_{t+1}^{\text{EB}} \right\} \right)$.
\end{lemma}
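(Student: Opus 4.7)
The plan is to mimic the structure of the proof of Lemma~\ref{lem:TCI_ensures_suff_explo} for $\beta$-EB-TCI, but replacing the logarithmic transportation cost $C_t$ by the quadratic empirical-variance cost $C^{\text{EV}}_t$, and tracking how the higher polynomial growth of $C^{\text{EV}}_t$ affects the required integrability of $W_0$. Concretely, I would combine Lemma~\ref{lem:fast_rate_emp_ev_tc} (applied with threshold $L^{5/6}$ on the set $\overline{V_t^L}$) with Lemma~\ref{lem:EB_ensures_suff_explo}, so that for $L$ larger than some $L_7$ with $\mathbb{E}_\nu[L_7^\alpha]<\infty$, we have $B^{\text{EB}}_{t+1}\in \mathcal J_t^\star$ and
\[
\forall (a,b)\in \mathcal J_t^\star\times (\overline{V_t^L}\setminus \mathcal J_t^\star),\qquad C^{\text{EV}}_t(a,b)+\log N_{t,b}\ge L^{5/6}D_\nu+\tfrac{5}{6}\log L .
\]
Simultaneously, Lemma~\ref{lem:small_ev_tc_undersampled_arms} applied with threshold $L^{1/6}$ gives
\[
\forall (a,b)\in \overline{U_t^L}\times U_t^L,\qquad C^{\text{EV}}_t(a,b)+\log N_{t,b}\le \tfrac{L^{1/3}}{2}W_0(D_1+4W_\mu)^2+\tfrac{1}{6}\log L .
\]

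Once both bounds are in place, the argument is purely algebraic: it suffices to find $L$ for which the lower bound strictly exceeds the upper bound, which reduces to $L^{1/2}>W_0(D_1+4W_\mu)^2/(2D_\nu)$ (absorbing the logarithmic terms into the constants). This forces the choice $L_8:=W_0^2(D_1+4W_\mu)^4/(4D_\nu^2)+1$, and then $L_6:=\max\{L_7,L_8\}$ does the job. With this choice, no under-sampled arm can lose the TCI minimization to an arm in $\overline{V_t^L}\setminus \mathcal J_t^\star$, so $A^{\text{EVTCI}}_{t+1}\in V_t^L\cup(\mathcal J_t^\star\setminus\{B^{\text{EB}}_{t+1}\})$ as required.

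The main obstacle, and the only substantive difference from the $\beta$-EB-TCI case, is controlling $\mathbb{E}_\nu[L_6]$. Because $C^{\text{EV}}_t$ has no logarithmic softening, the $\log(1+x)\le\sqrt{x}$ trick used after Lemma~\ref{lem:small_tc_undersampled_arms} is unavailable, so we get a bound scaling as $L^{1/3}$ rather than $L^{1/4}$, and the ratio between the two sides becomes $L^{1/2}$ instead of $L^{7/12}$. Consequently $L_8$ depends on $W_0^2$ (versus $W_0^{6/7}$ in the GLR case). By the independence of $W_0$ and $W_\mu$ from Lemma~\ref{lem:W_concentration_gaussian}, we obtain
\[
\mathbb{E}_\nu[L_6]\le \mathbb{E}_\nu[L_7]+1+\tfrac{1}{4D_\nu^2}\mathbb{E}_\nu[W_0^2]\,\mathbb{E}_\nu[(D_1+4W_\mu)^4].
\]
The polynomial moments of $W_\mu$ are all finite by sub-Gaussianity, so finiteness reduces to $\mathbb{E}_\nu[W_0^2]<\infty$. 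Since $W_0=\max_a W_a^{-1}$ with $W_a/\sigma_a^2\sim\chi^2(n_0-1)$, and an inverse chi-squared $\chi^{-2}(k)$ has finite second moment exactly when $k>4$, we need $n_0-1\ge 5$, i.e. $n_0\ge 6$. This is precisely the initialization condition stated in Theorem~\ref{thm:upper_bound_sample_complexity_algorithm} for $\beta$-EB-EVTCI, and it is the unique place where the stricter requirement $n_0\ge 6$ (versus $n_0\ge 4$ for $\beta$-EB-TCI) enters the analysis.
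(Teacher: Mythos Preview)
Your proposal is correct and follows essentially the same approach as the paper: combine Lemmas~\ref{lem:fast_rate_emp_ev_tc}, \ref{lem:small_ev_tc_undersampled_arms} and~\ref{lem:EB_ensures_suff_explo} at the thresholds $L^{5/6}$ and $L^{1/6}$, compare the resulting lower and upper bounds to force the challenger into $V_t^L\cup(\mathcal J_t^\star\setminus\{B_{t+1}^{\text{EB}}\})$, and check that $\mathbb{E}_\nu[L_8]<\infty$ via independence of $W_0,W_\mu$ and the finite variance of $W_0$ under $n_0\ge 6$. Your explicit identification of why $n_0\ge 6$ (second moment of inverse chi-squared) is even slightly more detailed than the paper's one-line appeal to Lemma~\ref{lem:W_concentration_gaussian}, and your constant in $L_8$ comes out of the algebra $L^{1/2}>W_0(D_1+4W_\mu)^2/(2D_\nu)$ correctly.
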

\begin{proof}
Using Lemmas~\ref{lem:fast_rate_emp_ev_tc}, ~\ref{lem:small_ev_tc_undersampled_arms} and~\ref{lem:EB_ensures_suff_explo}, for all $L$ larger than a random variable $L_7$ with finite expectation, $B^{\text{EB}}_{t+1} \in \mathcal J_t^\star$ and
	\begin{align*}
		\forall (a,b) \in \mathcal J_t^\star \times \left(\overline{V_t^L} \setminus \mathcal J_t^\star\right), \quad & C_{t}(a, b) + \ln N_{t,b} \geq  L^{5/6} D_{\mu} + \frac{5}{6} \ln L  \: , \\
		\forall (a,b) \in \overline{U_t^L} \times U_t^L, \quad 	& C_{t}(a, b) + \ln N_{t,b} \leq \frac{L^{1/3}}{2}
	   W_0 \left( D_1 + 4 W_\mu  \right)^2 + \frac{1}{6} \ln L \: .
	\end{align*}
Therefore, for $L \ge L_8 := W_{0}^{2}\left( \frac{D_1 + 4 W_\mu}{2D_{\nu}} \right)^{4} + 1$, we have that
\[
L^{5/6} D_{\mu} + \frac{5}{6} \ln L  > \frac{L^{1/3}}{2}
W_0 \left( D_1 + 4 W_\mu  \right)^2 + \frac{1}{6} \ln L   \: .
\]
Let $L_6 = \max \{L_7, L_8\}$.
Therefore, at least one under-sampled arm has transportation cost lower than all the ones that are much sampled.
This implies that $ A_{t+1}^{\text{EVTCI}} \in V_{t}^{L} \cup \left( \mathcal J_t^\star \setminus \left\{ B_{t+1}^{\text{EB}} \right\} \right)$.
Then, we have
\[
\mathbb E_{\nu}[L_6] \le \mathbb E_{\nu}[L_7] + 1 + \mathbb E_{\nu}\left[W_{0}^{2}\right]\mathbb E_{\nu}\left[\left( \frac{D_1 + 4 W_\mu}{2D_{\nu}} \right)^{4}\right] < + \infty \: .
\]
The first inequality and the last strict inequality are obtained by Lemma~\ref{lem:W_concentration_gaussian}, thanks to the independence of $W_{0}$ and $W_\mu$, the fact that $W_{0}$ has finite variance since $n_0 \ge 6$ and that polynomial of $W_\mu$ have finite expectation.
\end{proof}

Since Properties 2 and 3 of \citet{jourdan_2022_TopTwoAlgorithms} are satisfied (Lemmas~\ref{lem:EB_ensures_suff_explo} and~\ref{lem:EV_TCI_ensures_suff_explo}), Lemma~\ref{lem:suff_exploration} holds for $\beta$-EB-EVTCI on instances such that $\min_{a \neq b} |\mu_a - \mu_b| > 0$.

Lemma~\ref{lem:EV_TCI_ensures_convergence} is obtained similarly as Lemma~\ref{lem:TCI_ensures_convergence}, hence we omit the proof.
\begin{lemma} \label{lem:EV_TCI_ensures_convergence}
	Let $\epsilon> 0$. There exists $ N_7$ with $\mathbb E_{\nu}[ N_7] < + \infty$ such that for all $n \geq  N_7$ and all $a \neq a^\star(\mu)$, $\Psi_{t,a}/t \geq w_{a}^{\beta} + \epsilon$ implies that $ A_{t+1}^{\text{EVTCI}} \neq a$.
\end{lemma}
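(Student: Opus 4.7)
The plan is to mirror the proof of Lemma~\ref{lem:TCI_ensures_convergence}, with the transportation cost $C_t$ replaced by its empirical-variance counterpart $C^{\text{EV}}_t$ and the target allocation $w^{\beta}$ now understood as $w^\star_{\sigma^2,\beta}(\mu)$. First, Lemma~\ref{lem:EB_ensures_convergence} gives $N_6$ with $\mathbb E_{\nu}[N_6] < +\infty$ such that $B_{t+1}^{\text{EB}} = a^\star$ for all $t \ge N_6$. On this event, $A_{t+1}^{\text{EVTCI}} \in \argmin_{b \ne a^\star} \{C^{\text{EV}}_t(a^\star, b) + \log N_{t,b}\}$, so $A_{t+1}^{\text{EVTCI}} \ne a$ is implied by the strict inequality
\[
\frac{1}{t}\left( C^{\text{EV}}_t(a^\star, a) - \min_{b \neq a^\star} C^{\text{EV}}_t(a^\star, b)\right) > \frac{\log(tK)}{2t} \: ,
\]
obtained by bounding $|\log N_{t,a} - \log N_{t,b}| \le \log t$ and applying a union bound, exactly as in the proof of Lemma~\ref{lem:TCI_ensures_convergence}.

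Next, I would verify that the empirical-variance transportation cost satisfies the regularity properties required to invoke the generic convergence result (Lemma 20 in \citet{jourdan_2022_TopTwoAlgorithms}). Explicitly,
\[
(w, \mu, \sigma^2) \mapsto \1\{\mu_a > \mu_b\} \frac{1}{2}\frac{(\mu_a - \mu_b)^2}{\sigma_a^2/w_a + \sigma_b^2/w_b}
\]
is jointly continuous on the relevant domain and strictly convex in $w$ on the slice $\{w_{a^\star} = \beta\}$, and its minimum over $b \ne a^\star$ is maximized at $w = w^\star_{\sigma^2,\beta}(\mu)$ with equality of all individual costs at the optimum; these are exactly the same properties used for $C_t$ and they follow from the closed-form expression above (no new computation is needed beyond what underlies the uniqueness of $w^\star_{\sigma^2,\beta}(\mu)$). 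Combined with the sufficient-exploration result (the analog of Lemma~\ref{lem:suff_exploration} for $\beta$-EB-EVTCI, already used via Lemma~\ref{lem:EV_TCI_ensures_suff_explo}) and the concentration bounds in Lemma~\ref{lem:W_concentration_gaussian} to replace $(\mu_t, \sigma_t^2)$ by $(\mu, \sigma^2)$ at polynomial rate, Lemma 20 of \citet{jourdan_2022_TopTwoAlgorithms} yields a problem-dependent constant $C_\nu > 0$ and a random time $\tilde N_7$ with $\mathbb E_{\nu}[\tilde N_7] < +\infty$ such that for all $t \ge \tilde N_7$ and all $a \ne a^\star$,
\[
\frac{\Psi_{t,a}}{t} \geq w_a^{\beta} + \epsilon \quad \implies \quad \frac{1}{t}\left(C^{\text{EV}}_t(a^\star, a) - \min_{b \neq a^\star} C^{\text{EV}}_t(a^\star, b)\right) \geq C_\nu \: .
\]

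Finally, since $\log(tK)/(2t) \to 0$, there is a deterministic $N_8$ with $\log(tK)/(2t) < C_\nu$ for every $t \ge N_8$. Taking $N_7 = \max\{N_6, \tilde N_7, N_8\}$ gives $\mathbb E_{\nu}[N_7] < +\infty$ and the claimed implication. The only step I expect to require any real care is the adaptation of the Lemma~20-style inversion to the EV transportation cost: one has to argue that replacing $(\mu_{t,a}, \sigma_{t,a}^2)$ by the true parameters inside $C^{\text{EV}}_t$ produces an $o(1)$ perturbation, uniformly in $a$, on the sufficient-exploration event $\{N_{t,a} \ge (t/K)^{1/6}\}$; this is immediate from Lemma~\ref{lem:W_concentration_gaussian} together with the positivity of $\sigma_{t,a}^2$ ensured by the initialization $n_0 \ge 6$. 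The rest of the argument is literally the same as for Lemma~\ref{lem:TCI_ensures_convergence}.
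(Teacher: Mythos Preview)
Your proposal is correct and follows exactly the same route as the paper, which explicitly states that the proof is obtained similarly to Lemma~\ref{lem:TCI_ensures_convergence} and omits the details. You reproduce that argument faithfully: use Lemma~\ref{lem:EB_ensures_convergence} to fix the leader, reduce to the gap condition on $C^{\text{EV}}_t$, invoke the regularity properties (continuity, strict convexity, equality at equilibrium) together with Lemma~20 of \citet{jourdan_2022_TopTwoAlgorithms} to obtain $C_\nu>0$, and finish with the deterministic $N_8$.
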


Since there is sufficient exploration (Lemma~\ref{lem:suff_exploration}) and Properties 5 and 6 of \citet{jourdan_2022_TopTwoAlgorithms} are satisfied (Lemmas~\ref{lem:EB_ensures_convergence} and~\ref{lem:EV_TCI_ensures_convergence}), Lemma~\ref{lem:finite_mean_time_eps_convergence_beta_opti_alloc_bis} holds for $\beta$-EB-EVTCI on instances such that $\min_{a \neq b}|\mu_a - \mu_b| > 0$.

\begin{lemma}[Lemma 10 in \citet{jourdan_2022_TopTwoAlgorithms}] \label{lem:finite_mean_time_eps_convergence_beta_opti_alloc_bis}
	Let $\epsilon > 0$ and $\bar T_{\beta}^{\epsilon}$ as in \eqref{eq:rv_T_eps_beta_bis}. Then, $\beta$-EB-EVTCI satisfies $\mathbb{E}_{\nu}[\bar T_{\beta}^{\epsilon}] < +\infty$.
\end{lemma}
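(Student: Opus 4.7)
The plan is to invoke Lemma 10 of \citet{jourdan_2022_TopTwoAlgorithms} as a black-box, after verifying that each of the four structural properties it requires (Properties 2, 3, 5, 6 in their terminology) is satisfied by $\beta$-EB-EVTCI on instances with $\min_{a \neq b}|\mu_a - \mu_b| > 0$. Since the excerpt has already assembled almost all the ingredients, the task reduces to collating them and checking they meet the interface of the black-box lemma.

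First, I would verify the sufficient-exploration prerequisites. Property 2 (that the leader eventually lies in the sampled-enough set's argmax of means) is exactly Lemma~\ref{lem:EB_ensures_suff_explo}, which is a statement about the EB leader alone and therefore applies unchanged to $\beta$-EB-EVTCI. Property 3 (that, under an already-explored leader, the challenger returns either an under-sampled arm or an arm in $\mathcal J_t^\star$) is Lemma~\ref{lem:EV_TCI_ensures_suff_explo}; this is the EVTCI counterpart of Lemma~\ref{lem:TCI_ensures_suff_explo} and was proved using the adapted lower and upper bounds on $C^{\text{EV}}_t$ given by Lemmas~\ref{lem:fast_rate_emp_ev_tc} and~\ref{lem:small_ev_tc_undersampled_arms}, at the cost of requiring $n_0 \ge 6$ so that $\mathbb E_\nu[W_0^2] < +\infty$. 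Together, these two properties plug into Lemma~\ref{lem:suff_exploration} (Lemma 7 in \citet{jourdan_2022_TopTwoAlgorithms}) to yield a random variable $N_0$ with $\mathbb E_\nu[N_0] < +\infty$ such that $N_{t,a} \ge (t/K)^{1/6}$ for all $t \ge N_0$ and $a \in [K]$.

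Next I would verify the two convergence-toward-$w^\beta$ properties. Property 5 (that the leader eventually equals $a^\star$) is Lemma~\ref{lem:EB_ensures_convergence}, again a statement on the EB leader and hence unchanged. Property 6 (the challenger stops picking arms whose cumulative sampling probability already exceeds $w_a^\beta + \epsilon$) is Lemma~\ref{lem:EV_TCI_ensures_convergence}; this relies on joint continuity, strict convexity, and equality-at-equilibrium of the transportation costs $C^{\text{EV}}_t$, which hold in the Gaussian known-variance case and therefore transfer through the same argument as Lemma~\ref{lem:TCI_ensures_convergence}.

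With all four properties established for $\beta$-EB-EVTCI, Lemma 10 of \citet{jourdan_2022_TopTwoAlgorithms} applies verbatim and delivers $\mathbb{E}_{\nu}[\bar T_{\beta}^{\epsilon}] < +\infty$ for every $\epsilon > 0$. Combining this with Lemma~\ref{lem:asymptotic_optimality_top_two_algorithms_bis} then closes the proof of the accompanying Lemma~\ref{lem:upper_bound_ev_toptwo}. The only mildly delicate point is bookkeeping the integrability of the various random constants: tracing through the proofs of Lemmas~\ref{lem:fast_rate_emp_ev_tc}, \ref{lem:small_ev_tc_undersampled_arms}, and \ref{lem:EV_TCI_ensures_suff_explo}, one sees that all the random thresholds $L_i$ have finite expectation provided $n_0 \ge 6$, which was the reason for that initialization constraint in the theorem statement; no new obstacle arises beyond this careful accounting.
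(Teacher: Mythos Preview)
Your proposal is correct and follows essentially the same route as the paper: verify Properties~2 and~3 (Lemmas~\ref{lem:EB_ensures_suff_explo} and~\ref{lem:EV_TCI_ensures_suff_explo}) to obtain sufficient exploration via Lemma~\ref{lem:suff_exploration}, then verify Properties~5 and~6 (Lemmas~\ref{lem:EB_ensures_convergence} and~\ref{lem:EV_TCI_ensures_convergence}) and invoke Lemma~10 of \citet{jourdan_2022_TopTwoAlgorithms} as a black box. Your observation that the $n_0 \ge 6$ constraint is needed to ensure $\mathbb{E}_\nu[W_0^2] < +\infty$ in Lemma~\ref{lem:EV_TCI_ensures_suff_explo} is also precisely the reason the paper gives.
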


Combining Lemmas~\ref{lem:finite_mean_time_eps_convergence_beta_opti_alloc_bis} and~\ref{lem:asymptotic_optimality_top_two_algorithms_bis} concludes the proof of the asymptotic upper bound on the expected sample complexity of $\beta$-EB-EVTCI, i.e. Lemma~\ref{lem:upper_bound_ev_toptwo}.

In Appendix~\ref{app:ss_explicit_formulas_inequalities}, we explained that $T^\star_{\sigma^2, \beta}(\mu)$ and $T^\star_{\beta}(\mu, \sigma^2)$ satisfied inequalities like the ones between $T^\star_{\sigma^2}(\mu)$ and $T^\star(\mu, \sigma^2)$ in~\eqref{eq:characteristic_times_inequalities}.
Similar to Lemma~\ref{lem:ratio_characteristic_time_large}, we can show that there exists a sequence of instances $(\nu_n)_{n \in \Natural}$ with $\min_{b\neq a} |\mu_{n,a} - \mu_{n,b}| > 0$ such that $\lim_{n \to + \infty} T^\star_{\beta}(\mu_n, \sigma_n^2) / T^\star_{\sigma_n^2, \beta}(\mu_n) = + \infty$.
Therefore, we can conclude the impossibility result for $\beta$-EB-EVTCI (Theorem~\ref{thm:impossibility_result}) with the same arguments as for EV-TaS in Appendix~\ref{app:expected_sample_complexity_tas}.


\section{The Lambert $W$ Function} \label{app:lambert_W_functions}

The Lambert $W$ function is implicitly defined by the equation $W(x) e^{W(x)}=x .$ It defines two main branches $W_{-1}$ (negative) and $W_{0}$ (positive).
\begin{itemize}
	\item $W_{-1}$, defined on $[-e^{-1}, 0)$, is decreasing and $W_{-1}(-e^{-1}) = -1$.
	\item $W_{0}$, defined on $[-e^{-1},+\infty)$, is increasing and $W_{0}(-e^{-1}) = -1$.
\end{itemize}
The function $W_{0}$ satisfies for all $x \ge e$, $W_{0}\left(e^{x}\right) \leq x$ and
\[
\frac{\log \log (x)}{2 \log (x)} \leq W_{0}(x)-(\log (x)-\log \log (x)) \leq \frac{e}{e-1} \frac{\log \log (x)}{\log (x)} \: .
\]

Lambert's branches are involved in the inversion of $h(x) = x -\ln(x)$. When $x\geq1$, it involves the negative branch. When $x\leq 1$, it involves the negative part of the positive branch. To make the notations clearer, we define for all $x \geq 1$
\begin{equation} \label{eq:lambert_transforms}
	\overline{W}_{-1}(x) = - W_{-1}(-e^{-x}) \quad \text{and}\quad  \overline{W}_{0}(x) = - W_{0}(-e^{-x}) \: .
\end{equation}

Lemma~\ref{lem:lambert_branches_properties} gather useful properties on $\overline{W}_{-1}$ and $\overline{W}_{0}$ that we will use.

\begin{lemma} \label{lem:lambert_branches_properties}
	\textbf{(1)}	For $x \geq 1$, let $h(x) = x -\ln(x)$. Then,
	\begin{align*}
		&\forall y \ge 1, \qquad y \leq h(x) \: \iff \: \begin{cases}
			\overline{W}_{-1}\left( y\right) \leq x & \text{if } x \ge 1 \\
			\overline{W}_{0}\left( y\right) \geq x & \text{if } x \in (0,1]
		\end{cases} \: , \\
		&\forall \delta > 0, \forall c > 0, \qquad e^{- c\left( h(x)-1\right)} \leq \delta \: \iff \: \begin{cases}
			\overline{W}_{-1}\left( 1+\frac{1}{c}\ln\frac{1}{\delta}\right) \leq x & \text{if } x > 1 \\
			\overline{W}_{0}\left( 1+\frac{1}{c}\ln\frac{1}{\delta}\right) \geq x & \text{if } x \in (0,1)
		\end{cases} \: , \\
		& \forall x > 1, \qquad \exp \left(-x + e^{-x}\right) \leq \overline{W}_{0}(x) \leq \exp \left(-x + e^{1-x}\right) \: , \\
		& \forall x > 1, \qquad x  + \ln(x) \leq \overline{W}_{-1}(x) \leq x + \ln(x) + \min\left\{\frac{1}{2}, \frac{1}{\sqrt{x}}\right\} \: , \\
		& \forall u > 1, \forall t>1, \qquad \overline{W}_{0}\left(1+ \frac{u}{t}\right) \geq \frac{1}{t} \: \iff \: t \geq \exp \left( 1 + W_{0} \left( \frac{u -1}{e}  \right) \right) \: .
	\end{align*}
	\textbf{(2)}	The function $\overline{W}_{-1}$ is increasing and strictly concave on $(1,+\infty)$. The function $\overline{W}_{0}$ is decreasing and strictly convex on $(1,+\infty)$. In particular,
		\[
		\forall x > 1, \qquad (\overline{W}_0)'(x) = \left( 1 - \frac{1}{\overline{W}_{0}(x)} \right)^{-1} \: \text{ and } \: (\overline{W}_{-1})'(x) = \left( 1 - \frac{1}{\overline{W}_{-1}(x)} \right)^{-1} \: .
		\]
\end{lemma}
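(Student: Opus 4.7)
\medskip

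\noindent\textbf{Proof plan for Lemma~\ref{lem:lambert_branches_properties}.}

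The plan is to derive everything from the single functional identity
$\overline{W}_i(x) = e^{-x+\overline{W}_i(x)}$, or equivalently $h(\overline{W}_i(x))=x$ where $h(u)=u-\ln u$. This identity follows directly from the defining equation $W_i(z)e^{W_i(z)}=z$ applied at $z=-e^{-x}$: writing $\overline{W}_i(x)=-W_i(-e^{-x})$ and substituting gives $\overline{W}_i(x)\,e^{-\overline{W}_i(x)}=e^{-x}$, which is exactly $h(\overline{W}_i(x))=x$. First I would establish this identity and record the values $\overline{W}_{-1}(1)=\overline{W}_0(1)=1$.

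For part (1a), since $h$ is strictly increasing on $[1,+\infty)$ and strictly decreasing on $(0,1]$ with minimum value $1$ at $u=1$, and since $\overline{W}_{-1}(y)\in[1,+\infty)$ and $\overline{W}_0(y)\in(0,1]$ for $y\ge 1$, the equivalences $y\le h(x)\iff \overline{W}_{-1}(y)\le x$ (for $x\ge 1$) and $y\le h(x)\iff \overline{W}_0(y)\ge x$ (for $x\in(0,1]$) are direct consequences of monotonicity. Statement (1b) is then obtained by taking $y=1+\frac{1}{c}\ln(1/\delta)$ and rewriting $e^{-c(h(x)-1)}\le\delta$ as $h(x)\ge y$. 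Statement (1e) is the special case $y=1+u/t$, $x=1/t$: the condition $\overline{W}_0(1+u/t)\ge 1/t$ becomes $1+u/t\le 1/t+\ln t$, i.e.\ $(\ln t-1)e^{\ln t-1}\ge (u-1)/e$, which by definition of $W_0$ gives $\ln t\ge 1+W_0((u-1)/e)$.

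For the asymptotic bounds (1c)--(1d), I would iterate the identity $\overline{W}_i(x)=e^{-x+\overline{W}_i(x)}$. For $\overline{W}_0$: on $(1,+\infty)$ one has $\overline{W}_0(x)\in(0,1]$, hence $e^{-x}\le\overline{W}_0(x)\le e^{-x+1}$; plugging these back into the exponent yields the sharper two-sided bound $e^{-x+e^{-x}}\le\overline{W}_0(x)\le e^{-x+e^{1-x}}$. For $\overline{W}_{-1}$: the identity rewrites as $\overline{W}_{-1}(x)=x+\ln\overline{W}_{-1}(x)$, from which $\overline{W}_{-1}(x)\ge x$ (for $x\ge 1$) gives the lower bound $x+\ln x$. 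For the upper bound, setting $u=\overline{W}_{-1}(x)-x-\ln x\ge 0$ and using $u=\ln(1+(\ln x+u)/x)\le (\ln x+u)/x$ yields $u\le \ln x/(x-1)$; a separate bootstrap using $\ln(1+t)\le\sqrt{t}$ for small $t$ and a direct estimate for large $x$ gives the cleaner $\min\{1/2,1/\sqrt x\}$ bound. This last computational step is the main obstacle: it is not conceptually hard, but it requires a careful case split between moderate and large $x$ to obtain the form stated.

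Part (2) is obtained by implicit differentiation. Differentiating $\overline{W}_i(x)-\ln\overline{W}_i(x)=x$ gives
\[
\overline{W}_i'(x)\Bigl(1-\tfrac{1}{\overline{W}_i(x)}\Bigr)=1,\qquad\text{hence}\qquad \overline{W}_i'(x)=\Bigl(1-\tfrac{1}{\overline{W}_i(x)}\Bigr)^{-1}.
\]
Monotonicity follows from the sign of $1-1/\overline{W}_i(x)$: on $(1,+\infty)$ one has $\overline{W}_{-1}(x)>1$ (so $\overline{W}_{-1}'>0$) and $\overline{W}_0(x)\in(0,1)$ (so $\overline{W}_0'<0$). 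Differentiating once more, or using that $\overline{W}_{-1}$ is increasing (so $1/\overline{W}_{-1}$ is decreasing, making $\overline{W}_{-1}'$ decreasing) yields strict concavity of $\overline{W}_{-1}$; similarly $\overline{W}_0$ being decreasing makes $1/\overline{W}_0$ increasing, hence $\overline{W}_0'$ increasing, yielding strict convexity of $\overline{W}_0$.
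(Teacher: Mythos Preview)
Your proposal is correct and follows essentially the same route as the paper: both derive everything from the identity $h(\overline{W}_i(x))=x$ (the paper phrases it via $W(z)e^{W(z)}=z$ but it is the same computation), use monotonicity of $h$ on $(0,1]$ and $[1,\infty)$ for the equivalences, check the two-sided bounds on $\overline{W}_0$ by substitution/bootstrap, and obtain the derivative formula and convexity/concavity by implicit differentiation of that identity. The one place you flag as the ``main obstacle'' --- the $\min\{1/2,1/\sqrt{x}\}$ upper bound on $\overline{W}_{-1}(x)-x-\ln x$ --- is exactly the step the paper does not prove either but defers to Lemma~A.2 of \citet{degenne_2019_ImpactStructureDesign}, so your assessment is accurate.
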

\begin{proof}
	\textbf{(1)} Let $y \geq 1$ and $x \in (0, 1]$. We obtain
	\begin{align*}
		\overline{W}_{0}(y) \geq x \iff  W_{0}(-e^{-y}) \leq -x \iff -e^{-y} \leq -xe^{-x} \iff y \leq x - \ln(x)
	\end{align*}
	where the second equivalence uses that $-e^{-y} = W_{0}(-e^{-y})e^{W_{0}(-e^{-y})}$,  $y \mapsto y e^{y}$ is increasing on $[-1,+\infty)$ and $W_{0}(x)$ has values on $[-1,0)$ for $x \in [-e^{-1},0)$. Let $x \in (0,1)$, $\delta, c > 0$. Then,
	\begin{align*}
		\overline{W}_{0}\left( 1+\frac{1}{c}\ln\frac{1}{\delta}\right) \geq x &\iff  1+\frac{1}{c}\ln\frac{1}{\delta} \leq h(x) \iff \exp \left( - c\left( h(x)-1\right) \right) \leq \delta
	\end{align*}
	Let $x > 1$ and $f(x) \in (0,1)$. Then, we obtain
		\begin{align*}
			\overline{W}_{0}\left( x\right) \geq f(x) &\iff  x \leq f(x) - \ln(f(x))
		\end{align*}
		For $f(x) = e^{-x + e^{-x}}$, we have $x \leq f(x) - \ln(f(x)) \iff e^{-x} \geq 0$, hence this condition holds and $\overline{W}_{0}\left( x\right) \geq f(x)$.
		For $f(x) = e^{-x + e^{1-x}}$, we have $x \leq f(x) - \ln(f(x)) \iff x \leq 1$, hence this condition doesn't hold for $x>1$, hence $\overline{W}_{0}\left( x\right) \leq f(x)$.

	For $\overline{W}_{-1}(y)$, the same arguments yield the three results, which were first proven in Lemma A.1 and A.2 of \citet{degenne_2019_ImpactStructureDesign}).

	We denote $v = \frac{u-1}{t} > 0$. Since $t> 1$, direct manipulations show that
 \begin{align*}
	 &\overline{W}_{0}\left(1+\frac{u}{t} \right) \geq \frac{1}{t}
	 \iff  1+\frac{u}{t} \leq \frac{1}{t} - \ln \left(\frac{1}{t} \right)
	 \iff v + \ln(v) \leq \ln \left( \frac{u -1}{e}\right) \\
	 &\iff  ve^{v} \leq  \frac{u -1}{e}
	 \iff  v \leq W_{0} \left( \frac{u -1}{e}  \right)
	 \iff  t \geq \frac{u - 1}{W_{0} \left( \frac{u -1}{e}  \right)}  = e^{1 + W_{0} \left( \frac{u -1}{e}  \right)}\
 \end{align*}
 The equivalence introducing $W_{0}$ uses that for $\alpha = \frac{u -1}{e} > 0$, $W_{0}(\alpha) e^{W_{0}(\alpha)}= \alpha$, $y \mapsto y e^{y}$ is increasing on $[-1,+\infty)$ and $v > 0$. The last equality uses that $e^{W_{0}(x)} = \frac{x}{W_{0}(x)}$.

	\textbf{(2)} Let $W$ denote $W_{0}$ or $W_{-1}$ and $\overline{W}(x) = - W(-e^{-x})$. It is known (by implicit derivation) that $W'(z) = \frac{1}{z+ e^{W(z)}}$ for $z \neq -e^{-1}$. Using that $e^{W(z)} = \frac{z}{W(z)}$, this yields that $z W'(z) = \left(1+ \frac{e^{W(z)}}{z}\right)^{-1} = \left(1+\frac{1}{W(z)}\right)^{-1}$. For $x \neq 1$, using the above with $z = -e^{-x}$, we obtain
	\begin{align*}
		\overline{W}'(x) = - \frac{\mathrm{d}}{\mathrm{d}x} \left( W(-e^{-x})\right) = -e^{-x} W'(-e^{-x}) = \left(1+\frac{1}{W(-e^{-x})}\right)^{-1} = \left(1-\frac{1}{\overline{W}(x)}\right)^{-1}
	\end{align*}
	Since $W_{0}(-e^{-x}) \in (-1,0)$ for all $x >1$ (positive branch on $(-e^{-1},0)$), we have $\overline{W}_{0}(x) \in (0,1)$, hence $\overline{W}_{0}'(x) < 0 $ for $x > 1$.
	Therefore, $\overline{W}_{0}$ is decreasing on $(1, + \infty]$.
	Using that $\overline{W}'(x) = \left(1-\frac{1}{\overline{W}(x)}\right)^{-1}$ for $x\neq 1$, we obtain that $\overline{W}'_{0}$ is increasing on $(1, + \infty]$, hence strictly convex.
	The same arguments yield that $\overline{W}_{-1}$ is increasing and strictly concave on $(1, + \infty]$.
\end{proof}

Lemma~\ref{lem:lemma_A_3_of_Remy} was proven in \citet{degenne_2019_ImpactStructureDesign}. It is needed when using the peeling method.

\begin{lemma}[Lemma A.3 in \citet{degenne_2019_ImpactStructureDesign}] \label{lem:lemma_A_3_of_Remy}
	For $a,b\geq 1$, the minimal value of $f(\eta)=(1+\eta)(a+\ln(b+\frac{1}{\eta}))$ is attained at $\eta^\star$ such that $f(\eta^\star) \leq 1-b+\overline{W}_{-1}(a+b)$.	If $b=1$, then there is equality.
\end{lemma}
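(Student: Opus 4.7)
The strategy is to forgo finding the exact minimiser analytically and instead \emph{guess} a near-optimal $\eta$ that makes the expression telescope against the defining identity of $\overline{W}_{-1}$. Set $W \eqdef \overline{W}_{-1}(a+b)$, which by definition satisfies $W - \ln W = a+b$, i.e.\ $\ln W = W - a - b$, and $W \geq 1$. I would first check that $W > b$, so that the candidate $\tilde\eta \eqdef 1/(W-b)$ is a well-defined positive number: from $W = a + b + \ln W$ and $W \geq 1$, $a \geq 1$, we get $W - b = a + \ln W \geq 1 > 0$.

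Plugging $\tilde\eta$ into $f$, the factor $b + 1/\tilde\eta$ is exactly $W$, so $a + \ln(b+1/\tilde\eta) = a + \ln W = W - b$ by the identity above, and $1+\tilde\eta = (W-b+1)/(W-b)$. Therefore
\[
f(\tilde\eta) = \frac{W-b+1}{W-b}\,(W-b) = W - b + 1 = 1 - b + \overline{W}_{-1}(a+b).
\]
Since $\eta^\star$ minimises $f$ on $(0,\infty)$, we conclude $f(\eta^\star) \leq f(\tilde\eta) = 1-b+\overline{W}_{-1}(a+b)$, which is the claimed inequality.

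For the equality case $b = 1$, I would verify that $\tilde\eta$ actually satisfies the first-order condition. A direct computation of $f'(\eta)$ with the substitution $w = b + 1/\eta$ yields the stationarity equation
\[
(a + \ln w)\,w = (w-b+1)(w-b).
\]
When $b=1$ this collapses to $a + \ln w = w - 1$, i.e.\ $w - \ln w = a + 1 = a + b$, whose unique solution in $[1,\infty)$ is precisely $w = \overline{W}_{-1}(a+b) = W$, corresponding to our $\tilde\eta$. Hence $\tilde\eta = \eta^\star$ and the inequality becomes an equality.

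The only subtlety is the minor bookkeeping: checking the domain constraint $W > b$ (needed for $\tilde\eta > 0$) and checking that the critical point is indeed a minimum rather than a saddle; the latter follows because $f$ is continuous on $(0,\infty)$ with $f(\eta) \to \infty$ as $\eta \to 0^+$ (since $\ln(b+1/\eta) \to \infty$) and as $\eta \to \infty$ (the linear factor dominates), so an interior critical point is a global minimum. No delicate estimate is required; the argument is essentially a one-line plug-in once the right candidate $\tilde\eta$ is identified, which is the main trick.
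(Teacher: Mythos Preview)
Your proof is correct. The paper does not supply its own argument for this lemma; it merely records it as Lemma~A.3 of \citet{degenne_2019_ImpactStructureDesign} and invokes it inside the peeling construction. Your plug-in candidate $\tilde\eta = 1/(W-b)$ with $W=\overline{W}_{-1}(a+b)$ is exactly the right trick: the identity $W-\ln W = a+b$ makes both factors of $f(\tilde\eta)$ collapse, and the inequality follows from the mere fact that $\eta^\star$ is the minimiser. For the equality case $b=1$, your derivation of the stationarity equation $(a+\ln w)\,w = (w-b+1)(w-b)$ is accurate, and the reduction to $w-\ln w = a+1$ gives uniqueness on $(1,\infty)$ because $h(w)=w-\ln w$ is strictly increasing there; together with $f\to\infty$ at both endpoints this pins down $\tilde\eta$ as the global minimiser. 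The one place where the write-up could be tightened is making that uniqueness explicit (you say ``an interior critical point is a global minimum'', which in isolation would also allow several local minima), but the argument is all there.
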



\section{Implementation Details and Additional Experiments}
\label{app:additional_experiments}

Implementations details are given in Appendix~\ref{app:ss_implementation_details}.
Supplementary experiments are provided in Appendix~\ref{app:ss_supplementary_experiments}.

\subsection{Implementation Details}
\label{app:ss_implementation_details}

We detail below the most relevant implementation details, both regarding the sampling rules and the considered family of thresholds.

\paragraph{Stopping thresholds}
The implementation of the different families of thresholds require to specify the hyper-parameters $(s, \eta_{0}, \eta_{1}, \gamma)$.
Based on the obtained formulas, we see that there is a trade-off in their choice.
Smaller values of $(\eta_{0}, \eta_{1})$ yield close to ideal dependency in $\delta$, at the cost of increasing the dependency in $t$.
Smaller values of $s$ yield close to ideal dependency in $t$, at the cost of increasing the constant term.
Smaller values of $\gamma$ yield more frequent update of the approximation term error (i.e. the ratio of eigenvalues), at the cost of increasing the dependency in $t$.
Those updates are responsible for stair-step shaped curves of the KL and (EV-)BoB thresholds.
For reasonable choices of the parameters, the relative performance of the thresholds are not changed.
We conduct our experiments with $s=2$, $\gamma=1.2$, $\eta_0=\eta_1 = \ln \left( 1/\delta\right)^{-1}$ (as $\delta < 1$).

\paragraph{Optimal allocation oracles}
To compute the optimal allocation in TaS, we perform nested binary searches to solve the optimization problem described in Theorem~\ref{thm:equivalent_optimization_problem}.
The outer binary search is done on $y \in [0,\min_{a\neq a^\star}d_{a^\star}(\mu_a))$.
The inner binary searches are done to compute $x_{a}(y)$ for all $a \in [K]$.
To obtain $\lambda(x)$, we compute the (at most three) real solutions of the third order polynomial equation, and then return the one minimizing the function of interest. For EV-TaS, the procedure is similar as the one implemented in \citet{garivier_2016_OptimalBestArm}, and it is significantly faster as $\lambda(x)$ has a closed-form solution.

\paragraph{Sampling rules}
As recommended in \citet{jourdan_2022_TopTwoAlgorithms}, we implemented the $\beta$-EB-TCI algorithm with $\beta=0.5$.
This is a deterministic instance of Top Two algorithm with an efficient implementation since it can re-use computations done in the stopping and recommendation rules.

Based on \citet{degenne_2019_NonAsymptoticPureExploration}, we consider DKM with one learner on $\simplex$ instead of $K$ learners.
For unknown variances, the wrapped DKM algorithm is obtained by considering adapted transportation costs (and the corresponding alternatives).
To cope for an additional uncertainty due to the unknown variance, the bonus $\frac{\log(t)}{N_{t,a}}$ is multiplied by $1 + 2 \frac{\log(t)}{N_{t,a}}$.
This heuristic is legitimated by our box concentration.
The implementation of EV-DKM is direct by plugging in the empirical variance.
Based on \citet{wang_2021_FastPureExploration}, the wrapped algorithms FWS and EV-FWS are obtained similarly.

FHN2 refers to the procedure 2 in \citet{Fan16RSUV}, whose pseudo-code (see Algorithm~\ref{algo:fhn2}) was given in \citet{hong_2021_ReviewRankingSelection}.
FHN$2$ is an elimination strategy which repeatedly samples all arms until only one arm is left.
For all $a \in [K]$, let $\bar X_{n,a} = \frac{1}{n} \sum_{t \in [n]} X_{t,a}$ where $(X_{t,a})_{t \in [n]}$ are i.i.d. observations from $\nu_{\mu_{a}, \sigma^2_{a}}$.
Its elimination mechanism compares the pairwise statistics $t_{b,a}(n)(\bar X_{n,b} - \bar X_{n,a})$ to an elimination threshold $g_{b,a}(t_{b,a}(n), \delta)$, where the effective time $t_{b,a}(n) = n/S_{b,a}(n)$ is defined with the empirical variances of pairwise comparison between arms $b$ and $a$
\[
	S_{b,a}(n) = \frac{1}{n-1} \sum_{t \in [n]} \left(X_{t,b} - X_{t,a} - \bar X_{n,b} + \bar X_{n,a}\right)^2 \: .
\]
The calibration of the elimination threshold, which is done by simulations arguments and continuous-time approximations, yields
\[
	g_{b,a}(x,\delta) = \sqrt{(x+1)\left(2\ln\left(\frac{K-1}{2\delta}\right) + \log(x + 1)\right)}\: .
\]
The choice of the initial time $n_0$ impacts the empirical performance of the algorithm, yet few practical guidelines were given.
In our experiments we took $n_0 = \max\{2, 10 \log(1 / \delta)\}$.

\begin{algorithm}[ht]
 \caption{FHN$2$}
 \label{algo:fhn2}
 	Let $I = [K]$ and $n=n_0$ (user-specified) \\
 	Sample $n_0$ observations from each arm \\
         \While{$|I|>1$}{
				Compute $S_{b,a}(n)$ and set $t_{b,a}(n) = n/S_{b,a}(n)$ for all $b \neq a$\\
				Update active arms, $I \leftarrow I \setminus \left\{ b \mid \forall a \neq b, \: t_{b,a}(n)(\bar X_{n,b} - \bar X_{n,a}) < -g_{b,a}(t_{b,a}(n), \delta)\right\}$ \\
            Pull each arm once, and set $n \leftarrow n+1$
        }
		Return the unique arm in $I$
\end{algorithm}

In \citet{Fan16RSUV}, they also propose the procedure $1$, which we refer to as FHN$1$.
The main difference is that FHN$1$ uses the empirical variances obtained after initialization, meaning it uses $t_{b,a}(n) = n/S_{b,a}(n_0)$ instead of $t_{b,a}(n) = n/S_{b,a}(n)$.
While one can argue that this wastes precious information, FHN$1$ enjoys better theoretical guaranty.
The impact of then choice of $n_0$ is more important for FHN$1$ since $S_{b,a}(n_0)$ is used for elimination.

\paragraph{Reproducibility}
Our code is implemented in \texttt{Julia 1.7.2}, and the plots are generated with the \texttt{StatsPlots.jl} package.
Optimizations are performed based on the \texttt{JuMP.jl} optimization package.
In particular, we use the Ipopt non-linear solver \citep{Wachter_2006_IpOpt}, available in \texttt{Ipopt.jl}, and the \texttt{LambertW.jl} package to compute the Lambert $W$ function.
Other dependencies are listed in the \texttt{Readme.md}.
The \texttt{Readme.md} file also provides detailed julia instructions to reproduce our experiments, as well as a \texttt{script.sh} to run them all at once.
The general structure of the code (and some functions) is taken from the \href{https://bitbucket.org/wmkoolen/tidnabbil}{tidnabbil} library.\footnote{This library was created by \cite{degenne_2019_NonAsymptoticPureExploration}, see https://bitbucket.org/wmkoolen/tidnabbil. No license were available on the repository, but we obtained the authorization from the authors.}

\subsection{Supplementary Experiments}
\label{app:ss_supplementary_experiments}

In Appendix~\ref{app:sss_exp_characteristic_times}, we perform simulations on the characteristic times $T^\star(\mu, \sigma^2)$ and $T^\star_{\sigma^2}(\mu)$.
In Appendix~\ref{app:sss_exp_thresholds}, we compare family of stopping thresholds for the EV-GLR stopping rule.

\subsubsection{Characteristic Times}
\label{app:sss_exp_characteristic_times}

We perform numerical simulations to compare $T^\star(\mu, \sigma^2)$ and $T^\star_{\sigma^2}(\mu)$.
Based on the inequalities (\ref{eq:characteristic_times_inequalities}), we will compare $\frac{T^\star(\mu, \sigma^2)}{T^\star_{\sigma^2}(\mu)}$ and its upper bound $\frac{d(\mu, \sigma^2)}{\ln(1+d(\mu, \sigma^2))}$ on $\frac{T^\star(\mu, \sigma^2)}{T^\star_{\sigma^2}(\mu)}$, where $d(\mu, \sigma^2) = \max_{a \neq a^\star(\mu)}\frac{(\mu_{a^\star(\mu)} - \mu_{a})^2}{\min\{\sigma_{a}^2,\sigma_{a^\star(\mu)}^2\}}$.
Taking $K=2$, we consider the canonical instance $\mu = (0,-\Delta)$ and $\sigma^2 = (1,r)$, hence $d(\mu, \sigma^2)=\frac{\Delta^2}{\min\{1,r\}}$, and instantiate it by default with $\Delta=0.2$ and $r=0.5$.
We perform $10000$ simulations for varying $\Delta$ and $r$.

\begin{figure}[ht]
	\centering
	\includegraphics[width=0.49\linewidth]{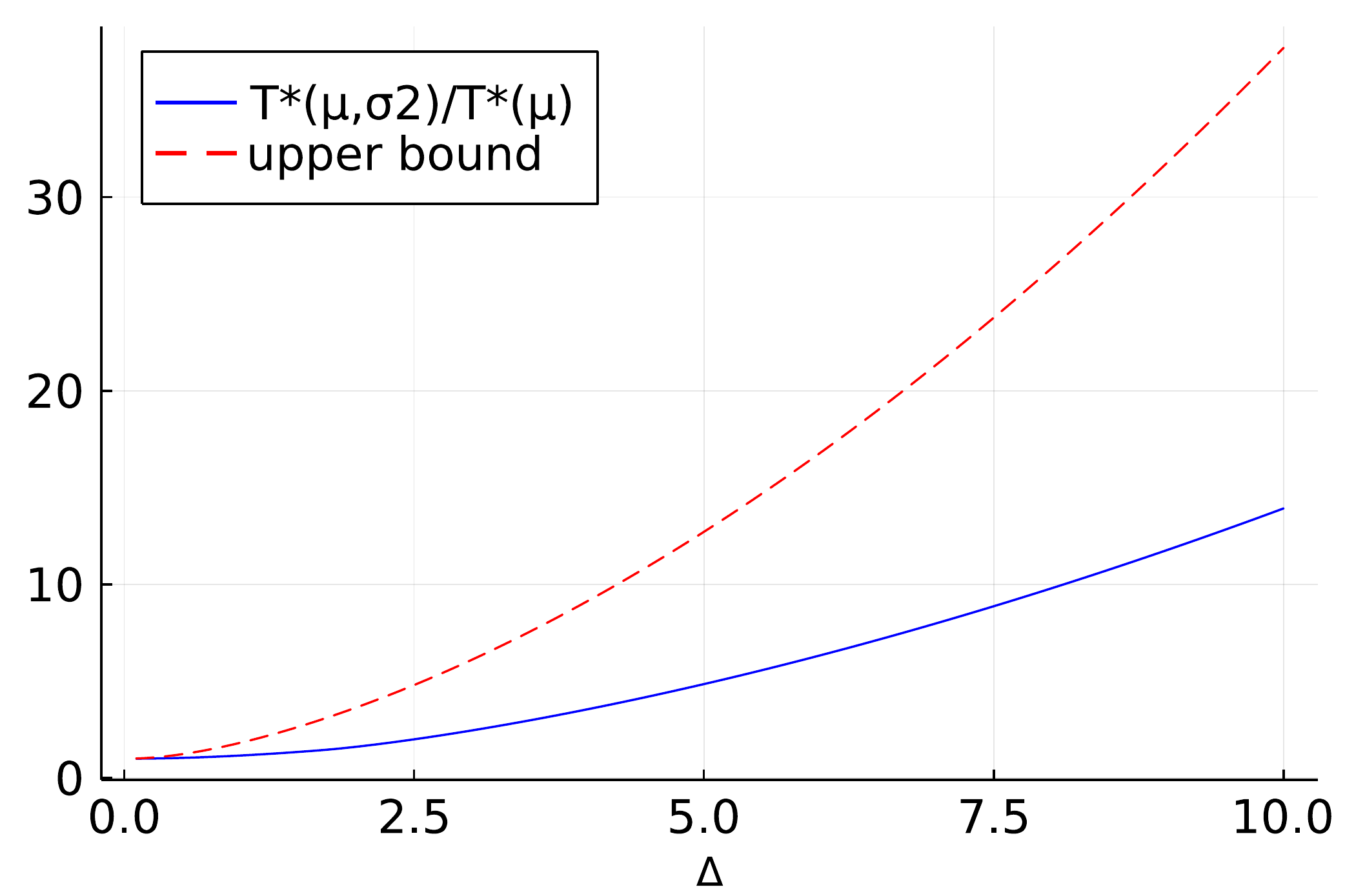}
	\includegraphics[width=0.49\linewidth]{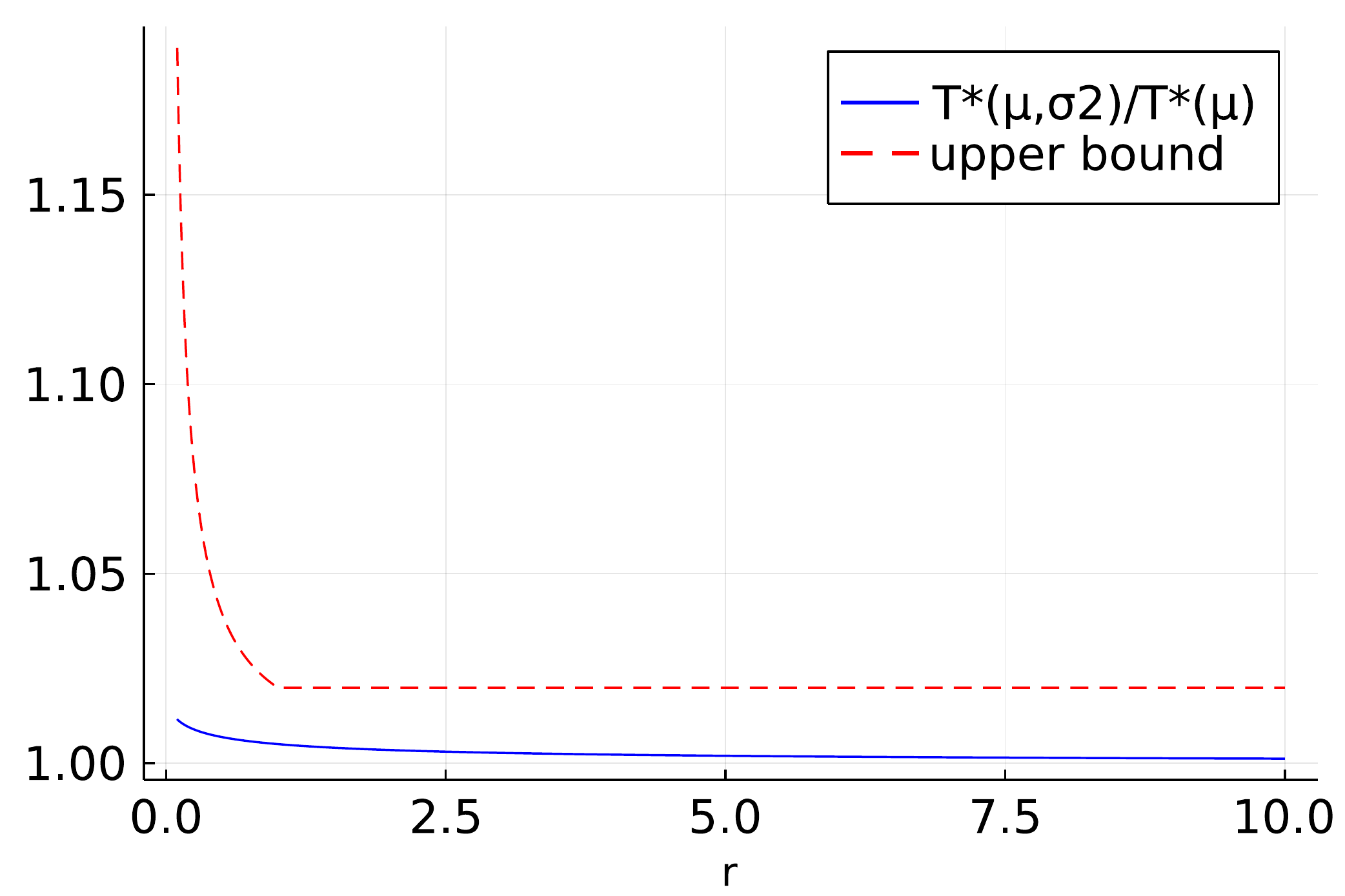}
	\caption{Evolution of $\frac{T^\star(\mu, \sigma^2)}{T^\star_{\sigma^2}(\mu)}$ as a function of (a) $\Delta$ for $r = 0.5$ and (b) $r$ for $\Delta = 0.2$.}
	\label{fig:evolution_ratios_characteristic_times}
\end{figure}

Figure~\ref{fig:evolution_ratios_characteristic_times} empirically confirms that $\frac{d(\mu, \sigma^2)}{\ln(1+d(\mu, \sigma^2))}$ is an upper bound on $\frac{T^\star(\mu, \sigma^2)}{T^\star_{\sigma^2}(\mu)}$, which is always above $1$.
It shows that $\frac{T^\star(\mu, \sigma^2)}{T^\star_{\sigma^2}(\mu)}$ is close to one as long as the gap is not too large.
Moreover, the upper bound $\frac{d(\mu, \sigma^2)}{\ln\left(1+d(\mu, \sigma^2)\right)}$ has the same behavior, except when the variance of the optimal arm is significantly larger than the one of the sub-optimal arm.
In this particular regime, our upper bound appears to be loose.

\subsubsection{Thresholds}
\label{app:sss_exp_thresholds}

Using the same experimental setup as in Section~\ref{ssec:simulations_stopping_thresholds}, we perform numerical simulations to compare the family of thresholds derived for the EV-GLR stopping rule: EV-Student (Lemma~\ref{lem:delta_correct_student_thresholds_ev}), EV-Box (Lemma~\ref{lem:delta_correct_box_thresholds_ev}) and EV-BoB (Corollary~\ref{cor:delta_correct_Kinf_thresholds_ev}).

\begin{figure}[h]
	\centering
	\includegraphics[width=0.48\linewidth]{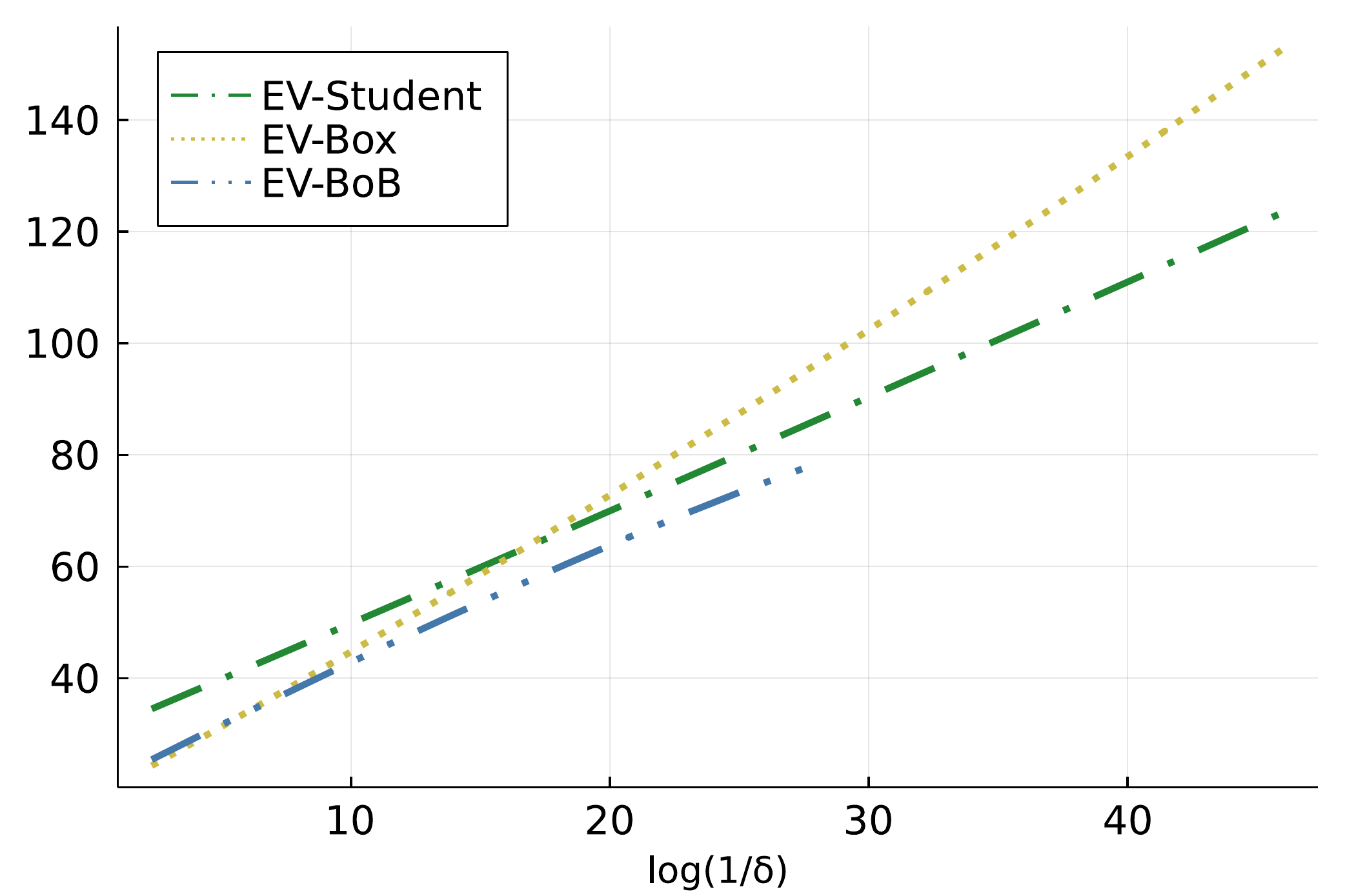}
	\includegraphics[width=0.48\linewidth]{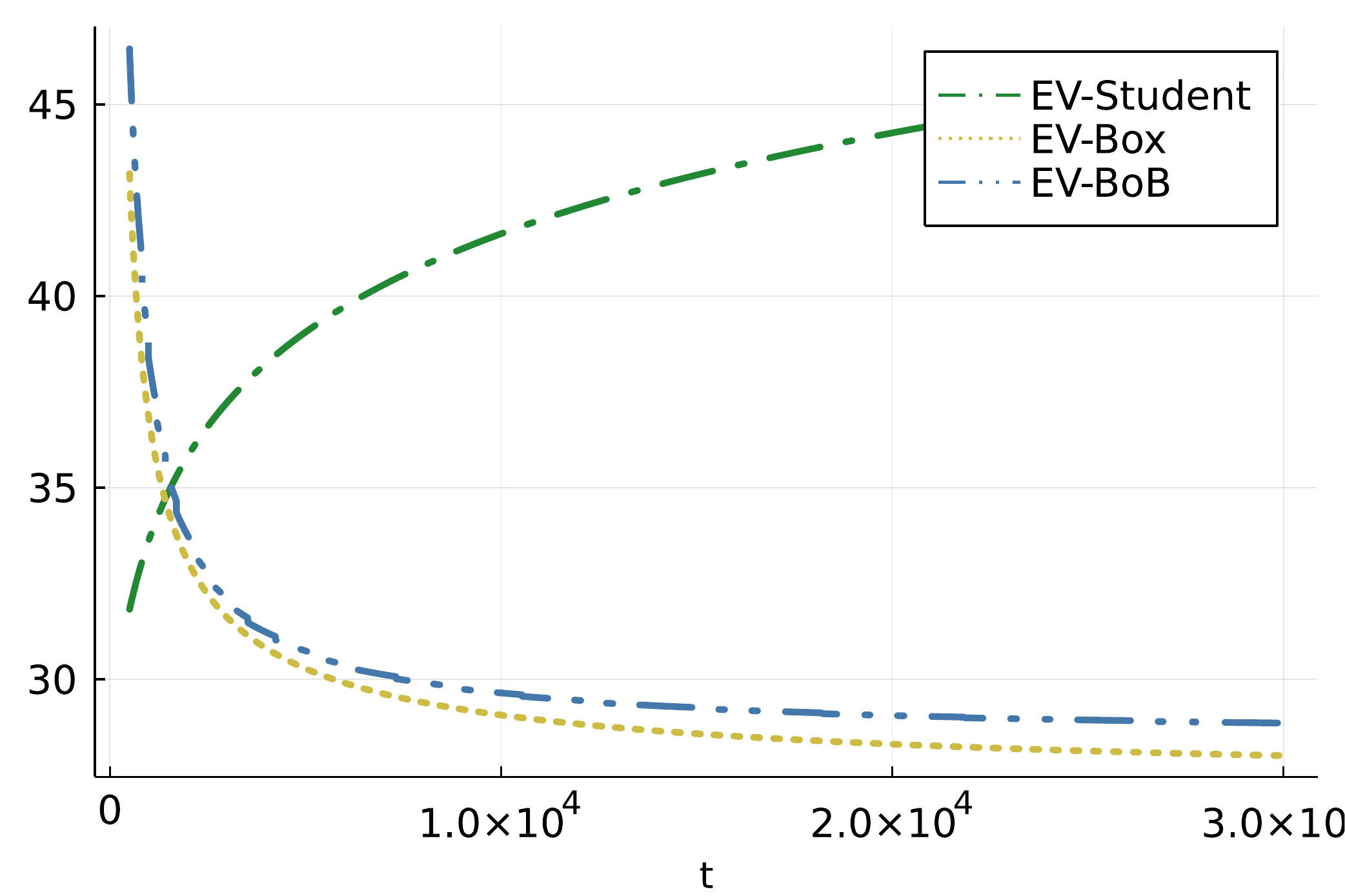}
	\caption{Threshold for (\ref{eq:def_stopping_rule_evglrt}) as a function of (a) $\ln\left(1/\delta\right)$ for $t = 5000$ and (b) $t$ for $\delta = 0.01$.}
	\label{fig:stopping_thresholds_evolutions_evglrt}
\end{figure}

Figure~\ref{fig:stopping_thresholds_evolutions_evglrt} is the equivalent of Figure~\ref{fig:stopping_thresholds_evolutions_glrt} and similar messages can be extracted from it.
However, for the EV-GLR stopping rule, the $\delta$-correct family of thresholds can be asymptotically tight, otherwise Theorem~\ref{thm:upper_bound_sample_complexity_algorithm} would contradict the asymptotic lower bound.
The Student threshold in (\ref{eq:def_student_threshold_evglrt}) has poor performance in $t$.
While the box threshold in (\ref{eq:def_box_threshold_evglrt}) yields better performance in $t$, it comes at a worse dependency in $\ln \left(1/\delta\right)$ (higher slope).
The BoB threshold in Corollary~\ref{cor:delta_correct_Kinf_thresholds_ev} has the good performance in $t$ of the box threshold and the same dependency in $\ln \left(1/\delta\right)$ as the Student threshold.

\begin{figure}[h]
	\centering
	\includegraphics[width=0.48\linewidth]{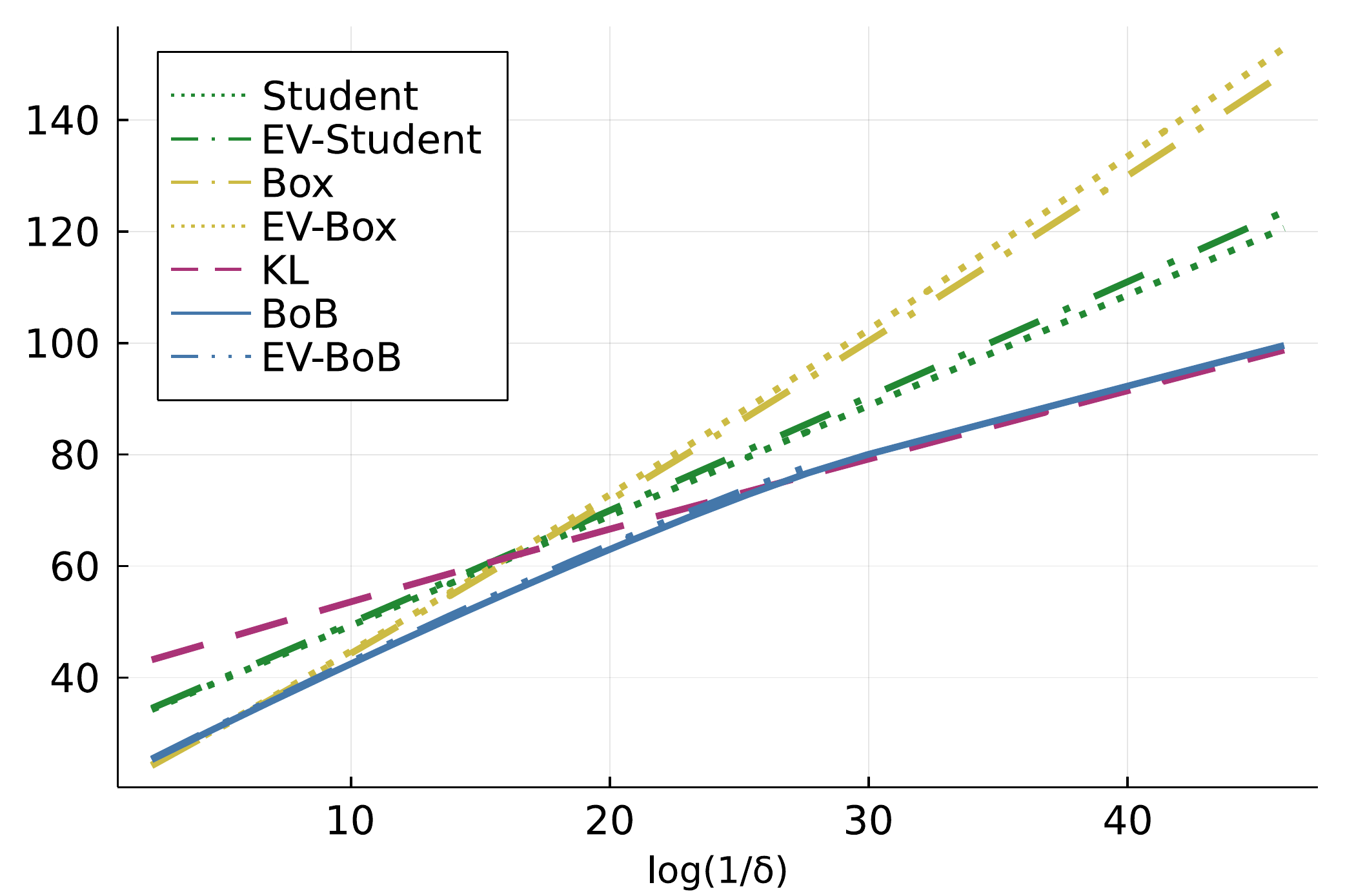}
	\includegraphics[width=0.48\linewidth]{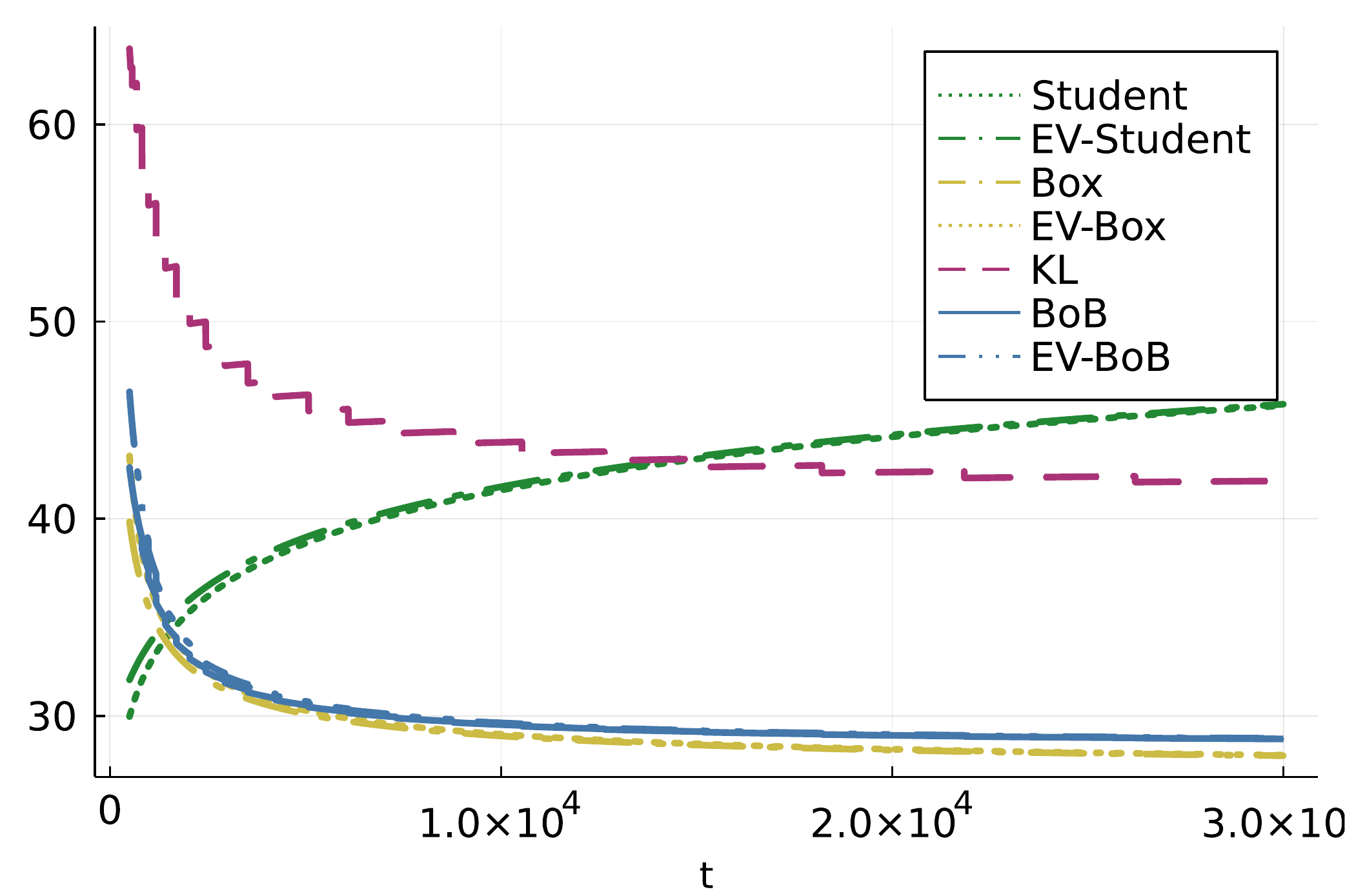}
	\caption{Thresholds as a function of (a) $\ln\left(1/\delta\right)$ for $t = 5000$ and (b) $t$ for $\delta = 0.01$.}
	\label{fig:stopping_thresholds_evolutions_glrt_evglrt}
\end{figure}

Figure~\ref{fig:stopping_thresholds_evolutions_glrt_evglrt} groups Figures~\ref{fig:stopping_thresholds_evolutions_glrt} and~\ref{fig:stopping_thresholds_evolutions_evglrt} together.
By comparing the thresholds obtained by using the same method, all the thresholds for the EV-GLR stopping rule are above the ones obtained for the GLR stopping rule.
Given the ordering of the statistic (Lemma~\ref{lem:glrt_evglrt_inequalities}), this was expected.

\end{document}